\NewDocumentCommand\embedvideo{smm}{
  \group_begin:
  \leavevmode
  \tl_if_exist:cTF{file_\file_mdfive_hash:n{#3}}{
    \tl_set_eq:Nc\video{file_\file_mdfive_hash:n{#3}}
  }{
    \IfFileExists{#3}{}{\GenericError{}{File~`#3'~not~found}{}{}}
    \pbs_pdfobj:nnn{}{fstream}{{}{#3}}
    \pbs_pdfobj:nnn{}{dict}{
      /Type/Filespec/F~(#3)/UF~(#3)
      /EF~<</F~\pbs_pdflastobj:>>
    }
    \tl_set:Nx\video{\pbs_pdflastobj:}
    \tl_gset_eq:cN{file_\file_mdfive_hash:n{#3}}\video
  }
  \pbs_pdfobj:nnn{}{dict}{
    /Type/RichMediaInstance/Subtype/Video
    /Asset~\video
    /Params~<</FlashVars (
      source=#3&
      skin=SkinOverAllNoFullNoCaption.swf&
      skinAutoHide=true&
      skinBackgroundColor=0x5F5F5F&
      skinBackgroundAlpha=0.75
    )>>
  }
  \pbs_pdfobj:nnn{}{dict}{
    /Type/RichMediaConfiguration/Subtype/Video
    /Instances~[\pbs_pdflastobj:]
  }
  \pbs_pdfobj:nnn{}{dict}{
    /Type/RichMediaContent
    /Assets~<<
      /Names~[(#3)~\video]
    >>
    /Configurations~[\pbs_pdflastobj:]
  }
  \tl_set:Nx\rmcontent{\pbs_pdflastobj:}
  \pbs_pdfobj:nnn{}{dict}{
    /Activation~<<
      /Condition/\IfBooleanTF{#1}{PV}{XA}
      /Presentation~<</Style/Embedded>>
    >>
    /Deactivation~<</Condition/PI>>
  }
  \hbox_set:Nn\l_tmpa_box{#2}
  \tl_set:Nx\l_box_wd_tl{\dim_use:N\box_wd:N\l_tmpa_box}
  \tl_set:Nx\l_box_ht_tl{\dim_use:N\box_ht:N\l_tmpa_box}
  \tl_set:Nx\l_box_dp_tl{\dim_use:N\box_dp:N\l_tmpa_box}
  \pbs_pdfxform:nnnnn{1}{1}{}{}{\l_tmpa_box}
  \pbs_pdfannot:nnnn{\l_box_wd_tl}{\l_box_ht_tl}{\l_box_dp_tl}{
    /Subtype/RichMedia
    /BS~<</W~0/S/S>>
    /Contents~(embedded~video~file:#3)
    /NM~(rma:#3)
    /AP~<</N~\pbs_pdflastxform:>>
    /RichMediaSettings~\pbs_pdflastobj:
    /RichMediaContent~\rmcontent
  }
  \phantom{#2}
  \group_end:
}
\newcommand{\mynameis}[1]{%
  \phantomsection#1%
  \renewcommand{\@currentlabel}{#1}%
  \renewcommand{\@currentlabelname}{#1}}
\newcommand{\manuallabel}[2]{\def\@currentlabel{#2}\label{#1}}
\newcommand{\customlabel}[2]{%
   \protected@write \@auxout {}{\string \newlabel {#1}{{#2}{\thepage}{#2}{#1}{}} }%
   \hypertarget{#1}{}
}
\title{Self-organizing Nervous Systems \\ for Robot Swarms} 
\author
{W. Zhu$^{1,\ast}$, S. Oguz$^{1,3,\ast}$, M.K. Heinrich$^{1,\dag,\ast}$, M. Allwright$^{1}$, \\
M. Wahby$^{1}$, A. Lyhne Christensen$^{2}$, E. Garone$^{3}$, M. Dorigo$^{1,\dag}$\\
\\
\normalsize{$^{1}$IRIDIA, Universit\'{e} Libre de Bruxelles, Brussels, Belgium}\\
\normalsize{$^{2}$SDU UAS Center, MMMI, University of Southern Denmark, Odense, Denmark}\\
\normalsize{$^{3}$SAAS, Universit\'{e} Libre de Bruxelles, Brussels, Belgium}\\
\\
\normalsize{$^\dag$To whom correspondence should be addressed; E-mails:}\\ \normalsize{mary.katherine.heinrich@ulb.be, mdorigo@ulb.ac.be.}\\
\normalsize{$^\ast$These authors contributed equally to this work and share first authorship.}
}
\date{}
\begin{document} 
\baselineskip16pt
\maketitle 

\begin{abstract}
The system architecture controlling a group of robots is generally set before deployment and can be either centralized or decentralized.
This dichotomy is highly constraining, because decentralized systems 
are typically fully self-organized and therefore difficult to design analytically, whereas centralized systems
have single points of failure and limited scalability.
To address this dichotomy, we present the Self-organizing Nervous System (SoNS), a novel robot swarm architecture based on self-organized hierarchy.
The SoNS approach enables robots to autonomously establish, maintain, and reconfigure dynamic
multi-level system architectures. 
For example, a robot swarm consisting of $n$ independent robots could transform into a single $n$-robot SoNS and then into several independent smaller SoNSs, where each SoNS uses a temporary and dynamic hierarchy.
Leveraging the SoNS approach, we show that sensing, actuation, and decision-making can be coordinated in a locally centralized way, without sacrificing the benefits of scalability, flexibility, and fault tolerance, for which swarm robotics is usually studied.
In several proof-of-concept robot missions---including binary decision-making and search-and-rescue---we demonstrate that the capabilities of the SoNS approach greatly advance the state of the art in swarm robotics.
The missions are conducted with a real heterogeneous aerial-ground robot swarm, using a custom-developed quadrotor platform.
We also demonstrate the scalability of the SoNS approach in swarms of up to 250 robots in a physics-based simulator, and demonstrate several types of system fault tolerance in simulation and reality.
\end{abstract}

\section{Introduction}

In the last two decades, swarm robotics research has demonstrated that it is possible to coordinate a large group of autonomous robots without any central coordinating entity. 
Elegant collective solutions have been developed for a broad scope of tasks, such as decision making~\cite{valentini2016collective}, navigation and transport~\cite{nouyan2009teamwork,dorigo2013swarmanoid}, construction~\cite{werfel2014designing}, and bio-hybrid interaction~\cite{wahby2018autonomously,halloy2007social}. 
By using strictly self-organized control within flat (single-level and fully decentralized) system architectures, swarm robotics behaviors have leveraged redundancy and parallelism to consistently achieve the hallmark advantages of a robot swarm---scalability, flexibility, the absence of single points of failure, and some degree of inherent fault tolerance. 
These characteristics are prohibitively difficult to obtain in fully centralized systems.

Despite the significant progress and the advantages of self-organized flat systems, the swarm behaviors being developed in abstract laboratory experiments are persistently slow to be adopted in real applications~\cite{dorigo2020reflections,dorigo2021swarm}. 
This slow adoption rate can be attributed to the fact that, although there are significant advantages to self-organized control using flat system architectures, there are also inherent limitations.
One crucial limitation is due to swarm behaviors occurring at the macroscale, but arising from self-organization among robots programmed at the microscale. Swarm behaviors are therefore difficult or impossible to design analytically, and fully self-organized swarms can take an undesirably long time to complete a task or converge on a decision.
Even experienced researchers in the field often conduct a long trial-and-error design process in order to develop an incrementally novel behavior, and after several distinct behaviors have been developed, it is also not trivial to combine them. 
Furthermore, if an environment occupied by a swarm changes, especially to conditions that were not explicitly forecast, it can be difficult to predict how this will influence the swarm's collective behavior.
As a consequence of design difficulty and slow development processes, there are a great many tasks that we currently do not know how to perform using strictly self-organized control in flat systems.

The key limitations seen in flat self-organization are not prevalent in centralized systems: it is much more straightforward to design and combine centralized behaviors than self-organized ones. Using centralized systems, we already know how to execute many sophisticated behaviors that we currently do not know how to accomplish with many robots in a strictly self-organized way, such as SLAM~\cite{howard2006multi} or optimization of online task scheduling or vehicle routing~\cite{psaraftis2016dynamic}.
However, bottlenecks and single points of failure are unavoidable in strictly centralized systems, bringing inherent scalability and fault tolerance limitations that are typically absent in self-organized behaviors. 
In this paper, we propose that key impediments to rapid progress in swarm robotics can be overcome by partially integrating centralized control into an otherwise self-organized system through the introduction of a dynamic multi-level architecture---in other words, a self-organized hierarchy.

Self-organized hierarchy has been identified as a key unsolved challenge for the future of swarm robotics~\cite{dorigo2020reflections,dorigo2021swarm}. 
Hierarchy can offer swarm robotics easier and faster behavior design and management as well as more flexibility when combining behaviors. However, not just any hierarchy is suitable. To still behave like a swarm and get the oft-cited benefits of scalability, flexibility, and fault tolerance for which swarm robotics is generally studied, the hierarchy cannot be imposed from the outside and must also be controllable. The members must autonomously establish an ad-hoc dynamic hierarchy among themselves and be able to comprehensively manage it in a self-organized way.

To undertake this challenge, we present Self-organizing Nervous Systems (SoNS) for robot swarms. SoNS is a novel robot swarm architecture that uses self-organized hierarchy to allow dynamically determined ``brain" robots to coordinate sensing, actuation, and decision making in temporarily centralized sub-swarms, without sacrificing the scalability, flexibility, and fault tolerance of self-organization.
The SoNS approach allows a completely decentralized system in which subsets of robots can self-organize into temporary locally centralized dynamic control structures when needed. 

\subsection{Self-organizing Nervous System (SoNS) concept} 

In the SoNS concept, robots autonomously organize themselves into dynamic multi-level system architectures using ad-hoc remote bidirectional connections. 
The result is a swarm composed of a number of reconfigurable Self-organizing Nervous Systems---SoNSs, see Fig.~\ref{fig:concept} and Movie~\ref{Movie1}. In each SoNS instance, each robot has chosen to temporarily grant explicit supervisory powers to a robot in the level above it, culminating in a single ``brain" robot that acts impermanently as a coordinating entity. 

The process by which robots establish and maintain explicit bidirectional connections in SoNSs is entirely self-organized, using exclusively local communication. 
Therefore, the basic structure of a SoNS includes not only the topology of the remote connections, but also the relative robot positions required to maintain those connections using local communication and sensing.

\begin{figure}[thbp]
    \centering
    \includegraphics[width=0.99\textwidth]{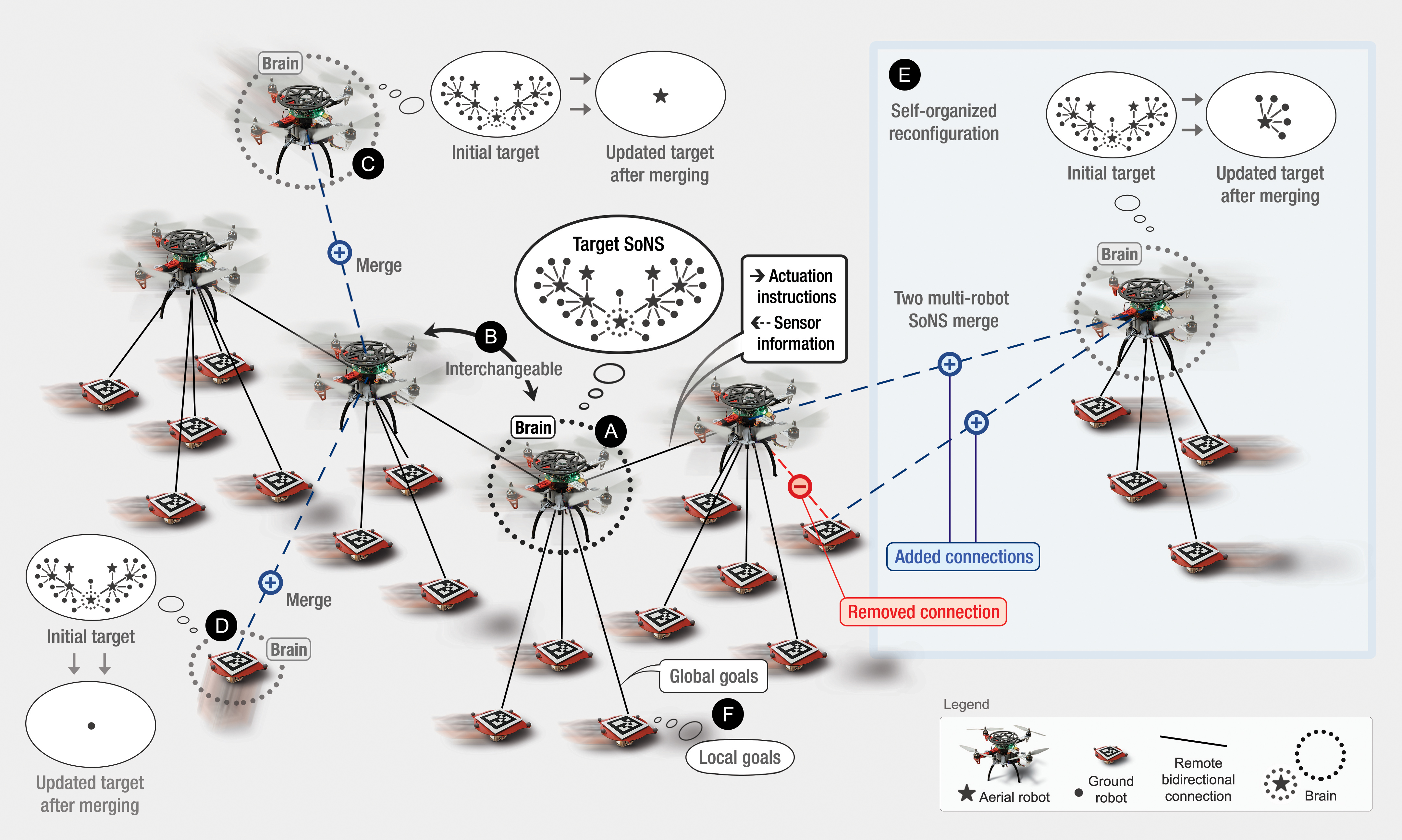}
    \caption{\textbf{The Self-organizing Nervous System (SoNS) concept: robots self-organize dynamic multi-level system architectures using exclusively local communication.} \textbf{(A)} In a SoNS, each robot chooses to temporarily grant explicit supervisory powers to a robot in the level above it, culminating in a single ``brain" robot that acts impermanently as a SoNS-wide coordinating entity. The brain robot does not communicate with all members directly---rather, at each bidirectional connection, the robot at a lower level in the architecture sends sensor information upstream, and the robot at a higher level sends actuation instructions downstream. The brain tries to establish its target SoNS and manages the global goals of the SoNS. \textbf{(B)} Any robot at any level of hierarchy can be interchanged with another robot---even the brain. \textbf{(C,D)} Each robot initializes as the brain of its own single-robot SoNS, with a map of its target SoNS and, potentially, mission-dependent global goals. If it encounters another SoNS, it can choose to accept recruitment and merge with it, thereby abdicating its ``brain" status. \textbf{(E)} The process by which robots establish and maintain connections is entirely self-organized. Therefore, the topology can be reconfigured on demand by removing and adding connections, and multi-robot SoNS can split and merge as needed. Here, we see a five-robot SoNS (right) that merges with a 20-robot SoNS (left). \textbf{(F)} The topology of bidirectional connections is used to grant supervisory powers and send global goals downstream, but robots can negotiate the inter-level control distributions on the fly, adapting the degree of centralization or decentralization in the decision-making processes of the SoNS.}
    \label{fig:concept}
\end{figure}

\vspace{3mm}
\noindent
\embedvideo*{\includegraphics[page=1, width=\linewidth]{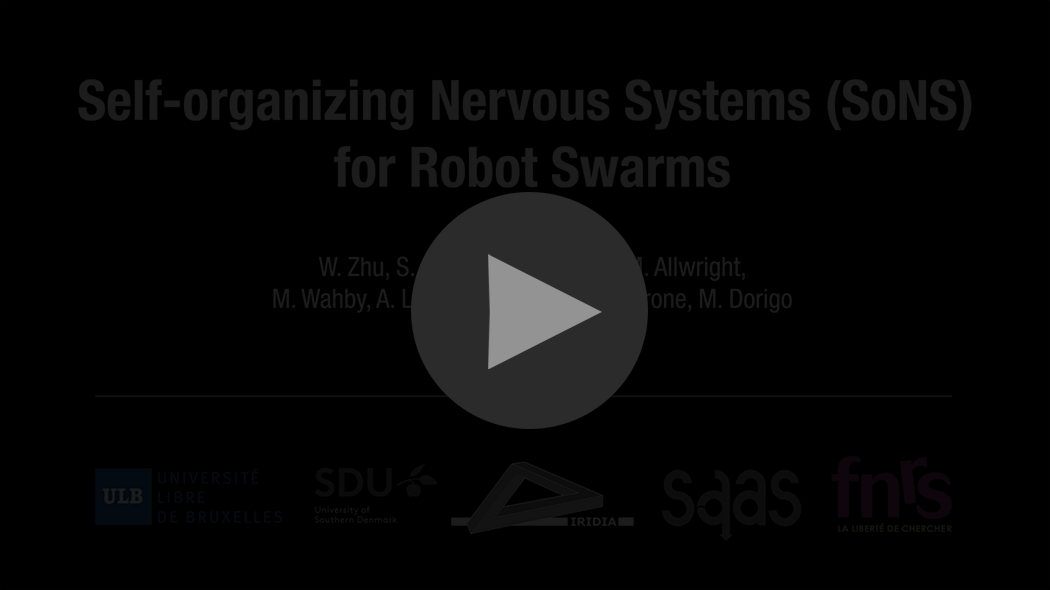}}{Movies/Movie_1.mp4}
\customlabel{Movie1}{1}
\begin{center}
    \vspace{-6mm}
    {\small \bf Movie \ref*{Movie1} : Explanation video of the Self-organizing Nervous System (SoNS) concept.}\\ \vspace{2mm}
    {\small \it (Note: To play embedded video, open the PDF in Adobe Acrobat.)}
\end{center}
\vspace{3mm}

Each connection, including its relative robot positions, is managed independently by the two robots sharing the connection. 
Therefore, the self-organized aspects of the SoNS architecture are scalable. For instance, a new connection does not become any more difficult to establish if a SoNS recruits more members.
Also, every robot is replaceable---even the brain. 
If any robot fails or is lost, it can be replaced automatically by another robot in its SoNS or by a new robot recruited from the surroundings. If no extra robots are available, a SoNS can reconfigure utilizing the robots it already has.

In SoNSs, unlike in centralized systems with root nodes, a brain robot does not communicate with all its members directly. 
Rather, at each bidirectional connection, the robot at a lower level in the architecture sends sensor information upstream, and the robot at a higher level sends actuation instructions downstream.
Members of one SoNS can thereby act seamlessly as a single virtual body, despite only communicating with their direct neighbors.

In a SoNS, the inter-level control distribution of the system---that is, the degree of centralization or decentralization of decision-making---is dynamic and thus not statically determined by the topology of the bidirectional connections. The topology is indeed used to grant supervisory powers and send global goals downstream. However, the robots can continuously negotiate the inter-level control distributions on the fly, adapting the degree of centralization or decentralization of decision-making to the requirements of the task being performed. For instance, to balance global and local goals, a robot might receive downstream instructions to move in a certain direction, but might temporarily negotiate a different inter-level control distribution while reacting locally to a small obstacle.

The system architecture and behavior of SoNSs are also dynamic and can be fully reorganized by the brain on demand (see Fig.~\ref{fig:concept} and Movie~\ref{Movie1}). This can include changes to the topology of connections, the relative positions used to maintain those connections, the global actuation goals such as a target motion trajectory, and even the number of separate SoNSs acting autonomously in the same environment. Indeed, the self-organization process by which SoNSs are built and managed also allows them to split and merge as needed. For instance, according to the number and sizes of SoNSs suitable for a certain task, the robots in a shared environment could start as many single-robot SoNSs that interact in a fully decentralized way, then merge into one shared multi-level SoNS to interact hierarchically, then split into several multi-robot SoNSs that can interact with each other as independent systems. In short, the SoNS approach is highly flexible and allows robot systems to fully self-reconfigure their architectures on demand.

\subsection{Related work}

In existing approaches, the overall architecture of a multi-robot system is normally set before deployment. In other words, the communication structure (the organization of communication links and the system levels they can span), control distribution (such as fully decentralized or fully centralized), and behavior structure of the system (for example, which sources of sensor information can influence which actions) are predefined, and the robots coordinate within this static architecture.
Traditionally, swarm robotics has used self-organized control with strict heterarchy (a network or system of unranked elements) which is seen, for example, in aerial robot flocking~\cite{vasarhelyi2018optimized}. This trend is not surprising, as the swarm robotics field was originally biologically inspired~\cite{dorigo2020reflections} and many biological systems with unranked members, seen in social organisms~\cite{buhl2006disorder,detrain2008collective,theraulaz1998origin}, have inspired artificial swarm intelligence~\cite{bonabeau1999swarm}. 
However, despite the development of many advanced behaviors (for example,~\cite{vasarhelyi2018optimized}), heterarchical swarm robotics approaches have rarely been used in real-world applications. To enable wider real-world use, the field is expected to develop more elaborate behavior design and coordination approaches~\cite{dorigo2020reflections,dorigo2021swarm}, such as automatic design~\cite{francesca2016automatic}, behavioral heterogeneity~\cite{Kengyel-etal2015}, and self-organizing hierarchy.

Many existing swarm robotics approaches have incorporated mechanisms relevant to hierarchy or leadership. For instance, behavioral heterogeneity can result in the implicit leadership of some individuals that are more informed than most of their peers~\cite{ferrante2012self,firat2020self}, send more messages when having higher information quality~\cite{valentini2015efficient,valentini2016collective}, have greater behavioral persistence~\cite{balazs2020adaptive}, or are partially remotely controlled by human operators~\cite{walker2014human,walker2014control}. However, this behavioral heterogeneity in swarms has normally been implemented with unranked members, i.e., in a decentralized and single-level system. By contrast, double-level systems have been used when robot swarms have incorporated an explicit leader, e.g., for disseminating information to other robots via one-hop broadcast~\cite{kaiser2022innate} or multi-hop broadcast over an ad-hoc structure~\cite{shan2020collective}. However, the explicit leadership allocation in these examples is static after deployment and is usually defined manually. 

Robot systems with explicit leadership and multiple leader-follower pairs have also been explored, especially for the task of flocking. However, these systems either use predefined and static leaders~\cite{gu2009leader,amraii2014explicit,zheng2020adversarial} or have unstructured relationships between pairs. In the latter type, interactions are one-way and therefore can be interpreted as a hierarchy of many temporary leader-follower pairs, for instance in flocking~\cite{dalmao2011cucker,pignotti2018flocking,jia2019modelling} or self-assembly~\cite{soorati2019photomorphogenesis}, but any ranking among these pairs is emergent, not explicitly controlled.
Therefore, these systems have the advantage of intrinsic flexibility---robots can be displaced unexpectedly and can immediately form new connections and be accommodated in their new positions---but have no mechanism for configuring or reconfiguring the organization of the hierarchy. 
Similarly, many sophisticated multi-drone navigation approaches that are partly centralized and partly decentralized have been developed, notably the recent trajectory planner for drone swarms by Zhou et al.~\cite{zhou2022swarm}. These approaches provide high-performing solutions for collective navigation. However, they use fixed collaboration structures, because configuring and reconfiguring swarm architectures is not the focus of these studies.

In short, no existing approach has provided a comprehensive way to self-organize dynamic and highly reconfigurable multi-level system architectures in a robot swarm. The SoNS approach is designed to address this gap. 

For our development of SoNS, we have taken inspiration from our `mergeable nervous systems' (MNS)~\cite{mathews2017mergeable}, an approach for physically-connected robots. We have incorporated some of its high-level ideas into our SoNS concept and robot swarm architecture and reference this inspiration by retaining the term `nervous systems.' 
We have also conducted some preliminary simulation-only studies on extending MNS ideas to other contexts~\cite{zhu2020formation,zhang2023self,jamshidpey2020multi,JamWahHeiAllZhuDor2021:techreport-008,jamshidpey2022reducing}. In this paper, we present for the first time our novel SoNS concept and robot swarm architecture and provide a thorough SoNS proof-of-concept using real robots.

\subsection{Novel features of SoNS}

The primary contribution of our SoNS robot swarm architecture is that a robot system can integrate the manageability advantages of centralized systems without sacrificing the scalability, flexibility, and fault-tolerance advantages of self-organized systems.
This contribution is founded on four novel features for robot swarms (see Fig.~\ref{fig:concept}): self-organized controllable hierarchy, interchangeable leadership (i.e, interchangeability of the brain), explicit inter-system reconfiguration, and reconfigurable swarm behavior structures. 
Together, these features enable robot swarms to self-organize dynamic multi-level system architectures, including their communication structures, control distributions, and system behaviors.

{\bf Self-organized controllable hierarchy.}
The SoNS approach allows a robot swarm to self-organize a dynamic hierarchical communication structure that (i) is built and maintained using exclusively local communication, (ii) is not imposed from the outside, and (iii) is comprehensively controllable (that is, the SoNS-wide multi-level structure can move from any initial state to any desired state in its configuration space of directed acyclic graphs). In other words, a SoNS hierarchy can be explicitly defined and redefined by the brain and the desired changes occur through robots configuring and reconfiguring their local connections in a self-organized way. Self-organized controllable hierarchy has been shown in physically-connected robots but is novel in robot swarms, which so far have shown only emergent hierarchy (for example,~\cite{ferrante2012self,firat2020self}), not controllable hierarchy. The remote connections of robot swarms bring significantly different requirements than physical connections. The physical locations and topology constrain each other less under remote connections than under physical connections, which provides much more flexibility in how a SoNS can be organized, but also adds the challenge that physical location and topology must both be actively (and sometimes separately) maintained.

{\bf Interchangeable leadership.}
In a SoNS, all robots occupy an explicitly defined position in a hierarchy, but any robot, at any level of hierarchy, can be interchanged autonomously and on demand. This interchange is self-organized by the robots using only local communication. This means that, if the brain fails, the SoNS self-organizes to automatically and immediately substitute it with the nearest robot, which continues to specify the same SoNS structure and mission goals as the previous leader. This is a novel feature for robot swarms with explicit leaders, which so far have used static (and sometimes manually defined) leadership. It is also a departure from many other types of multi-robot systems, which often use indiscriminate followers or groups of followers, but have not yet developed approaches in which the position of every robot in an explicit control hierarchy is defined by a self-organized process using strictly local communication. 

{\bf Explicit inter-system reconfiguration.}
The SoNS approach allows reconfiguration between multiple SoNS---that is, several SoNSs can split and merge themselves in a self-organized way that is coordinated by the brains of the SoNSs and uses exclusively local communication, without losing the existing sub-structures that could be retained. For example, several independent SoNSs could agree to merge simultaneously, and the robots would reorganize themselves around the new shared brain, retaining sub-sections of the old structures when possible. Robot systems in the literature have shown splitting and merging of sub-swarms or sub-teams, but with the non-leader members being unranked (for example,~\cite{ducatelle2009new,ducatelle2011self}), so no reconfiguration of explicit sub-system architectures was demonstrated.

{\bf Reconfigurable swarm behavior structures.}
The inter-level control distribution and system behaviors within a SoNS (for example, the global structure defining which information sources influence which actions) can be negotiated and explicitly reconfigured without breaking the system architecture.
Reconfiguration can occur (i) locally and temporarily to balance conflicting global and local goals; (ii) SoNS-wide for the purpose of global sensing, actuation, and decision-making goals set by the brain; but also (iii) locally for internal re-organization of a SoNS (for example, robots automatically redistributing themselves to compensate for a failed robot).
Based on this capacity for internal re-organization, if the needed changes to behavior are too substantial to be managed by inter-level negotiation, an entirely new SoNS architecture can be initialized by the brain and self-organized by the robots.
No existing work has presented a robot swarm architecture with these explicit reconfiguration capabilities.

\section{Results}

We demonstrate the capabilities of the SoNS approach (see Movie~\ref{Movie2}) in experiments using real heterogeneous aerial-ground swarms consisting of standard differential drive e-puck robots~\cite{mondada2009puck,millard2017pi} and our custom-developed S-drone quadrotors~\cite{OguHeiAllZhuWahGarDor2022:techreport-010}. 
To also demonstrate the capabilities in swarms larger than our real robot arena allows, we run experiments in a simulator that is cross-verified against the behavior of the real robots.
Within each experiment, all robots run an identical SoNS program and operate fully autonomously---without any global positioning system, remote control station, or off-board sensing. 
The robots use vision-based relative positioning and are allowed to communicate wirelessly only if one robot is in the other's field of view.
Actuation in the experiments is confined to motion. 
The open-source software used in all experiments (both with real robots and in simulation) and all experiment data are available in online repositories.
In addition to the experiments, we also provide theoretical analyses to show that the convergence and stability of the SoNS architecture are guaranteed.

\subsection{Robot missions }

We conduct five proof-of-concept robot missions that together demonstrate the key capabilities and novel features of the SoNS approach. 

The first novel feature, self-organized controllable hierarchy, is shown in all missions: the first mission (Sec.~\ref{sec:establishment-mission}) shows the process of establishing the self-organized hierarchy, the second mission (Sec.~\ref{sec:obstacles-mission}) shows that the self-organized hierarchy can be maintained despite external disturbances, and the reconfigurations in the rest of the missions show that the self-organized hierarchy is comprehensively controllable. 

The novel features of interchangeable leadership and explicit inter-system reconfiguration are both demonstrated in the mission on splitting and merging of systems (Sec.~\ref{sec:splitmerge-mission}), as robots reconfigure into different sets of SoNSs and also reconfigure their leadership allocations during the splits and merges. (Note that interchangeable leadership is also demonstrated in the later fault-tolerance experiments, see Sec.~\ref{sec:faulttolerance}). 

The last novel feature of reconfigurable swarm behavior structures is shown in all four missions after the first (Secs.~\ref{sec:obstacles-mission}-\ref{sec:splitmerge-mission}): reconfiguration is shown (i) locally and temporarily in the mission on balancing global and local goals (Sec.~\ref{sec:obstacles-mission}); (ii) SoNS-wide in the missions on global sensing and actuation and binary decision-making (Secs.~\ref{sec:sensing-mission} and \ref{sec:decision-mission}); and (iii) locally for internal re-organization during the last three missions (Secs.~\ref{sec:sensing-mission}-\ref{sec:splitmerge-mission}).

Together, the missions demonstrate the ability to self-organize dynamic multi-level system architectures, including their communication structures, inter-level control distributions, and system behaviors.

\vspace{3mm}
\noindent
\embedvideo*{\includegraphics[page=1, width=\linewidth]{Movies/poster-play.png}}{Movies/Movie_2.mp4}
\begin{center}
    \customlabel{Movie2}{2}
    {\small \bf Movie \ref*{Movie2} : Summary video of key results.}\\ 
    \vspace{2mm}
    {\small \it (Note: To play embedded video, open the PDF in Adobe Acrobat.)}
\end{center}
\vspace{3mm}

For each proof-of-concept mission, we report at least five trials with real robots (up to 12 robots) and 50 trials in simulation (with up to 75 robots), with a maximum run time of 15~minutes (constrained by the battery capacity of the quadrotor platform). 
The goals and scope of possible behaviors for each mission are designed offline.
We give the mission schematics (see Figs.~\ref{fig:establishment-mission}-\ref{fig:splitmerge-mission}), show that the qualitative goals of the mission were achieved, and assess the results in terms of actuation error, where the actuation error is the position tracking error $E$ (Eq.~\ref{eq:error}, Sec.~\ref{sec:methods:analysis-metrics}, Materials and Methods) with respect to the lower bound $B$ (Eq.~\ref{eq:lower_bound}, Sec.~\ref{sec:methods:analysis-metrics}, Materials and Methods). The actuation error is used here as a comprehensive metric because it encapsulates the other types of error that can occur (except for incomplete mission goals). 
Any errors in sensing and decision-making will cause large fluctuations in the actuation instructions sent in the system, and therefore cause significant increases in the actuation error.

\setcounter{figure}{1} 
\begin{figure}
    \centering
    \includegraphics[width=0.99\textwidth]{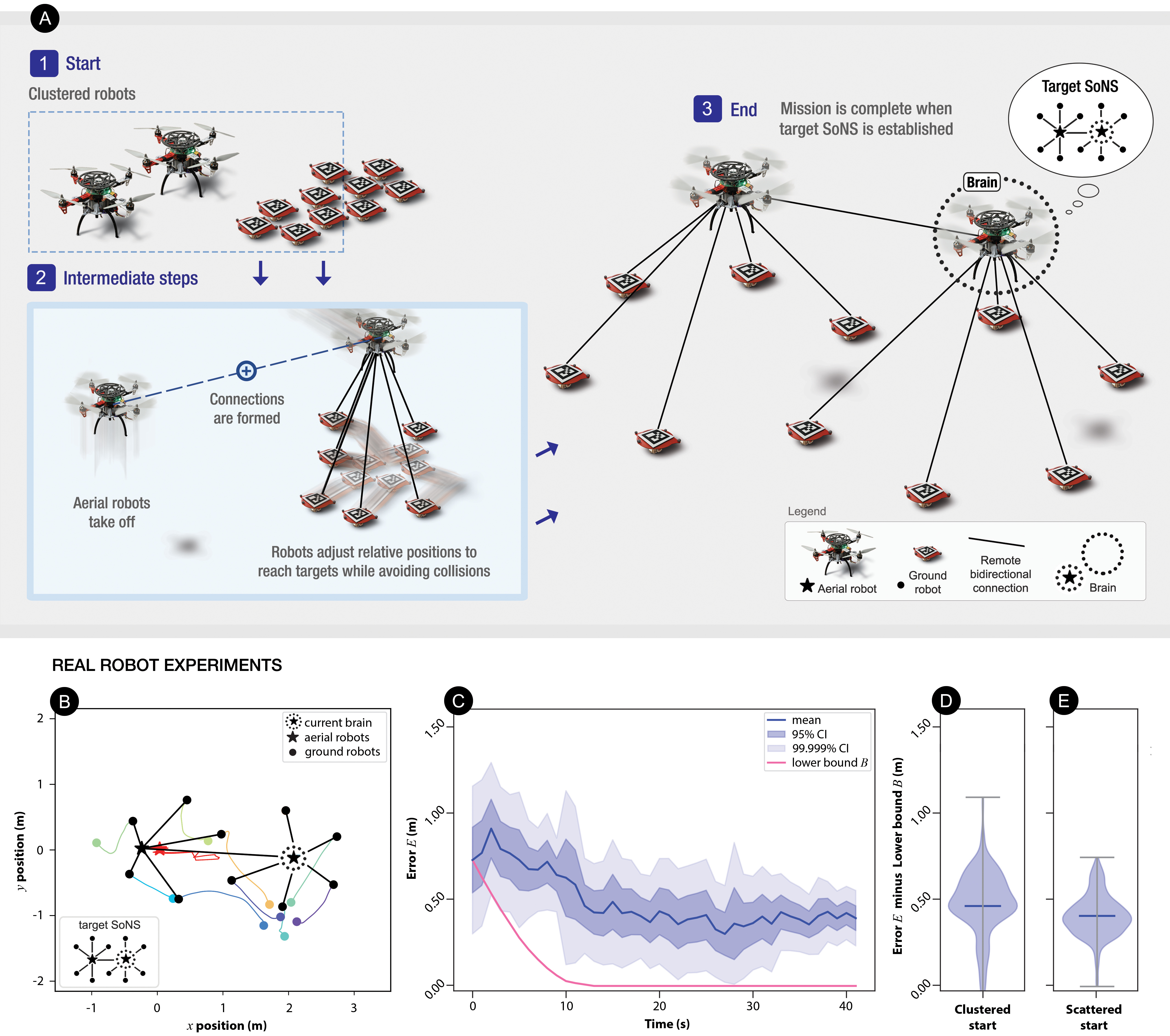}
    \caption{\textbf{Establishing self-organized hierarchy.} In this experiment, independent robots in arbitrary starting positions need to self-organize into a single hierarchical system with the target multi-level communication structure and target relative positions---i.e., into the target SoNS. \textbf{(A)} Mission schematic: (1) Starting on the ground, some robots are positioned in tight clusters, which increases the possibility for interference between robots. (In a second variant, not pictured here, all robots start in scattered positions throughout the arena.) (2) To self-organize into the target SoNS, first the aerial robots take off and all robots begin searching for peers. Then, robots start forming connections, merging their respective SoNSs, and reallocating themselves into positions that match the target SoNS. During this process, robots continually adjust their relative positions while coordinating locally to avoid collisions, until (3) the target SoNS is complete. \textbf{(B-E)} Results of real robot experiments. \textbf{(B)} Trajectories of robots over time, with the initial and final positions indicated in color and in black, respectively, and \textbf{(C)} 
    actuation error $E$ (mean and confidence interval per robot over time, see Eq.~\ref{eq:error}, with lower bound $B$, see Eq.~\ref{eq:lower_bound}, plotted for reference) in a real example trial (shown in Movie~S1 in the supplementary materials). \textbf{(D,E)} 
    Violin plots of the actuation error minus the lower bound $E - B$ 
    (mean and confidence interval per robot per second) in all real experiment trials, for both mission variants: \textbf{(D)} clustered start, six trials, and \textbf{(E)} scattered start, five trials. In this and following figures, the violin plots show that the given example trial is not an outlier and is indicative of the overall results of the respective mission. (For more detailed experiment results, including simulation experiments, see Sec.~\ref{SM:results} of the supplementary materials.)}
    \label{fig:establishment-mission}
\end{figure}

\subsubsection{Establishing self-organized hierarchy}
\label{sec:establishment-mission}

The first proof-of-concept mission demonstrates the establishment of SoNSs (see Fig.~\ref{fig:establishment-mission}): independent robots in arbitrary start positions need to self-organize into a single hierarchical system with the target multi-level communication structure, including the correct topology and correct relative positions. In the two mission variants, robots start on the ground either (i) with some of the robots clustered tightly (and therefore resulting in robot--robot interference) in varying arbitrary locations, or (ii) with all robots scattered in varying arbitrary positions throughout the arena. All robots run identical programs and begin as independent single-robot SoNSs, of which they are the brain by default. SoNSs try to recruit each other and change their relative positions until the mission goal is complete. In order to complete these missions in the tightly constrained arena, the SoNSs must perform many inter-system merge operations simultaneously, while also continuously re-organizing their internal structures by redistributing themselves (see example flowcharts in Fig.~\ref{fig:establishment-mission}A).

In all experiments, the robots complete the mission: they converge on one SoNS and establish the correct topology and relative positions for the target communication topology (see example trial in Fig.~\ref{fig:establishment-mission}B and Movie~S1 in the supplementary materials). The progression of a typical experiment can be seen in the example trial (see Fig.~\ref{fig:establishment-mission}C), in which the robots converge to the correct topology and reach a low steady-state actuation error.
All trials reach a low steady-state error, with the clustered start variant displaying slightly higher average error than the scattered start variant (see Figs.~\ref{fig:establishment-mission}D,E). 

\subsubsection{Balancing global and local goals}
\label{sec:obstacles-mission}
 
\begin{figure}
    \centering
    \vspace{-30mm}
    \includegraphics[width=0.99\textwidth]{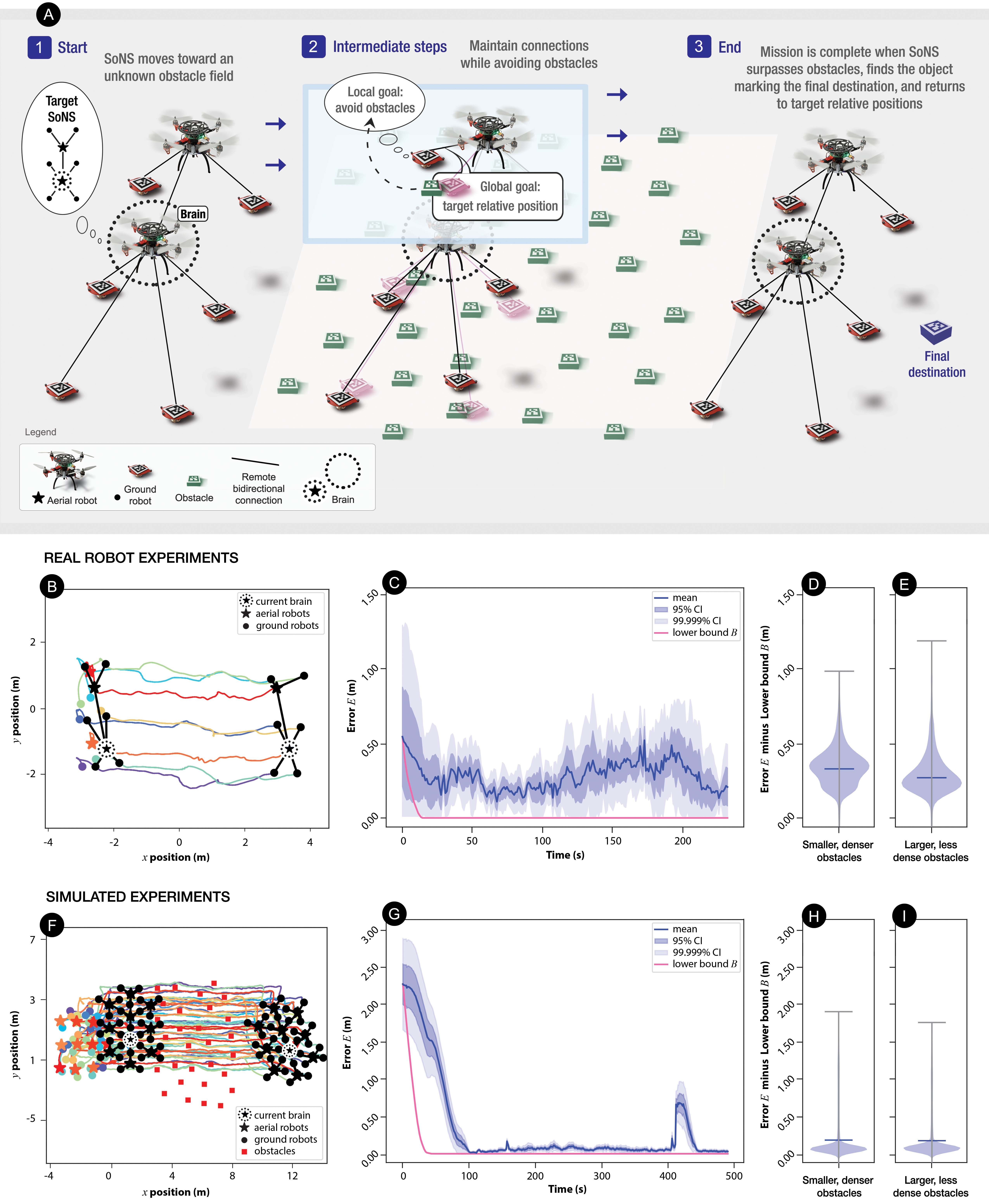}
    \caption{\textbf{Balancing global and local goals.} In this experiment, robots need to navigate a field of obstacles scattered in one portion of the arena, while maintaining the bidirectional connections of the target SoNS. \textbf{(A)} Mission schematic: (1) Robots begin as members of a single SoNS and begin moving across an environment with an unknown field of small, dense obstacles, searching for an object that marks the final destination. (In a second variant, not pictured here, the obstacles are larger and less dense.) (2) Robots move through the obstacle field, collaboratively balancing global and local goals at each bidirectional link, to avoid obstacles while still keeping the SoNS together, until (3) the SoNS surpasses the obstacle field and senses the final destination object.
    \textbf{(B-E)} Results of the real experiments. \textbf{(B)} Trajectories of robots over time, with the initial and final SoNS indicated in black (on the left and on the right, respectively) and \textbf{(C)} actuation error $E$ (mean and confidence interval per robot over time, see Eq.~\ref{eq:error}, with lower bound $B$, see Eq.~\ref{eq:lower_bound}, plotted for reference) in a real example trial (shown in Movie~S2 in the supplementary materials). \textbf{(D,E)} 
    Violin plots of the actuation error minus the lower bound $E - B$ (mean and confidence interval per robot per second) in all real experiment trials, for both mission variants: \textbf{(D)} smaller and denser obstacles, five trials, and \textbf{(E)} larger and less dense obstacles, five trials. \textbf{(F-I)} Results of the simulated experiments, given in the same format: \textbf{(F)} trajectories and \textbf{(G)} actuation error $E$ in a simulated example trial, with \textbf{(H,I)} violin plots of the actuation error minus the lower bound $E - B$ in all simulated trials, 50 trials per variant. (For more detailed experiment results, see Sec.~\ref{SM:results} of the supplementary materials.)}
    \label{fig:obstacles-mission}
\end{figure}

The next proof-of-concept demonstration is an obstacle-avoidance mission (see Fig.~\ref{fig:obstacles-mission}), in which robots in a SoNS must negotiate the inter-level control distribution on the fly (i.e., adapting the degree of centralization or decentralization in the decision-making of the SoNS). The SoNS needs to balance the global goal of an overall motion trajectory chosen by the brain with the local goals of ground robots circumventing small obstacles. Robots begin as members of a single SoNS and the brain begins with a straight trajectory in a given direction. The robots navigate through an unknown field of obstacles while the brain follows its straight trajectory, until reaching an object that marks the final destination (at an unknown position). The goal of these missions is for the robots firstly to maintain the target topology of the communication structure without any connection breaks, despite the disturbances from the environment caused by the presence of obstacles, and secondly to recover the target relative positions once the obstacle field has been surpassed. In this mission, the SoNS maintains a consistent communication structure and overall system behavior throughout, and allows the inter-level control distribution along each bidirectional connection to be adjusted as needed.

The obstacles are scattered in one portion of the arena and their positions and types are not known by any of the robots beforehand. In the two mission variants, either (i) obstacles are larger than the ground robots, so that circumventing the obstacles is challenging, or (ii) obstacles are roughly the same size as the ground robots and are positioned more densely, so that navigating through the gaps is challenging (this variant is shown in Fig.~\ref{fig:obstacles-mission}A).

In all trials, the robots successfully complete the mission goals and the actuation error returns to its pre-obstacles level (see final SoNS positions in Figs.~\ref{fig:obstacles-mission}B,F; see final error levels after 230\,s in trials with real robots in Fig.~\ref{fig:obstacles-mission}C and after 450\,s in simulation in Fig.~\ref{fig:obstacles-mission}G). 
The progression of a typical real experiment can be seen in the example trial in Fig.~\ref{fig:obstacles-mission}C: the actuation error drops as the robots reach their initial target relative positions (approx.\,0 to 30\,s), then rises when the SoNS begins to encounter obstacles (at approx.\,30\,s), remains unsteady and continues rising as the SoNS passes through the obstacle field (approx.\,30 to 190\,s), and starts to decline when the ground robots start to surpass the last obstacles (at approx.\,190\,s); then, once all robots have exited the obstacle field, the SoNS re-converges to its target relative positions.
The progression of a typical simulated experiment can be seen in the example trial in Fig.~\ref{fig:obstacles-mission}G: the actuation error drops to almost zero as the robots reach their initial target relative positions (at approx.\,100\,s), then rises slightly and becomes slightly unsteady as the SoNS passes through the obstacle field (approx.\,150 to 400\,s); after the SoNS exits the obstacle field, the robots at the front start to sense the final destination object and the SoNS adjusts its target trajectory to move towards it, causing the error to increase (approx.\,410 to 440\,s) until the adjustment is complete and the SoNS returns to negligible steady-state error (at approx.\,440\,s).
All trials reach a low steady-state error, in both reality and simulation, with the smaller, denser obstacles variant displaying slightly higher average error than the larger, less dense variant (see Figs.~\ref{fig:establishment-mission}D,E,H,I). 

\subsubsection{Collective sensing and actuation}
\label{sec:sensing-mission}
\begin{figure}
    \centering
    \vspace{-25mm}
    \includegraphics[width=0.99\textwidth]{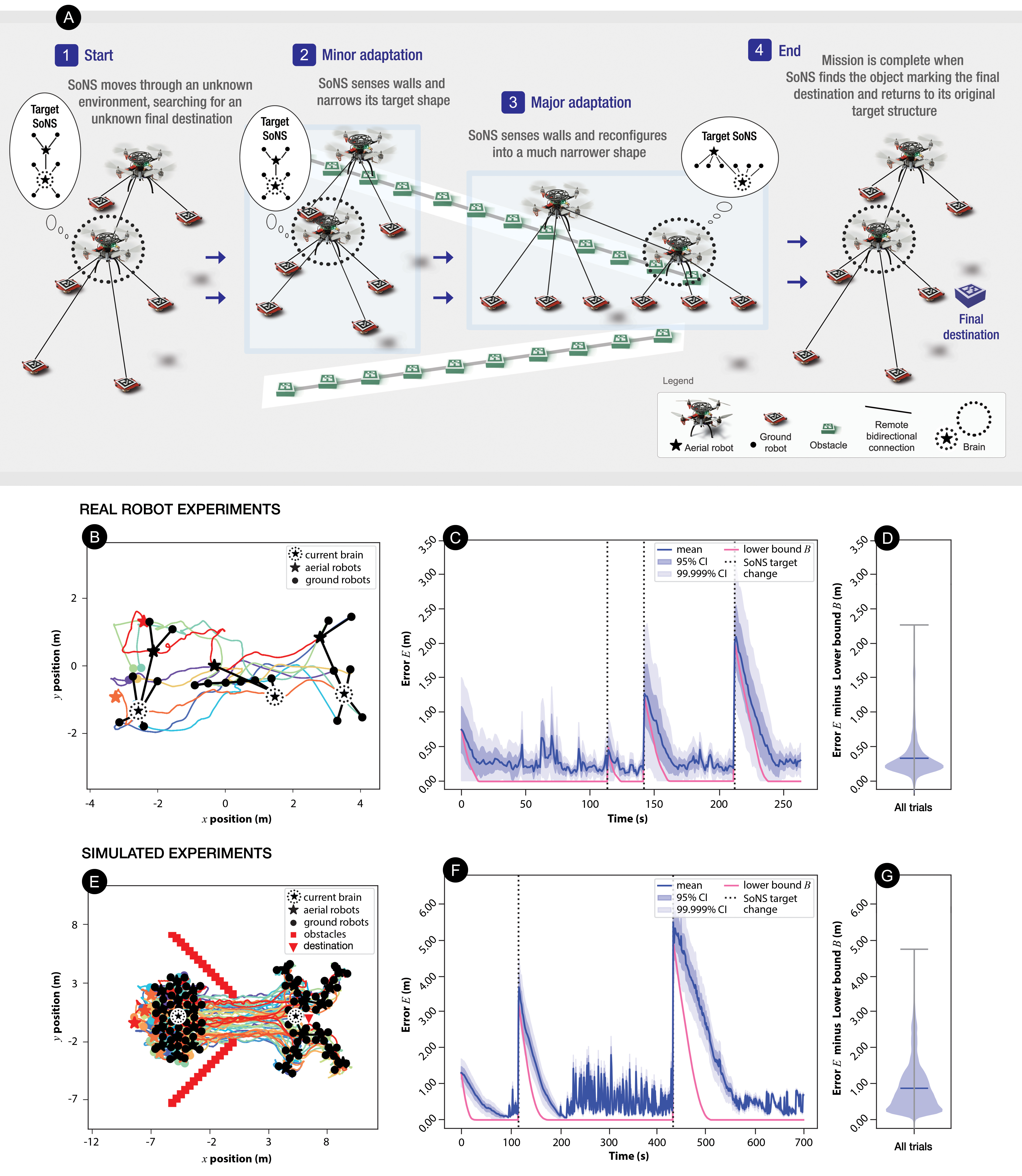}
    \caption{\textbf{Collective sensing and actuation.} In this experiment, robots need to sweep the environment while collectively reacting to the width of the passage. \textbf{(A)} Mission schematic: (1) Robots that have established their target SoNS begin moving across an environment, searching for an object that marks the final destination. (2) Robots sense obstacles that form the walls of a passage and start to adapt by narrowing the shape of the target SoNS. (3) Robots sense the walls narrowing and then adapt further by reconfiguring into a different target SoNS that has an even narrower shape. (4) Robots sense that there are no longer walls constraining them, return to their original target SoNS, and stop when they sense the final destination object, completing the mission. 
    \textbf{(B-D)} Results of the real experiments. \textbf{(B)} Trajectories of robots over time, with the initial, an example intermediate, and the final SoNS indicated in black (from left to right, respectively) and \textbf{(C)} actuation error $E$ (mean and confidence interval per robot over time, see Eq.~\ref{eq:error}, with lower bound $B$, see Eq.~\ref{eq:lower_bound}, plotted for reference) in a real example trial (shown in Movie~S3 in the supplementary materials). \textbf{(D)} 
    Violin plot of the actuation error minus the lower bound $E - B$ (mean and confidence interval per robot per second) in all five real experiment trials. \textbf{(E-G)} Results of the simulated experiments, given in the same format: \textbf{(E)} trajectories and \textbf{(F)} actuation error $E$ in a simulated example trial, with \textbf{(G)} a violin plot of the actuation error minus the lower bound $E - B$ in all 50 simulated trials.
    (For more detailed experiment results, see Sec.~\ref{SM:results} of the supplementary materials.)}
    \label{fig:sensing-mission}
\end{figure}

The next proof-of-concept setup is a sweeping mission (see Fig.~\ref{fig:sensing-mission}), in which a SoNS needs to navigate an environment while collectively sensing and reacting to unknown conditions. Like in the previous mission, the robots begin as members of one SoNS and the brain begins with a straight trajectory in a given direction. The goal of the mission is to move straight through a passage while keeping the SoNS shape as wide as can be fit, between walls composed of obstacles that enclose a passage of unknown width, until finding an object that marks the final destination. To accomplish this, the brain needs to fuse collective sensor information, determine the current width of the passage, and update the SoNS's target communication topology and relative positions as needed. When these updates occur, the robots in the SoNS need to collectively reorganize, maintaining portions of the old structure when possible. In this mission, the SoNS reconfigures its communication topology as needed, while maintaining a consistent overall system behavior and consistent inter-level control distributions at each connection.

The environment is set up so that the walls get narrower as the SoNS progresses (see Fig.~\ref{fig:sensing-mission}A). At first, the SoNS might be able to fit through the passage by simply narrowing the shape of its robot formation (i.e., the topology remains the same but the relative positions change). Later, the passage becomes so narrow that a full re-organization into a linear formation is required. Once the SoNS exits the narrow passage, it has the space to re-organize into a wider formation near the final destination object.

In all trials, the robots complete the mission goals. The actuation error starts low, spiking each time the brain initializes a new target structure, then declining gradually until the next re-organization (see Fig.~\ref{fig:sensing-mission}C,F). Between re-organizations, many small spikes occur, as robots at the edge of the formation collide with the walls but determine the disruption can be managed locally and adjust their positions accordingly. The largest spikes occur when both the target relative positions and the target topology change. This can be seen in the example trial with real robots (see Fig.~\ref{fig:sensing-mission}C): compare the first change (at approx.\,115\,s), which involves only positions, to the second (at approx.\,140\,s) and third (at approx.\,215\,s) target changes, which involve both topology and position. In the simulated example trial, the same progression can be observed, but with only two changes to the target SoNS (see Fig.~\ref{fig:sensing-mission}F). Finally, after the SoNS re-organizes and reaches the final destination object, it returns to low steady-state error.

\subsubsection{Binary decision making }
\label{sec:decision-mission}

\begin{figure}
    \centering
    \vspace{-30mm}
    \includegraphics[width=0.99\textwidth]{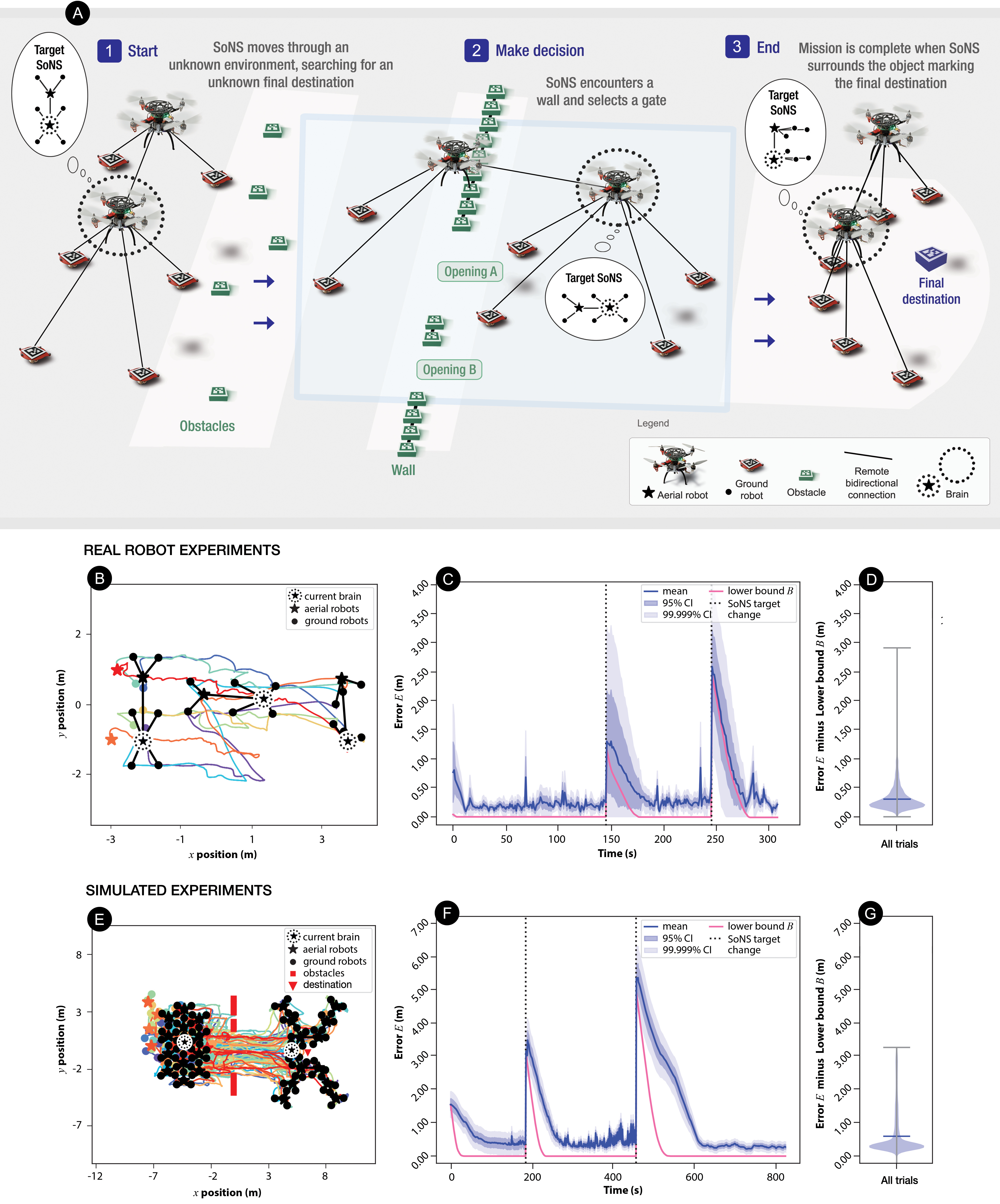}
    \caption{\textbf{Binary decision making.} In this experiment, robots need to sweep the environment to find an object that marks the final destination, making a binary choice between two possible paths. \textbf{(A)} Mission schematic: (1) Robots that have established their target SoNS begin moving across an environment, searching for the final destination object. (2) After surpassing an obstacle field, robots sense a wall, collaboratively choose the largest opening, and adjust the path of the SoNS to pass through the selected opening. (3) Robots sense that there are no longer walls constraining them, then sense the final destination object and change their target SoNS to surround it, completing the mission. 
    \textbf{(B-D)} Results of the real experiments. \textbf{(B)} Trajectories of robots over time, with the initial, an example intermediate, and the final SoNS indicated in black (from left to right, respectively) and \textbf{(C)} actuation error $E$ (mean and confidence interval per robot over time, see Eq.~\ref{eq:error}, with lower bound $B$, see Eq.~\ref{eq:lower_bound}, plotted for reference) in a real example trial (shown in Movie~S4 in the supplementary materials). \textbf{(D)} 
    Violin plot of the actuation error minus the lower bound $E - B$ (mean and confidence interval per robot per second) in all five real experiment trials. \textbf{(E-G)} Results of the simulated experiments, given in the same format: \textbf{(E)} trajectories and \textbf{(F)} actuation error $E$ in a simulated example trial, with \textbf{(G)} a violin plot of the actuation error minus the lower bound $E - B$ in all 50 simulated trials.
    (For more detailed experiment results, see Sec.~\ref{SM:results} of the supplementary materials.)}
    \label{fig:decision-mission}
\end{figure}

The next proof-of-concept setup is a reactive path planning mission (see Fig.~\ref{fig:decision-mission}), in which a SoNS must make a binary choice between possible paths and update its trajectory accordingly, while continuously reconfiguring itself to navigate an unknown environment and search for an object that marks the final destination (see Fig.~\ref{fig:decision-mission}A). 
The robots begin the mission as members of one SoNS and the brain begins with a straight trajectory in a given direction. To start, the SoNS begins sweeping the environment in as wide a formation as can fit in the environment, because the position of the final destination object is unknown. When it encounters a field of small randomly scattered obstacles, the SoNS needs to negotiate its inter-level control distribution while continuing to sweep. When it encounters a wall, the SoNS must collectively detect two openings located at varying positions in the wall and choose the one that is larger, then update its path and reorganize itself to fit through the larger opening. The decision-making process is self-organized: robots that sense openings vote on them according to size and reach a consensus, with the first robot to propose the selected opening then becoming the new brain. Once the SoNS passes through the opening, it must find the final destination object and reorganize to encircle it.
The mission goal is to reach the final destination object by choosing the correct path between binary options with different qualities, without suffering any connection breaks from environmental disturbances in the meantime.
In various portions of this mission, the SoNS adjusts its communication structure, inter-level control distributions, and overall system behavior as needed.

In all trials, the robots successfully complete the mission goals: the SoNS senses both openings and chooses the larger one, then reaches the final destination object, without any breaks in the system architecture (see Fig.~\ref{fig:decision-mission}B,E). The progression of a typical experiment can be seen in the example trials (see Fig.~\ref{fig:decision-mission}C,F): due to the complexity of the mission, the actuation error experiences many spikes. However, after these spikes, the robots converge to a topology encircling the final destination object, with low steady-state error. All trials reach a low steady-state error, with the average error with real robots being slightly lower than in simulation (see Figs.~\ref{fig:decision-mission}D,G).
 
\subsubsection{Splitting and merging systems}
\label{sec:splitmerge-mission}

\begin{figure}
    \centering
    \vspace{-20mm}
    \includegraphics[width=0.99\textwidth]{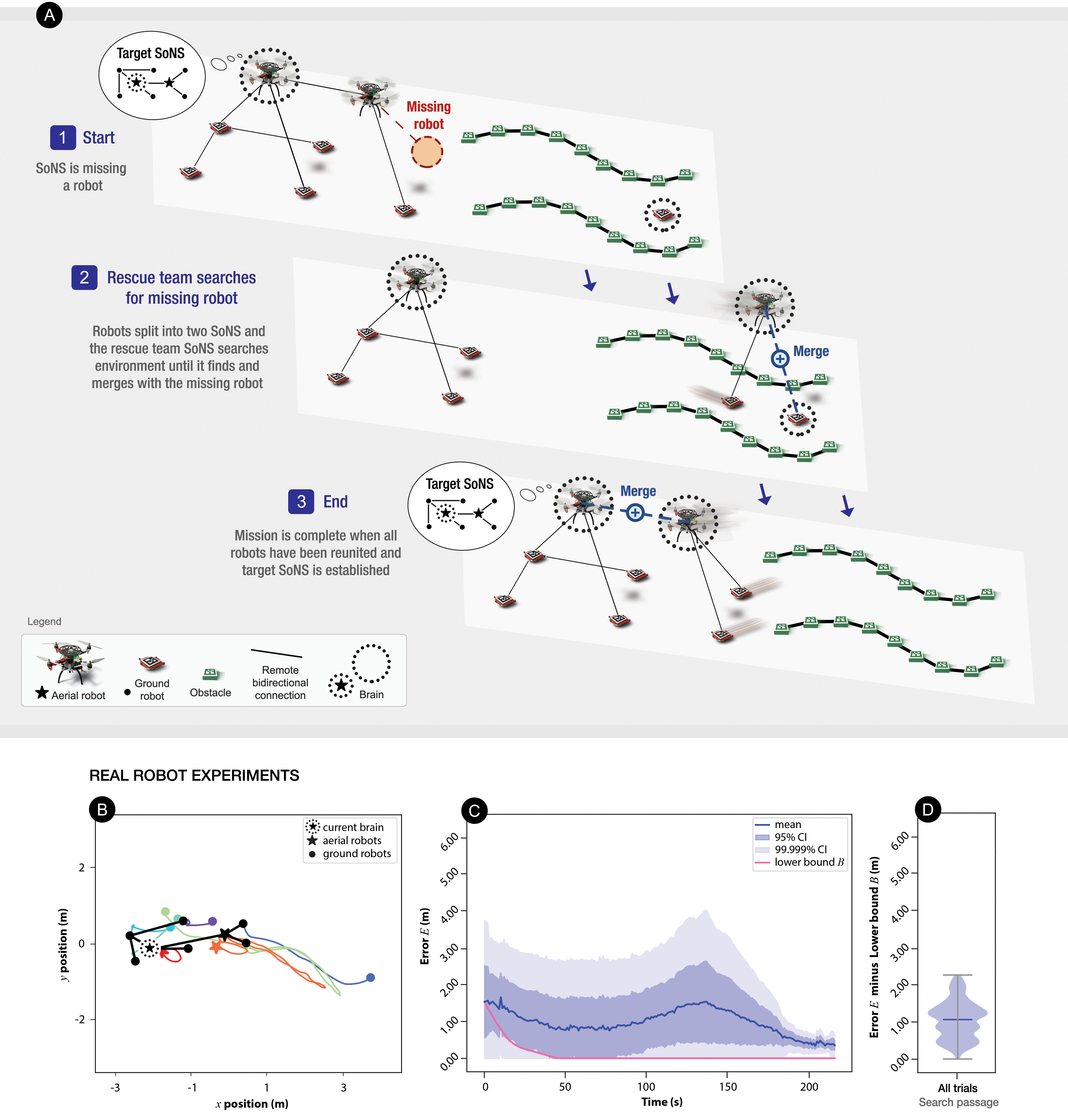}
    \caption{\textbf{Splitting and merging systems.} In this experiment, robots need to conduct multi-SoNS search-and-rescue missions to find missing robot(s) in a passage between two walls composed of obstacles and reunite into a single SoNS. (In a second variant, not pictured here, but included in Sec.~\ref{SM:results} of the supplementary materials, the robots need to collaborate to push an obstruction out of their way.) \textbf{(A)} Mission schematic: (1) When robots start, they are in a SoNS that is missing a member. (2) The brain instructs one of the robots to split from it and temporarily form its own multi-robot SoNS as a rescue team. The rescue team SoNS searches the environment until it finds the missing robot in its own single-robot SoNS and merges with it. (3) The merged rescue team SoNS then returns to the location where it initially split off and re-merges with the remaining SoNS, so that all robots are reunited.
    \textbf{(B-D)} Results of real robot experiments. \textbf{(B)} Trajectories of robots over time, with the initial and final positions indicated in color and in black, respectively, and \textbf{(C)} actuation error $E$ (mean and confidence interval per robot over time, see Eq.~\ref{eq:error}, with lower bound $B$, see Eq.~\ref{eq:lower_bound}, plotted for reference) in a real example trial (shown in Movie~S5 in the supplementary materials). \textbf{(D)} Violin plot of the actuation error minus the lower bound $E - B$ (mean and confidence interval per robot per second) in all five trials. (For more detailed experiment results, including the second variant and the simulation experiments, see Sec.~\ref{SM:results} of the supplementary materials.)}
    \label{fig:splitmerge-mission}
\end{figure}

The last proof-of-concept setup with real robots is a search-and-rescue mission (see Fig.~\ref{fig:splitmerge-mission}), in which SoNSs split or merge in order to reunite with the missing robots. At initiation, one or more single-robot SoNSs are isolated somewhere in the environment and wait there to be found by a rescue team. There is also a primary multi-robot SoNS somewhere in the environment. This SoNS notices that it is missing a robot and starts a rescue mission to find one (see Fig.~\ref{fig:splitmerge-mission}A).  
The primary SoNS does not know the direction of the missing robot(s), so the brain issues instructions to split into two independent SoNS with new mission goals. The SoNS associated with the original brain (`home' SoNS) will stay in place and wait, while the newly split `rescuer' SoNS will explore. The rescuer SoNS follows found landmarks along a non-convex barrier until it finds and merges with the isolated robot(s). It then guides each found robot out of the barriers, and returns to and merges with the home SoNS by backtracking along the landmarks. The goal of these missions is for all robots in the environment to eventually merge into a single shared SoNS. In this mission, adjusting the inter-level control distribution at each connection would not be sufficient for the task requirements, so the SoNSs completely reconfigure their communication structures and system behaviors to complete the mission.
In an alternative mission variant (see Sec.~\ref{SM:results} of the supplementary materials), the primary SoNS knows the direction of its missing robot, but needs to physically push an obstruction out of the way, then merge with the missing robot and guide it out of a convex barrier. (For results of simulation experiments, see Sec.~\ref{SM:results} of the supplementary materials.)

In all trials, the robots successfully complete the mission by splitting and then merging (see Fig.~\ref{fig:splitmerge-mission}B,D): the multi-robot SoNS(s) are successfully reunited with the isolated single-robot SoNS(s). The actuation error rises during the period of re-organizations, but all robots re-merge into one SoNS and the system returns to a low steady-state error (see Fig.~\ref{fig:splitmerge-mission}C,D).

\subsection{Scalability}
\label{sec:scalability}

\begin{figure}
    \centering
    \vspace{-15mm}
    \includegraphics[width=0.99\textwidth]{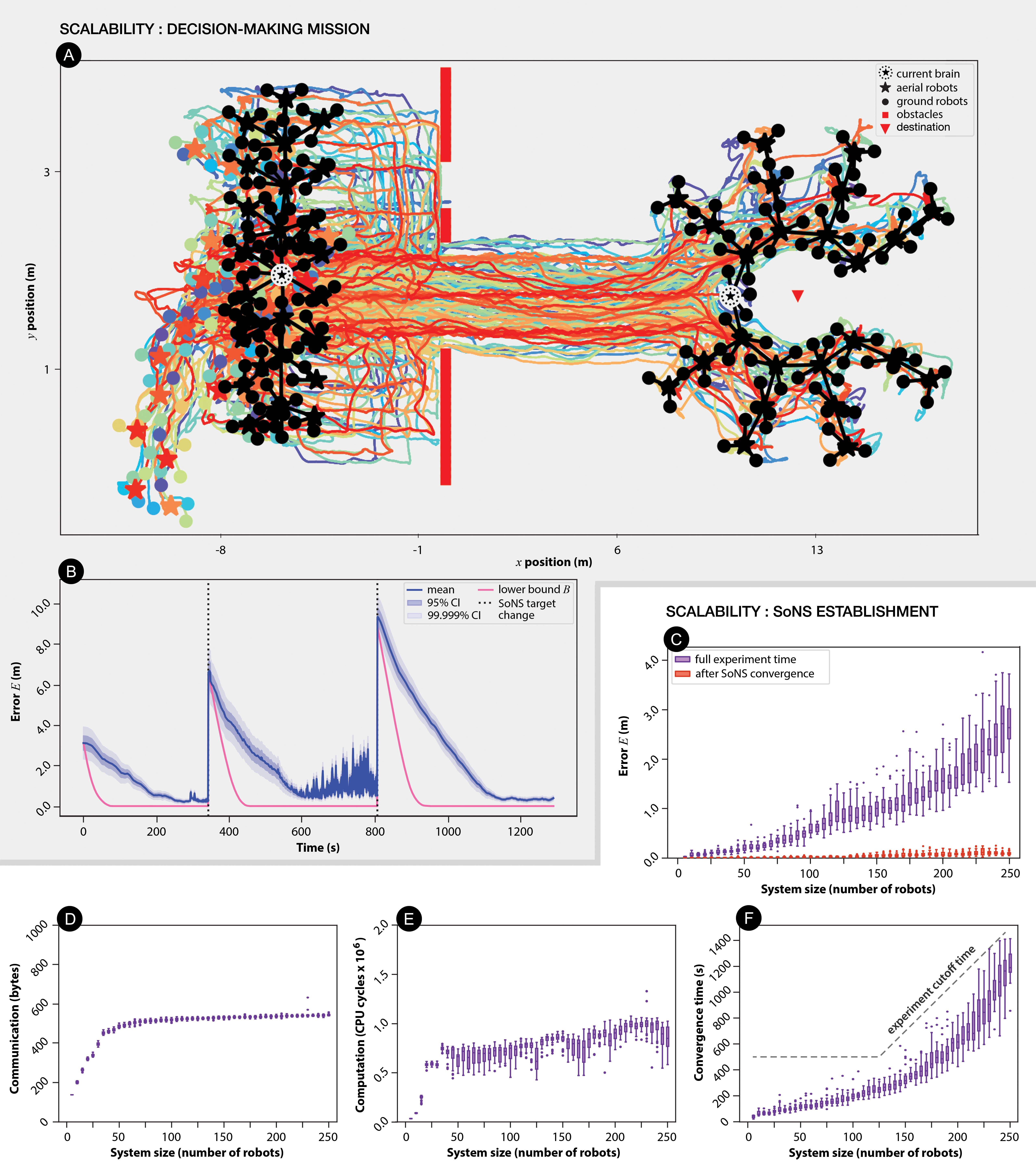}
    \caption{\textbf{Scalability study}. Results in simulation with up to 250 robots (20\% drones and 80\% ground robots). \textbf{(A-B)} Demonstration of a 125-robot SoNS completing the binary decision-making mission shown in Fig.~\ref{fig:decision-mission}. \textbf{(A)} Trajectories of robots over time (with the initial and final SoNS indicated in black, on the left and on the right, respectively) in an example trial with 125 robots (shown in Movie~S6 in the supplementary materials). \textbf{(B)} Actuation error $E$ (mean and confidence interval per robot over time, see Eq.~\ref{eq:error}, with lower bound $B$, see Eq.~\ref{eq:lower_bound}, plotted for reference) in the example trial shown in \textbf{(A)}. 
    \textbf{(C-F)} Scalability measured by different performance metrics when the number of simulated robots increases from 5 to 250 in steps of 5, with 30 trials per system size (several example trials shown in Movie~S7 in the supplementary materials). In this experiment setup, robots simply establish a SoNS, as in the mission shown in Fig.~\ref{fig:establishment-mission}. All performance metrics are calculated per robot per step. \textbf{(C)} Actuation error $E$ throughout the experiment (purple bars) and after the SoNS converges, thus reaching a steady state (red bars); \textbf{(D)} communication, measured in bytes of messages passed; \textbf{(E)} computation, measured in the maximum CPU clock cycles for any robot in the SoNS; and \textbf{(F)} convergence time. (For more detailed experiment results, see Sec.~\ref{SM:results} of the supplementary materials.)}
    \label{fig:scalability}
\end{figure}
 
We demonstrate the scalability of the SoNS architecture in swarms of up to 250 robots (200 ground robots, 50 aerial robots).
We use the binary decision-making mission setup (see Sec.~\ref{sec:decision-mission}) with four different system sizes, up to 125 robots (50 trials per system size), and the establishment mission setup (see Sec.~\ref{sec:establishment-mission}) with 50 different system sizes, up to 250 robots (30 trials per system size).
Given that the real arena size is limited, we run the scalability experiments only in simulation (for cross-verification between the simulator and reality, see Sec.~\ref{SM:cross-verify} in the supplementary materials). We also disregard the battery capacity of the quadrotor platform in these experiments.

The results of the binary decision-making experiments show that the robots complete all parts of the mission successfully (see example trial with 125 robots in Fig.~\ref{fig:scalability}A,B and Movie~S6 in the supplementary materials; for more detailed experiment results, see Sec.~\ref{SM:results} of the supplementary materials). These results provide a proof of concept that the demonstrated capabilities of the SoNS approach do not break down in larger swarms (up to 125 robots): the SoNS can balance global and local goals, collectively sense and react to an environment, make a collective binary decision, and reconfigure when needed, all without breaking the system architecture.

In the establishment mission experiments, we aim to test the scalability limits of the SoNS architecture under the current software implementation. 
We run experiments with system size $n= \{5, 10, 15, \dots, 250\}$ robots (30 trials for each system size) and a maximum experiment time of $t = 500$\,s for $n \leq 125$ robots and $t = 4n + 4(n-125)$\,s for $125 < n \leq 250$ robots (see dashed line in Fig.~\ref{fig:scalability}F).
In system sizes of $n \leq 125$ robots, all trials converge before the maximum experiment time. In systems of $125 < n < 220$ robots, one or two trials per system size do not converge before the maximum time (approx.\,5\% of trials, on average). In systems of $220 \leq n \leq 250$ robots, approx.\,20\% of trials do not converge before the maximum time. In trials that converge (shown in Fig.~\ref{fig:scalability}F), the mean convergence time rises superlinearly, with the rate of change increasing most noticeably after system sizes of 150 robots.
We therefore consider the performance of the current SoNS implementation to be fully reliable in system sizes up to 125 robots, to be somewhat stable until 220 robots, and to degrade substantially in larger systems. See Sec.~\ref{sec:limitations} for a discussion of actuation error and convergence times in swarms of 220 robots and larger.

The results of the establishment mission trials that converged before the maximum time are shown in Fig.~\ref{fig:scalability}C-F. (Several example trials are shown in Movie~S7 in the supplementary materials.) Fig.~\ref{fig:scalability}C (see purple bars) shows the mean and variance of the actuation error per robot. The error increases with the number of robots: slowly at first, and more substantially in systems larger than 100 robots. This increase in error according to system size can be mostly attributed to the rising Euclidean distance between the starting positions and the target positions in the eventual SoNS---i.e., the greater the number of robots, the higher the lower bound $B$ of the error (see Eq.~\ref{eq:lower_bound}). Indeed, when we measure only the actuation error after the SoNS has finished converging, and therefore the starting positions are no longer relevant (see red bars in Fig.~\ref{fig:scalability}C), we see that the steady-state error increases only slightly with system size and always remains low (less than $E = 0.5$\,m).

We measure the communication load in terms of mean bytes per robot (inclusive of inbound and outbound messages) and the computational work in terms of maximum CPU clock cycles for any robot in the SoNS, which would be the same on the real robots as in the simulation. (Our simulations run the same software modules as those run on the real robots, see Sec.~\ref{SM:cross-verify} of the supplementary materials.) 
The results show that, in system sizes of 50 robots or more, the communication load nearly plateaus (see Fig.~\ref{fig:scalability}D) and the mean CPU cycles rise only slightly, with a moderate increase in variability (see Fig.~\ref{fig:scalability}E). 
The rise in communication load that occurs in small swarm sizes (see Fig.~\ref{fig:scalability}D) is in part a function of the maximum communication range and maximum robot density (due to minimum safety distances). For small swarm sizes, the average number of robots each robot has in its field of view (including both connected robots and unconnected robots that might be candidates for recruitment operations) grows with swarm size until the robot density reaches its upper limit, after which, the average number of robots in each robot's field of view plateaus (at approx.\,50 robots). 
The similar rise in computational work in small swarm sizes (see Fig.~\ref{fig:scalability}E) is in part a function of the maximum number of connections per robot in the target SoNS's communication topology. For small swarm sizes, the average realized connections per robot rises until the swarm is large enough that most robots reach the maximum number of connections per robot allowed in the current topology design (max.\,8 per robot), after which, the average realized connections per robot plateaus (at approx.\,35 robots).
We conclude that the fusion and compression techniques used for sensing information and actuation instructions in this implementation of the SoNS concept are sufficient for the robots and missions at hand, and that the SoNS architecture can be considered scalable in terms of overall communication and calculation loads.

\subsection{Fault tolerance }
\label{sec:faulttolerance}

\begin{figure}
    \centering
    \vspace{-30mm}
    \includegraphics[width=0.99\textwidth]{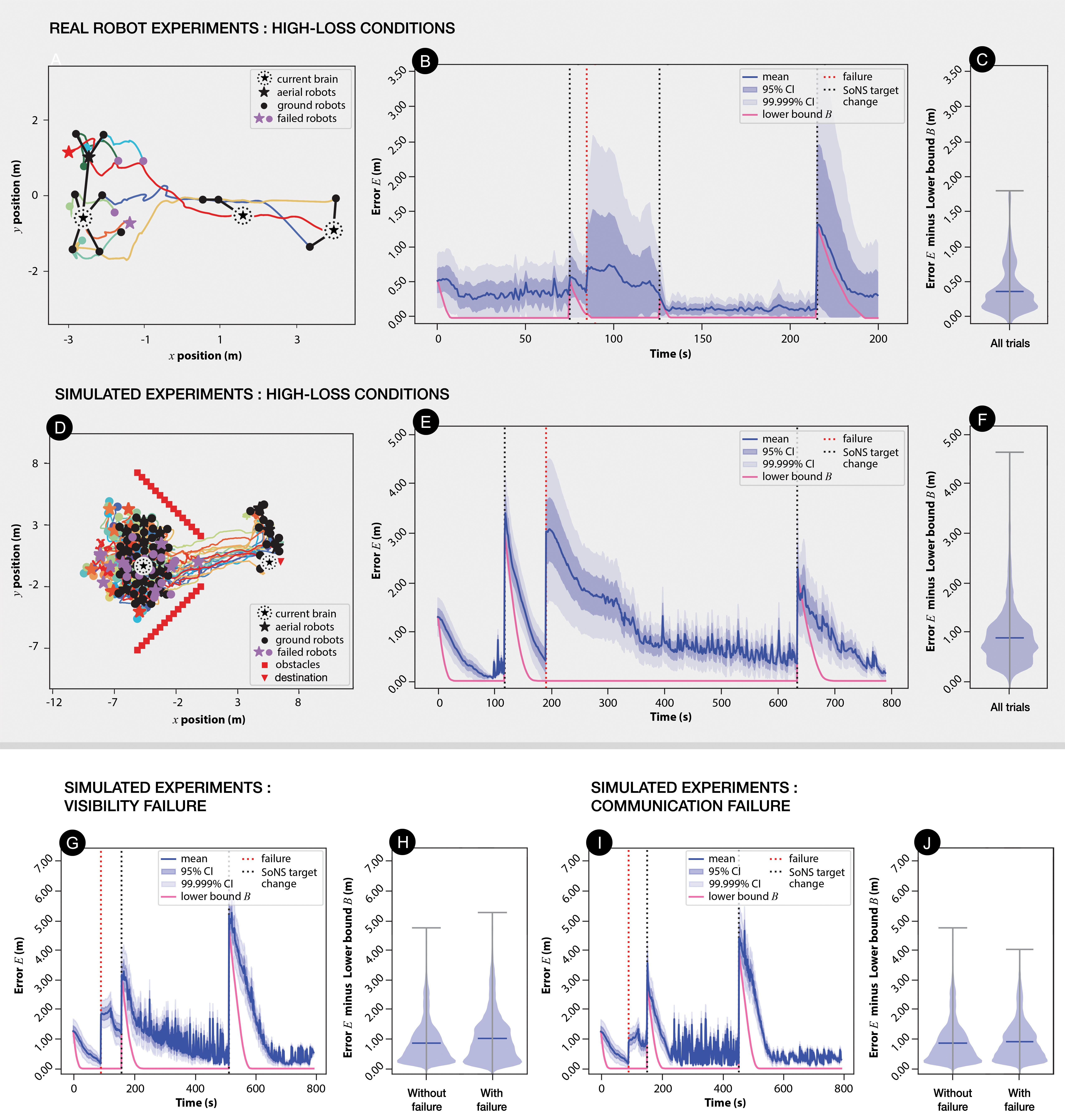}
    \caption{\textbf{Fault tolerance study.} Results with real robots and in simulation, testing both permanent robot failures in part of the SoNS (up to two-thirds of the robots) and temporary system-wide failures of communication or vision. 
    \textbf{(A-C)} High-loss conditions with real robots: arbitrary permanent failures of multiple robots. \textbf{(A)} Trajectories of all robots over time, with the initial, an example intermediate, and the final SoNS indicated in black (from left to right, respectively), and the failed robots indicated in purple at their shutdown positions, and \textbf{(B)} actuation error $E$ (mean and confidence interval per robot over time, see Eq.~\ref{eq:error}, with lower bound $B$, see Eq.~\ref{eq:lower_bound}, plotted for reference) in an example trial (shown in Movie~S9 in the supplementary materials); \textbf{(C)} violin plot of the actuation error minus the lower bound $E - B$ (mean and confidence interval per robot per second) in all five real robot experiment trials.
    \textbf{(D-F)} High-loss conditions in simulation, with either $33.\overline{3}$\% or $66.\overline{6}$\% probability for each robot to fail. \textbf{(D)} Trajectories of robots over time, with the initial and final SoNS indicated in black (on the left and the right, respectively), and the failed robots indicated in purple at their shutdown positions, and \textbf{(E)} actuation error $E$ in an example trial with $66.\overline{6}$\% probability for each robot to fail (shown in Movie~S10 in the supplementary materials); \textbf{(F)} violin plot of the actuation error minus the lower bound $E - B$ in all high-loss experiment trials (including both $33.\overline{3}$\% and $66.\overline{6}$\% failure probabilities, 50 trials per probability). 
    \textbf{(G-H)} System-wide failure of vision in simulation: \textbf{(G)} actuation error $E$ in an example trial with 30\,s failure (shown in Movie~S11 in the supplementary materials) and \textbf{(H)} violin plots comparing the actuation error minus the lower bound $E - B$ in all 50 trials with 30\,s vision failure to the 50 trials without failure. 
    \textbf{(I-J)} System-wide failure of communication in simulation: \textbf{(I)} actuation error $E$ in an example trial with 30\,s failure (shown in Movie~S12 in the supplementary materials) and \textbf{(J)} violin plots comparing the actuation error minus the lower bound $E - B$ in all 50 trials with 30\,s communication failure to the 50 trials without failure. (For more detailed experiment results, see Sec.~\ref{SM:results} of the supplementary materials.)}
    \label{fig:faulttolerance}
\end{figure}

We demonstrate several aspects of fault tolerance in SoNS, in real and simulated swarms. First, using real robots, we show replacement of a single robot that has failed permanently, including a failed brain (see Movie~S8 in the supplementary materials). In these demonstrations, one robot is remotely triggered to fail (for the aerial robots, this includes immediately landing in place). Then, a new robot of the same hardware type is manually placed in the arena and switched on, after which it is recruited by the SoNS. 
When a brain robot or a robot at an inner hierarchy level fails, it is immediately and automatically replaced by another robot already in the SoNS, and the SoNS reorganizes around the change. Then, when a new robot is recruited, it fills the leftover vacancy in the re-organized SoNS.

Using real robots, we also demonstrate SoNSs reorganizing in high-loss conditions, i.e., after arbitrary permanent failure(s) when the failed robot(s) cannot be replaced.
The setup used is that of the collective sensing and actuation mission (see Sec.~\ref{sec:sensing-mission}), with five trials run.
The robots start the mission as one SoNS and, after failure occurs, the SoNS continues the mission with the robots available. 
The full set of results (see Sec.~\ref{SM:results} in the supplementary materials) shows that when ground robots fail, the rest of the robots are able to stay connected in one SoNS and complete the mission, whereas when an aerial robot fails, some of the ground robots downstream from it will be disconnected from the primary SoNS, but the remaining connected robots are able to continue the mission. For example, in the trial shown in Figs.~\ref{fig:faulttolerance}A,B, one of the aerial robots fails (see purple star in A) at approx.\,85\,s (see red dotted line in B). The only aerial robot that remains functional recruits the ground robots that remain functional and reachable, and continues with the mission, eventually reaching the object marking the final destination.
In all trials in which at least one robot of each type remains functional, the SoNS is able to re-organize itself with the remaining robots and continue with the mission, resulting in a relatively low overall error rate for all trials (see Fig.~\ref{fig:faulttolerance}C).

In simulation, we run the same setup with a larger swarm of 65 robots to test task performance under high-loss conditions. In this setup, each robot has probability $p$ to fail regardless of hardware type. We test two variants by setting $p = 0.\overline{3}$ or $p=0.\overline{6}$, with 50 trials per variant.
In these simulated experiments, the brain can be one of the robots failing.
For example, in the trial in Fig.~\ref{fig:faulttolerance}D-F, after two-thirds of all robots fail (including the brain), the SoNS continues with the mission, eventually reaching the final destination object. In all trials, the SoNS is able to re-organize itself with available robots (i.e., robots that have not failed and are not stuck in place), returning to a low steady-state error (see Fig.~\ref{fig:faulttolerance}E).

Also in simulation in a swarm of 65 robots, we test two types of temporary system-wide failures that would be likely to occur in practice: vision failure (for example, because of an obstruction in the environment) and wireless communication failure. 
We test visibility and wireless communication failures with durations of 0.5\,s, 1.0\,s, and 30\,s, with 50 trials per duration for each failure type.
The results (see Fig.~\ref{fig:faulttolerance}G-J) show that, in all cases, the SoNS is able to re-establish itself after the system-wide failure, converging on the correct communication topology and relative positions. The results also show that in all trials the SoNS is able to continue with the mission, leaving behind less than 5\% of robots (i.e., three robots or less) in any trial, with the actuation error increasing temporarily after failure occurs (increasing by less than $E = 0.7$\,m after the red dotted line in Fig.~\ref{fig:faulttolerance}G,I) and then returning to a lower steady state.

\subsection{Convergence and stability analysis}

The convergence of the position errors and closed-loop stability of position tracking, with respect to the control inputs, are guaranteed in SoNS for both moving and motionless cases under ideal relative distance sensing. For all simulated or real setups studied in this paper, a SoNS of $n$ robots using a baseline control law will stably track the target relative positions. The large-scale swarms in our scalability experiments are shown theoretically to converge and be stable. A full description of the analysis results is provided in Sec.~\ref{SM:theory} of the supplementary materials.

\section{Discussion }
\label{sec:Discussion}

The presented results demonstrate that the SoNS approach greatly expands the state of the art in swarm robotics.
Its four novel features allow robot swarms, for the first time, to complete centrally-defined mission goals using a self-reconfigurable system architecture.

Firstly, the self-organized controllable hierarchy of the SoNS approach, which is novel for robot swarms, is demonstrated in all missions through the maintenance and reconfiguration of remote communication structures. The topology and relative positions are both controllable (sometimes separately), without any external intervention. 
For example, in the sweeping mission showing collective sensing and actuation (Sec.~\ref{sec:sensing-mission}), the SoNS adapts to moderate changes in the environment by reconfiguring relative positions separately from topology, then adapts to greater changes by reconfiguring them both.
The second novel feature of the SoNS approach---its self-organization of explicit and interchangeable leadership roles---enables robot systems to establish and reorganize themselves flexibly, without having to reinitialize the whole architecture if, for example, a brain loses connection or some of the robots at an inner hierarchy level experience environmental disturbances. This feature can be seen in the fault tolerance experiments (Sec.~\ref{sec:faulttolerance}), in which SoNSs are able to automatically reorganize after a brain is lost or a majority of robots fail, continuing their mission without having to backtrack on past progress or synchronize with an external reference.
Thirdly, inter-system reconfiguration is demonstrated during all SoNS establishment operations and splitting and merging operations, such as in the search-and-rescue mission (Sec.~\ref{sec:splitmerge-mission}). As another example, during establishment operations in the scalability experiments (Sec.~\ref{sec:scalability}), many SoNSs of a few robots are first formed, then these SoNSs begin to merge with each other simultaneously, redistributing themselves retaining existing sub-structures when possible.
The fourth novel feature of the SoNS approach is the ability to reconfigure the system's internal behavior structures, for instance by renegotiating the inter-level control distribution, as seen in the balancing of global and local goals during the obstacle-avoidance mission (Sec.~\ref{sec:obstacles-mission}). As another example, internal management of the SoNS-wide behaviors is demonstrated in the path-planning mission with binary decision-making (Sec.~\ref{sec:decision-mission}), during which a SoNS must multi-task, combining behaviors in an ad-hoc way. This on-demand multi-tasking might be straightforward in single robots or centralized systems, but is a novel advancement for robot swarms.

In short, we have shown the SoNS system architecture to provide a reconfigurable explicit control hierarchy for robot swarms, which is in itself novel. Beyond that, because the SoNS approach allows the system architecture to be multi-level and dynamic while still being self-organized, it enables qualitatively new behaviors and behavior combinations in robot swarms. The collective sensing and actuation (Sec.~\ref{sec:sensing-mission}), binary decision-making (Sec.~\ref{sec:decision-mission}), and splitting and merging missions (Sec.~\ref{sec:splitmerge-mission}) reflect the relative ease and quickness of behavior design and management as well as flexibility in behavior combinations. Using the SoNS approach, we have shown that robot swarms can handle increased mission complexity, self-reconfiguring their communication, control, and behavior structures on demand.

We have also shown that the SoNS approach retains the fault tolerance and scalability advantages for which robot swarms are often studied. 
Any robot, including the brain, is shown to be immediately and automatically replaceable in both reality and simulation, and SoNSs are able to recover and continue missions in high-loss conditions, as well as recover from temporary system-wide failures of detection or communication. In these fault tolerance experiments (Sec.~\ref{sec:faulttolerance}), we specifically tested a SoNS's ability to recover without adapting its overall mission, to assess baseline performance. If robots had instead been allowed to search for each other after experiencing disconnections (as shown in the search-and-rescue mission), then more robots could have been retained.
Although the robot group sizes in the real missions are constrained by arena size, all missions were also successfully completed in swarms of up to 250 robots in simulation, using a simulator whose reliability has been verified against the behavior of the real robots (see Sec.~\ref{SM:cross-verify} of the supplementary materials). Furthermore, in swarms of up to 250 robots, the simulation studies (Sec.~\ref{sec:scalability}) confirmed that the computation and communication metrics scale linearly or plateau, demonstrating the absence of problematic bottlenecks.
Overall, the results show that, using the SoNS, sensing, actuation, and decision-making can be coordinated SoNS-wide, without sacrificing scalability, flexibility, and fault tolerance.

\subsection{Limitations and future work }
\label{sec:limitations}

Although the actuation error in a SoNS remains relatively small in systems of less than 150 robots, we can see that it becomes much larger as the SoNS scales, especially in SoNSs of 220 robots and larger. This error is not only caused by the SoNS architecture: some of the error is unavoidably caused by using a purely reactive control law to manage relative positions when following a leader. In the current setup, robots calculate their new velocities after the velocities of the upstream robots have already been updated. Using this strategy, error can be guaranteed to be bounded (see Sec.~\ref{SM:theory} of the supplementary materials) but not guaranteed to be zero. The error could be reduced by incorporating feedforward terms into the motion control of a SoNS. Delays could be handled by, for example, sufficient preview of a reference signal: if the brain were to perform short-term online path planning, it could communicate a near-future reference trajectory downstream, instead of simply its current velocity. Similarly, in the current setup, robots calculate velocities based on distance measurements, which requires high sensing precision. The error could be reduced by instead using bearing (i.e., angle of arrival) measurements of two robots, which requires less sensing precision (for details, see our work on bearing-based frameworks~\cite{zhang2023self}). 

Closely related to this issue is the reaction time, which in SoNSs is based on communication speed and localization speed. In SoNSs with vision-based positioning, overall reaction time and stability will be increased as advances are made in visual tracking (for example, using ultraviolet LED markers~\cite{walter2018fast} or faster fiducial marker tracking~\cite{ulrich2022towards}). Advances made in other types of relative positioning that are less influenced by disturbances---such as vibration in powerful aerial robots or signal attenuation problems in underwater robots---will also be important for SoNS implementations with other types of robot platforms, for instance that can move at higher velocities than the laboratory ground robots used here. Certain approaches to relative positioning will require SoNSs with multi-layer networks (for instance, one layer for bearing-based relative positing and another layer for hierarchy of supervision~\cite{zhang2023self}). If it is found useful in the future for some layers of the network to be cyclic, then consensus mechanisms will need to be studied for the management of information fusion at the occurrence of cycles.

Currently, our open-source SoNS software supports experiments with real robots that use vision-based relative positioning, with aerial robots that are able to detect the relative positions and relative heading orientations of their ground robot neighbors. 
Note that the restrictions we imposed on the real experiments due to practical constraints are not limitations of the SoNS approach in general---for instance, SoNS could be applied to other types of robot platforms.
Additional versioning of our current software repository could also support other robot platforms and relative positioning techniques. 
Regardless of the approach used for positioning in a SoNS, a bidirectional connection can only be formed between two robots if at least one of them can detect the position of the other.

One of the biggest potentials moving forward is the possibility for more advanced SoNS brains and more advanced hierarchical computation, for instance by developing SoNSs with greater situational awareness (i.e., understanding of a situation and environment, especially for detecting risks), online learning, or autonomous mission planning capabilities. Automatic design approaches such as neuroevolution might help SoNS brains handle ever increasing mission complexity, or artificial neural networks such as autoencoders might expand the ability of a SoNS to internally manage and react to large amounts of sensor data. 

\section{Materials and Methods}

\subsection{SoNS control}

Full descriptions of the SoNS algorithms are provided in Sec.~\ref{SM:algorithm} of the supplementary materials and the SoNS software is available in an open-source repository. The primary operations are establishment, splitting, merging, node allocation, collective actuation via motion, and collective sensing and reaction, as described below. 

\textbf{Establishing a SoNS.}
The process by which a SoNS establishes, maintains, and reconfigures its dynamic hierarchical network is fully self-organized. At the start, the robots are all running identical SoNS software. Each robot starts as an independent single-robot SoNS, of which it is the brain by default. Each brain has a map of the communication structure it would like to build: the target topology represented by graph $G$ and target relative positions and other conditions represented by attributes $A$ associated to the nodes and links of $G$. The map can be, for example, calculated by a robot based on environmental features, predefined manually, or defined using a lookup table. Each brain then searches for robots to recruit in order to populate its map. When two SoNS that are searching for robots meet, each tries to recruit the other and, if they reach an agreement, the two merge under a single brain to become one SoNS. When a recruitment operation is completed and a new link $\{x_n, x_{n+1}\}$ is established, the robots have already reached an agreement about which robot will be at the higher level in the system architecture (i.e., $x_n$, the ``parent'' node for that connection) and which will be at the lower level (i.e., $x_{n+1}$, the ``child'' node). Once a brain $x_1$ with map $G$ has successfully recruited some children $\{x_2, x_3, \ldots x_n \}$, it sends each child a map of the respective structure that should be built downstream from it: the subgraphs $\{G'_1, G'_2, \ldots G'_n \}$ and the associated subsets $\{A_1, A_2, \ldots A_n \}$. Each child then takes full responsibility for the sub-structure directly downstream from it. It tries to recruit robots to populate the partial map it has received, becoming the parent of those new robots if an agreement is reached. It then repeats the process of map subdivision, and a new level of children begins to recruit robots and become parents. Meanwhile, as a robot recruits children successfully, if it receives messages from its parent that the new children are needed more urgently elsewhere in the SoNS, it might choose to hand over some of its children to its parent, to be redistributed in the SoNS in a self-organized way. These operations continue until the SoNS is complete, according to the brain's dynamic map $G$, or until no new robots can be found for recruitment.

\textbf{Splitting and merging.}
The simplest splitting operation is the departure of a single-robot SoNS, triggered by the former parent of the robot that departed. If, for instance, robot $x_n$ has completed its map $G'_n$, but the parent of robot $x_n$ then updates $G'_n$ from $E = \{\{x_n, x_{n+1}\},$ $\{x_n, x_{n+2}\}\}$ to $E = \{x_n, x_{n+1}\}$, robot $x_n$ might choose to expel one of its children by breaking the link to it. The expelled child will immediately and automatically return to being the brain of its own single-robot SoNS. A splitting operation that results in two multi-level SoNS is similar, but will be triggered at a higher level in the hierarchy. For example, a brain might update its own map $G$ from having two children $\{x_2, x_3\}$ with their subgraphs $G'_1$ and $G'_2$ to having only one child $x_2$. In that case, it will choose to expel $x_3$, which after splitting will maintain all its downstream links according to its map $G'_2$. The expelled robot $x_3$ will automatically become the brain of its own SoNS, and might choose to update its map, for instance from $G'_2 = (V = \{x_4, x_5, x_6, x_7\}, E = \{\{x_3, x_4\}, \{x_3, x_5\}, \{x_4, x_6\}, \{x_5, x_7\}\})$ to $G$ with the same $V$ set but $E = \{\{x_3, x_4\}, \{x_4, x_5\}, \{x_4, x_6\}, \{x_4, x_7\}\}$. In this case, $x_3$ will not reinitialize the whole structure, but will instead, for example, maintain links $\{x_3, x_4\}$ and $\{x_4, x_6\}$ and reorganize the other two links to match its new $G$. Unlike in splitting operations, the negotiations of merging operations always include the two brains of the respective SoNS. The simplest merging case is when one brain tries directly to recruit another brain. In this case the two brains will compare their internal assessments of their quality (for example, how many robots are in their SoNS), and the lower quality brain will agree to become the child of the higher quality brain. If the two brains find that they have equal quality, the allocation is chosen randomly. If a merging operation instead initiates downstream from one or both brains, the same quality comparison and agreement occurs, but some information propagation and node reallocation occurs to enable the comparison and the eventual merge. 
 
\textbf{Node allocation. }
All reconfiguration in a SoNS is done in a self-organized way using strictly local communication, and therefore multiple operations will often occur simultaneously. This makes it likely that a robot in the inner hierarchy levels will have to allocate more than one new child at the same time, due to recruitment or redistribution through handovers. For example, in the case of a local node allocation (i.e., a parent needs to allocate multiple candidates to its own children roles), the parent will compare the current relative position and current downstream order (i.e., number of vertices in the sub-graph) of each candidate with the target relative position and the target downstream order for its child nodes, as defined in its map $G_n$ and $A_n$. Similar operations assessing robots according to position and downstream order are used for all node allocation problems, including a candidate matching multiple local roles, multiple candidates matching a single local role, and a candidate matching no local role and therefore being reallocated upstream. Importantly, a parent can also use these operations to reallocate children it already has. A parent can replace an existing child with a candidate that is a better match for the role, then demote the former child to candidate status and enter it into the next round of local node allocation. Because of this replacement possibility, the robots in a SoNS continuously redistribute themselves within the SoNS. For instance, if a SoNS has one unoccupied node on its eastern-most side and recruits a new robot on its western-most side, the new robot will not be inefficiently handed over link-by-link until it reaches the unoccupied node. Rather, the robots in the SoNS will shift themselves in a self-organized way, so that the new robot is allocated to a nearby node, replacing a previous child, and a robot that was already near to the unoccupied node ends up being reallocated to it.

\textbf{Collective actuation via motion.}
A connection can only be established and maintained in a SoNS if one robot is in the other's field of view, so it is essential that the robots of a SoNS can move together collectively. In the simplest case, the brain defines and follows its motion trajectory independently of the robots downstream from it, and the rest of the robots act as followers of their respective parents, using strictly reactive control. 
In this case, each non-brain robot receives two controller inputs from its parent, which the parent calculates from the attribute set $A_n$ of its map $G_n$: a target displacement vector $\boldsymbol{d}$ and target orientation in unit quaternion $\boldsymbol{q}$. The parent will update the controller inputs $\boldsymbol{d}$ and $\boldsymbol{q}$ for its child when there is a change in its $A_n$. Using the inputs $\boldsymbol{d}$ and $\boldsymbol{q}$ and a mass-spring-damper model, each child outputs its own target linear velocity vector $\boldsymbol{v_t}$ and target angular velocity vector $\boldsymbol{\omega_t}$ and uses them to calculate its motor inputs for time $t$. In cases where the inter-level control distribution of the SoNS is incorporating some decentralized motion control (such as in the obstacle avoidance mission), a child might add another control layer to its mass-spring-damper model, or its parent might override its default behavior to temporarily diverge the calculation of $\boldsymbol{d}$ and $\boldsymbol{q}$ from the goals specified in $A_n$.

\textbf{Collective sensing and reaction.}
When a non-brain robot senses an external signal in the environment, it chooses both whether to respond to the signal directly and whether to send some compressed representation of this sensor information (for example, symbolic representations of identified objects). If a robot chooses to send sensor information to neighbors, the information is sent upstream. In exceptional cases, it can also choose to send the information downstream. Its choices might be based on a behavior model received from its parent or one defined by itself, depending on the inter-level control distribution of the SoNS. If it sends the sensor information upstream or downstream, the recipient robot will in turn make its own choices about response and propagation, based on elaborations of the compressed information. If it chooses to send the information onwards, it might first fuse it with other information from another child or its own onboard sensing, and send some compressed representation of the fused information. If robots continue to choose to send information upstream, the information will culminate at the brain. At any point along the hierarchy, a robot that chooses to respond to information it received might not only update its own behavior but might also update behavior models or actuation instructions sent to its child(ren). If the brain chooses to respond in this way, it triggers a SoNS-wide response in the SoNS. It is important to note that SoNS-wide responses coming from the brain are not necessarily triggered by sensor information fused SoNS-wide. For instance, in the case of a very high-priority signal observed by only one robot, the signal would get sent to the brain and all other robots (at one time step per hop of the shortest path) and trigger an immediate SoNS-wide response. 

\subsection{Analysis metrics}
\label{sec:methods:analysis-metrics}

To analyze the empirical results according to actuation error, we use Euclidean distance to calculate the position tracking error $E$ at each timestep, as follows:
\begin{equation}
\label{eq:error}
    E = \frac{1}{n}\sum_{i = 1}^{n} E_i,~~~~~~E_i = |~ d(\mathbf{p}_i - \mathbf{p}_1) - d(\mathbf{f}_i - \mathbf{f}_1) ~|,
\end{equation}
where $n$ is the total number of robots, $\mathbf{p}_i$ is the current position of robot $r_i$, $\mathbf{f}_i$ is the target position of robot $r_i$, and $i=1$ is the brain. The brain's error $E_1$ is always zero, because the brain's position is always the same as its target position.

The lower bound $B$ of position tracking error $E$ indicates the minimum error that would be present over time if the robots always moved directly to their target positions at maximum speed on the shortest Euclidean path, as if no obstacles, self-organized re-configuration, nor inter-robot collisions were present. It represents the total Euclidean distance between all robot target positions and their respective start positions when that target was set, such that
\begin{equation}
\label{eq:lower_bound}
B = 
\left\{ 
\begin{aligned}
& \frac{1}{n}\sum_{i = 1}^{n} B_i, &~~~\text{if}~ B_i > 0\\
& 0 &~~~\text{otherwise}
\end{aligned} \right.~,
~~~~~~~~~ B_i = | d(\mathbf{p_\epsilon}_i - \mathbf{f_\epsilon}_i) | - \kappa_i(t - t_\epsilon),
\end{equation}
where
$t$ is the current time, 
$t_\epsilon$ is the start time of the current target communication structure,
$\mathbf{p_\epsilon}_i$ is the position of robot $r_i$ at $t_\epsilon$, 
$\mathbf{f_\epsilon}_i$ is the target position of $r_i$ at $t_\epsilon$,
$\kappa_i$ is a constant describing the maximum speed of $r_i$ according to its type (ground robot or aerial robot).

\subsection{Theoretical guarantees} 

In the experiments conducted in this paper, to verify our SoNS architecture, we consider a simple distance-based positioning approach and reactive control law to establish and maintain the target relative positions using onboard measurements. This simple approach can be understood as a performance baseline for the tracking of target positions within a SoNS. To provide theoretical analysis, we represent an $n$-robot SoNS as a system of $n - 1$ robot pairs, use a proportional control law that is strictly reactive, and derive the leader--follower tracking kinematics assuming the robots cannot access any global information or external reference frame, instead using only local sensing of relative information that is available in our real setup. 
We provide theoretical analyses and mathematical proofs regarding the convergence of the position errors and closed-loop stability of position tracking in a SoNS with respect to the control inputs. A full description of the analysis method is provided in Sec.~\ref{SM:theory} of the supplementary materials.

\subsection{Experiment setup }

To be able to conduct real hardware experiments, we developed an open-source quadrotor platform with sensing and computational capabilities suited for swarm robotics experiments and an accompanying simulator model (see technical report~\cite{OguHeiAllZhuWahGarDor2022:techreport-010}, which includes links to open-source repositories, and Secs.~\ref{SM:simulator},~\ref{SM:aerial} in the supplementary materials). We use this quadrotor with standard e-puck ground robots~\cite{mondada2009puck,millard2017pi} mounted with fiducial tags (specifically AprilTags, see Sec.~\ref{SM:ground} of the supplementary materials).
For the purpose of data logging and drone safety, we also built a drone arena equipped with an off-the-shelf motion capture system (see Sec.~\ref{SM:real-arena} in the supplementary materials) and an open-source software package for experiment management (see technical report~\cite{All2022:techreport-004} and open-source repository\footnote{\url{https://github.com/iridia-ulb/supervisor}}).

In the experiments presented here, the only actuation utilized is motion. The SoNS software produces kinematic control outputs for all robots regardless of type, and a second control layer is used to calculate motor inputs for the differential drive ground robots and the quadrotors. These two layers are used in both reality and simulation, as the simulator is equipped with models of the internal dynamics of the robots where needed (for details about the robot models and the control layers, see Secs.~\ref{SM:algorithm}, \ref{SM:aerial}, and~\ref{SM:ground} of the supplementary materials). 
The only sensors utilized in the SoNS experiments (i.e., excluding an optical flow camera and single-point LiDAR the quadrotor uses for flight stabilization) are the downward-facing visual cameras onboard the quadrotors. The ground robots do not use any onboard sensing and must rely on virtual sensor information they receive from the quadrotors through local communication over the SoNS architecture (in other words, a ground robot in a single-robot SoNS is blind). Due to the downward-facing field of view of the quadrotors' cameras, the quadrotors cannot sense each other directly. Therefore, for relative positioning between two quadrotors, the quadrotors also must rely on information they receive through local communication over the SoNS architecture. Communication in the SoNS occurs over a wireless network, and two robots are only allowed to communicate with each other if one of them is in the other's field of view (using onboard sensing). These sensing and communication constraints are maintained in the simulator, matching reality.

The indoor arena for the real experiments is tightly constrained, so we have supplemented the real demonstrations with simulated experiments conducted in the multi-robot simulator ARGoS~\cite{pinciroli2012argos,allwright2018argos,allwright2018simulating}. 
We have conducted empirical cross-verification of the simulator and the real setup, to ensure that the results of the simulated experiments are a reliable approximation of the behavior of SoNS on the real robots (see Sec.~\ref{SM:cross-verify} of the supplementary materials).

\subsubsection*{Acknowledgments}

W.Z., S.O., and M.K.H. contributed equally to this work and share co-first authorship.
All authors made substantial contributions to the conception of the work.
The experiment design was led by W.Z., M.K.H., and M.D., and contributed to by all authors.
The experiments were conducted and the data was collected by W.Z. and S.O., supervised by M.K.H. and M.D.
The main algorithms used in the study were developed by W.Z. and supervised by M.K.H., M.A., and M.D., with contributions from S.O.
The hardware, software, and infrastructure used to support the experiments were developed by W.Z., S.O., M.K.H., M.A. and M.W., led by M.A.
The analysis of experimental results was conducted by W.Z. and M.K.H., supervised by M.D.
The theoretical analysis was developed by S.O., supervised by M.K.H., E.G., and M.D.
The presentation of the results, including figures and movies, was executed by W.Z., S.O., M.K.H., and M.W., supervised by M.D.
The supplementary materials were prepared by W.Z., S.O., and M.K.H., supervised by M.D., with contributions from all authors.
The writing of the manuscript was led by M.K.H., supervised by M.D., with contributions from all authors.
The original idea was provided by M.D. and A.L.C.
All authors read and approved the content of this manuscript.

\subsubsection*{Funding}
This work was partially supported by the Program of Concerted Research Actions (ARC) of the Universit{\'e} libre de Bruxelles, by the Belgian F.R.S.-FNRS under Grant J.0064.20, by the Office of Naval Research Global (Award N62909-19-1-2024), by the European Union's Horizon 2020 research and innovation programme under the Marie Sk\l{}odowska-Curie grant agreement No 846009, by the Independent Research Fund Denmark under grant 0136-00251B, and by the China Scholarship Council Award No 201706270186.
Mary Katherine Heinrich and Marco Dorigo acknowledge support from the Belgian F.R.S.-FNRS, of which they are a Postdoctoral Researcher and a Research Director respectively.

\section*{Figures and Tables}

\noindent
{\bf Fig.1.} The Self-organizing Nervous System (SoNS) concept: robots self-organize dynamic multi-level system architectures using exclusively local communication. \\
{\bf Fig.2.} Establishing self-organized hierarchy. \\
{\bf Fig.3.} Balancing global and local goals. \\
{\bf Fig.4.} Collective sensing and actuation. \\
{\bf Fig.5.} Binary decision making. \\
{\bf Fig.6.} Splitting and merging systems. \\
{\bf Fig.7.} Scalability study. \\
{\bf Fig.8.} Fault tolerance study. \\

\section*{Supplementary Materials}

\customlabel{SM:results}{S1}
\customlabel{SM:cross-verify}{S2}
\customlabel{SM:theory}{S3}
\customlabel{SM:algorithm}{S4}
\customlabel{SM:simulator}{S5}
\customlabel{SM:aerial}{S6}
\customlabel{SM:ground}{S7}
\customlabel{SM:real-arena}{S8}

{\bf Section \ref*{SM:results}.} Full set of experiment results \\
{\bf Section \ref*{SM:cross-verify}.} Cross-verification of the SoNS in simulation and on real hardware \\
{\bf Section \ref*{SM:theory}.} Theoretical analysis and mathematical proofs \\
{\bf Section \ref*{SM:algorithm}.} SoNS control algorithm details \\
{\bf Section \ref*{SM:simulator}.} Simulator setup \\
{\bf Section \ref*{SM:aerial}.} Aerial robot setup \\
{\bf Section \ref*{SM:ground}.} Ground robot setup \\
{\bf Section \ref*{SM:real-arena}.} Real indoor arena setup \\
{\it Figures S1-S62 are included in Secs.~S1-S8.}\\
{\it Movies S1-S12 each show an example trial, except for S8, which shows a demonstration.}\\
{\bf Movie S1.} Establishing self-organized hierarchy with real robots \\
{\bf Movie S2.} Balancing global and local goals with real robots \\
{\bf Movie S3.} Collective sensing and actuation with real robots \\
{\bf Movie S4.} Binary decision making with real robots \\
{\bf Movie S5.} Splitting and merging systems with real robots \\
{\bf Movie S6.} Scalability in the binary decision-making mission, 125 robots in simulation \\
{\bf Movie S7.} Scalability in the establishing self-organized hierarchy mission, several example system sizes in simulation \\
{\bf Movie S8.} Fault tolerance demonstration showing interchangeability of a failed brain robot \\
{\bf Movie S9.} Fault tolerance under multiple permanent failures, with real robots \\
{\bf Movie S10.} Fault tolerance under high-loss conditions in simulation, $66.\overline{6}$\% probability to fail \\
{\bf Movie S11.} Fault tolerance under 30\,s system-wide vision failure in simulation \\
{\bf Movie S12.} Fault tolerance under 30\,s system-wide communication failure in simulation

\clearpage

\newtheorem{problem}{Problem}
\newtheorem{theorem}{Theorem}
\newtheorem{remark}{Remark}
\newtheorem{assumption}{Assumption}
\newtheorem{proof}{Proof}
\newtheorem{definition}{Definition}
\newcommand{\cmnt}[1]{\textcolor{red}{\bf[#1]}}
\newcommand\topstrut[1][1.2ex]{\setlength\bigstrutjot{#1}{\bigstrut[t]}}
\newcommand\botstrut[1][0.9ex]{\setlength\bigstrutjot{#1}{\bigstrut[b]}}
\newcommand{\RN}[1]{%
  \textup{\uppercase\expandafter{\romannumeral#1}}%
}
\newcommand{\R}{\mathbb{R}}

\makeatletter
\renewcommand{\thefigure}{S\@arabic\c@figure}
\renewcommand{\theequation}{S\@arabic\c@equation}
\renewcommand{\theproblem}{S\@arabic\c@problem}
\renewcommand{\thetheorem}{S\@arabic\c@theorem}
\renewcommand{\theproof}{S\@arabic\c@proof}
\renewcommand{\theremark}{S\@arabic\c@remark}
\makeatother

\setcounter{MaxMatrixCols}{15}
\setcounter{figure}{0}
\setcounter{equation}{0}

\fontsize{11}{12}\selectfont
\newgeometry{headheight=100pt,headsep=25pt,margin=3cm}
\baselineskip14pt

{\centering
~\vspace{5mm}\\
{\Large Supplementary Materials for\\
\textsc{\Large Self-organizing Nervous Systems\\ for Robot Swarms}\\}}
\date{}
\vspace{15mm}

\section*{Section \ref*{SM:results}. Full set of experiment results}
\lhead{Section \ref*{SM:results}. Full results}
In this section, for each of the five robot missions (see Sec.~2.1 in the main paper), we provide key frames of a simulation to illustrate the general setup. We also provide the results of all experiment trials with real robots as well as several example trials for each type of simulation experiment with a larger system size. 

For each trial included in this section, we provide the following results: (on the left) trajectories of the robots over time, with the initial, final, and sometimes intermediary SoNS indicated in black; and (on the right) the mean and confidence interval per robot of the actuation error $E$ over time (see Eq.~1 in Sec.~4.2 in the main paper), with the lower bound $B$ (see Eq.~2 in Sec.~4.2 in the main paper) plotted for reference.

Note that, for each type of experiment (see the five missions, Sec.~2.1, scalability setups, Sec.~2.2, and the fault tolerance setups, Sec.~2.3, in the main paper), a video of an example trial is included within the supplementary materials (see Movies S1-S12). 
The code for each type of experiment is available in the online code repository. The experiment data for all trials, videos of all trials with real robots, and videos of example simulation trials are all available in the online data repository.

\vspace{7mm}
\noindent
{\it (Section continued on next page.)}
\clearpage
\rhead{Mission: Establishing self-organized hierarchy}

\subsection*{Mission: Establishing self-organized hierarchy (see Sec.~2.1.1 in the main paper)}

This mission includes two different variants, both run in experiments with real robots and in simulation.

\begin{figure}[h!]
\centering
\subfigure[]{
\includegraphics[trim=90 60 90 75,clip,width=0.4\textwidth]{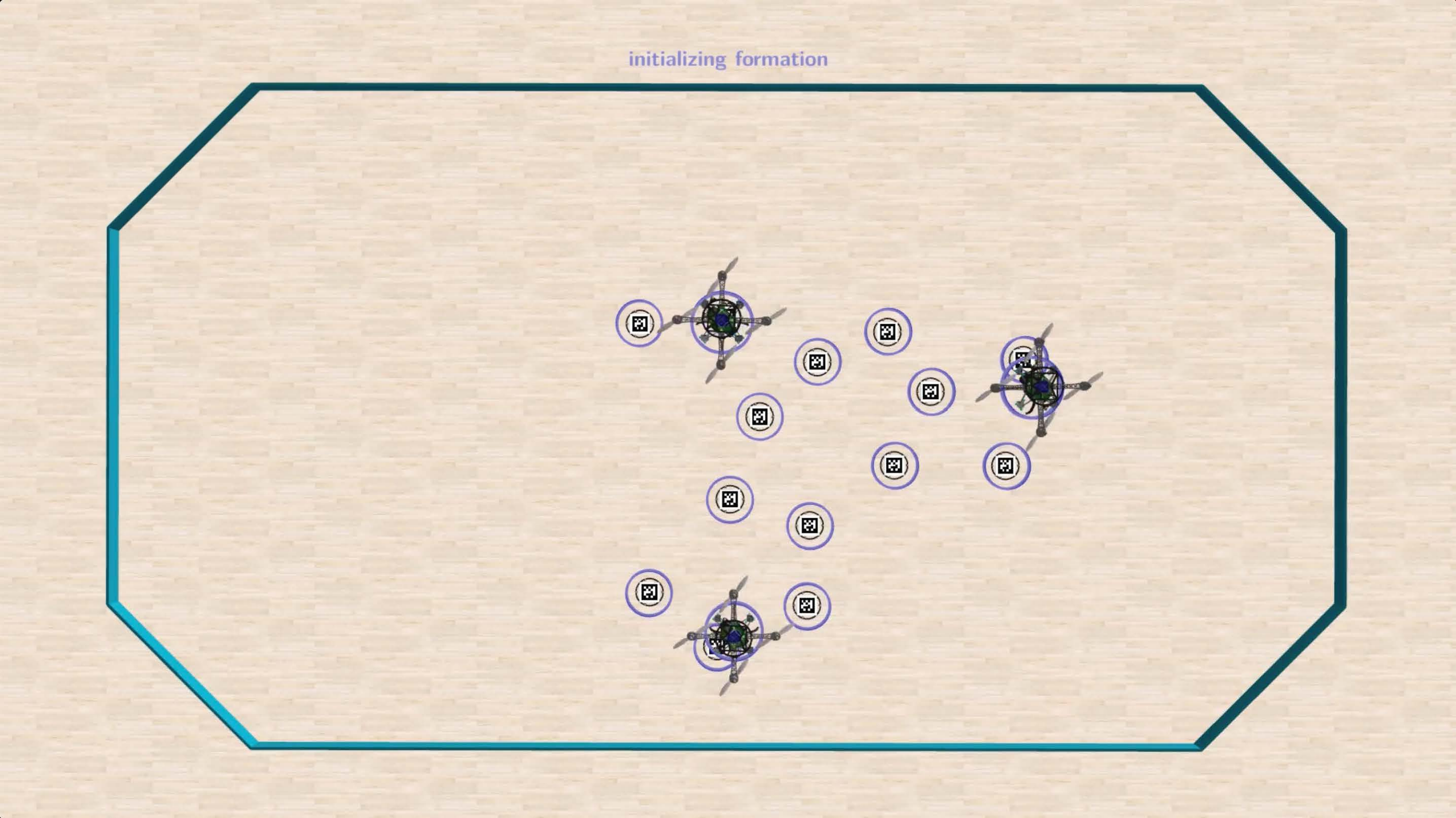}}
\subfigure[]{
\includegraphics[trim=90 60 90 75,clip,width=0.4\textwidth]{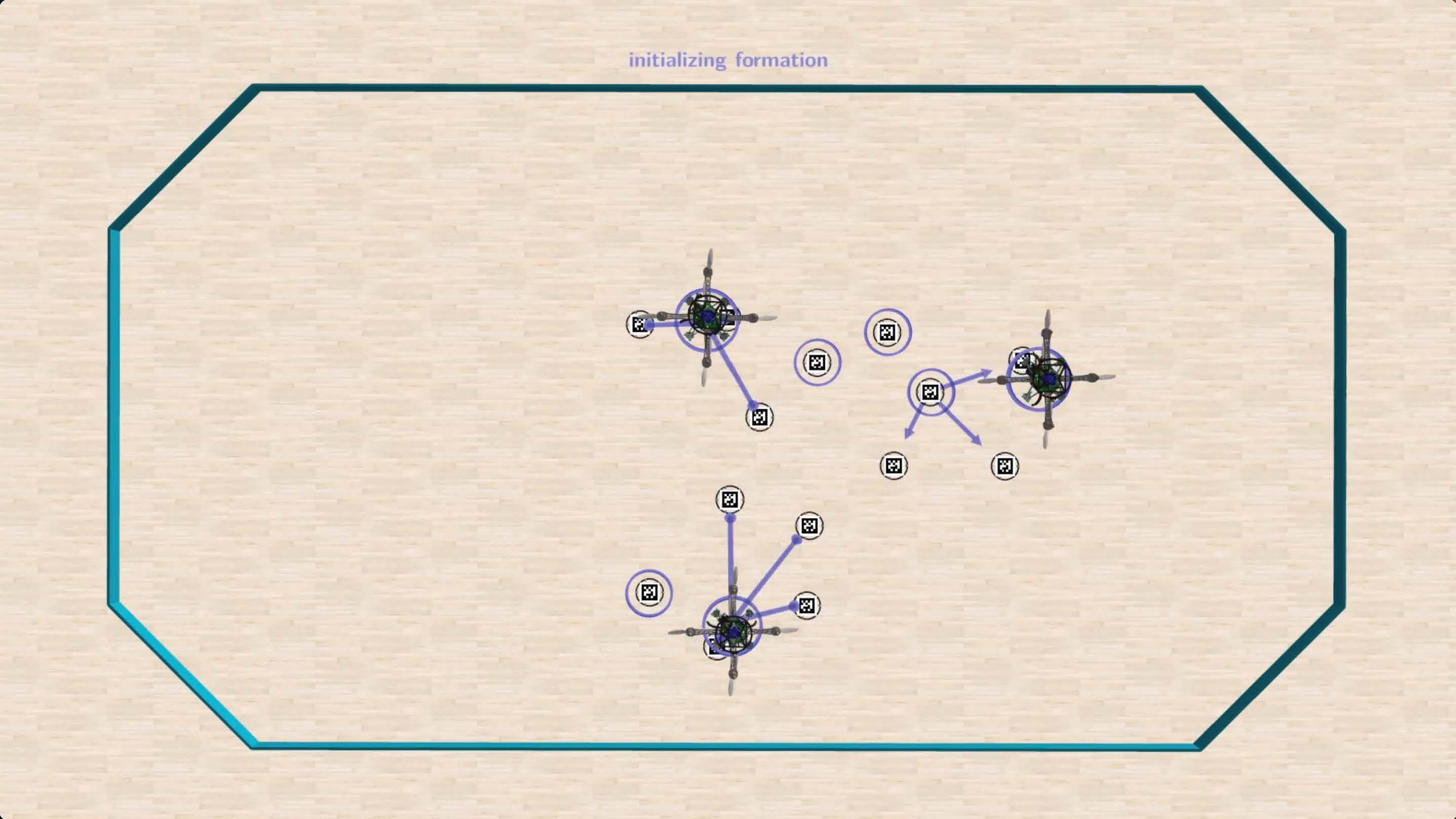}}\\
\vspace{-2mm}
\subfigure[]{
\includegraphics[trim=90 60 90 75,clip,width=0.4\textwidth]{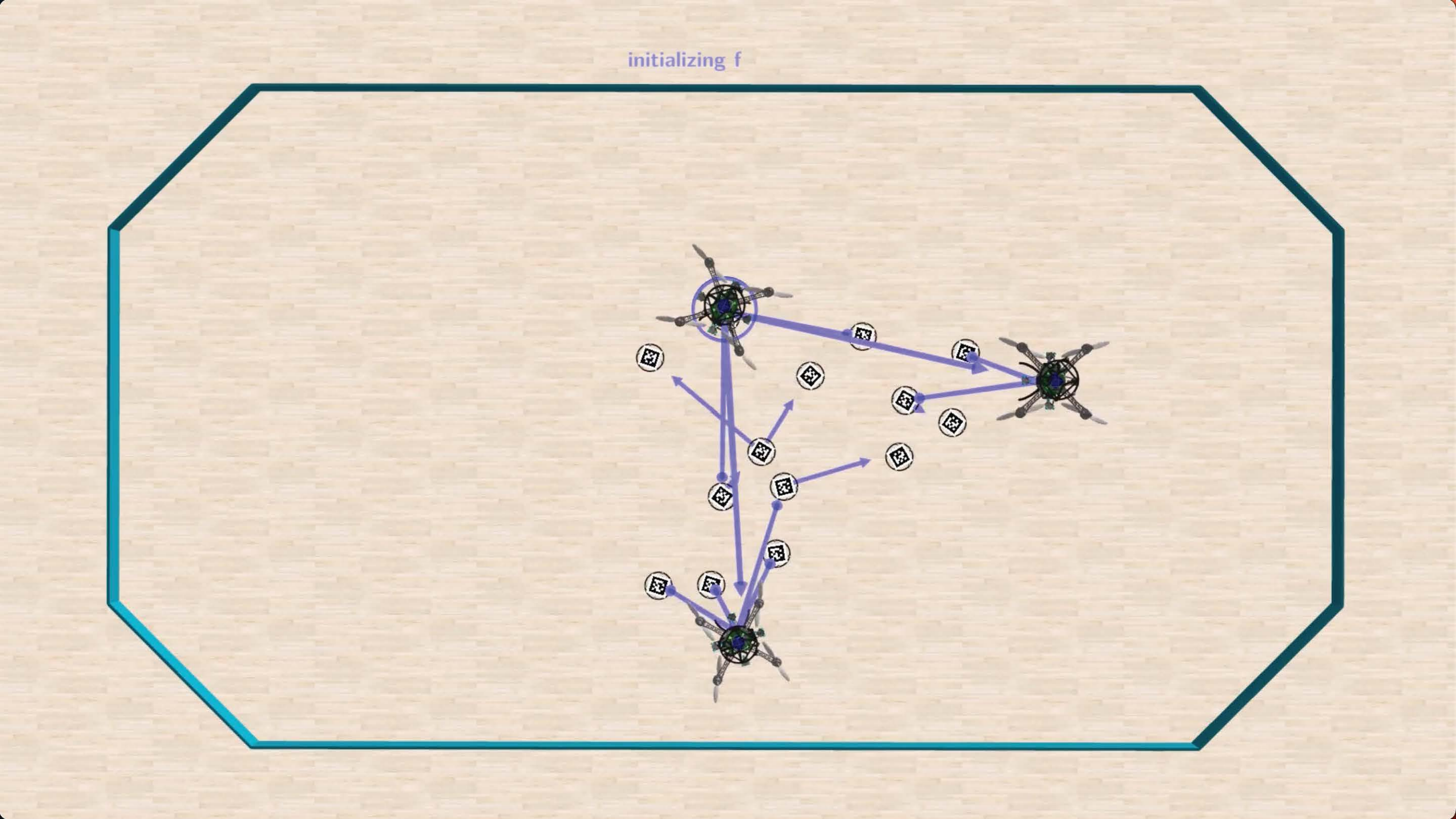}}
\subfigure[]{
\includegraphics[trim=90 60 90 75,clip,width=0.4\textwidth]{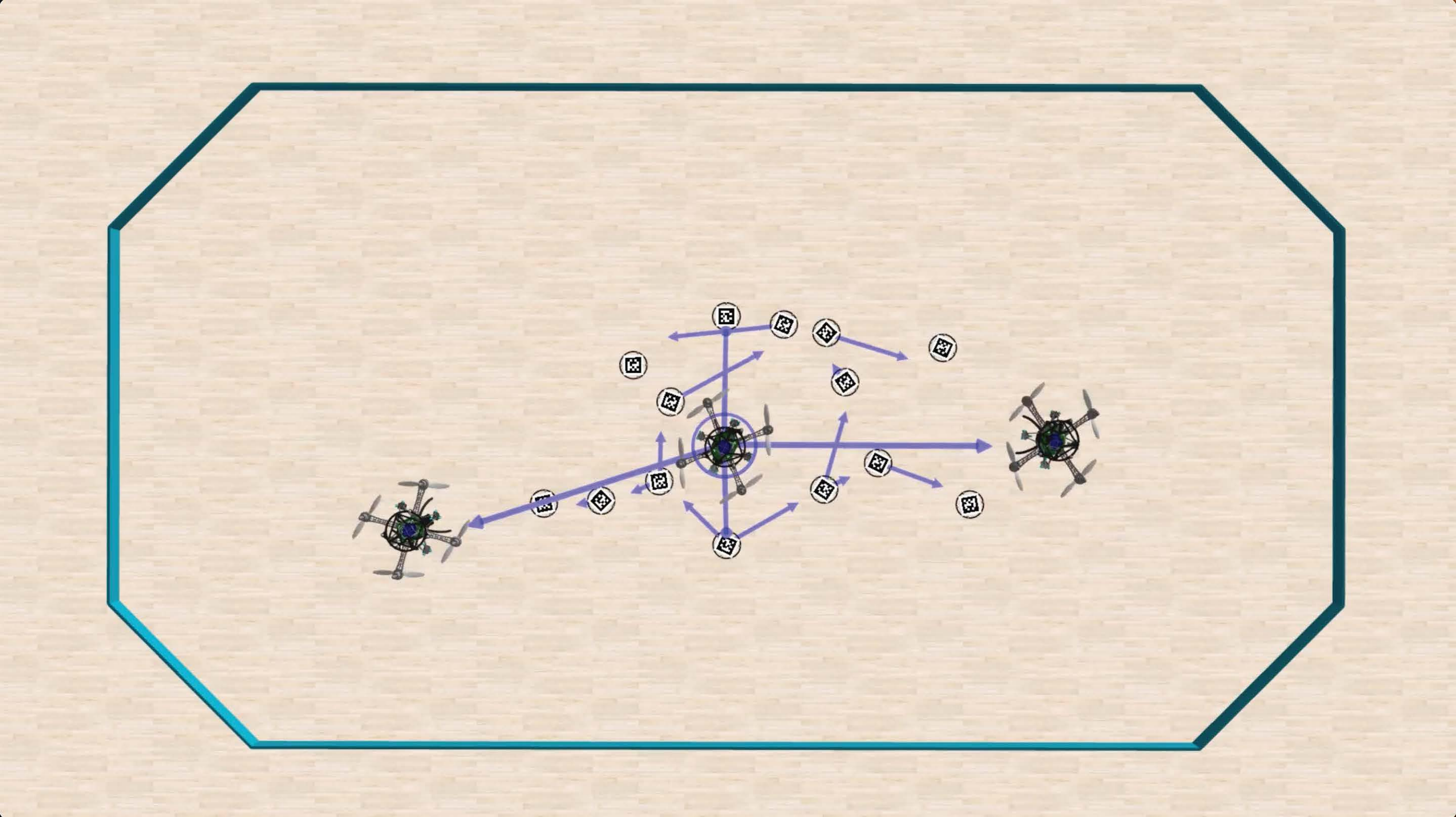}}\\
\vspace{-2mm}
\subfigure[]{
\includegraphics[trim=90 60 90 75,clip,width=0.4\textwidth]{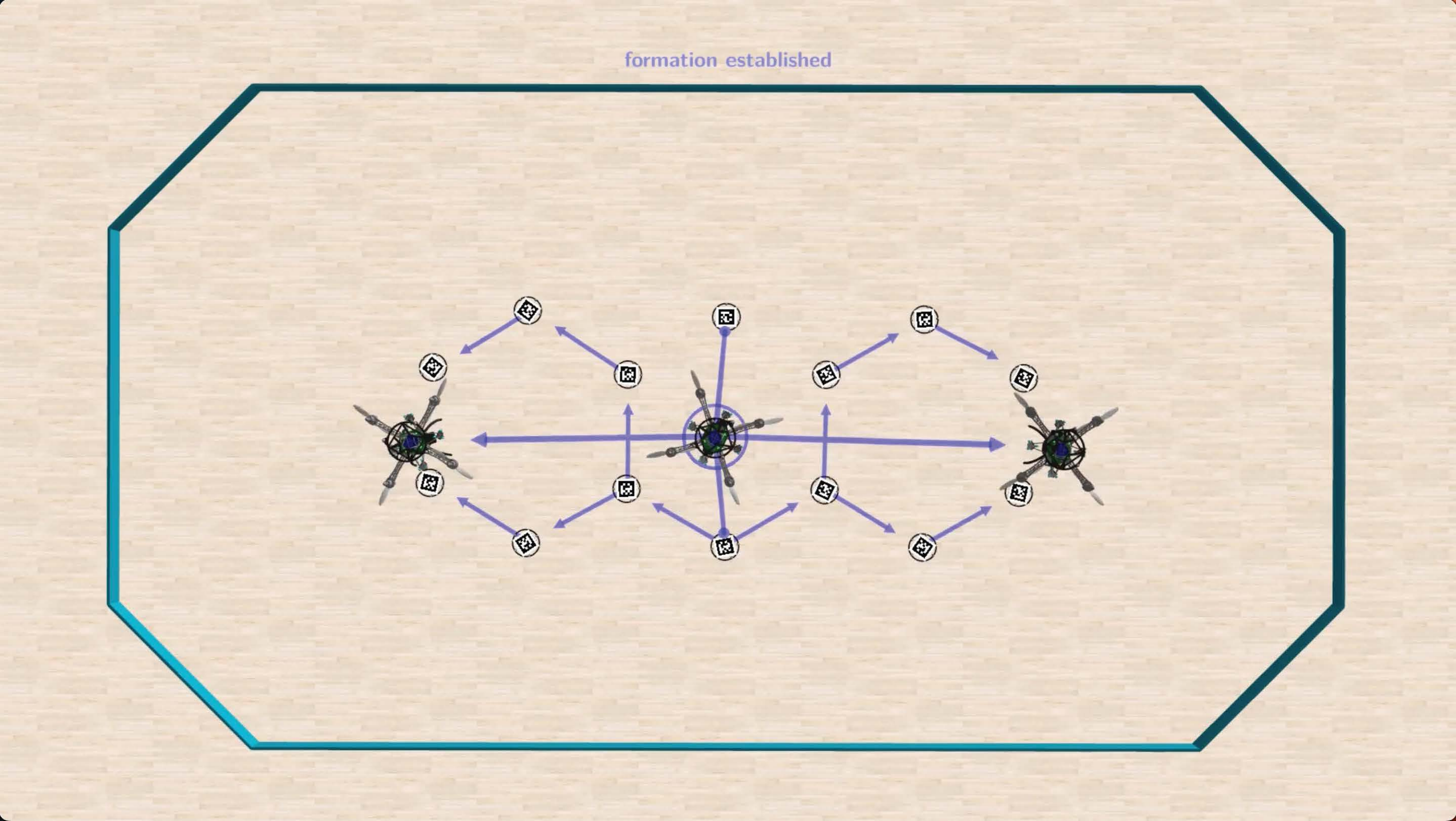}}\\
\vspace{-2mm}
\caption{{\bf Establishing self-organized hierarchy: Key frames.} (a) Robots start in scattered or clustered positions on the ground and the aerial robots take off. (b) All robots begin searching for peers and some robots start forming connections. (c,d) Robots merge their respective SoNSs and reallocate themselves into positions that match the target SoNS, continually adjusting their relative positions while coordinating locally to avoid collisions. (e) The target SoNS is complete.}
\label{fig:mission1-keyframes}
\end{figure}

\vspace{7mm}
\noindent
{\it (Section continued on next page.)}
\clearpage

\subsubsection*{Variant: Clustered start}
\begin{figure}[h!]
\centering
\includegraphics[trim=120 60 122 80,clip,width=0.38\textwidth]{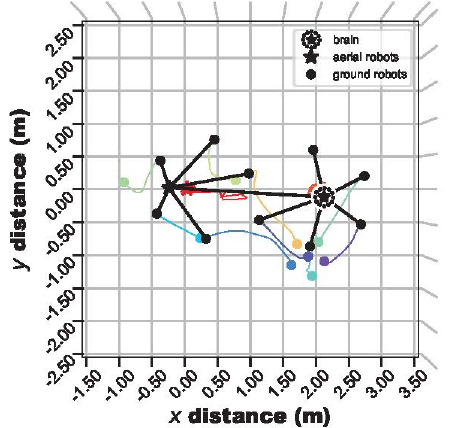}
\includegraphics[trim=16 0 40 20,clip,width=0.59\textwidth]{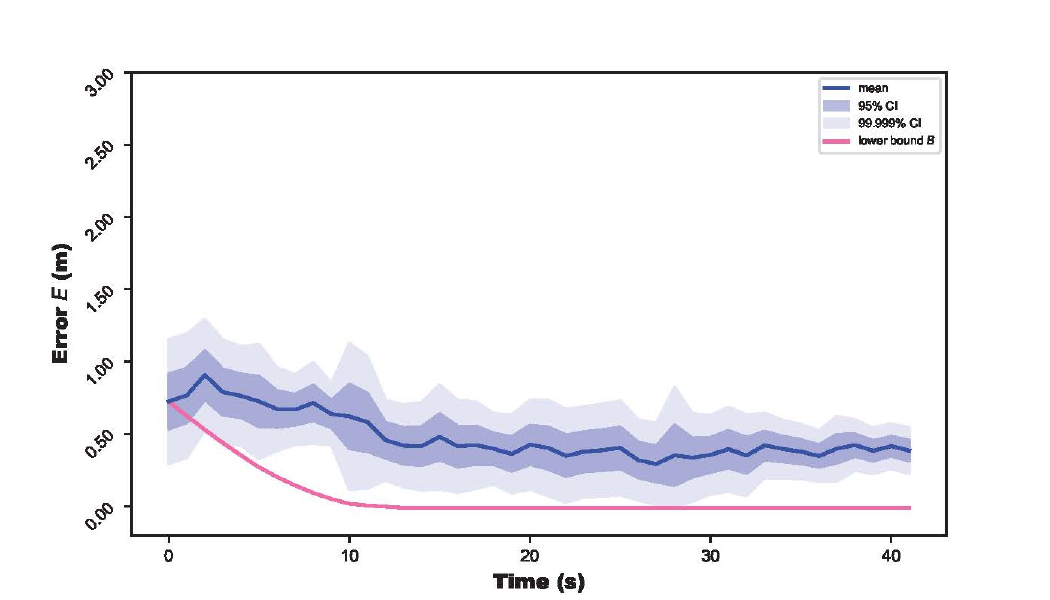}\\
\includegraphics[trim=120 60 120 80,clip,width=0.38\textwidth]{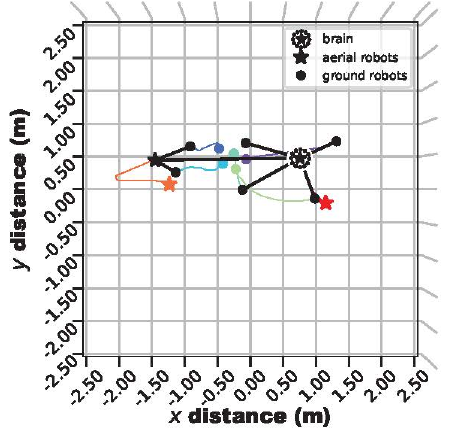}
\includegraphics[trim=20 0 40 20,clip,width=0.59\textwidth]{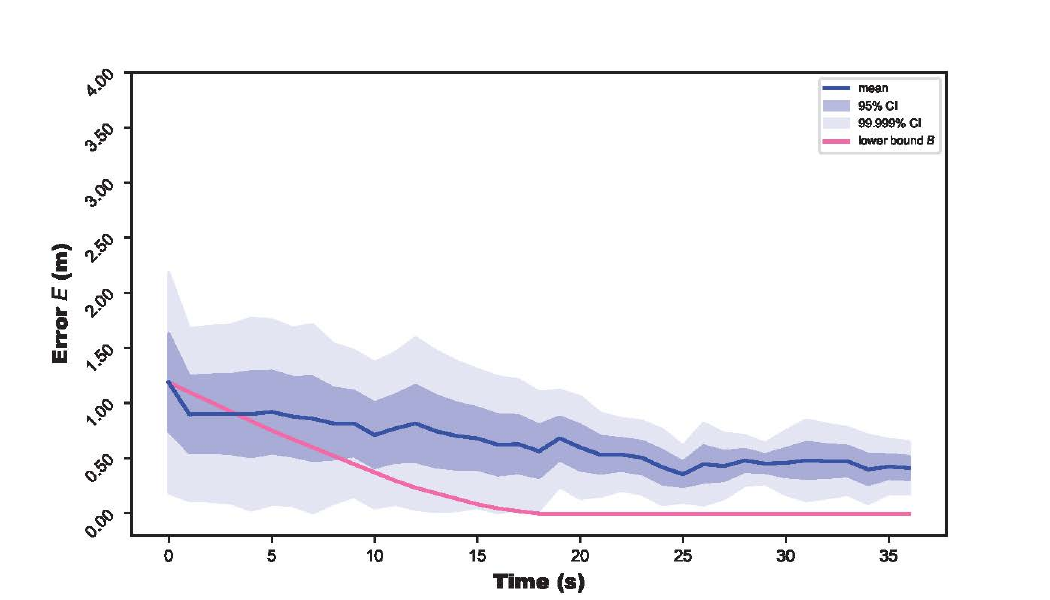}\\
\includegraphics[trim=120 60 120 80,clip,width=0.38\textwidth]{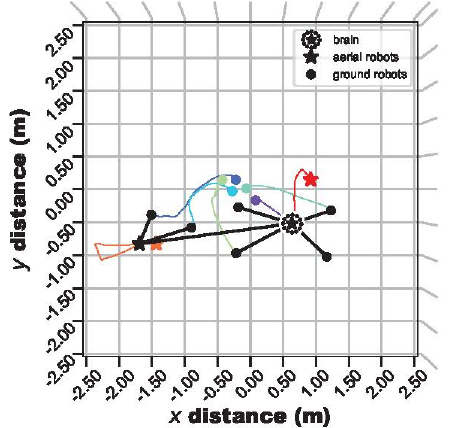}
\includegraphics[trim=20 0 40 20,clip,width=0.59\textwidth]{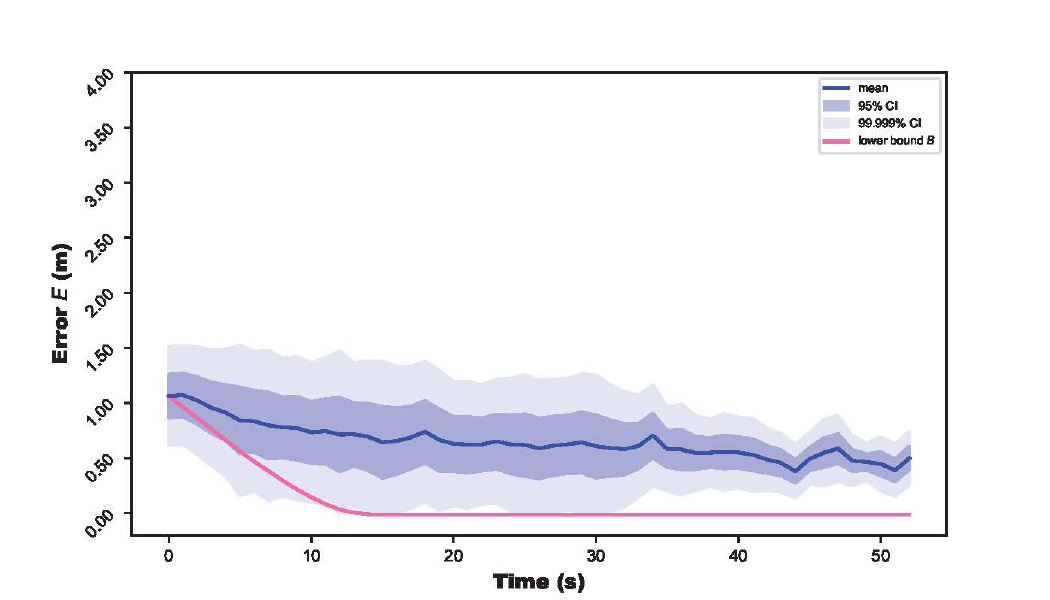}\\
\caption{{\bf Establishing self-organized hierarchy, clustered start: Real robot trials.} Note that, 12 robots were used in the first trial shown, but this pushed the safety limits of the indoor arena for this experiment type, so eight robots were used in the remaining five trials. In total, six trials with real robots were conducted {\it (figure continued on next page)}.}
\label{fig:mission1-variant1-hardware}
\end{figure}

\begin{figure}[h!]
\ContinuedFloat
\centering
\includegraphics[trim=120 60 120 80,clip,width=0.38\textwidth]{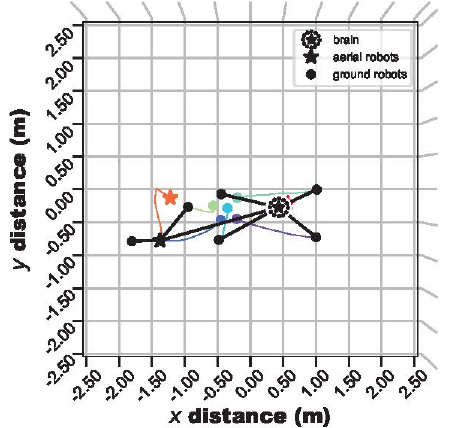}
\includegraphics[trim=20 0 40 20,clip,width=0.59\textwidth]{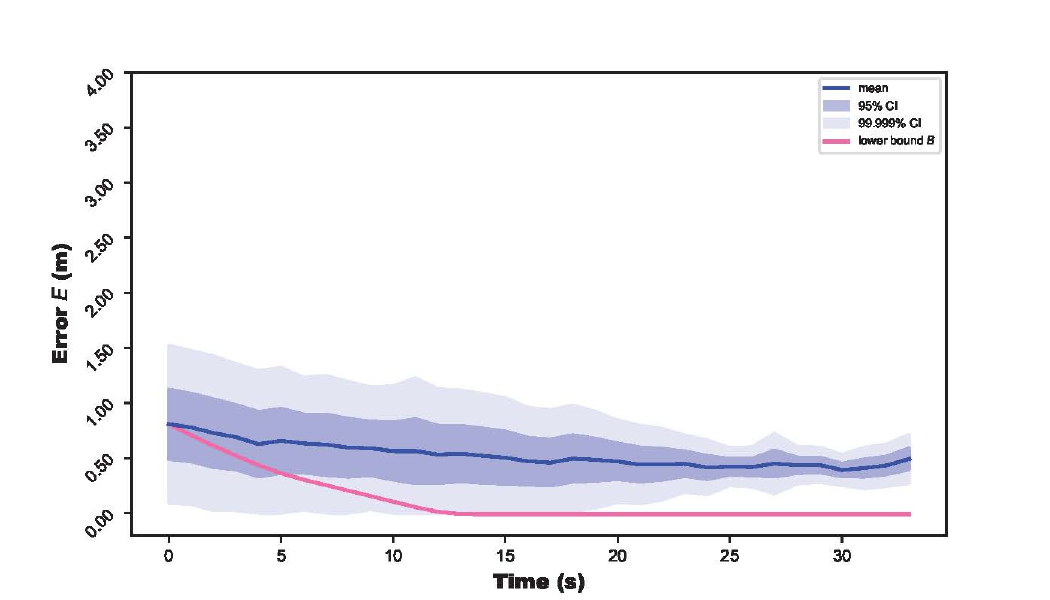}\\
\vspace{-1mm}
\includegraphics[trim=120 60 120 80,clip,width=0.38\textwidth]{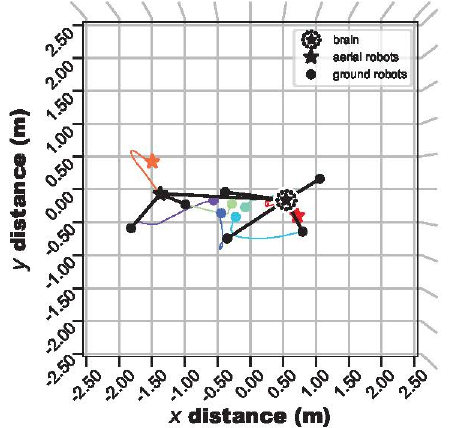}
\includegraphics[trim=20 0 40 20,clip,width=0.59\textwidth]{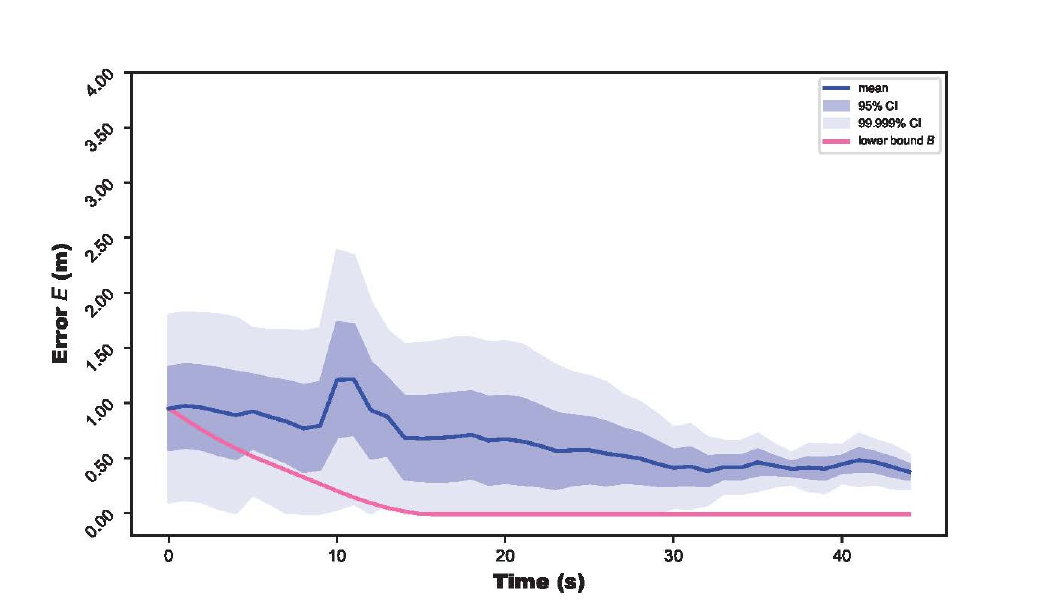}\\
\vspace{-1mm}
\includegraphics[trim=120 60 120 80,clip,width=0.38\textwidth]{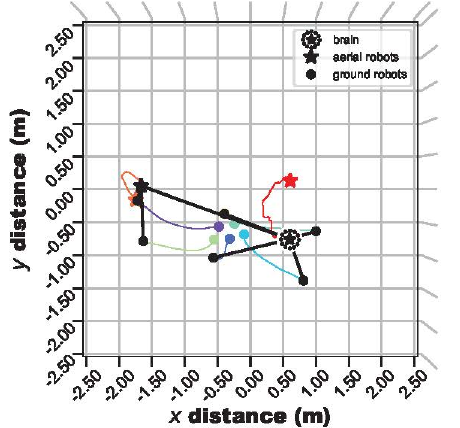}
\includegraphics[trim=20 0 40 20,clip,width=0.59\textwidth]{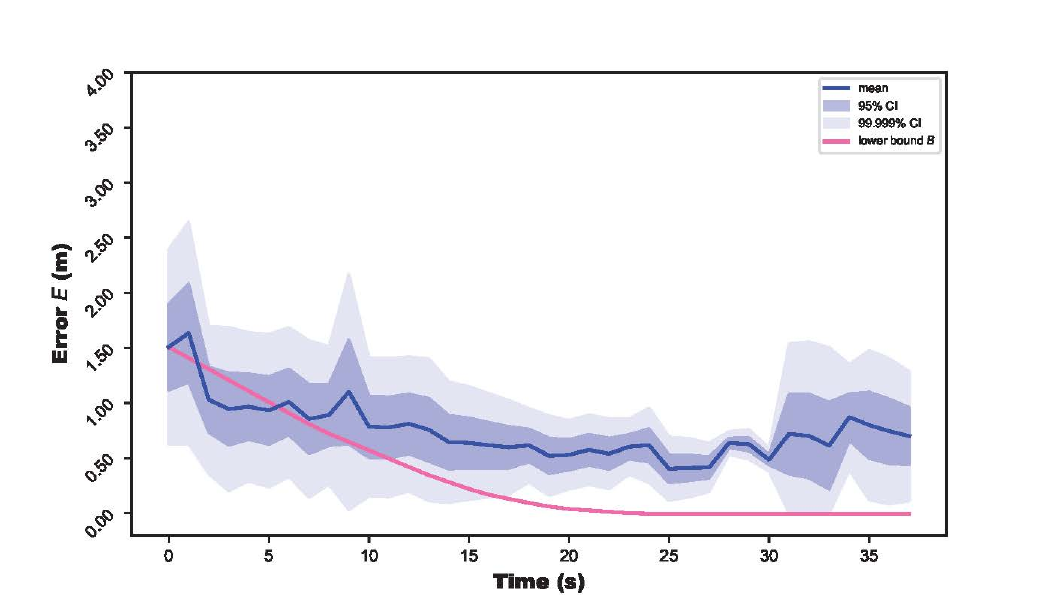}\\
\vspace{-1mm}
\caption{{\it (cont'd)} {\bf Establishing self-organized hierarchy, clustered start: Real robot trials.} Note that, 12 robots were used in the first trial shown, but this pushed the safety limits of the indoor arena for this experiment type, so eight robots were used in the remaining five trials. In total, six trials with real robots were conducted.}
\label{fig:mission1-variant1-hardware}
\end{figure}

\clearpage
\subsubsection*{Variant: Clustered start}
\begin{figure}[h!]
\centering
\includegraphics[trim=120 60 120 80,clip,width=0.38\textwidth]{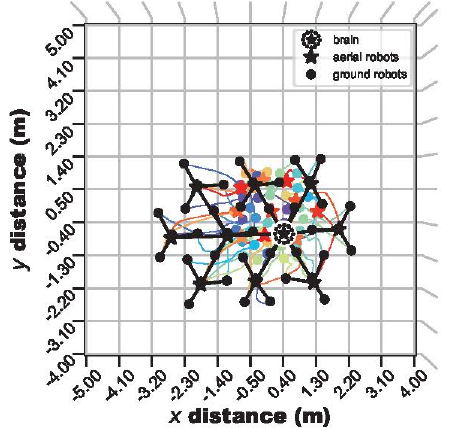}
\includegraphics[trim=20 0 40 20,clip,width=0.59\textwidth]{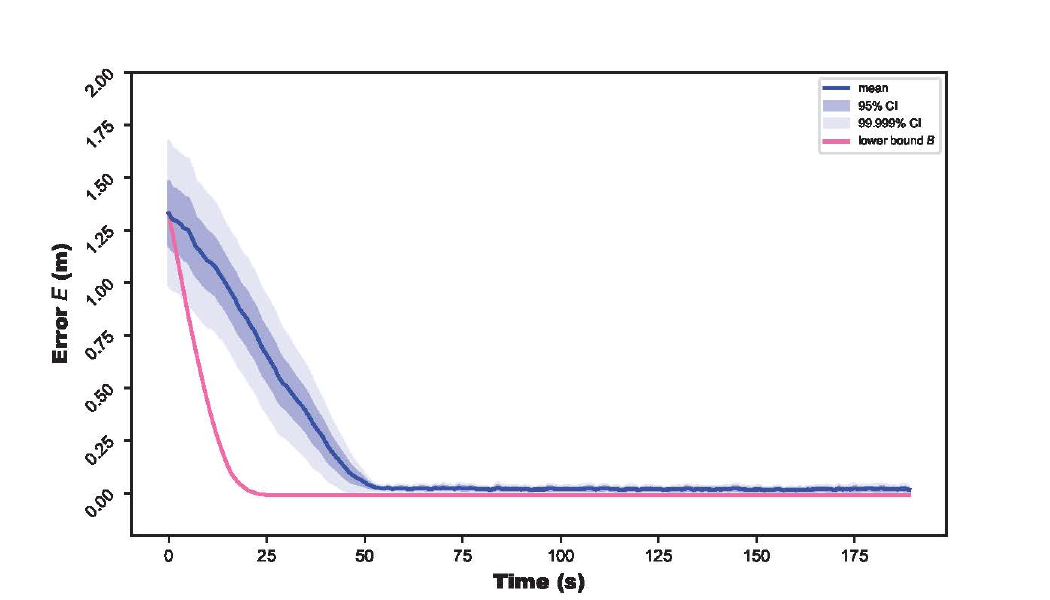}\\
\includegraphics[trim=120 60 120 80,clip,width=0.38\textwidth]{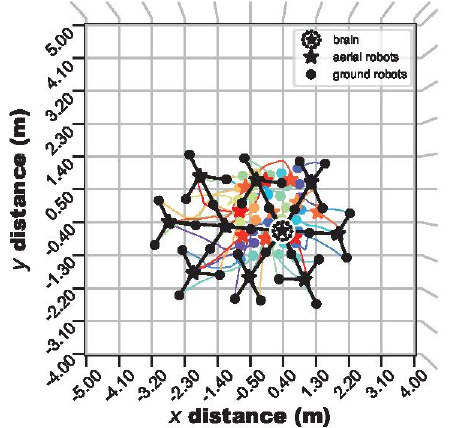}
\includegraphics[trim=20 0 40 20,clip,width=0.59\textwidth]{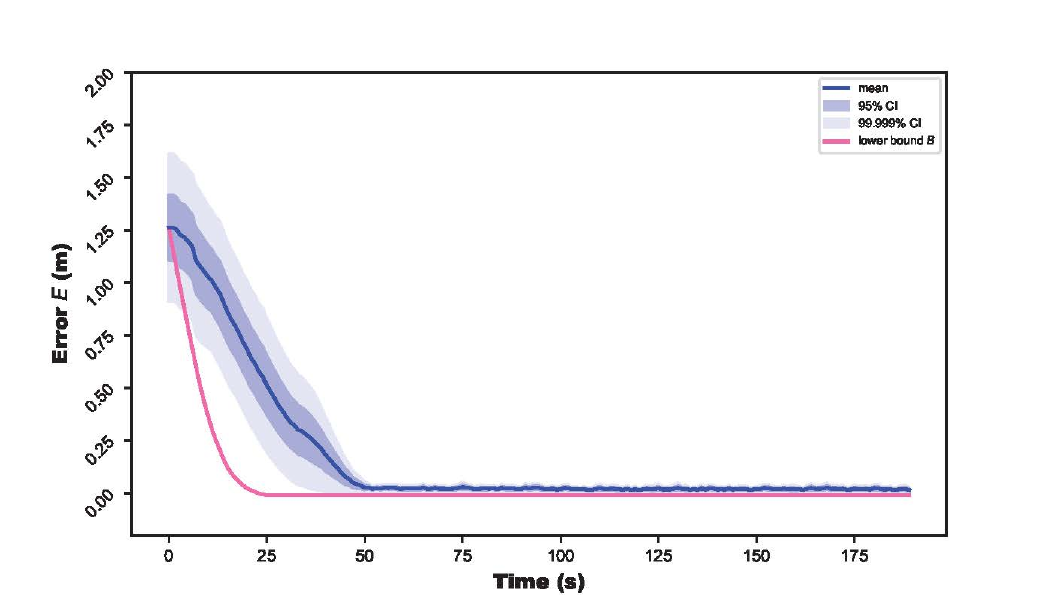}\\
\includegraphics[trim=120 60 120 80,clip,width=0.38\textwidth]{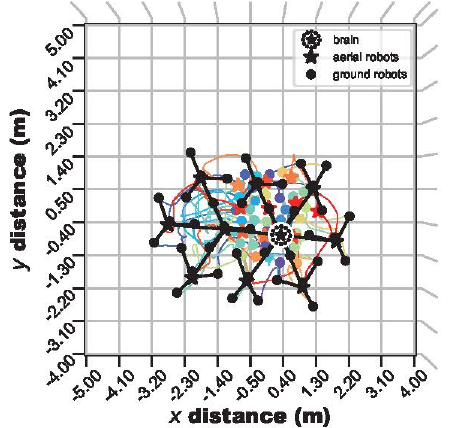}
\includegraphics[trim=20 0 40 20,clip,width=0.59\textwidth]{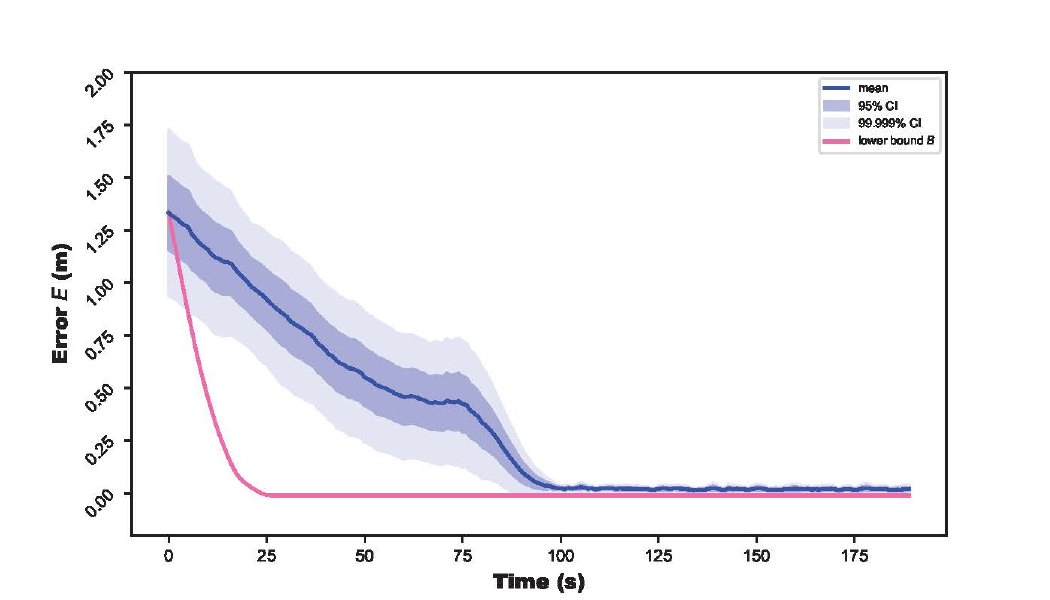}\\
\caption{{\bf Establishing self-organized hierarchy, clustered start: Example simulation trials.} 50 trials were conducted in simulation, each with 50 robots.}
\label{fig:mission1-variant1-simulation}
\end{figure}

\clearpage
\subsubsection*{Variant: Scattered start}
\begin{figure}[h!]
\centering
\includegraphics[trim=120 60 120 80,clip,width=0.38\textwidth]{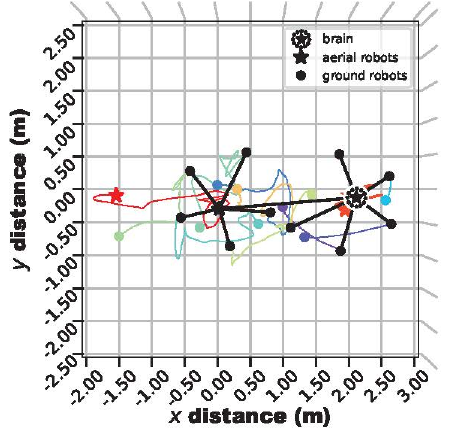}
\includegraphics[trim=20 0 40 20,clip,width=0.59\textwidth]{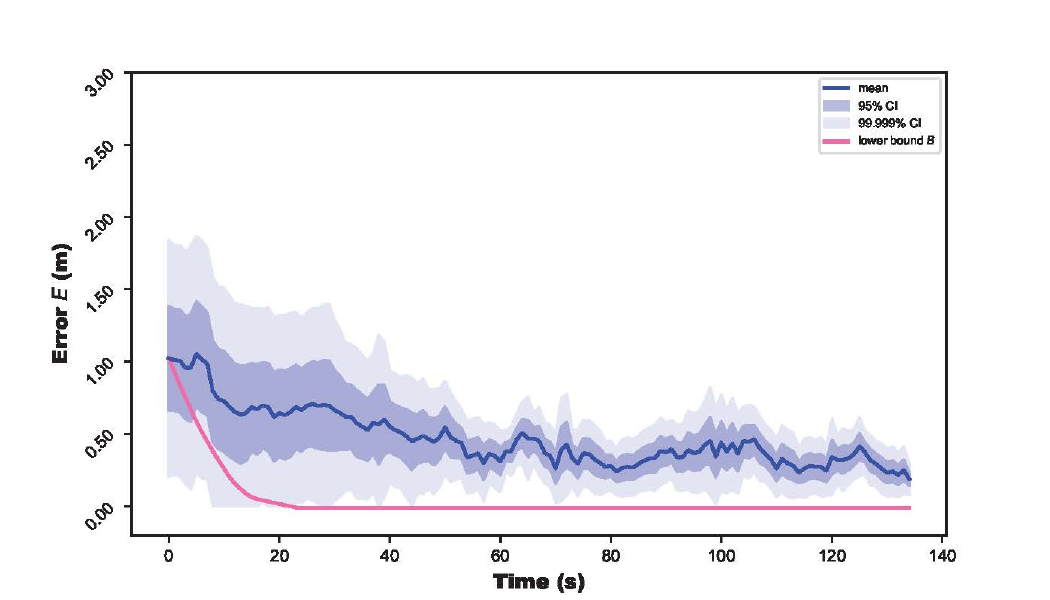}\\
\includegraphics[trim=120 60 120 80,clip,width=0.38\textwidth]{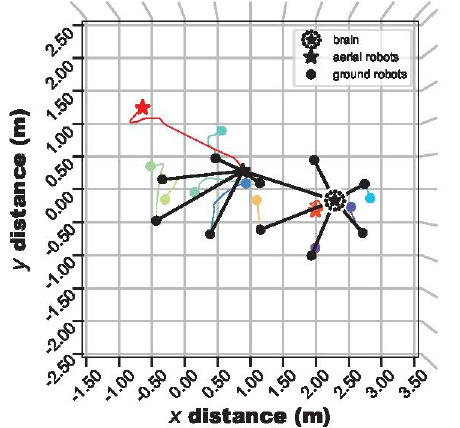}
\includegraphics[trim=20 0 40 20,clip,width=0.59\textwidth]{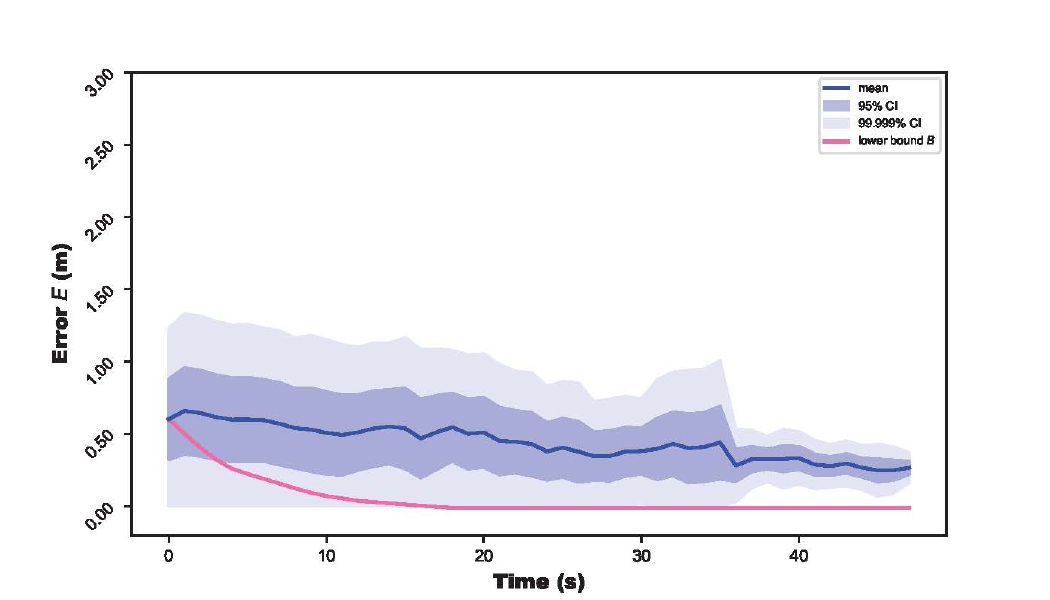}\\
\includegraphics[trim=120 60 120 80,clip,width=0.38\textwidth]{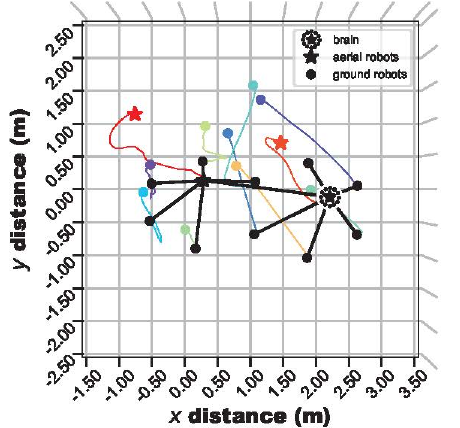}
\includegraphics[trim=20 0 40 20,clip,width=0.59\textwidth]{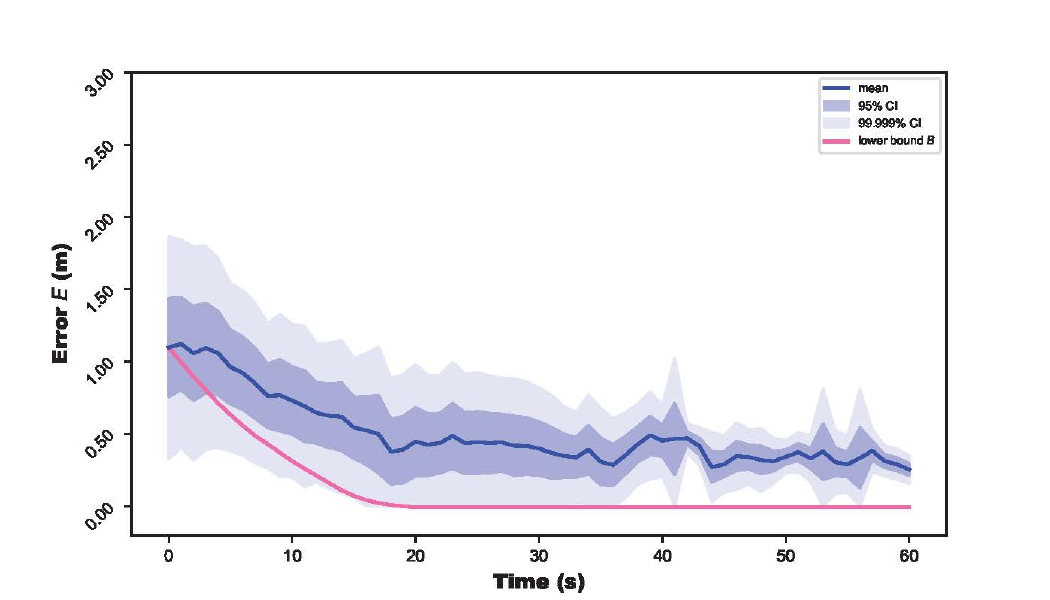}\\
\caption{{\bf Establishing self-organized hierarchy, scattered start: Real robot trials.} Five trials with real robots were conducted, each with 12 robots {\it (figure continued on next page)}.}
\label{fig:mission1-variant2-hardware}
\end{figure}

\clearpage

\begin{figure}[h!]
\ContinuedFloat
\centering
\includegraphics[trim=120 60 120 80,clip,width=0.38\textwidth]{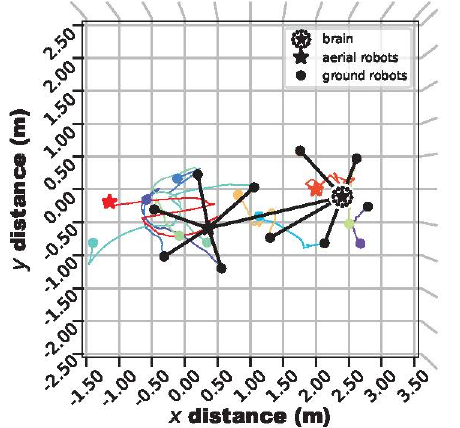}
\includegraphics[trim=20 0 40 20,clip,width=0.59\textwidth]{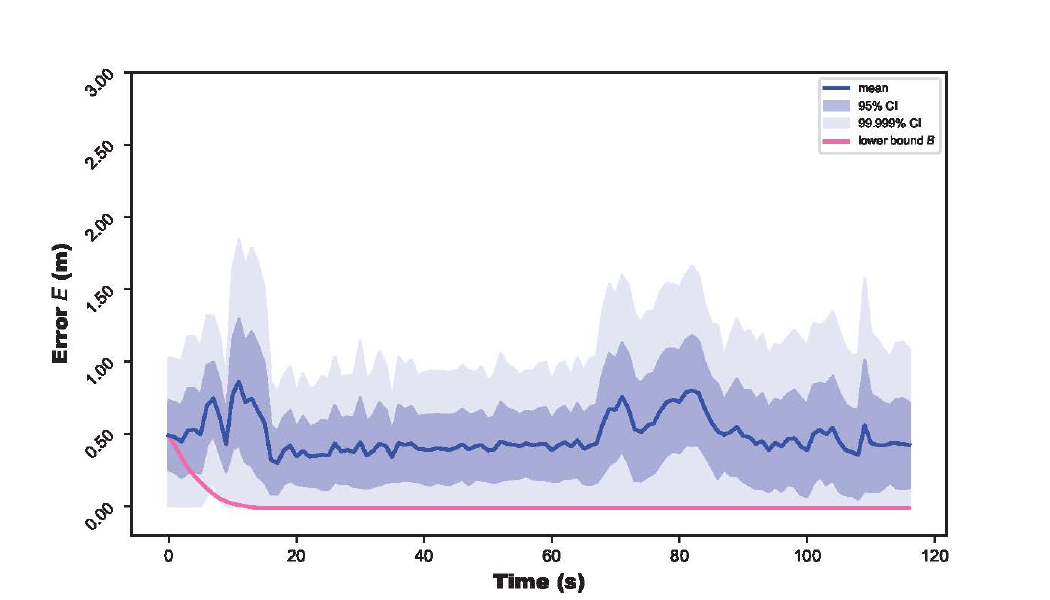}\\
\vspace{-1mm}
\includegraphics[trim=120 60 120 80,clip,width=0.38\textwidth]{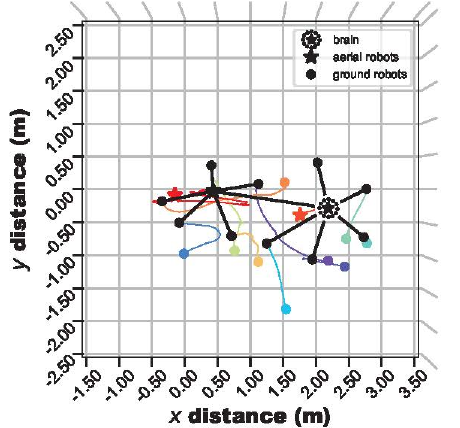}
\includegraphics[trim=20 0 40 20,clip,width=0.59\textwidth]{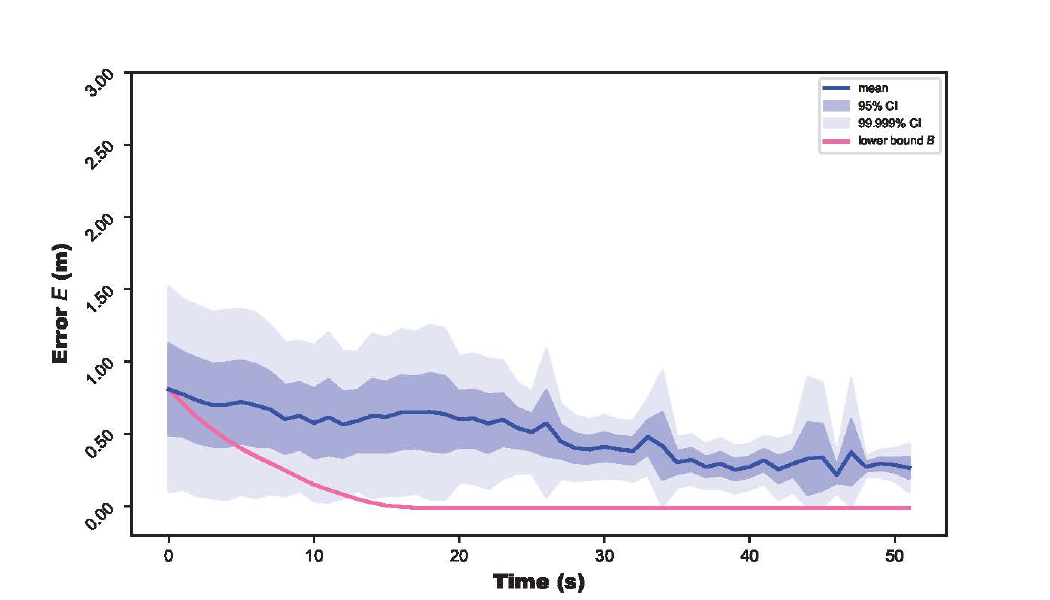}\\
\vspace{-1mm}
\caption{{\it (cont'd)} {\bf Establishing self-organized hierarchy, scattered start: Real robot trials.} Five trials with real robots were conducted, each with 12 robots.}
\label{fig:mission1-variant2-hardware}
\end{figure}

\vspace{7mm}
\noindent
{\it (Section continued on next page.)}

\clearpage
\subsubsection*{Variant: Scattered start}
\begin{figure}[h!]
\centering
\includegraphics[trim=120 60 120 80,clip,width=0.38\textwidth]{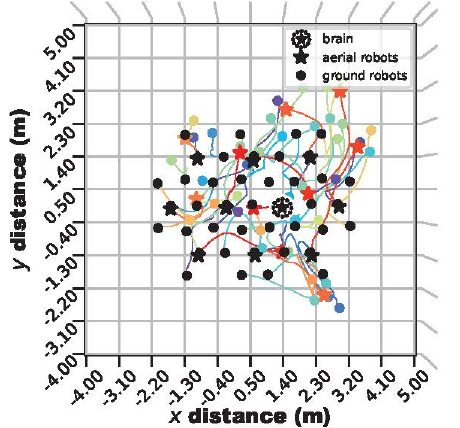}
\includegraphics[trim=20 0 40 20,clip,width=0.59\textwidth]{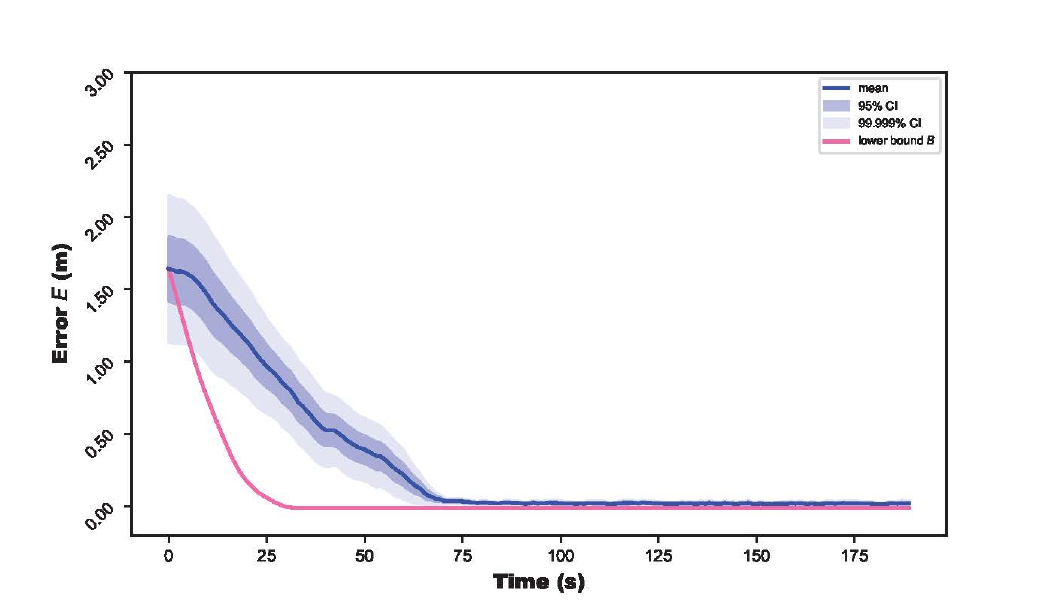}\\
\includegraphics[trim=120 60 120 80,clip,width=0.38\textwidth]{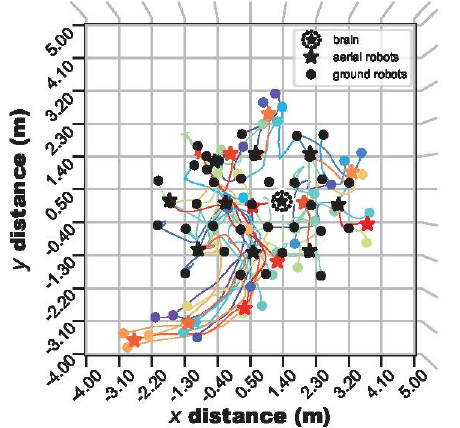}
\includegraphics[trim=20 0 40 20,clip,width=0.59\textwidth]{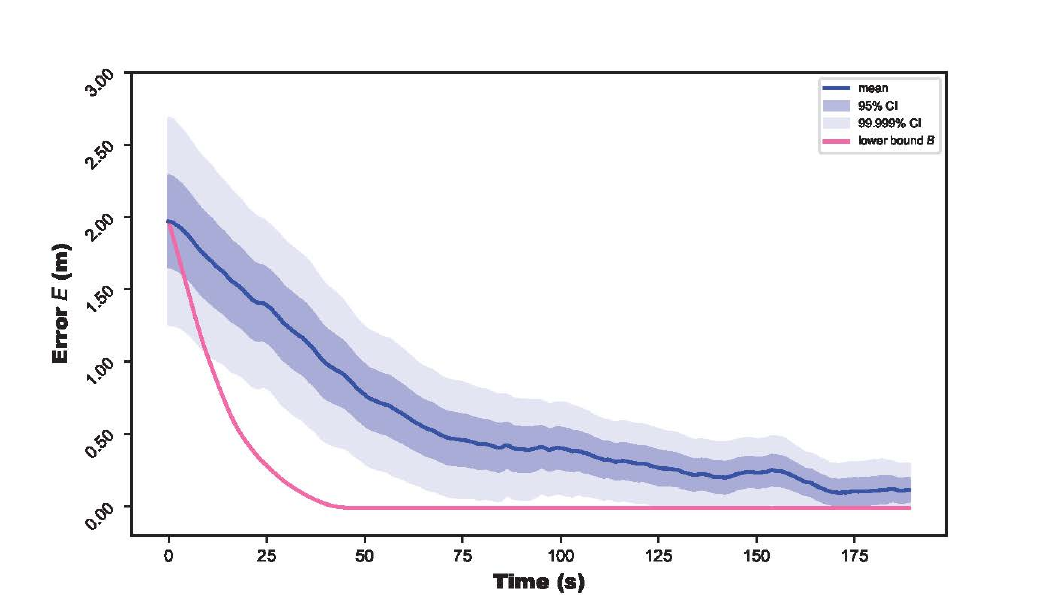}\\
\includegraphics[trim=120 60 120 80,clip,width=0.38\textwidth]{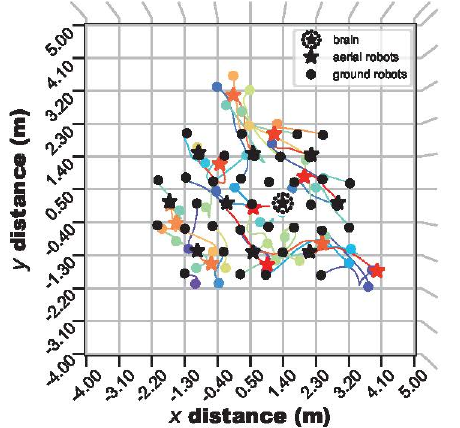}
\includegraphics[trim=20 0 40 20,clip,width=0.59\textwidth]{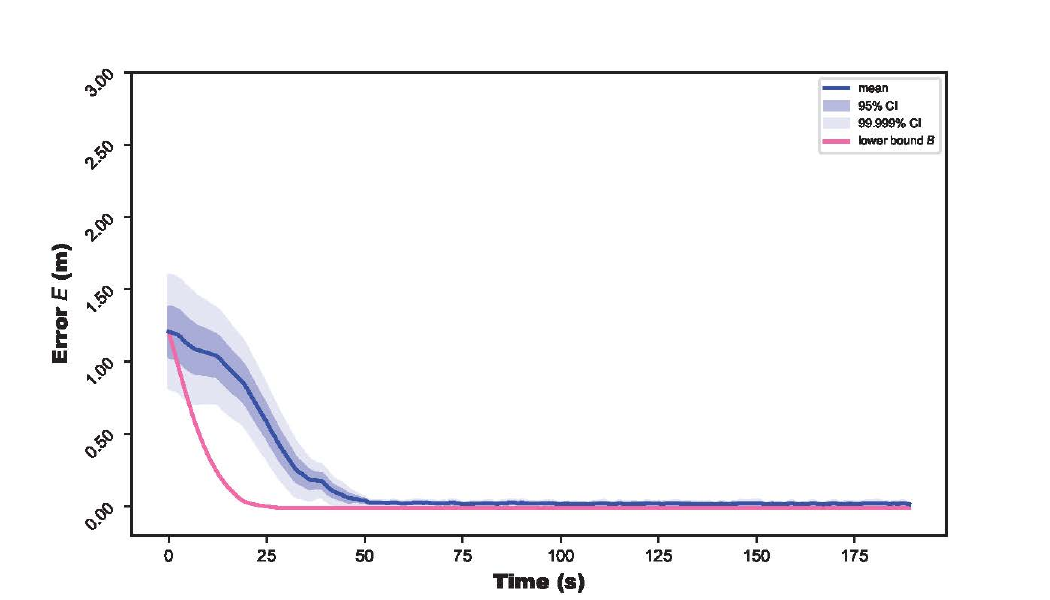}\\
\caption{{\bf Establishing self-organized hierarchy, scattered start: Example simulation trials.} 50 trials were conducted in simulation, each with 50 robots.}
\label{fig:mission1-variant2-simulation}
\end{figure}

\clearpage
\subsection*{Mission: Balancing global and local goals \textbf{(see Sec.~2.1.2 in the main paper)}}
\rhead{Mission: Balancing global and local goals}

This mission includes two different variants, both run in experiments with real robots and in simulation.

\begin{figure}[h!]
\centering
\subfigure[]{
\includegraphics[trim=90 60 90 75,clip,width=0.4\textwidth]{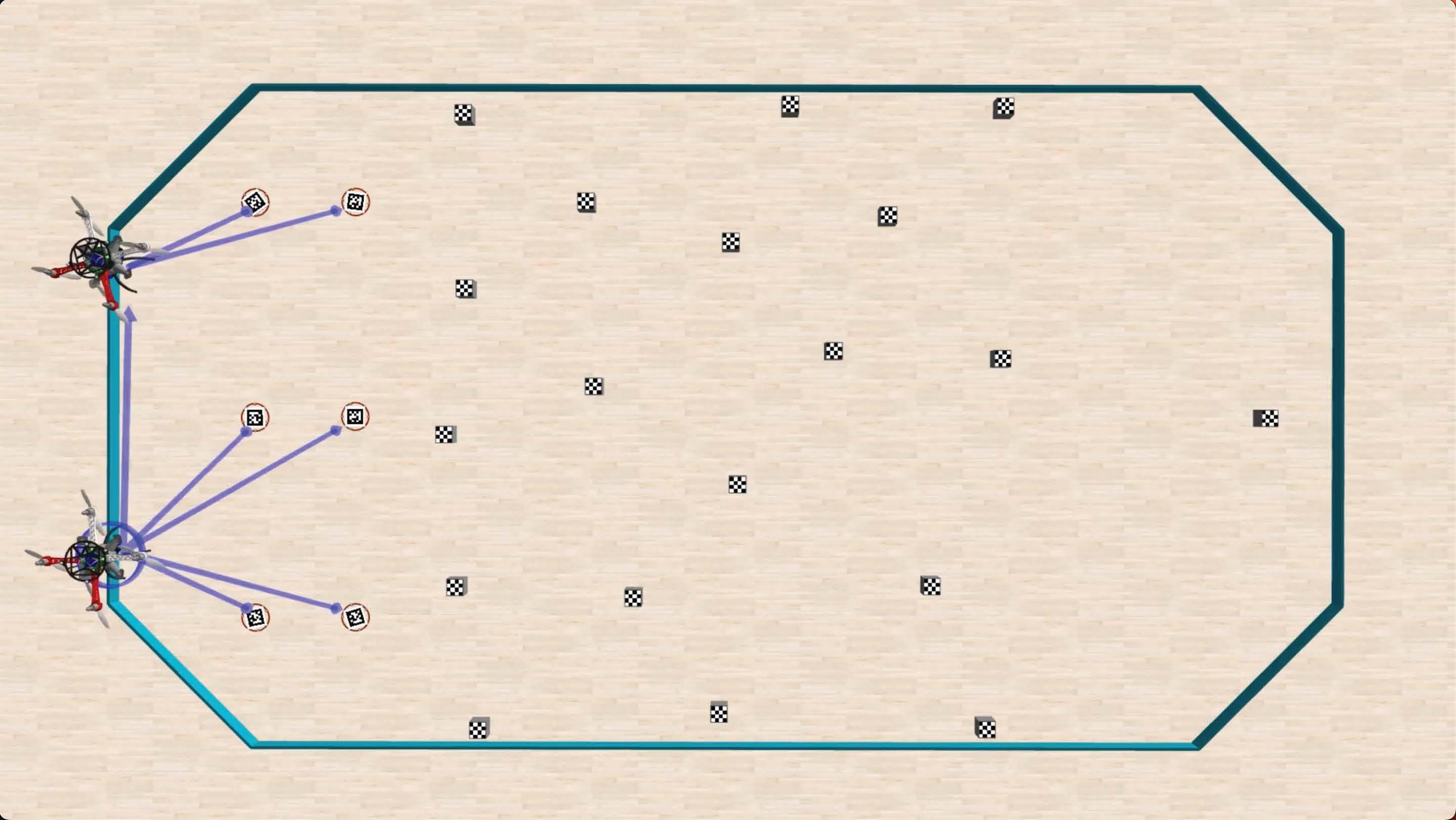}}
\subfigure[]{
\includegraphics[trim=90 60 90 75,clip,width=0.4\textwidth]{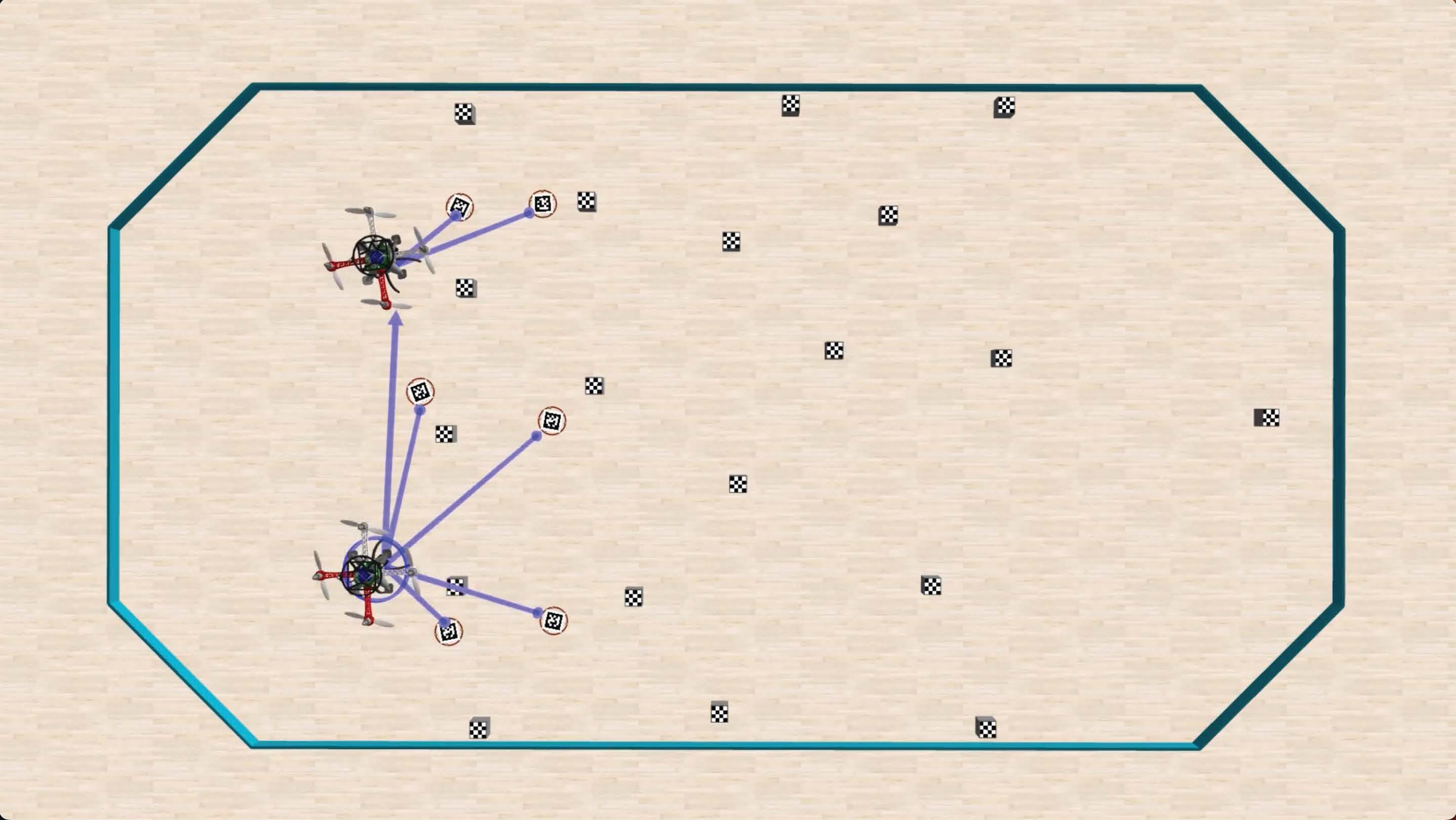}}\\
\vspace{-2mm}
\subfigure[]{
\includegraphics[trim=90 60 90 75,clip,width=0.4\textwidth]{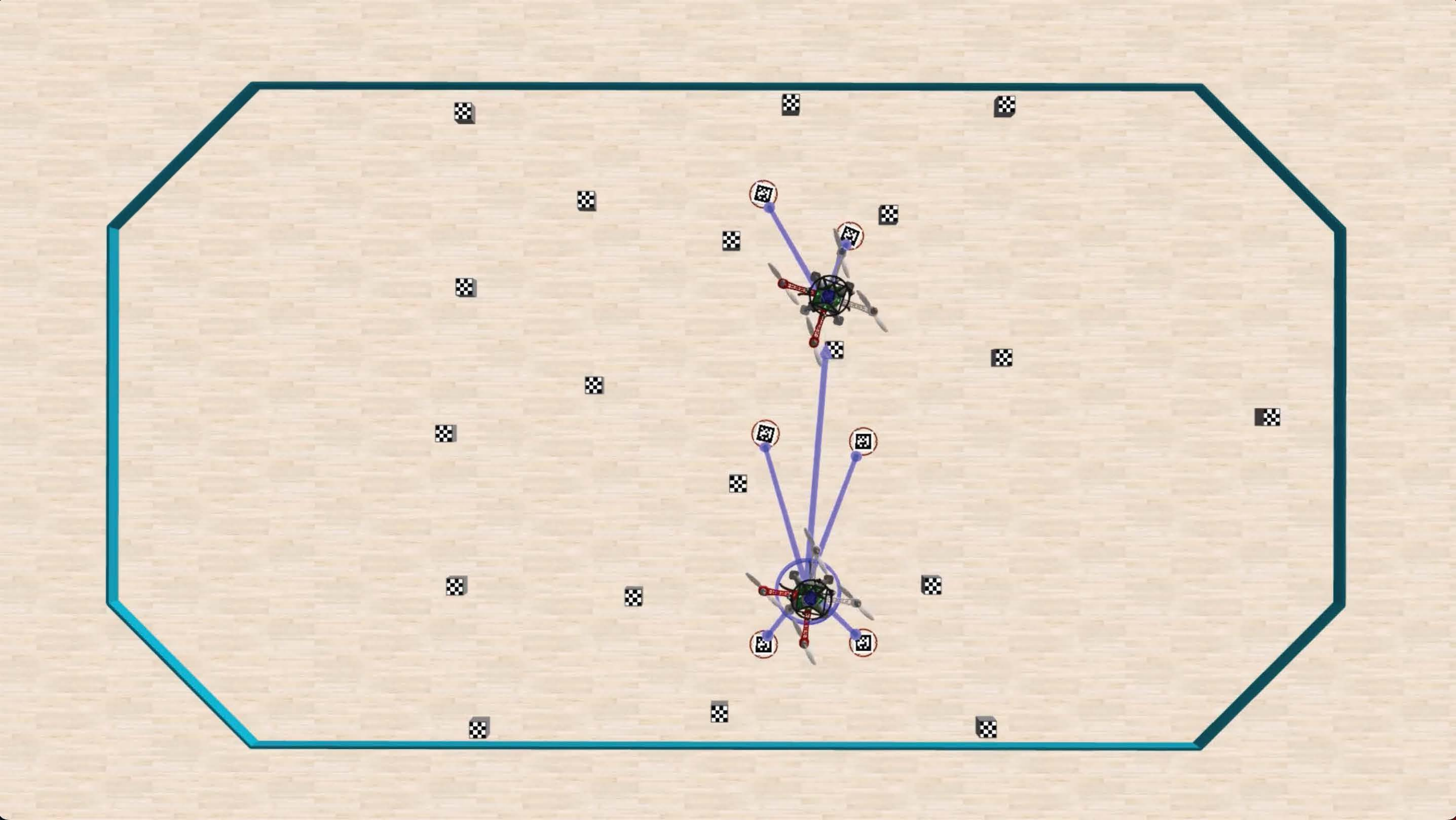}}
\subfigure[]{
\includegraphics[trim=90 60 90 75,clip,width=0.4\textwidth]{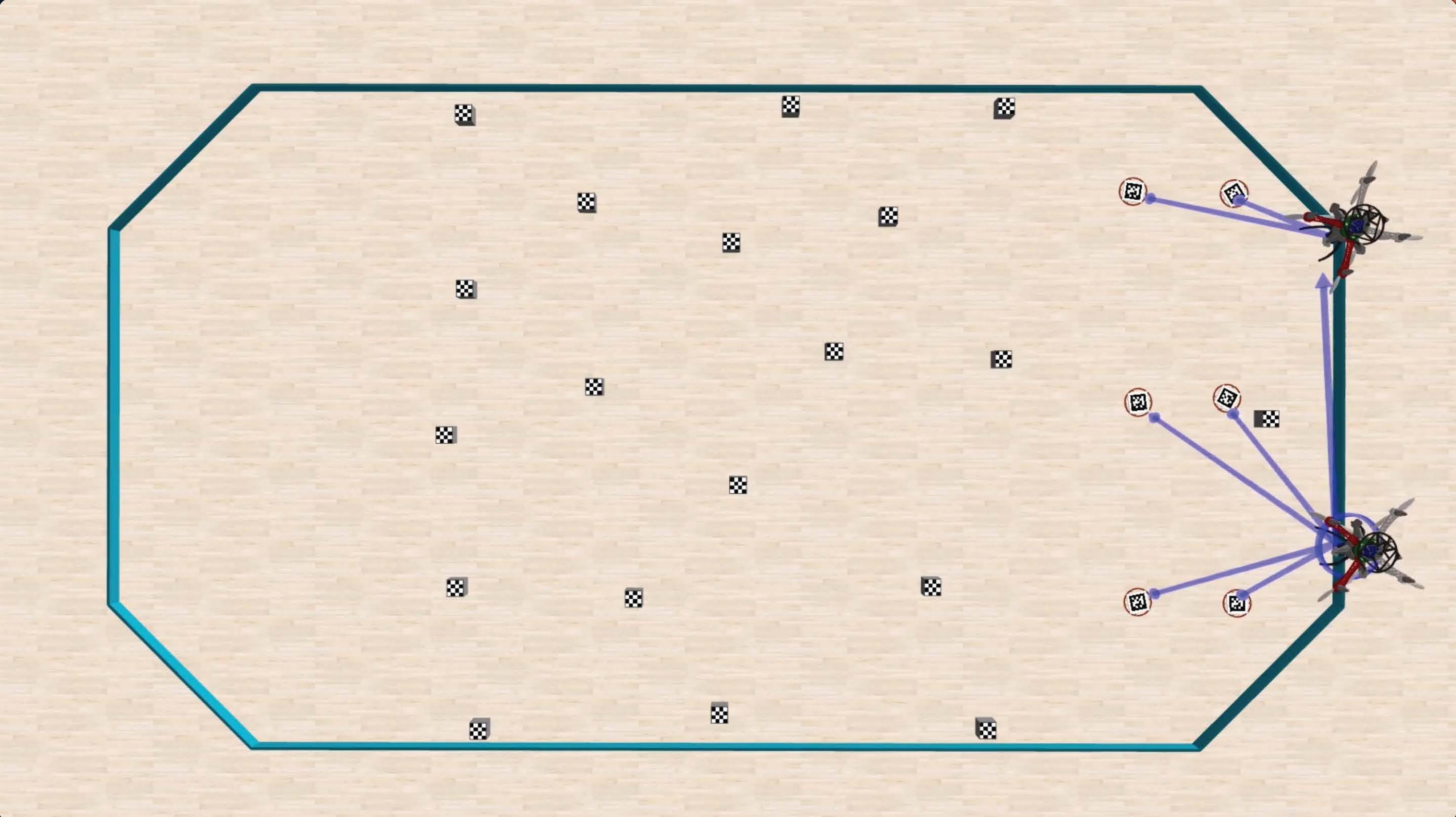}}\\
\vspace{-2mm}
\caption{{\bf Balancing global and local goals: Key frames.} (a,b) Robots begin as members of a single SoNS and begin moving across an environment with an unknown field of small, dense obstacles, searching for an object that marks the final destination. (c) As the robots move through the obstacle field, they collaboratively balance global and local goals at each bidirectional link, to avoid obstacles while still keeping the SoNS together. (d) The SoNS surpasses the obstacle field and senses the final destination object, and the mission is complete.}
\label{fig:mission2-keyframes}
\end{figure}

\vspace{7mm}
\noindent
{\it (Section continued on next page.)}

\clearpage
\subsubsection*{Variant: Smaller, denser obstacles}
\begin{figure}[h!]
\centering
\includegraphics[trim=120 60 120 80,clip,width=0.38\textwidth]{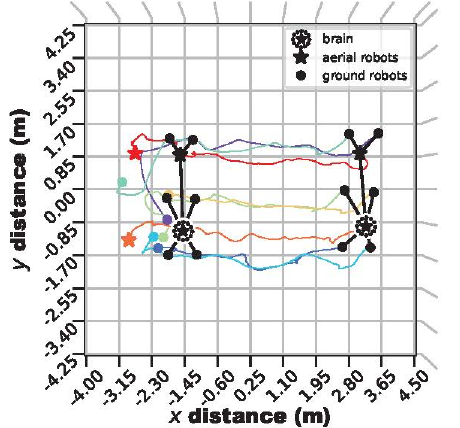}
\includegraphics[trim=20 0 40 20,clip,width=0.59\textwidth]{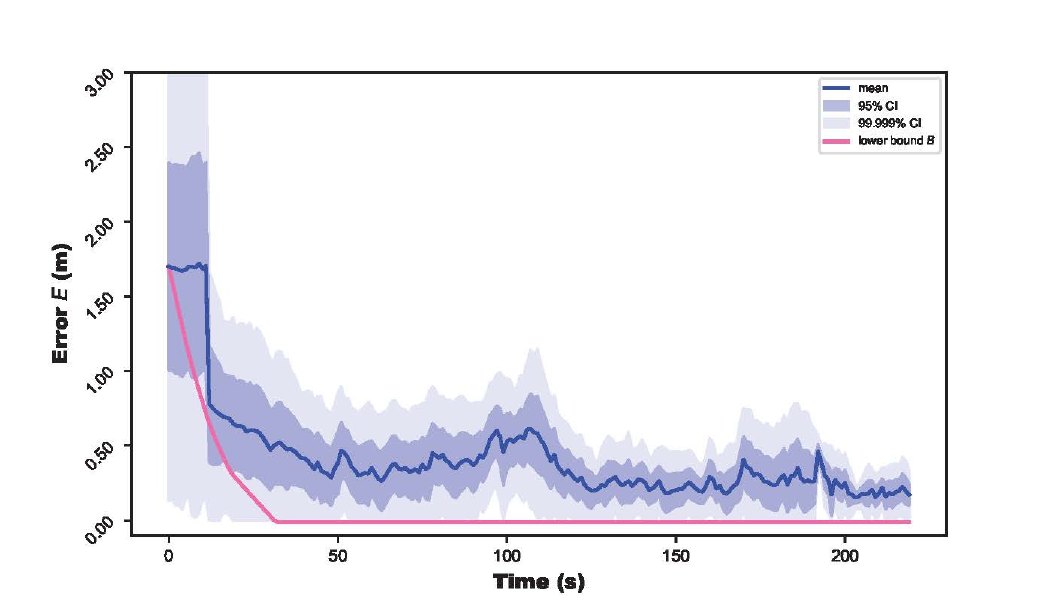}\\
\includegraphics[trim=120 60 120 80,clip,width=0.38\textwidth]{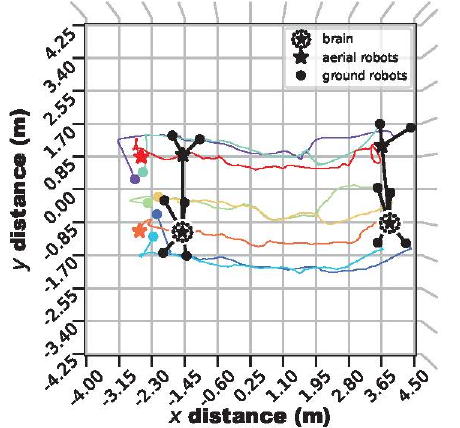}
\includegraphics[trim=20 0 40 20,clip,width=0.59\textwidth]{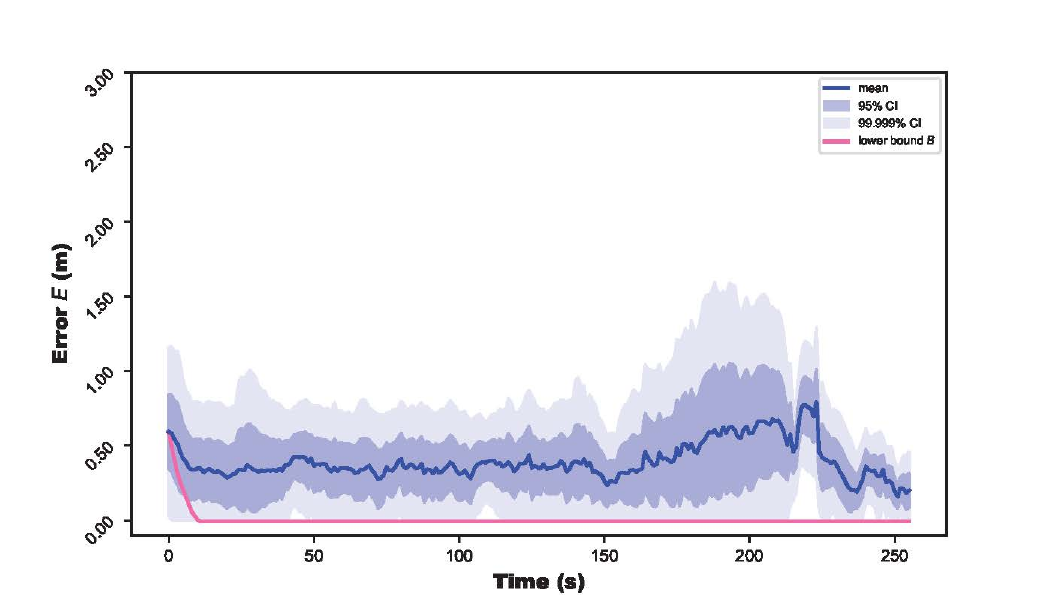}\\
\includegraphics[trim=120 60 120 80,clip,width=0.38\textwidth]{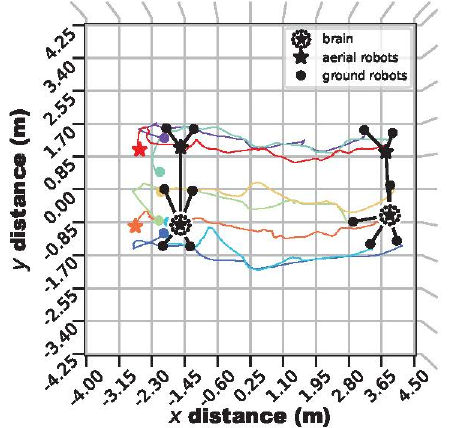}
\includegraphics[trim=20 0 40 20,clip,width=0.59\textwidth]{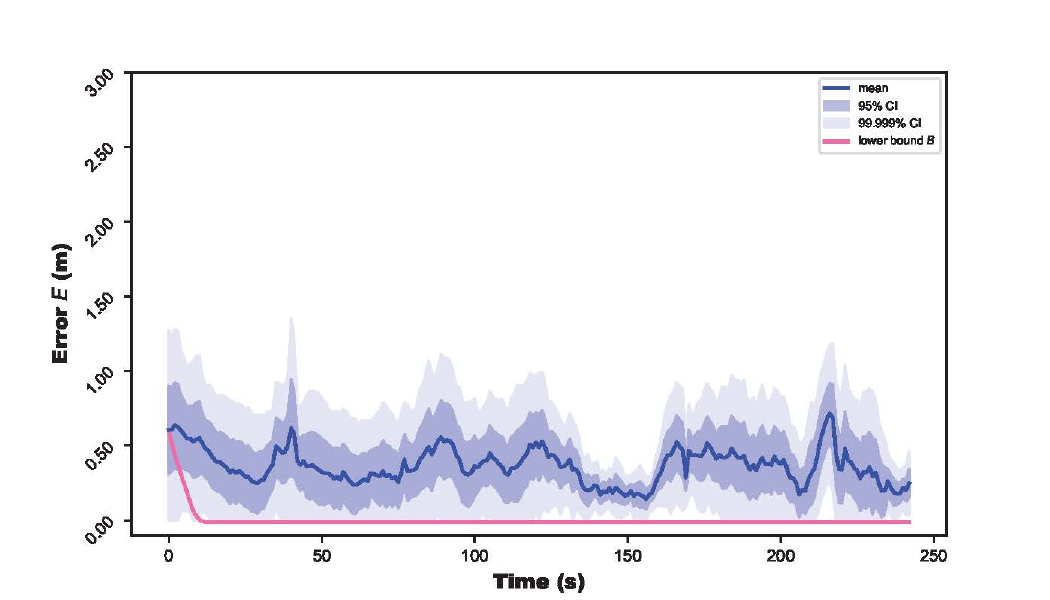}\\
\caption{{\bf Balancing global and local goals, with smaller, denser obstacles: Real robot trials.} Five trials with real robots were conducted, each with eight robots {\it (figure continued on next page)}.}
\label{fig:mission2-variant1-hardware}
\end{figure}

\clearpage
\begin{figure}[h!]
\ContinuedFloat
\centering
\includegraphics[trim=120 60 120 80,clip,width=0.38\textwidth]{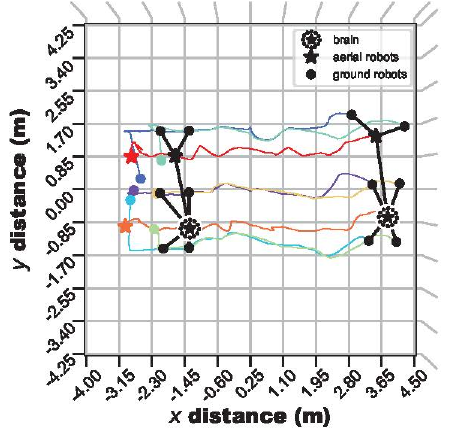}
\includegraphics[trim=20 0 40 20,clip,width=0.59\textwidth]{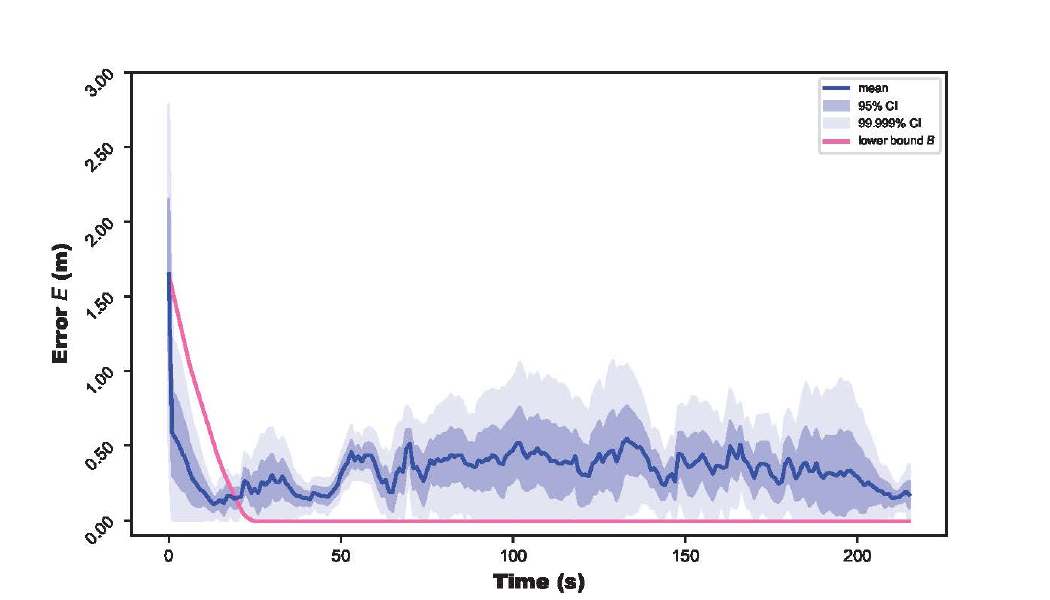}\\
\includegraphics[trim=120 60 120 80,clip,width=0.38\textwidth]{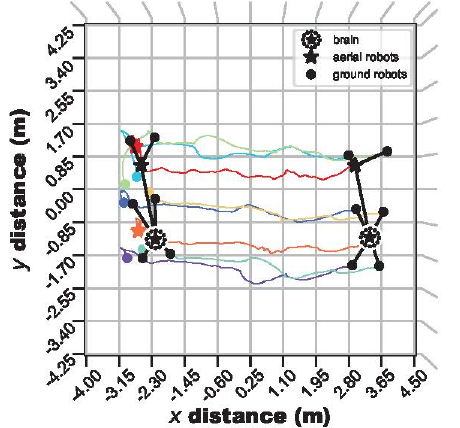}
\includegraphics[trim=20 0 40 20,clip,width=0.59\textwidth]{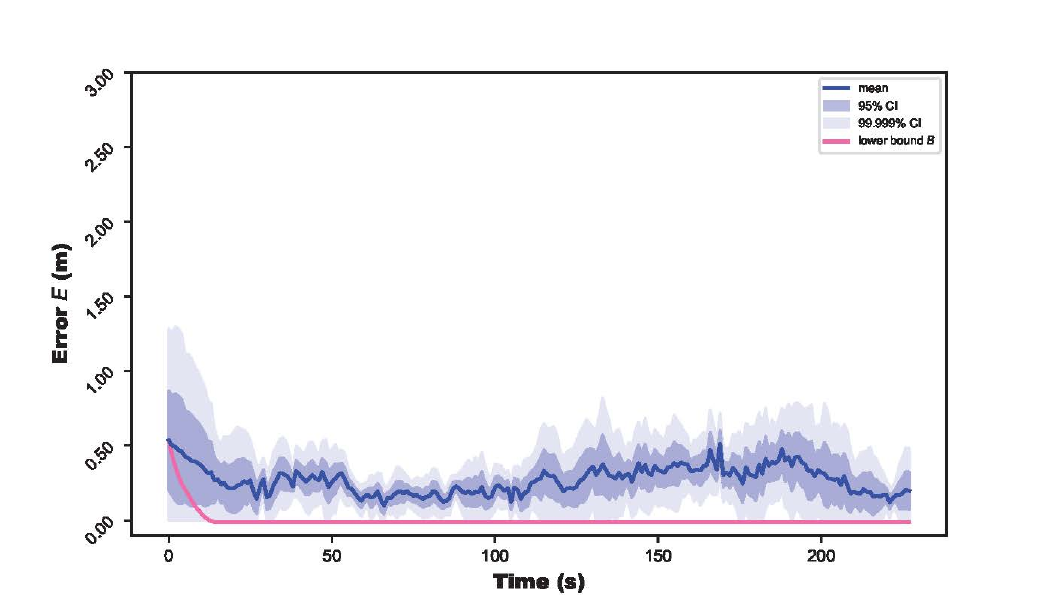}\\
\caption{{\it (cont'd)} {\bf Balancing global and local goals, with smaller, denser obstacles: Real robot trials.} Five trials with real robots were conducted, each with eight robots.}
\label{fig:mission2-variant1-hardware}
\end{figure}

\vspace{7mm}
\noindent
{\it (Section continued on next page.)}

\clearpage
\subsubsection*{Variant: Smaller, denser obstacles}
\begin{figure}[h!]
\centering
\includegraphics[trim=120 60 120 80,clip,width=0.38\textwidth]{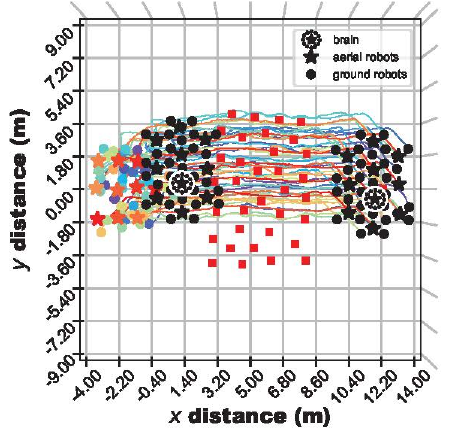}
\includegraphics[trim=20 0 40 20,clip,width=0.59\textwidth]{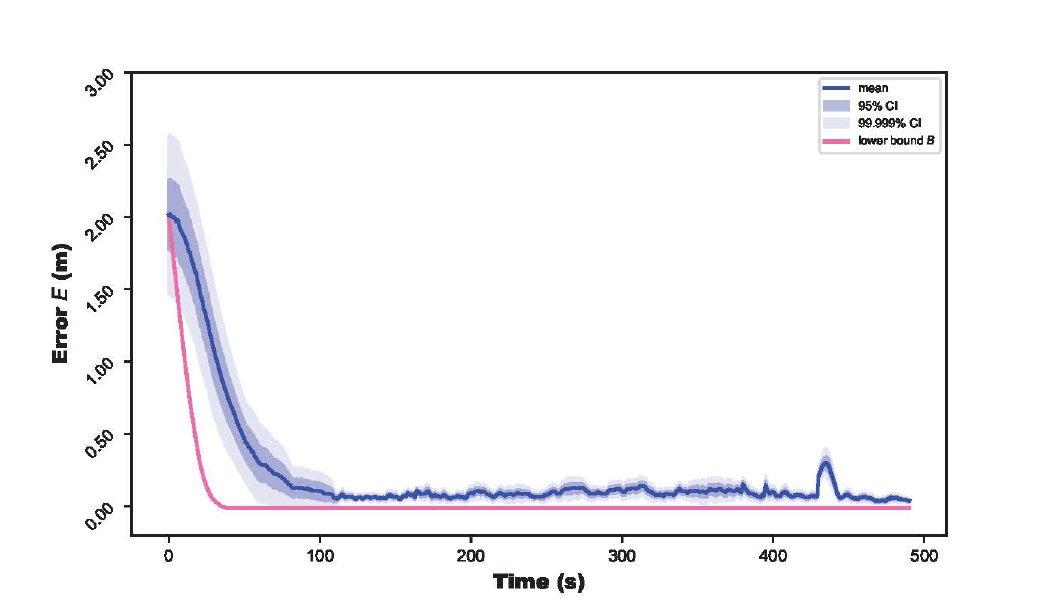}\\
\includegraphics[trim=120 60 120 80,clip,width=0.38\textwidth]{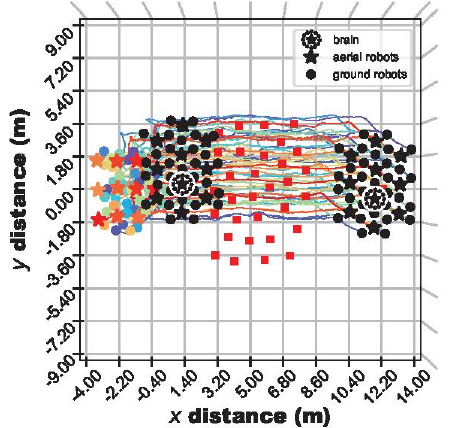}
\includegraphics[trim=20 0 40 20,clip,width=0.59\textwidth]{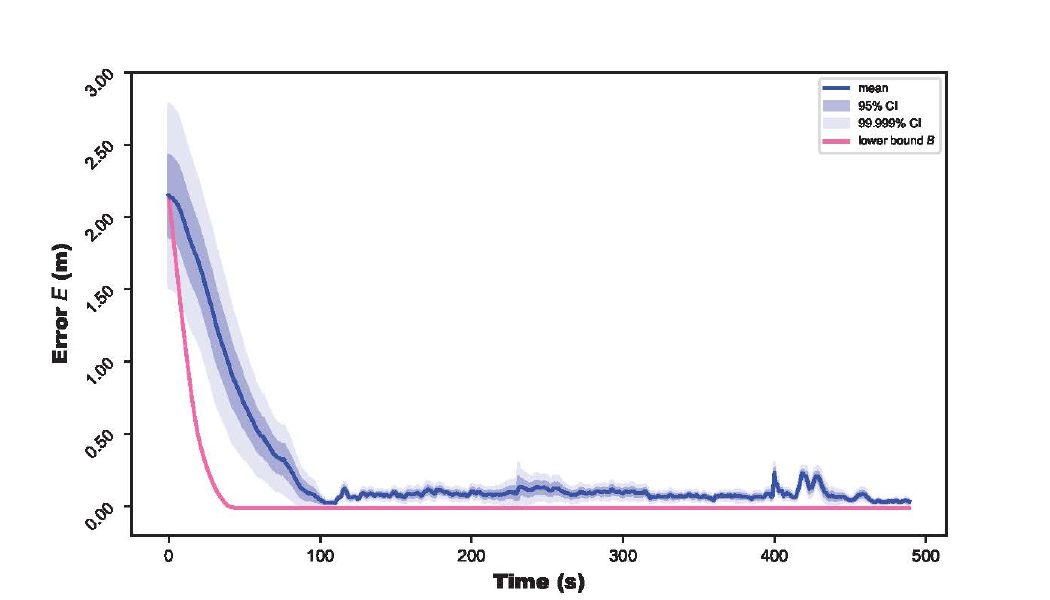}\\
\includegraphics[trim=120 60 120 80,clip,width=0.38\textwidth]{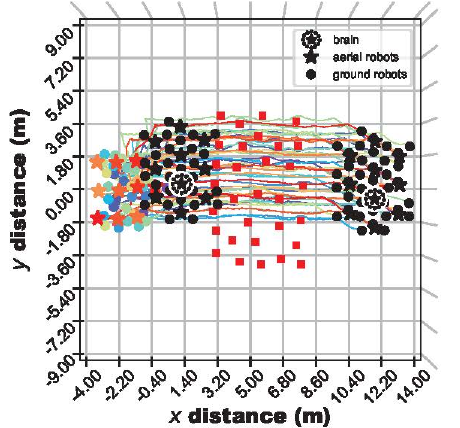}
\includegraphics[trim=20 0 40 20,clip,width=0.59\textwidth]{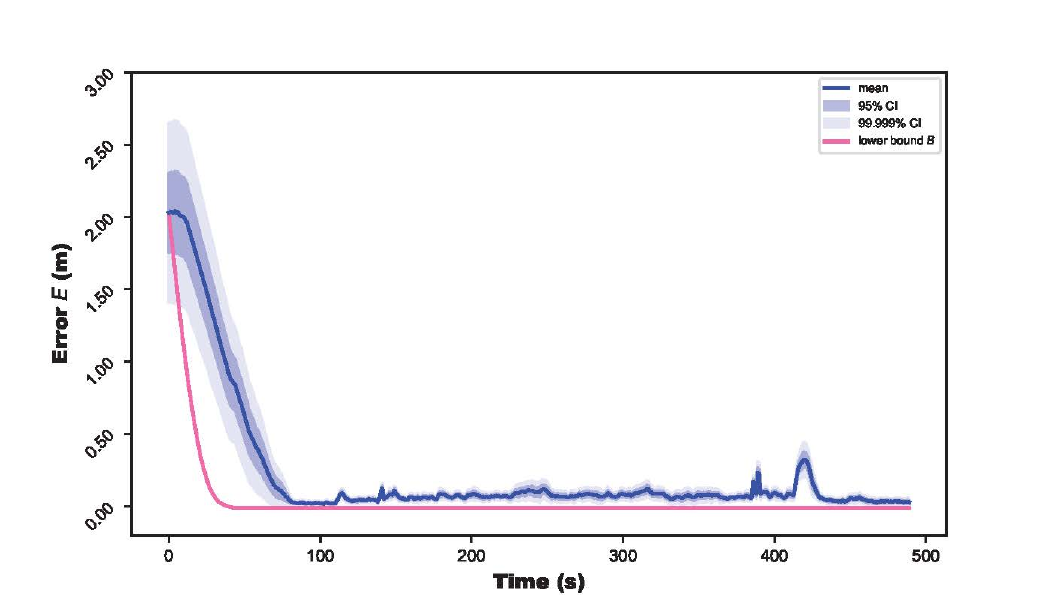}\\
\caption{{\bf Balancing global and local goals, with smaller, denser obstacles: Example simulation trials.} 50 trials were conducted in simulation, each with 50 robots.}
\label{fig:mission2-variant1-simulation}
\end{figure}

\clearpage
\subsubsection*{Variant: Larger, less dense obstacles}
\begin{figure}[h!]
\centering
\includegraphics[trim=120 60 120 80,clip,width=0.38\textwidth]{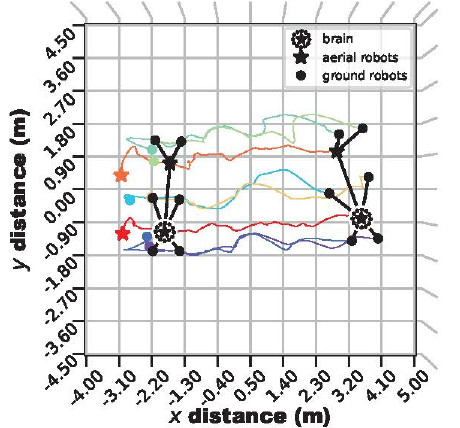}
\includegraphics[trim=20 0 40 20,clip,width=0.59\textwidth]{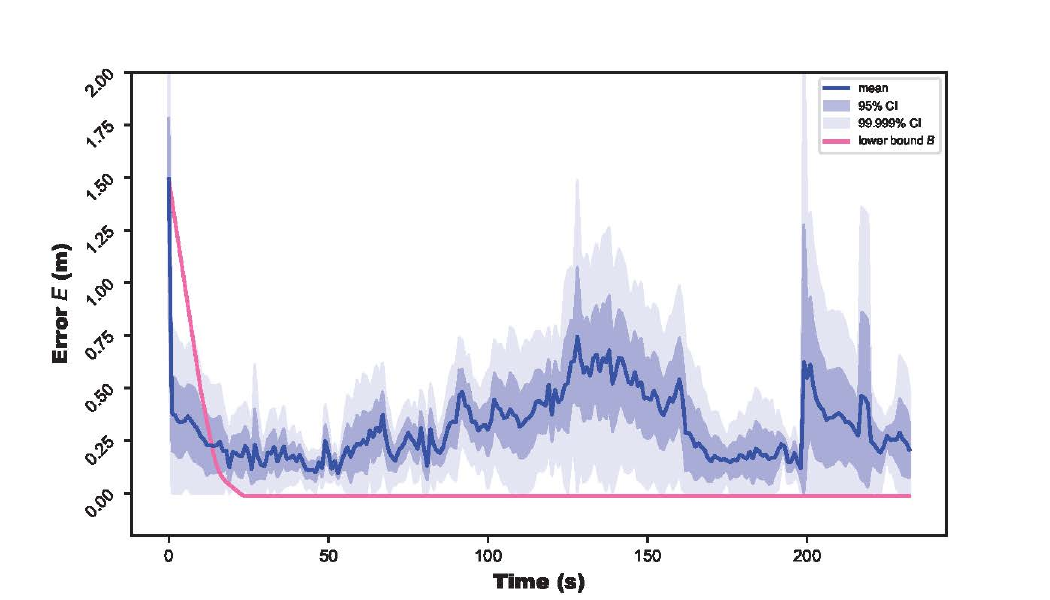}\\
\includegraphics[trim=120 60 120 80,clip,width=0.38\textwidth]{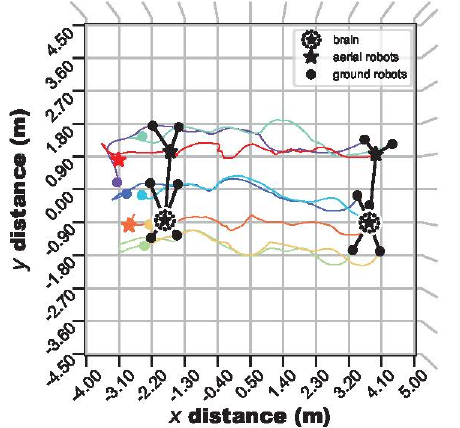}
\includegraphics[trim=20 0 40 20,clip,width=0.59\textwidth]{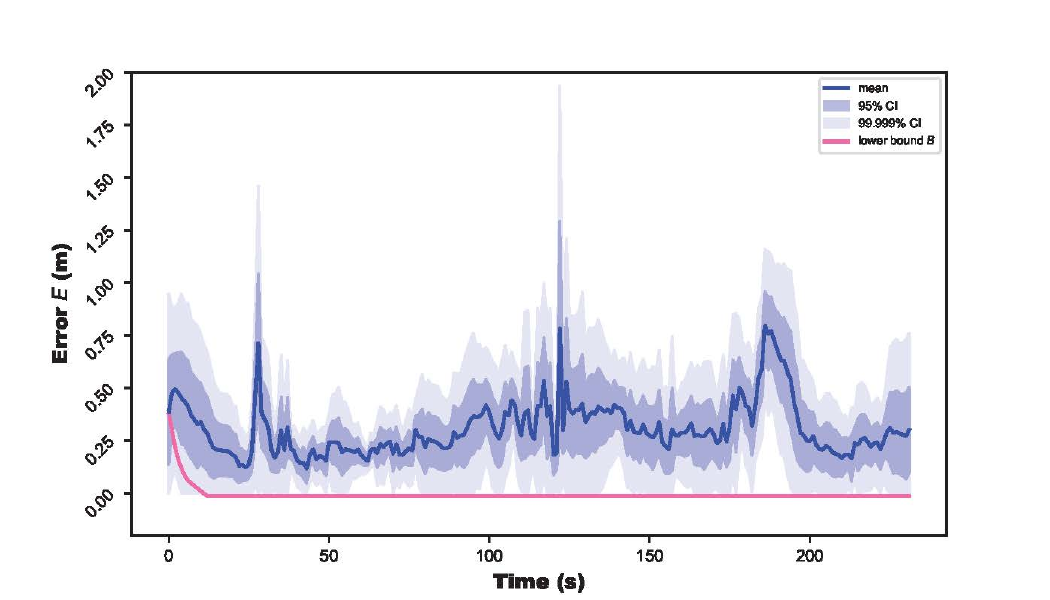}\\
\includegraphics[trim=120 60 120 80,clip,width=0.38\textwidth]{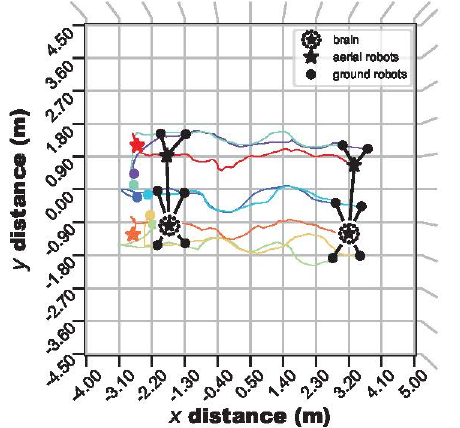}
\includegraphics[trim=20 0 40 20,clip,width=0.59\textwidth]{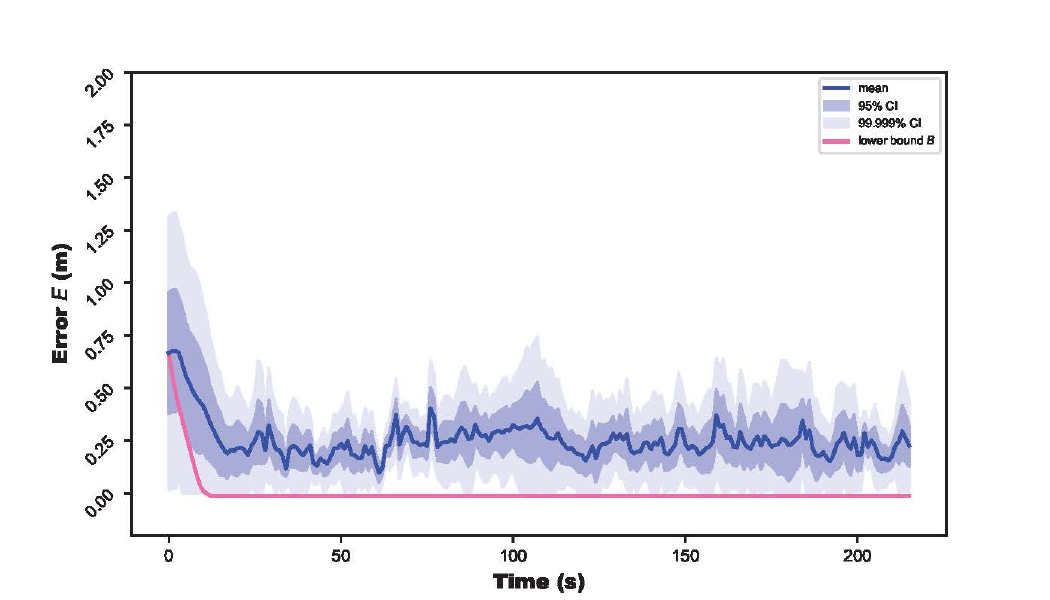}\\
\caption{{\bf Balancing global and local goals, with larger, less dense obstacles: Real robot trials.} Five trials with real robots were conducted, each with eight robots {\it (figure continued on next page)}.}
\label{fig:mission2-variant2-hardware}
\end{figure}

\clearpage

\begin{figure}[h!]
\ContinuedFloat
\centering
\includegraphics[trim=120 60 120 80,clip,width=0.38\textwidth]{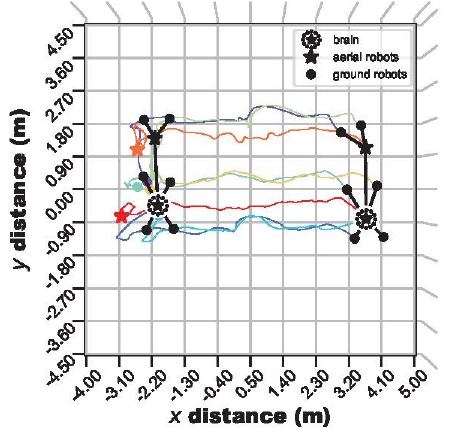}
\includegraphics[trim=20 0 40 20,clip,width=0.59\textwidth]{Mission2_Obstacle_avoidance_Variant1_Smaller_denser_obstacles_Real_robot_Hardware_run4_ErrorLog.pdf}\\
\includegraphics[trim=120 60 120 80,clip,width=0.38\textwidth]{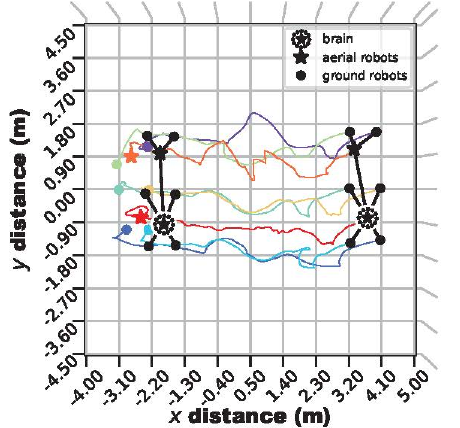}
\includegraphics[trim=20 0 40 20,clip,width=0.59\textwidth]{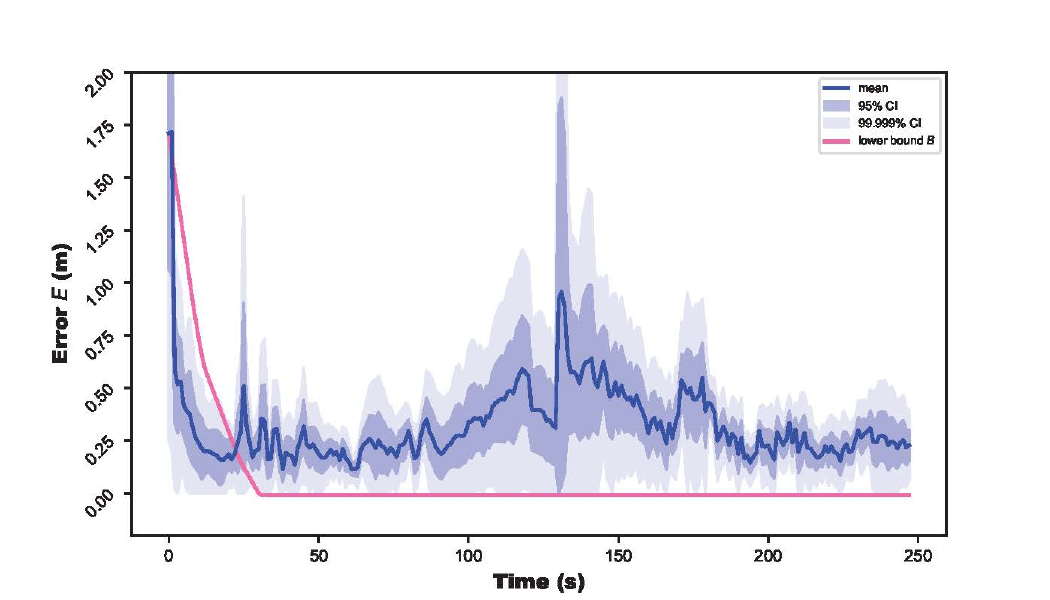}\\
\caption{{\it (cont'd)} {\bf Balancing global and local goals, with larger, less dense obstacles: Real robot trials.} Five trials with real robots were conducted, each with eight robots.}
\label{fig:mission2-variant2-hardware}
\end{figure}

\vspace{7mm}
\noindent
{\it (Section continued on next page.)}

\clearpage
\subsubsection*{Variant: Larger, less dense obstacles}
\begin{figure}[h!]
\centering
\includegraphics[trim=120 60 120 80,clip,width=0.38\textwidth]{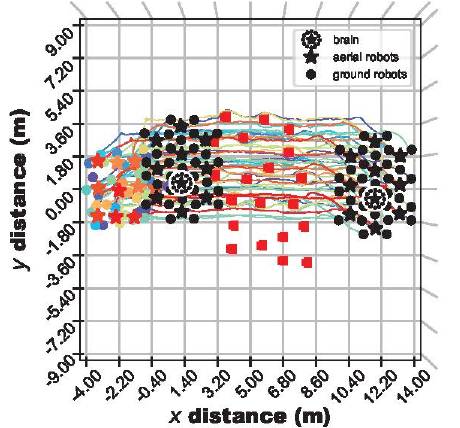}
\includegraphics[trim=20 0 40 20,clip,width=0.59\textwidth]{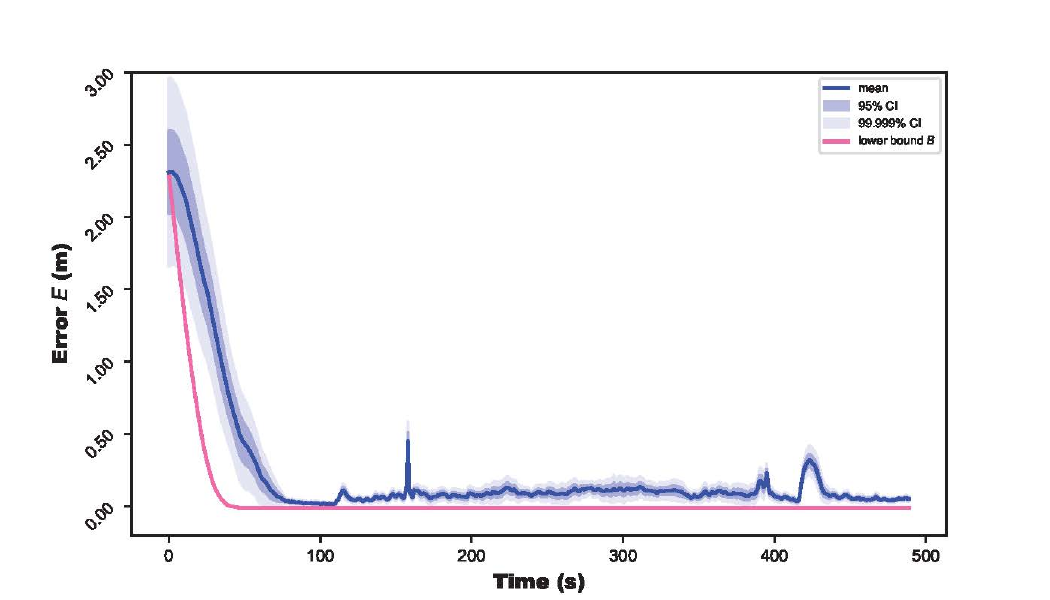}\\
\includegraphics[trim=120 60 120 80,clip,width=0.38\textwidth]{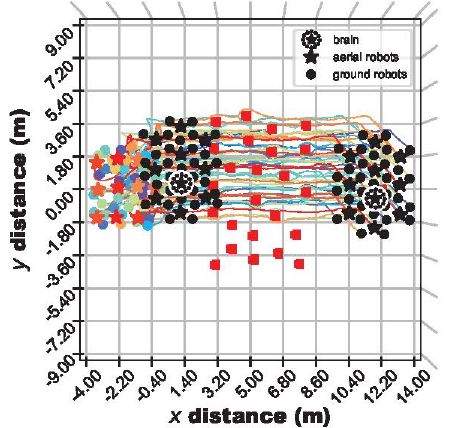}
\includegraphics[trim=20 0 40 20,clip,width=0.59\textwidth]{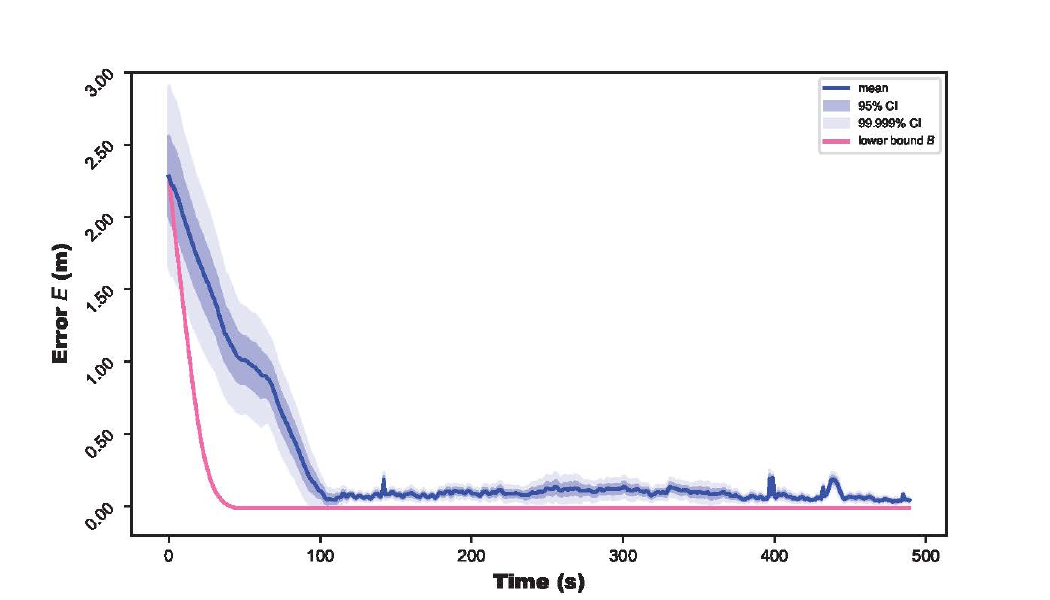}\\
\includegraphics[trim=120 60 120 80,clip,width=0.38\textwidth]{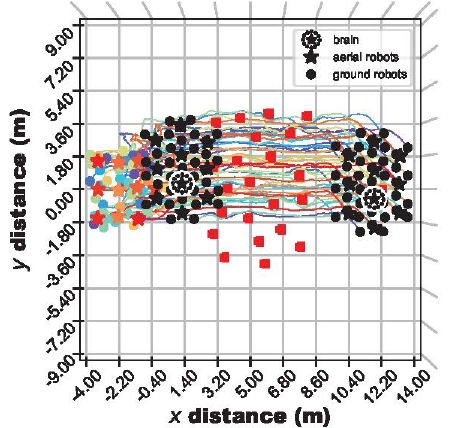}
\includegraphics[trim=20 0 40 20,clip,width=0.59\textwidth]{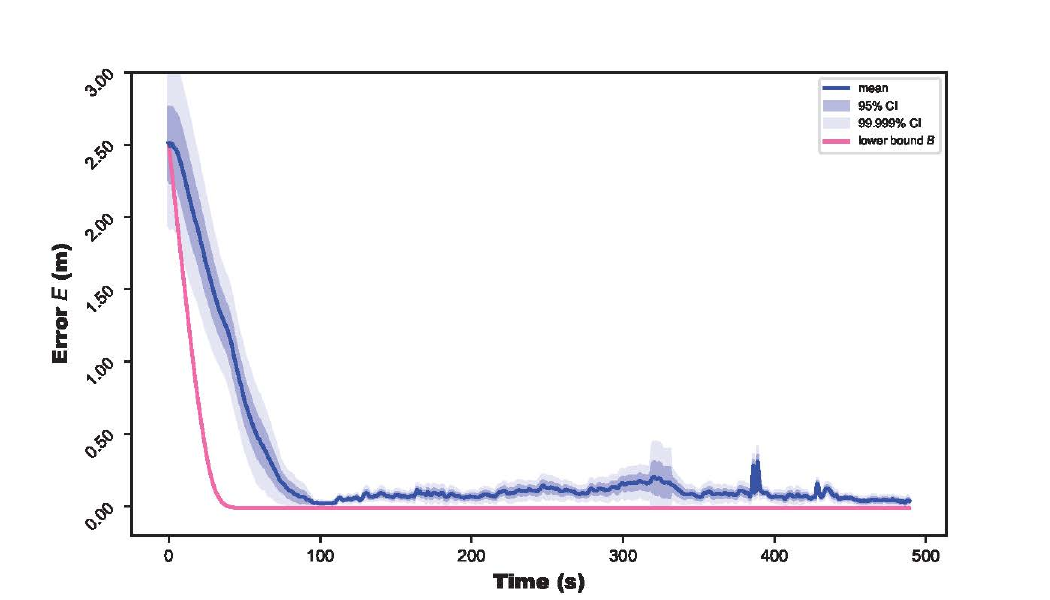}\\
\caption{{\bf Balancing global and local goals, with larger, less dense obstacles: Example simulation trials.} 50 trials were conducted in simulation, each with 50 robots.}
\label{fig:mission2-variant2-simulation}
\end{figure}

\clearpage
\subsection*{Mission: Collective sensing and actuation (see Sec.~2.1.3 in the main paper)}
\rhead{Mission: Collective sensing and actuation}

This mission includes only one variant, run in experiments with real robots and in simulation.

\begin{figure}[h!]
\centering
\subfigure[]{
\includegraphics[trim=90 60 90 75,clip,width=0.39\textwidth]{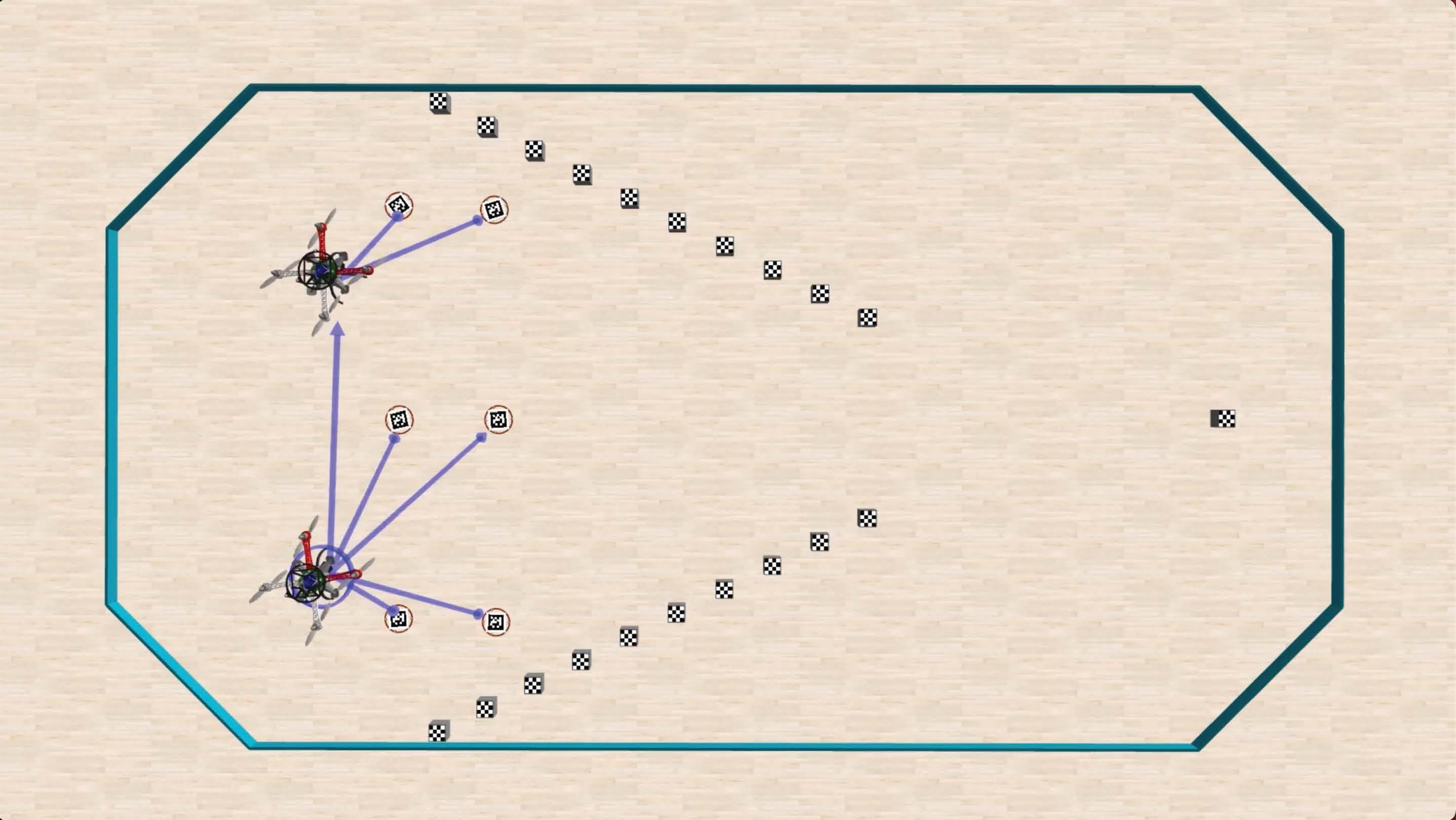}}
\subfigure[]{
\includegraphics[trim=90 60 90 75,clip,width=0.39\textwidth]{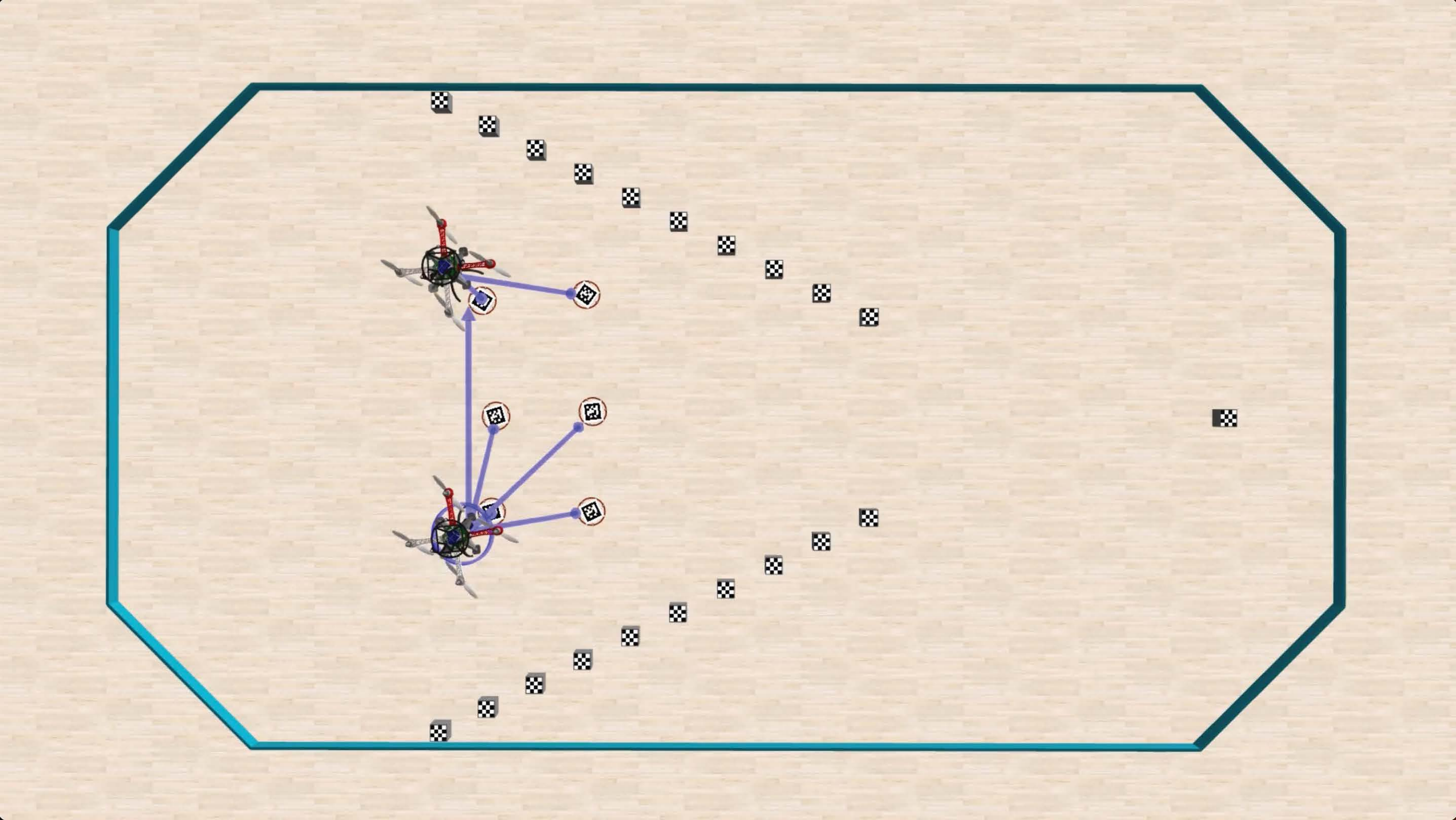}}\\
\vspace{-2mm}
\subfigure[]{
\includegraphics[trim=90 60 90 75,clip,width=0.39\textwidth]{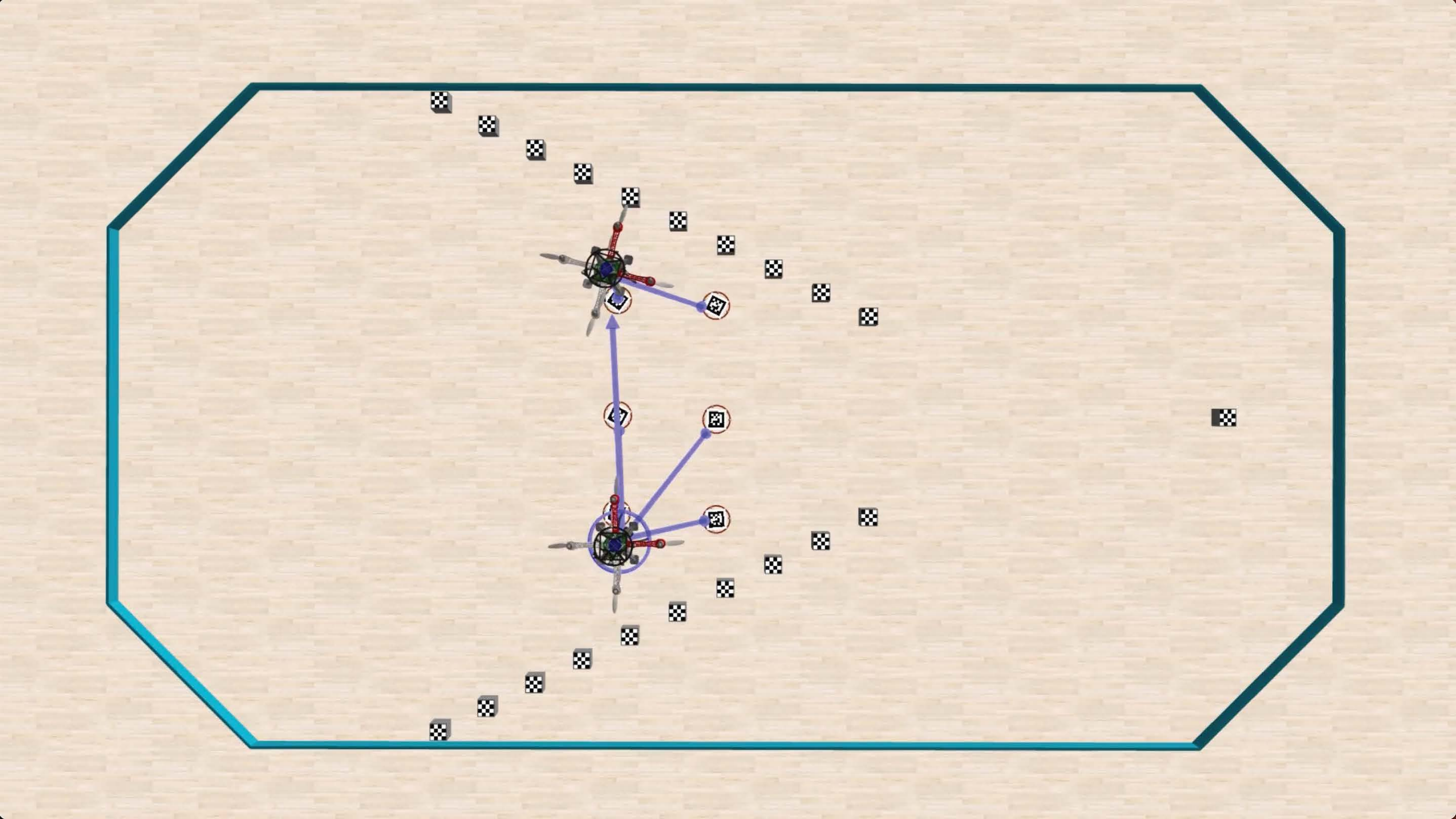}}
\subfigure[]{
\includegraphics[trim=90 60 90 75,clip,width=0.39\textwidth]{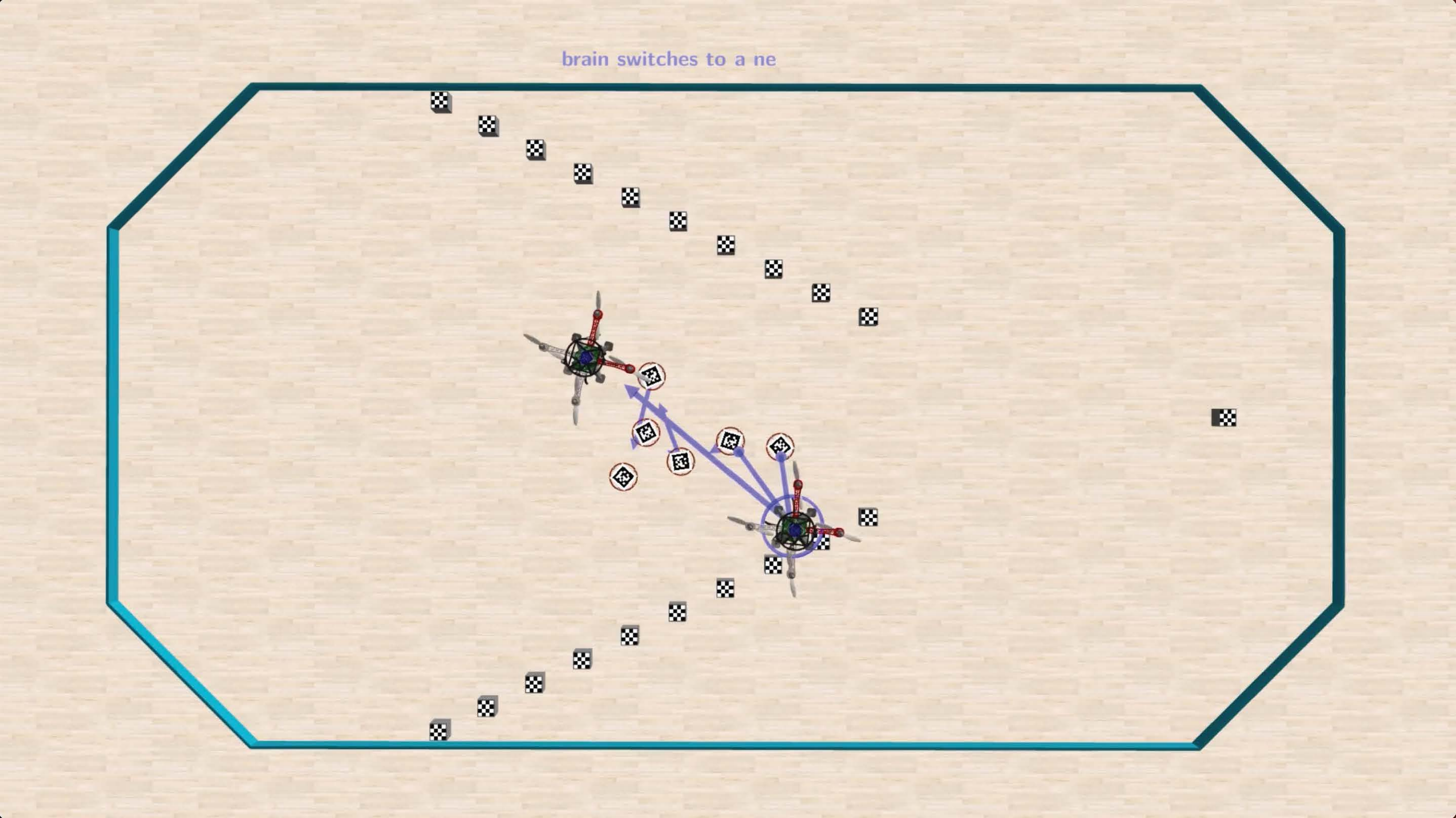}}\\
\vspace{-2mm}
\subfigure[]{
\includegraphics[trim=90 60 90 75,clip,width=0.39\textwidth]{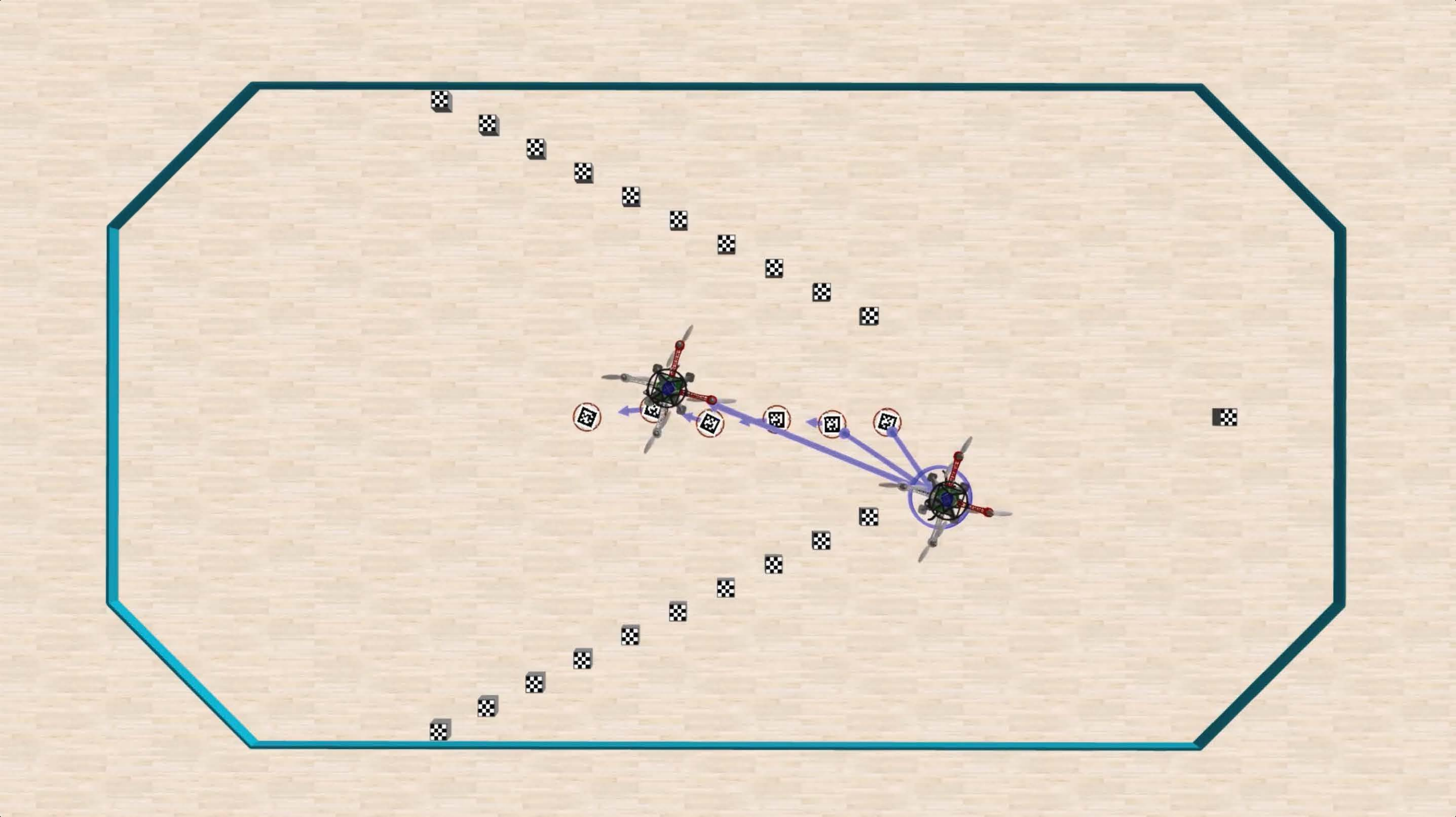}}
\subfigure[]{
\includegraphics[trim=90 60 90 75,clip,width=0.39\textwidth]{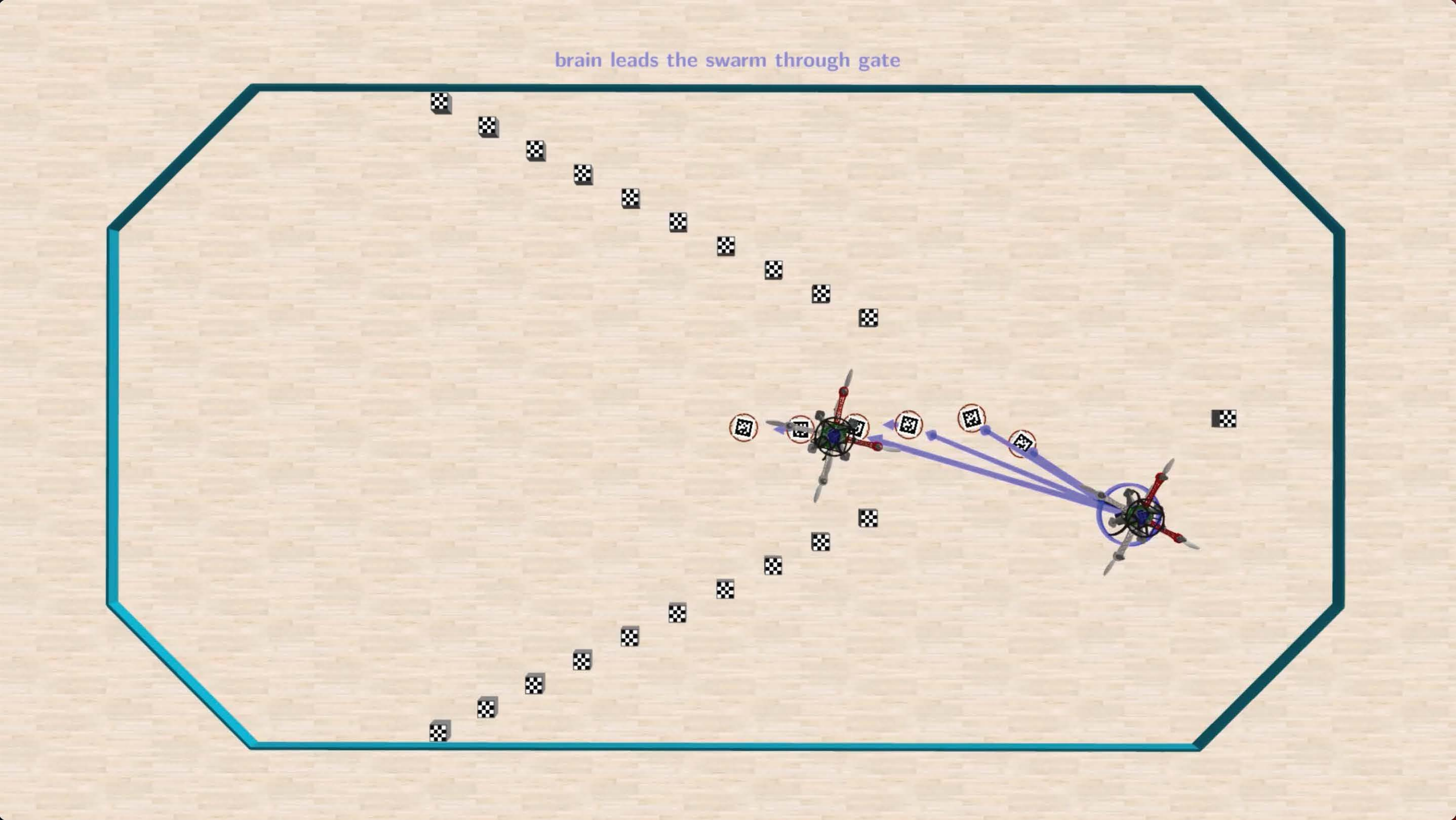}}\\
\vspace{-2mm}
\subfigure[]{
\includegraphics[trim=90 60 90 75,clip,width=0.39\textwidth]{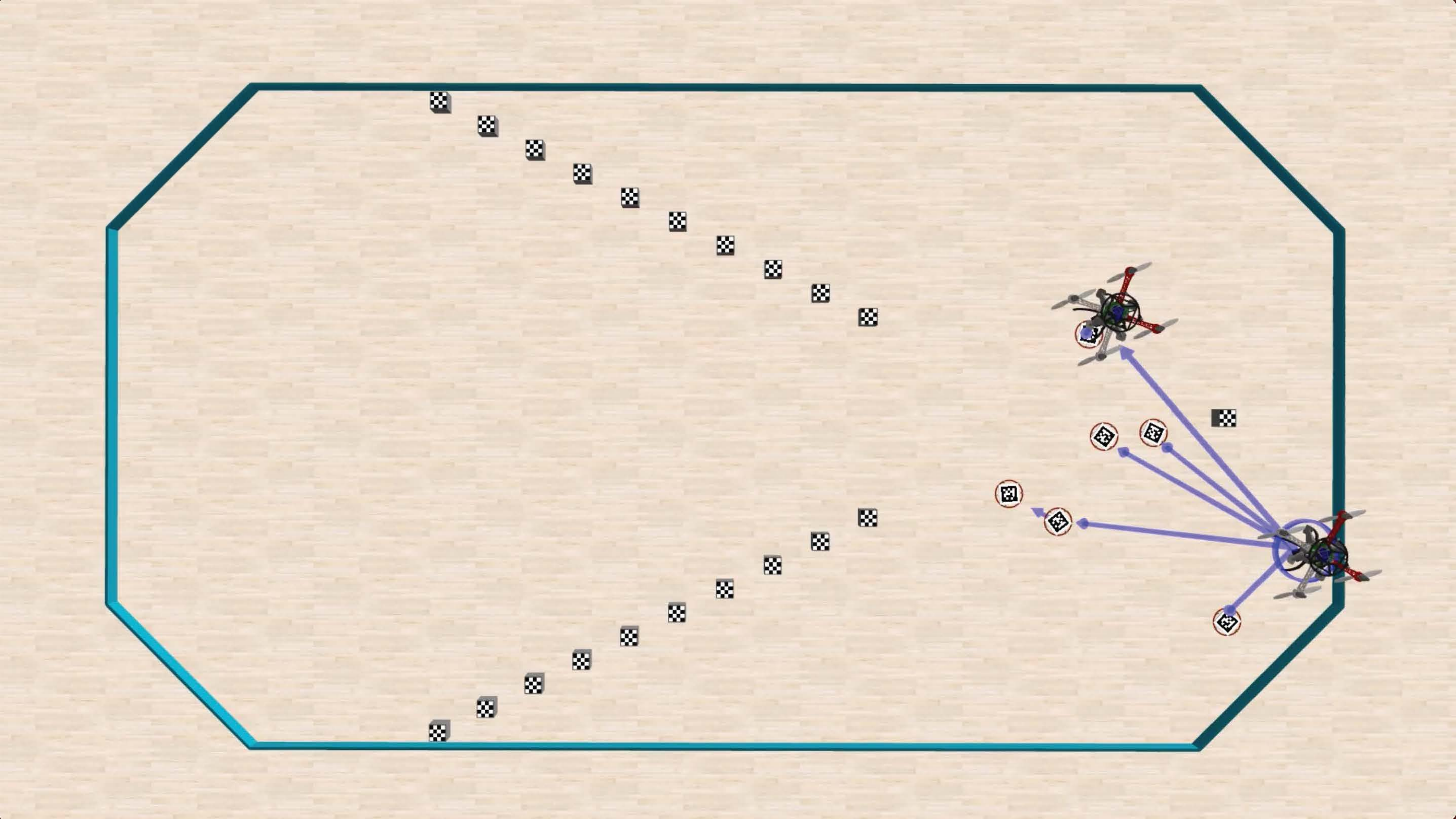}}
\subfigure[]{
\includegraphics[trim=90 60 90 75,clip,width=0.39\textwidth]{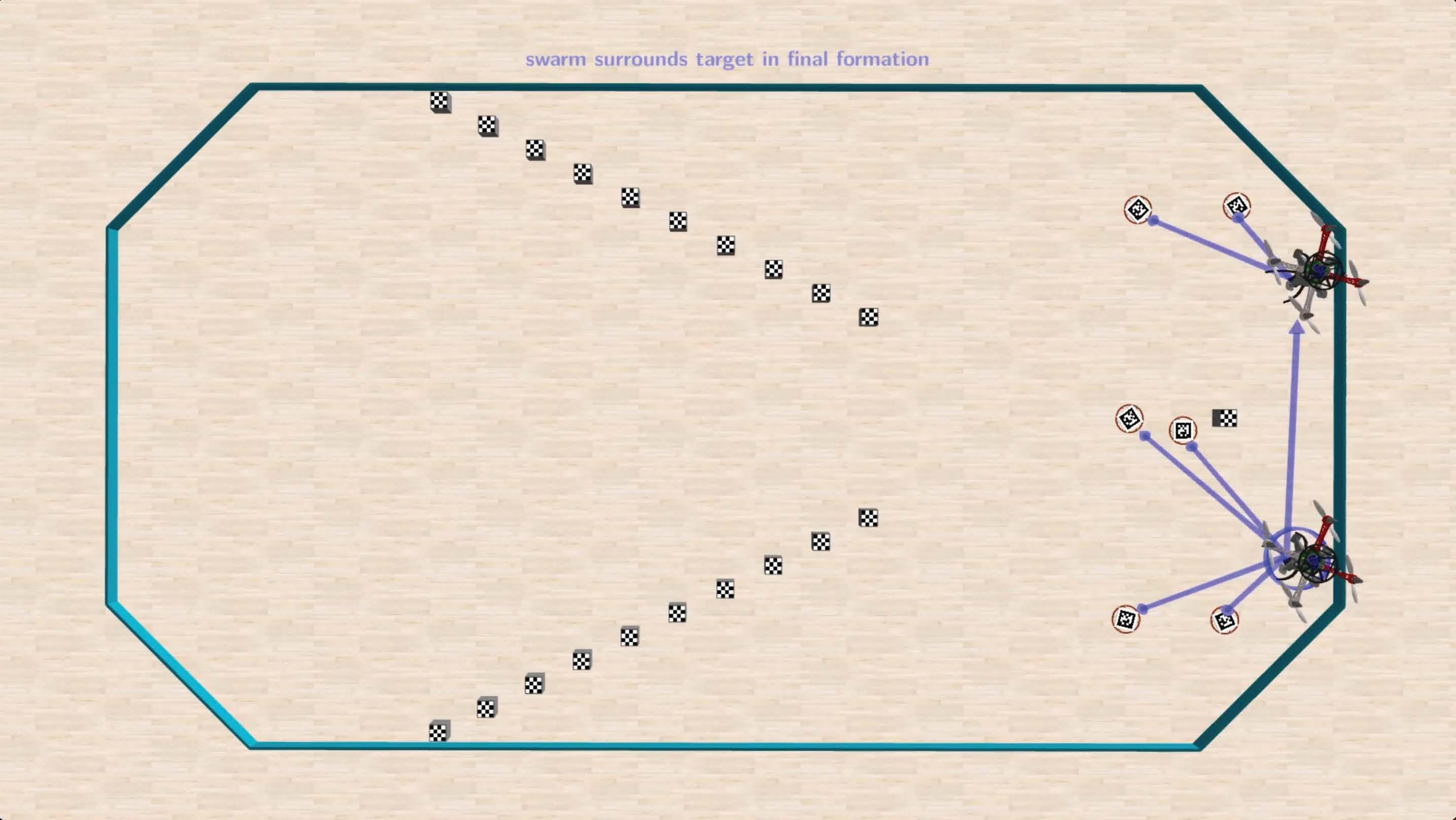}}\\
\vspace{-2mm}
\caption{{\bf Collective sensing and actuation: Key frames.} (a) Robots that have established their target SoNS begin moving across an environment, searching for an object that marks the final destination. (b,c) Robots sense obstacles that form the walls of a passage and start to adapt by narrowing the shape of the target SoNS. (d,e) Robots sense the walls narrowing and then adapt further by reconfiguring into a different target SoNS that has an even narrower shape. (f,g) Robots sense that there are no longer walls constraining them and start to return to their original target SoNS. (h) Robots sense the final destination object and return to their original target SoNS, and the mission is complete.}
\label{fig:mission3-keyframes}
\end{figure}

\clearpage
\subsubsection*{Mission: Collective sensing and actuation}

\begin{figure}[h!]
\centering
\includegraphics[trim=120 60 120 80,clip,width=0.38\textwidth]{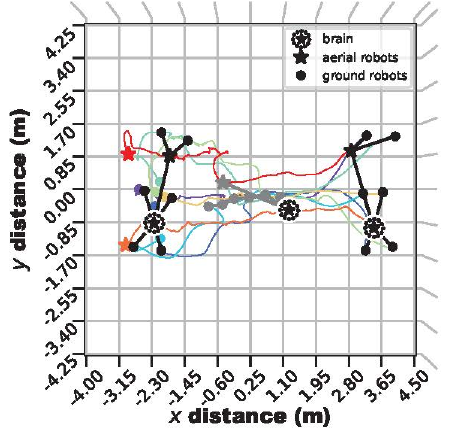}
\includegraphics[trim=20 0 40 20,clip,width=0.59\textwidth]{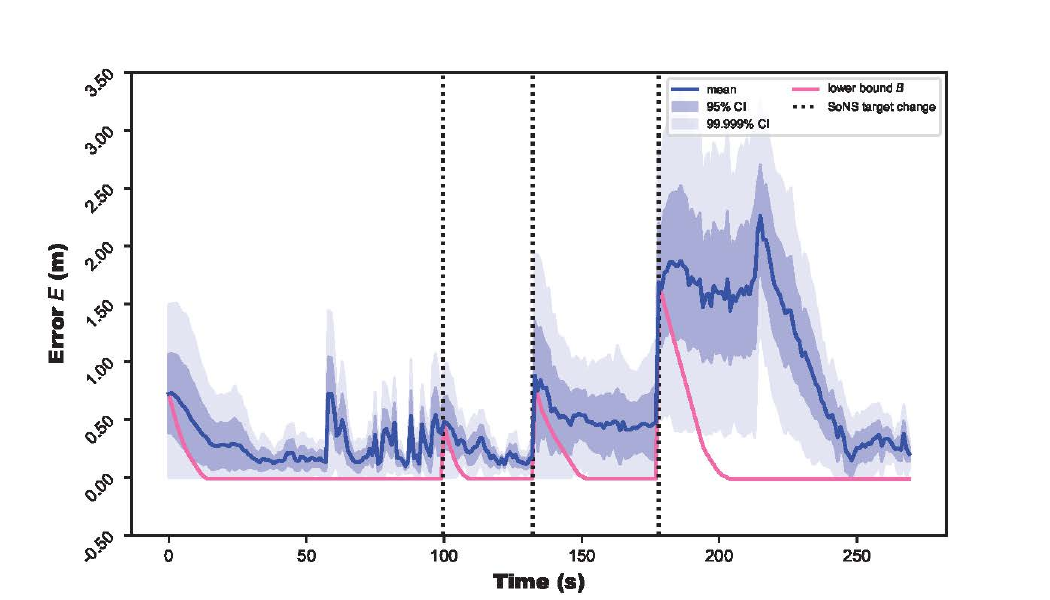}\\
\includegraphics[trim=120 60 120 80,clip,width=0.38\textwidth]{Mission3_Collective_sensing_actuation_Real_robot_Hardware_run1_TrackLog.pdf}
\includegraphics[trim=20 0 40 20,clip,width=0.59\textwidth]{Mission3_Collective_sensing_actuation_Real_robot_Hardware_run1_ErrorLog.pdf}\\
\includegraphics[trim=120 60 120 80,clip,width=0.38\textwidth]{Mission3_Collective_sensing_actuation_Real_robot_Hardware_run1_TrackLog.pdf}
\includegraphics[trim=20 0 40 20,clip,width=0.59\textwidth]{Mission3_Collective_sensing_actuation_Real_robot_Hardware_run1_ErrorLog.pdf}\\
\caption{{\bf Collective sensing and actuation: Real robot trials.} Five trials with real robots were conducted, each with eight robots {\it (figure continued on next page)}.}
\label{fig:mission3-hardware}
\end{figure}

\clearpage

\begin{figure}[h!]
\ContinuedFloat
\centering
\includegraphics[trim=120 60 120 80,clip,width=0.38\textwidth]{Mission3_Collective_sensing_actuation_Real_robot_Hardware_run1_TrackLog.pdf}
\includegraphics[trim=20 0 40 20,clip,width=0.59\textwidth]{Mission3_Collective_sensing_actuation_Real_robot_Hardware_run1_ErrorLog.pdf}\\
\includegraphics[trim=120 60 120 80,clip,width=0.38\textwidth]{Mission3_Collective_sensing_actuation_Real_robot_Hardware_run1_TrackLog.pdf}
\includegraphics[trim=20 0 40 20,clip,width=0.59\textwidth]{Mission3_Collective_sensing_actuation_Real_robot_Hardware_run1_ErrorLog.pdf}\\
\caption{{\it (cont'd)} {\bf Collective sensing and actuation: Real robot trials.} Five trials with real robots were conducted, each with eight robots.}
\label{fig:mission3-hardware}
\end{figure}

\vspace{7mm}
\noindent
{\it (Section continued on next page.)}

\clearpage

\subsubsection*{Mission: Collective sensing and actuation}

\begin{figure}[h!]
\centering
\includegraphics[trim=120 60 120 80,clip,width=0.38\textwidth]{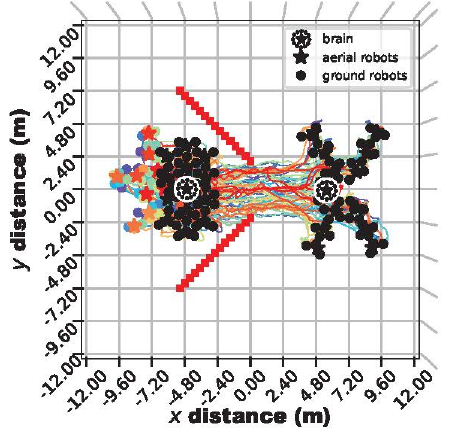}
\includegraphics[trim=20 0 40 20,clip,width=0.59\textwidth]{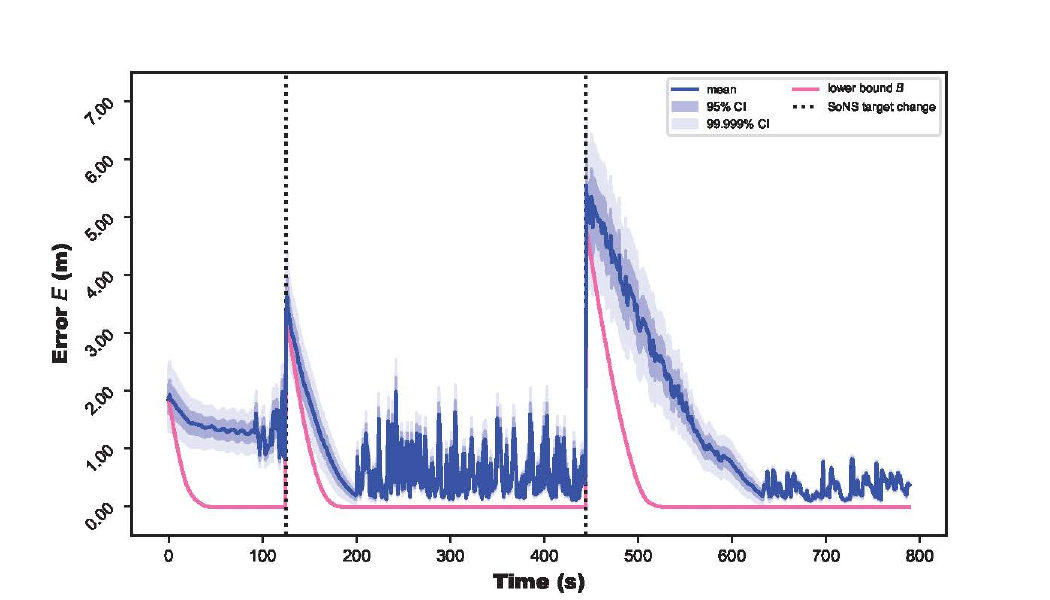}\\
\includegraphics[trim=120 60 120 80,clip,width=0.38\textwidth]{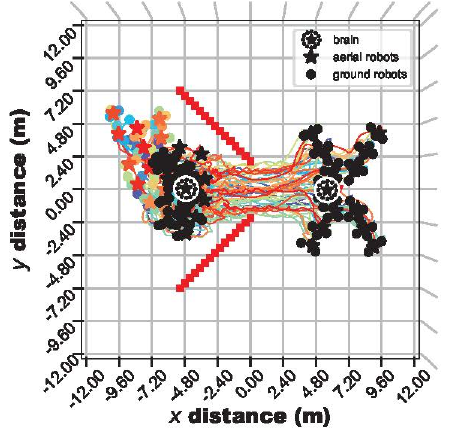}
\includegraphics[trim=20 0 40 20,clip,width=0.59\textwidth]{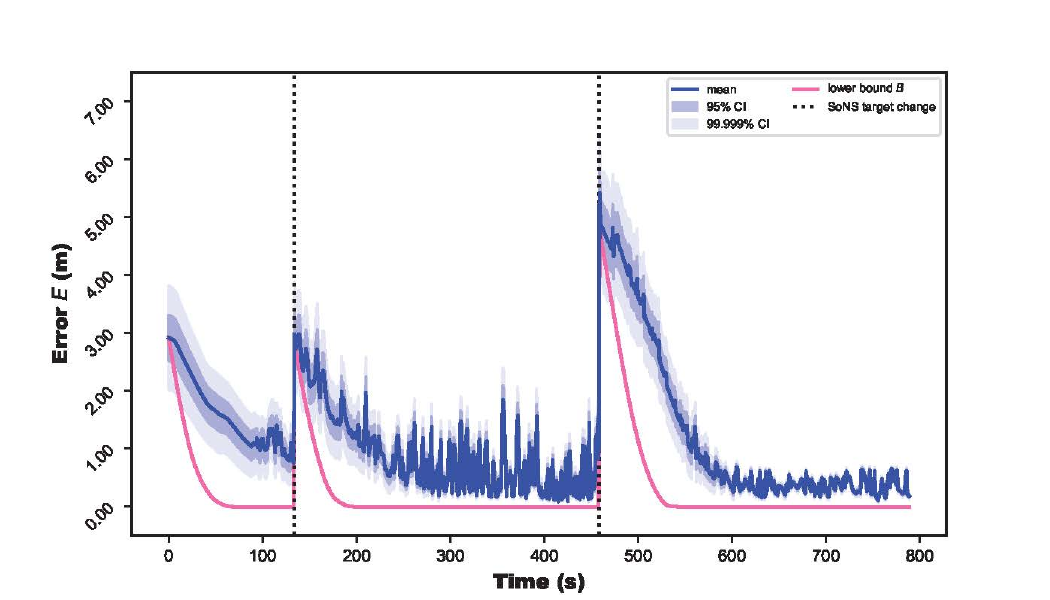}\\
\includegraphics[trim=120 60 120 80,clip,width=0.38\textwidth]{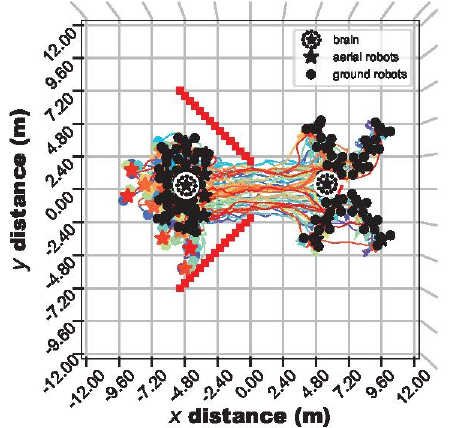}
\includegraphics[trim=20 0 40 20,clip,width=0.59\textwidth]{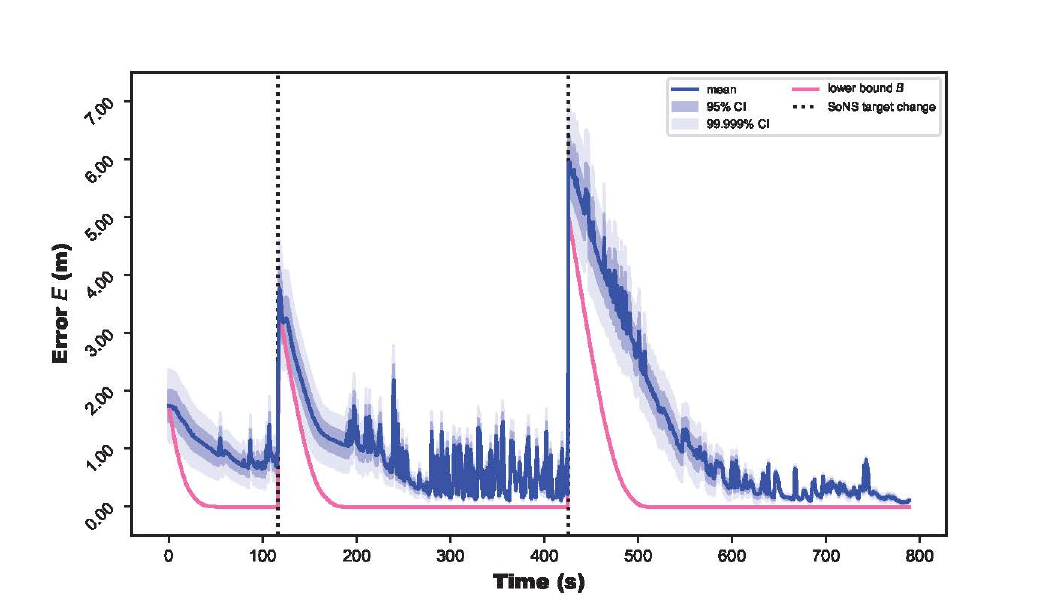}\\
\caption{{\bf Collective sensing and actuation: Example simulation trials.} 50 trials were conducted in simulation, each with 50 robots.}
\label{fig:mission3-simulation}
\end{figure}

\clearpage
\subsection*{Mission: Binary decision making (see Sec.~2.1.4 in the main paper)}
\rhead{Mission: Binary decision making}

This mission includes only one variant, run in experiments with real robots and in simulation.
Note that simulation trials for this mission are provided in the ``Scalability" subsection, below.

\begin{figure}[h!]
\centering
\subfigure[]{
\includegraphics[trim=90 60 90 75,clip,width=0.4\textwidth]{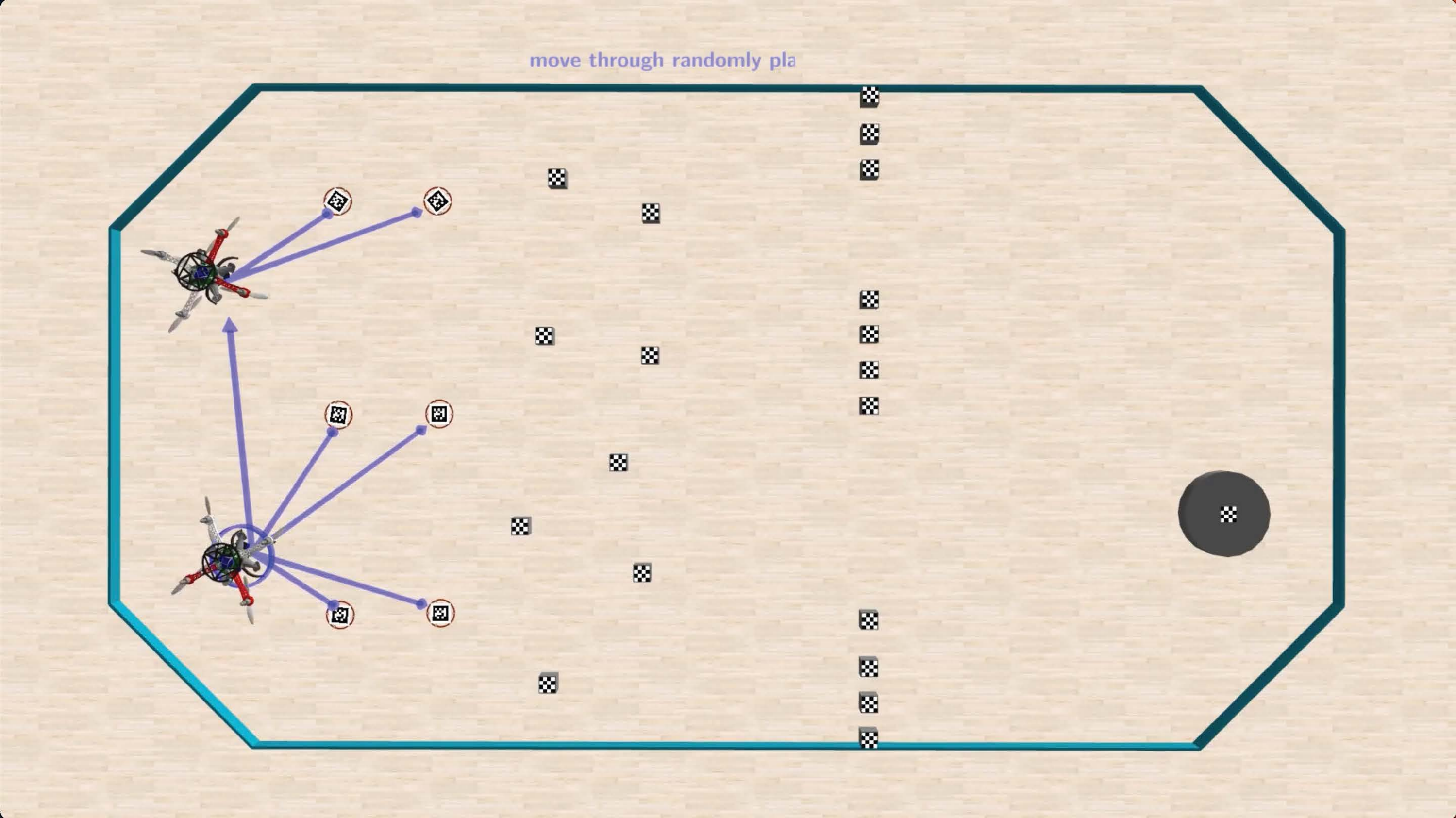}}
\subfigure[]{
\includegraphics[trim=90 60 90 75,clip,width=0.4\textwidth]{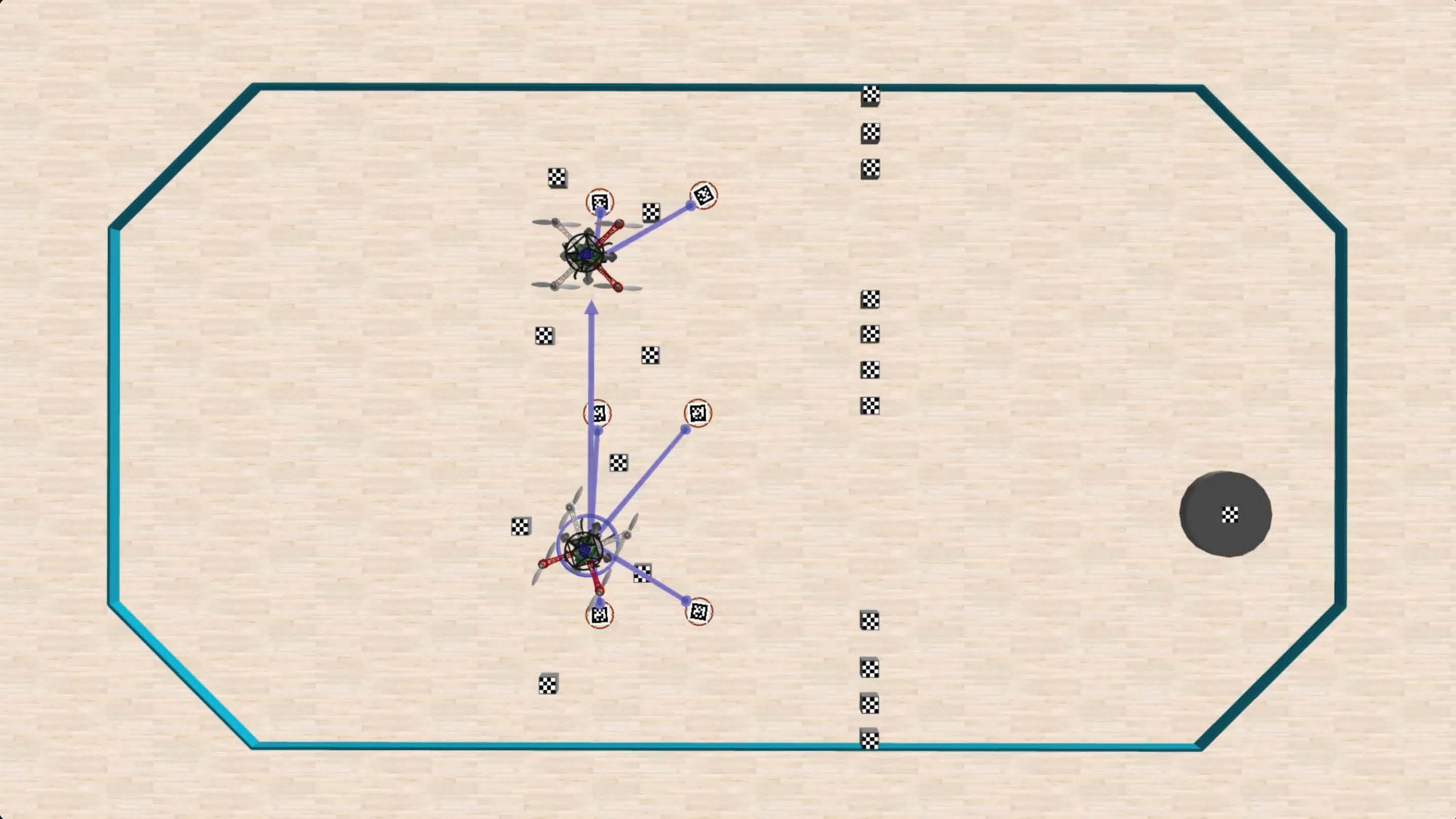}}\\
\vspace{-2mm}
\subfigure[]{
\includegraphics[trim=90 60 90 75,clip,width=0.4\textwidth]{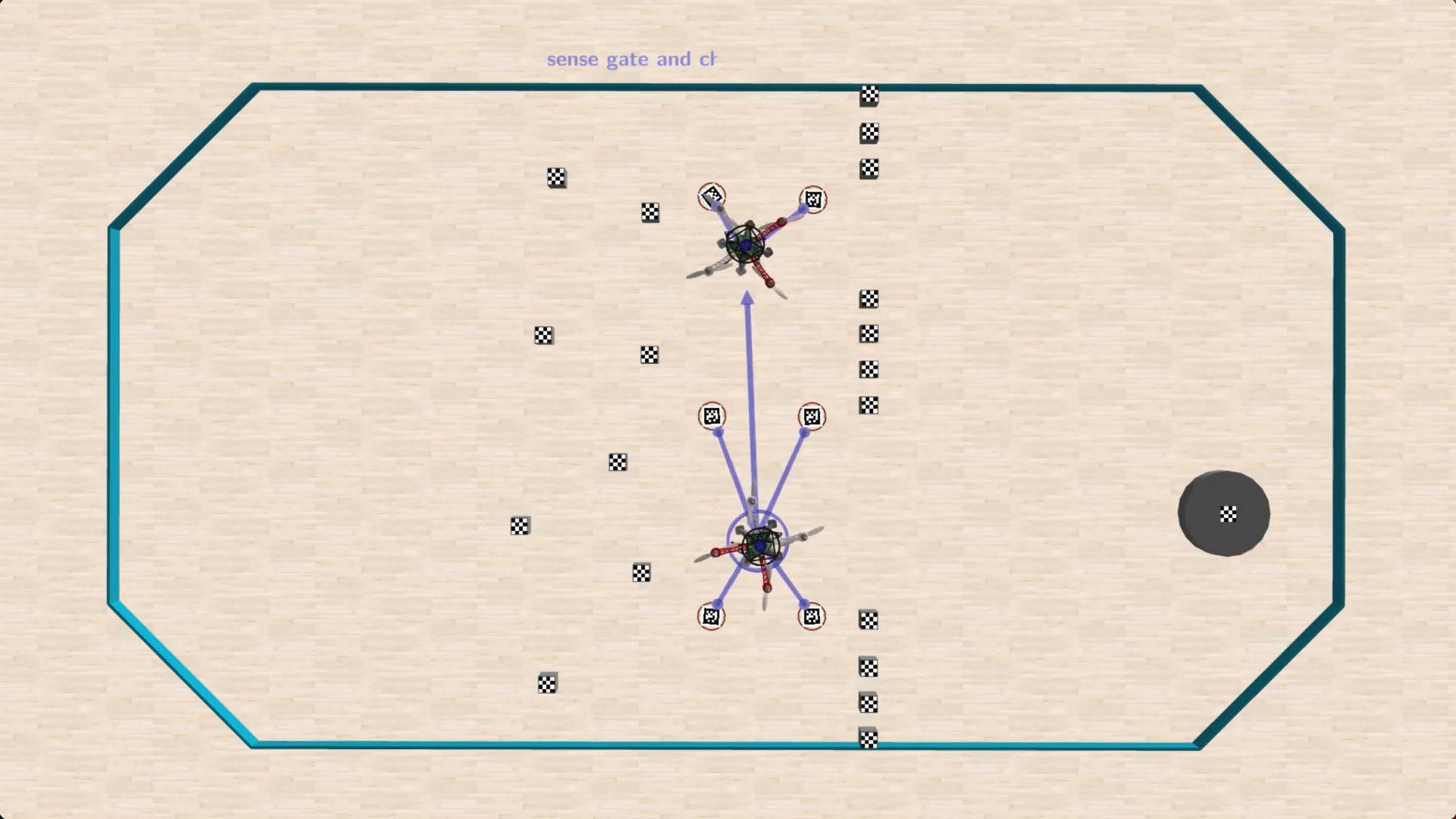}}
\subfigure[]{
\includegraphics[trim=90 60 90 75,clip,width=0.4\textwidth]{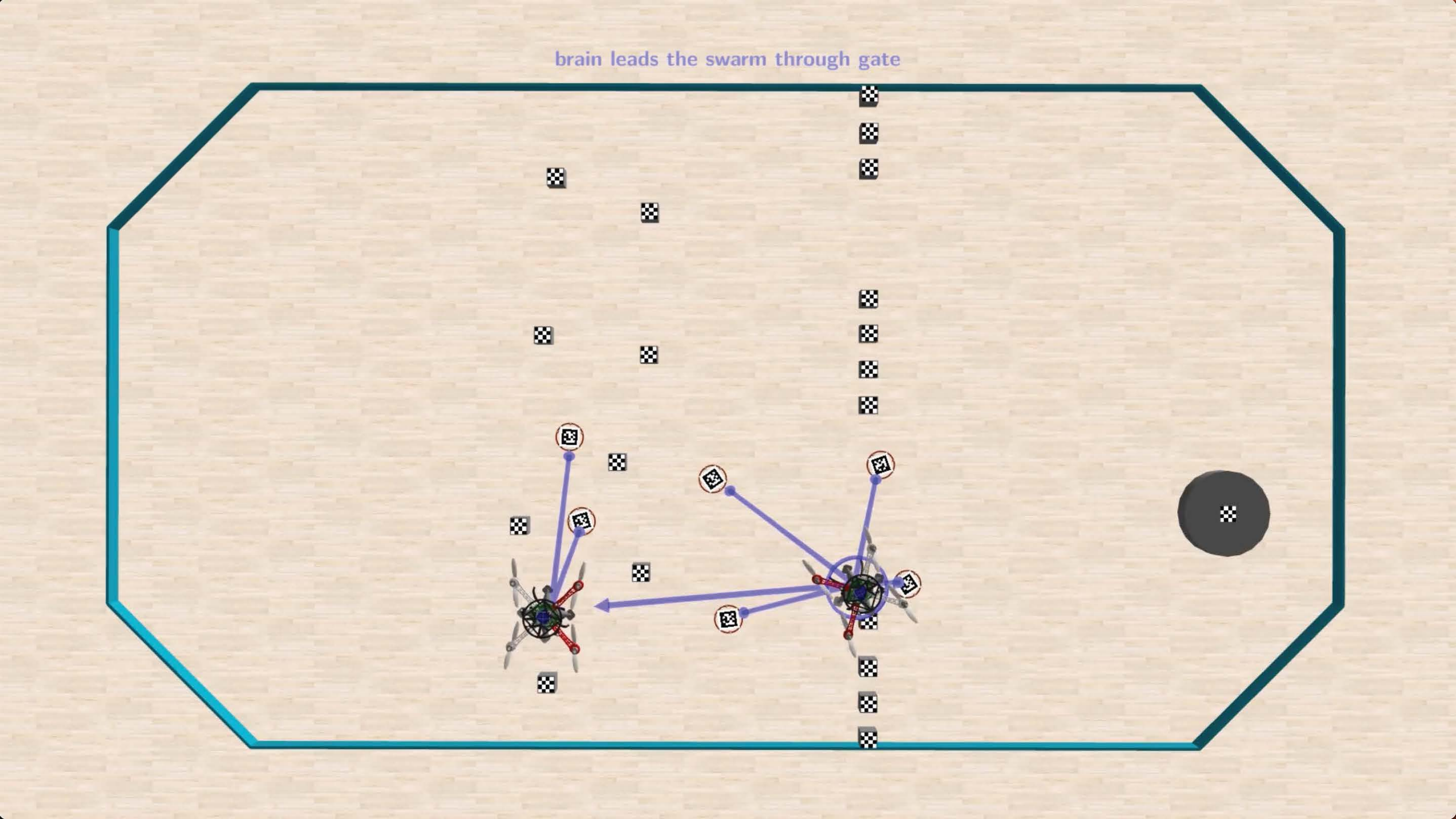}}\\
\vspace{-2mm}
\subfigure[]{
\includegraphics[trim=90 60 90 75,clip,width=0.4\textwidth]{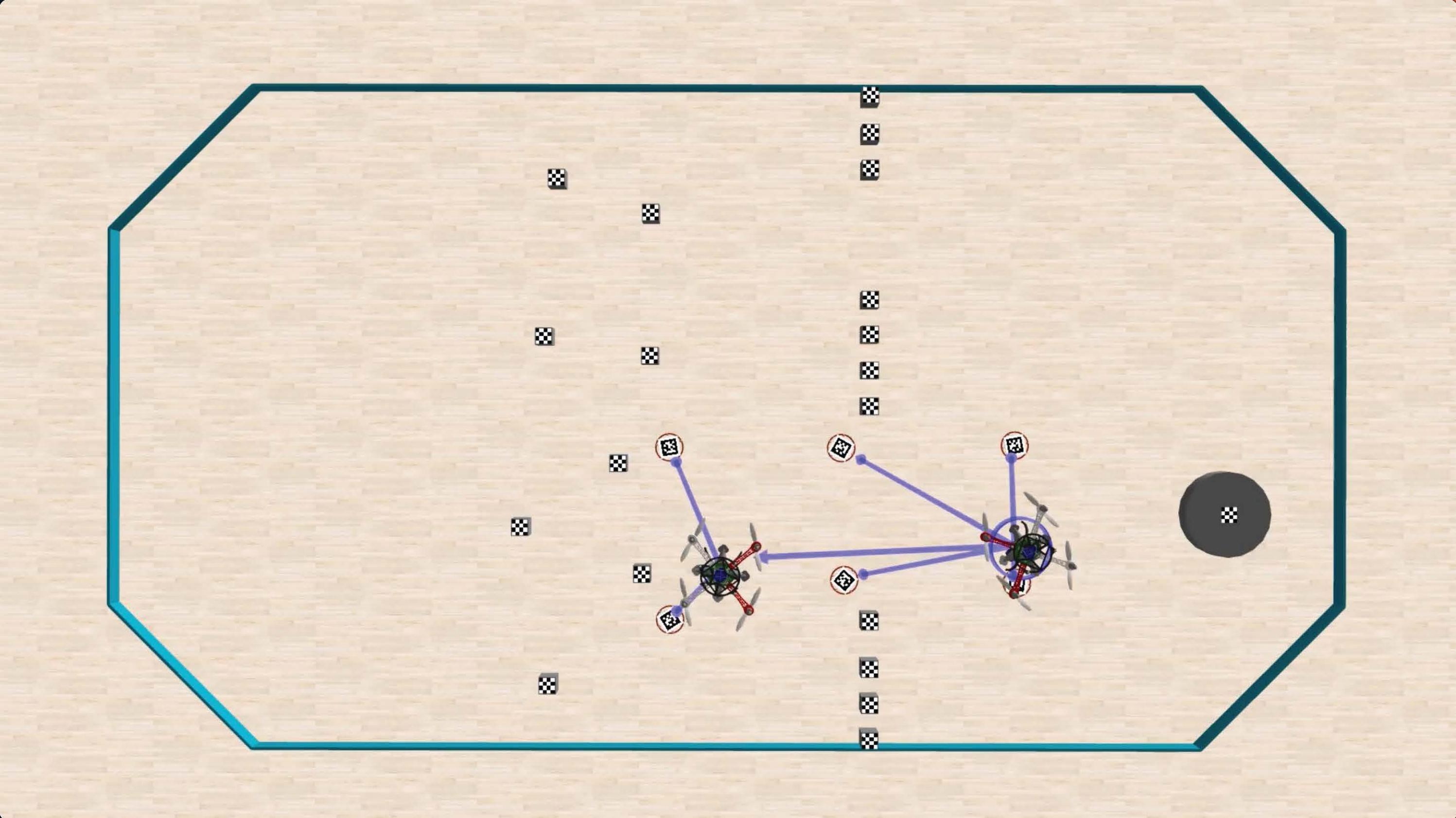}}
\subfigure[]{
\includegraphics[trim=90 60 90 75,clip,width=0.4\textwidth]{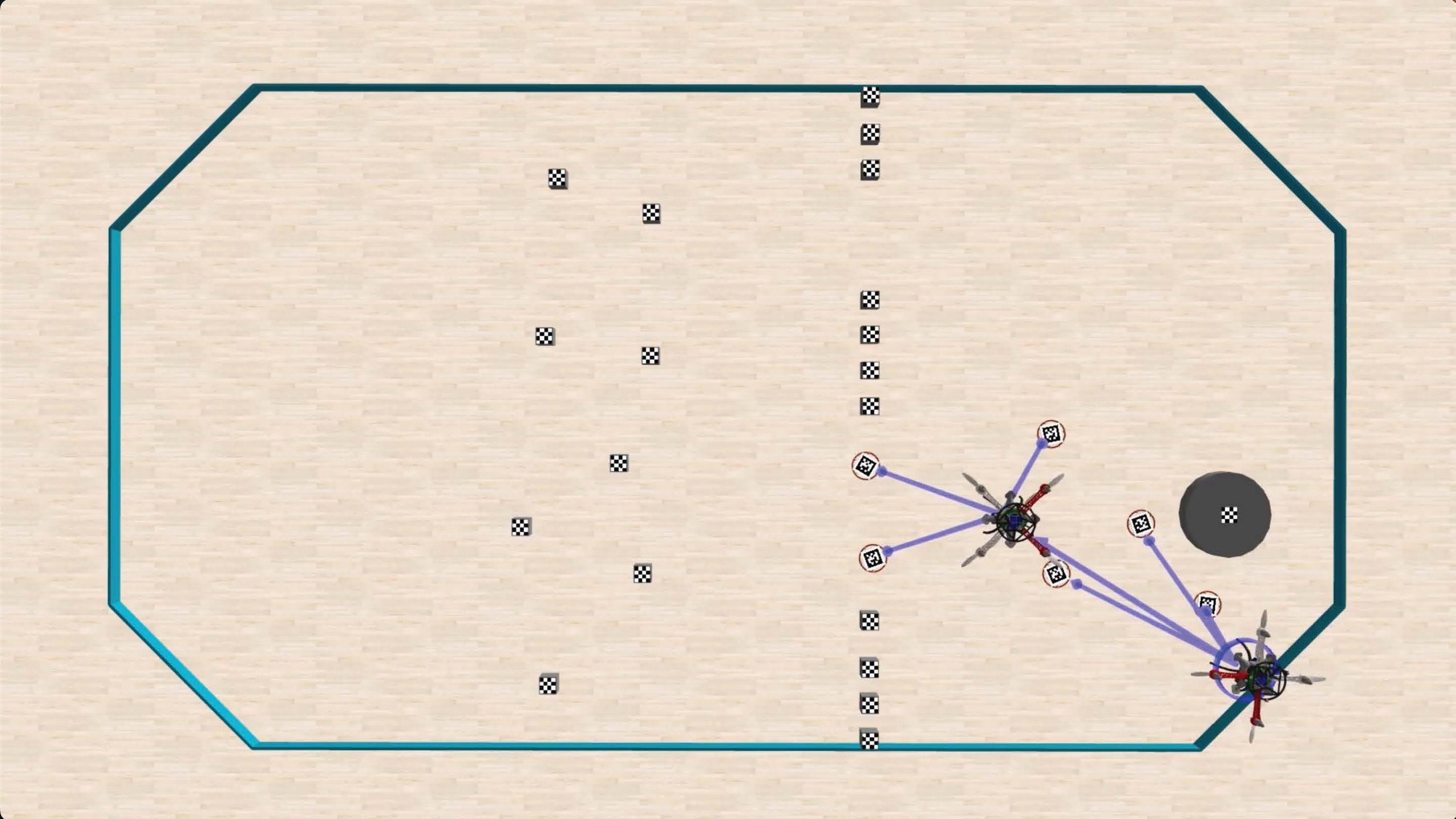}}\\
\vspace{-2mm}
\subfigure[]{
\includegraphics[trim=90 60 90 75,clip,width=0.4\textwidth]{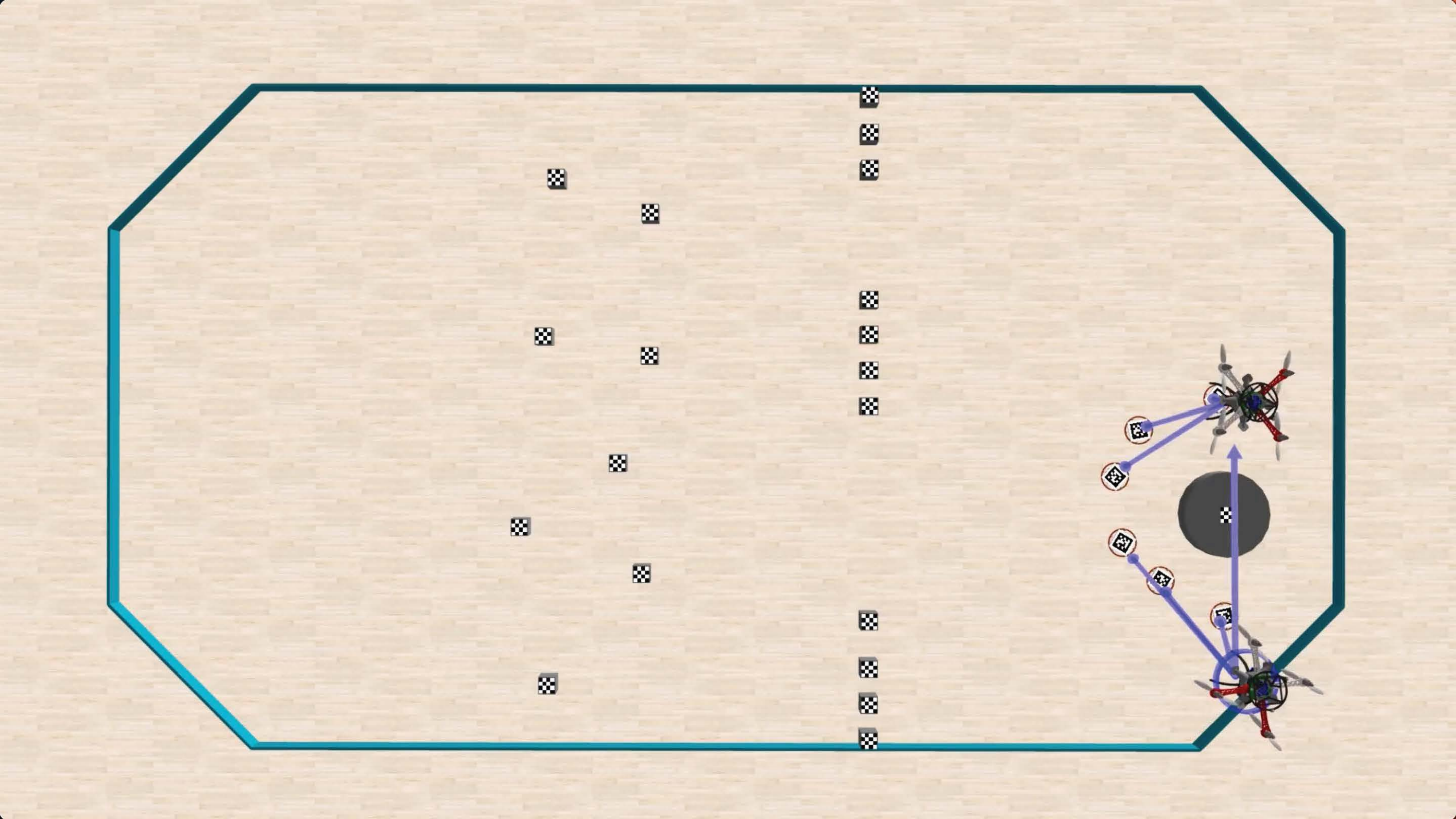}}\\
\vspace{-2mm}
\caption{{\bf Binary decision making: Key frames.} (a,b) Robots that have established their target SoNS begin moving across an environment, searching for the final destination object, and encounter a field of small obstacles. (c) After surpassing the obstacle field, robots sense a wall and collaboratively choose the largest opening. (d,e) The robots adjust the position and path of the SoNS to reposition in front of and then pass through the selected opening. (f,g) Robots sense that there are no longer walls constraining them, then sense the final destination object and change their target SoNS to surround it, and then the mission is complete.}
\label{fig:mission4-keyframes}
\end{figure}

\clearpage
\subsubsection*{Mission: Binary decision making}

\begin{figure}[ht!]
\centering
\includegraphics[trim=120 60 120 80,clip,width=0.38\textwidth]{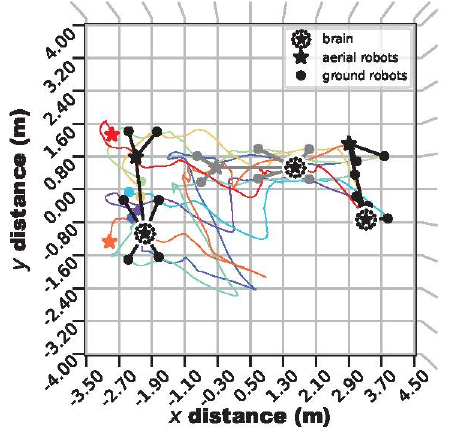}
\includegraphics[trim=20 0 40 20,clip,width=0.59\textwidth]{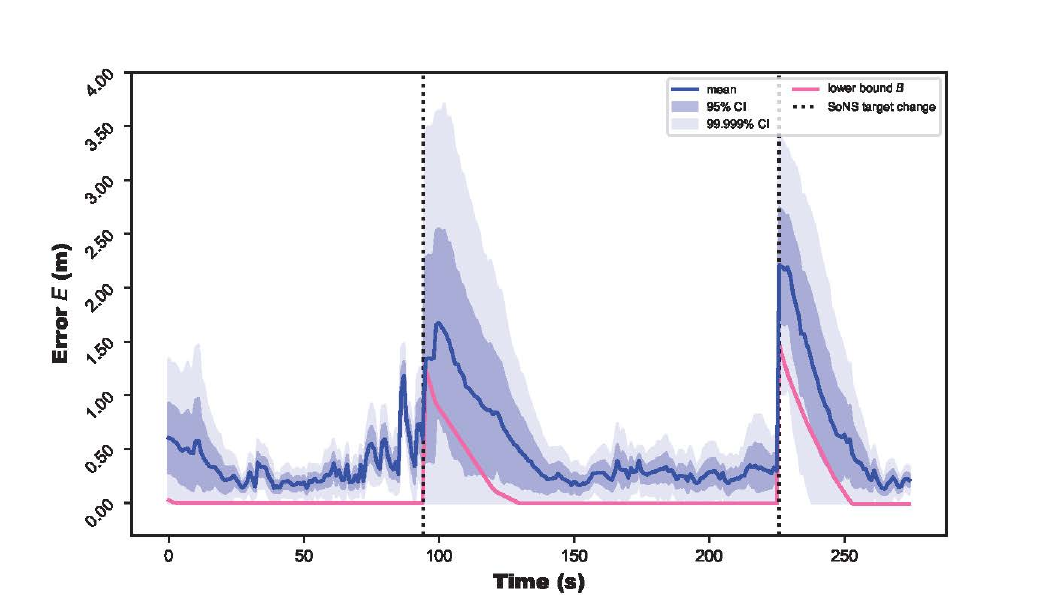}\\
\includegraphics[trim=120 60 120 80,clip,width=0.38\textwidth]{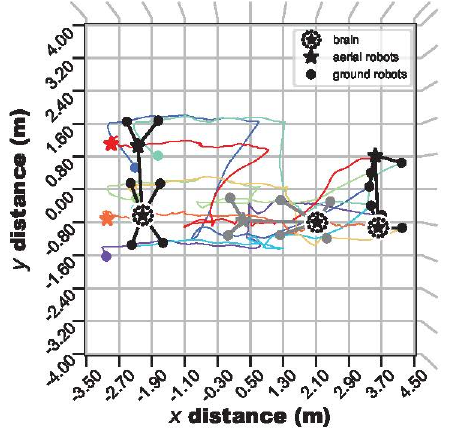}
\includegraphics[trim=20 0 40 20,clip,width=0.59\textwidth]{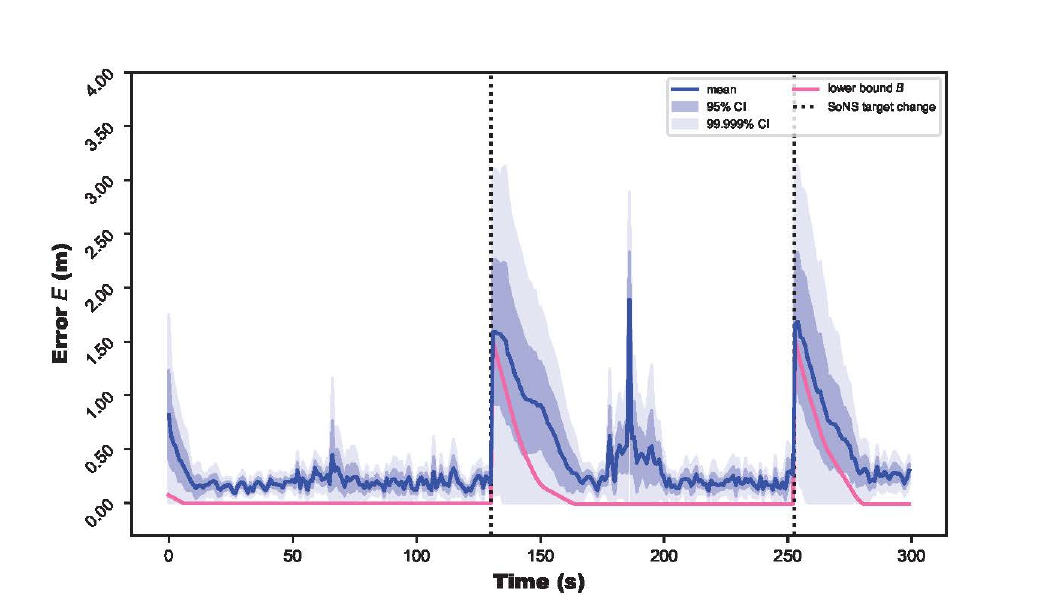}\\
\includegraphics[trim=120 60 120 80,clip,width=0.38\textwidth]{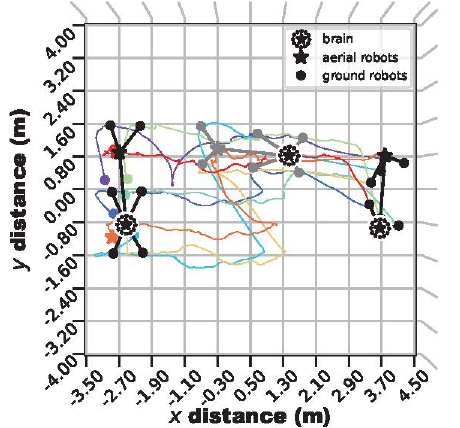}
\includegraphics[trim=20 0 40 20,clip,width=0.59\textwidth]{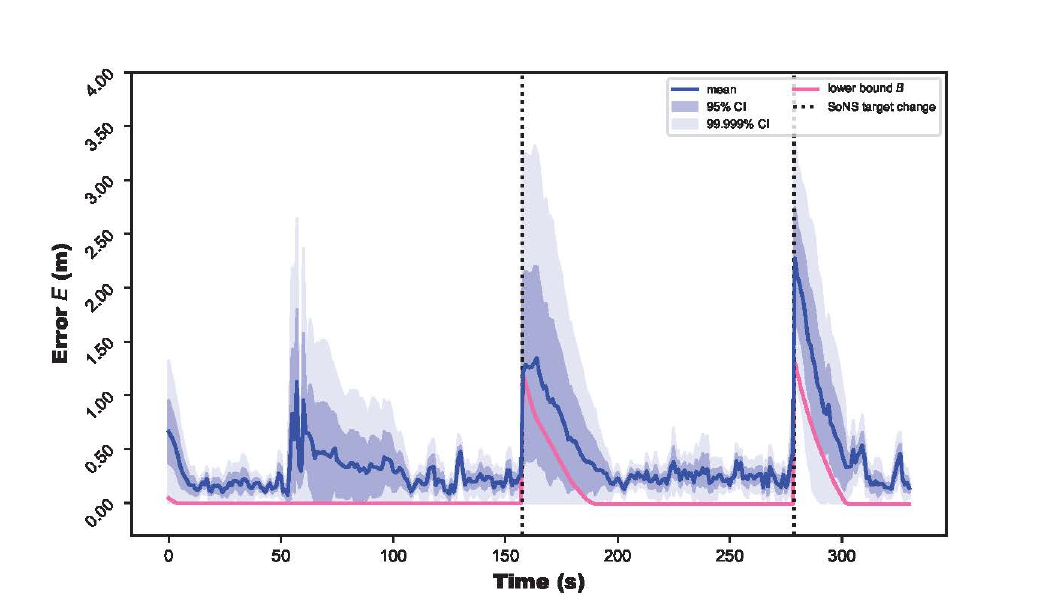}\\
\caption{{\bf Binary decision making: Real robot trials.} Five trials with real robots were conducted, each with eight robots {\it (figure continued on next page)}.}
\label{fig:mission4-hardware}
\end{figure}

\clearpage

\begin{figure}[ht!]
\ContinuedFloat
\centering
\includegraphics[trim=120 60 120 80,clip,width=0.38\textwidth]{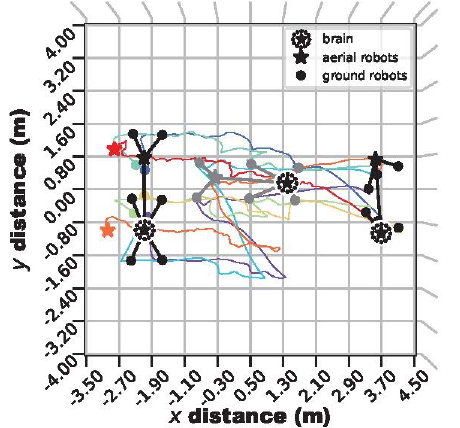}
\includegraphics[trim=20 0 40 20,clip,width=0.59\textwidth]{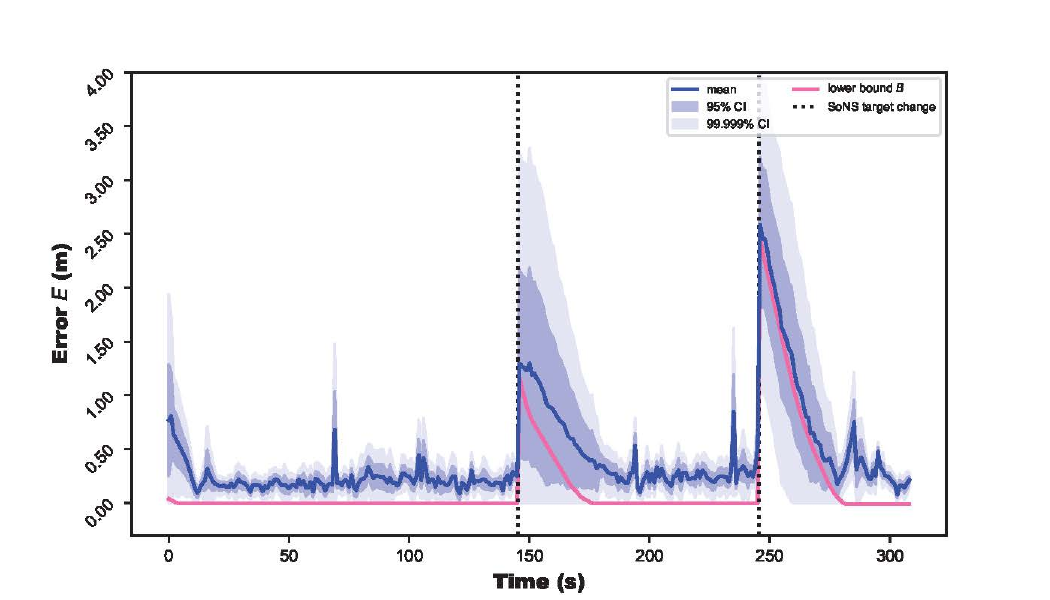}\\
\includegraphics[trim=120 60 120 80,clip,width=0.38\textwidth]{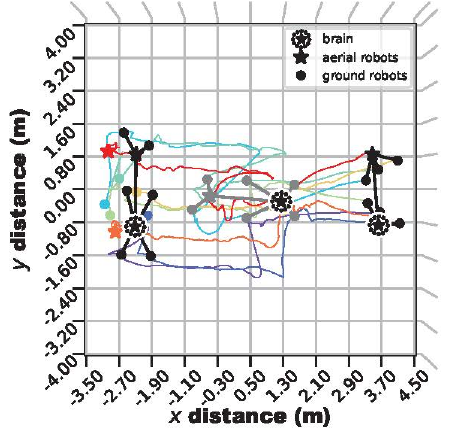}
\includegraphics[trim=20 0 40 20,clip,width=0.59\textwidth]{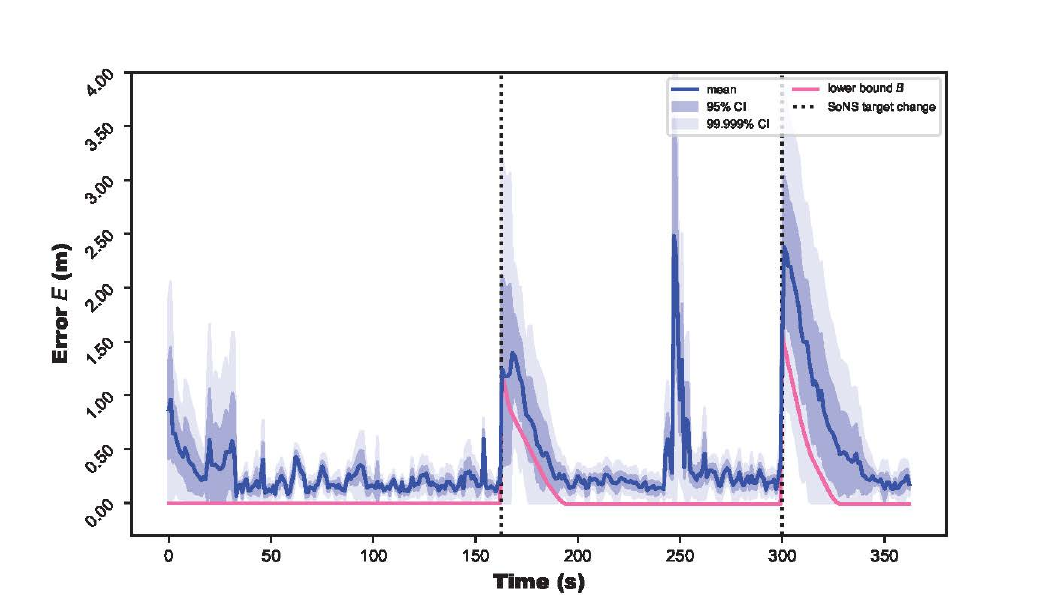}\\
\caption{{\it (cont'd)} {\bf Binary decision making: Real robot trials.} Five trials with real robots were conducted, each with eight robots.}
\label{fig:mission4-hardware}
\end{figure}

\vspace{7mm}
\noindent
{\it (Section continued on next page.)}

\clearpage
\subsection*{Mission: Splitting and merging systems (see Sec.~2.1.5 in the main paper)}
\rhead{Mission: Splitting and merging systems}

This mission includes three different variants, two run in experiments with real robots and one run in simulation.

\subsubsection*{Variant with real robots: Search and rescue}
\begin{figure}[h!]
\centering
\subfigure[]{
\includegraphics[trim=90 60 90 75,clip,width=0.4\textwidth]{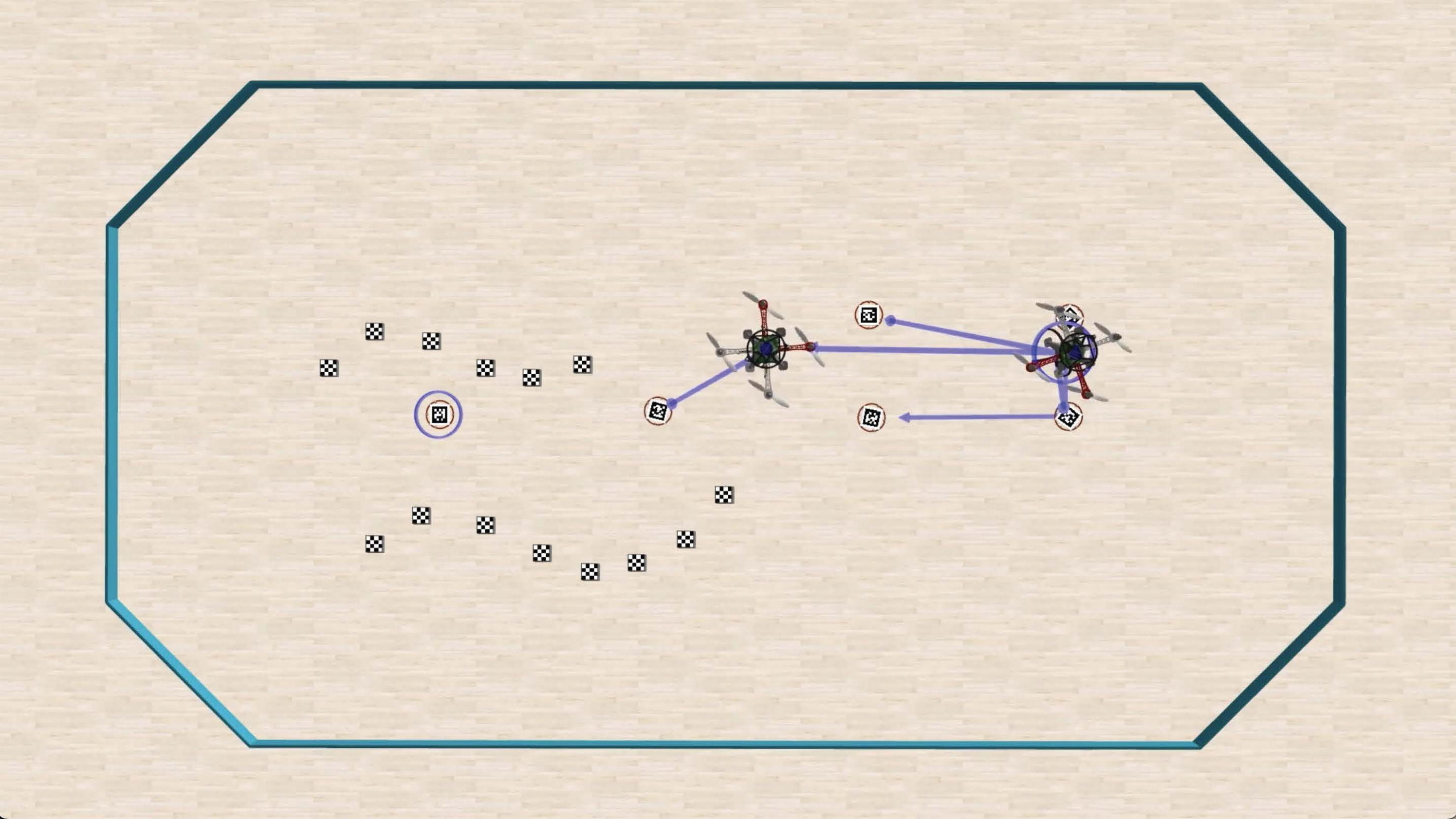}}
\subfigure[]{
\includegraphics[trim=90 60 90 75,clip,width=0.4\textwidth]{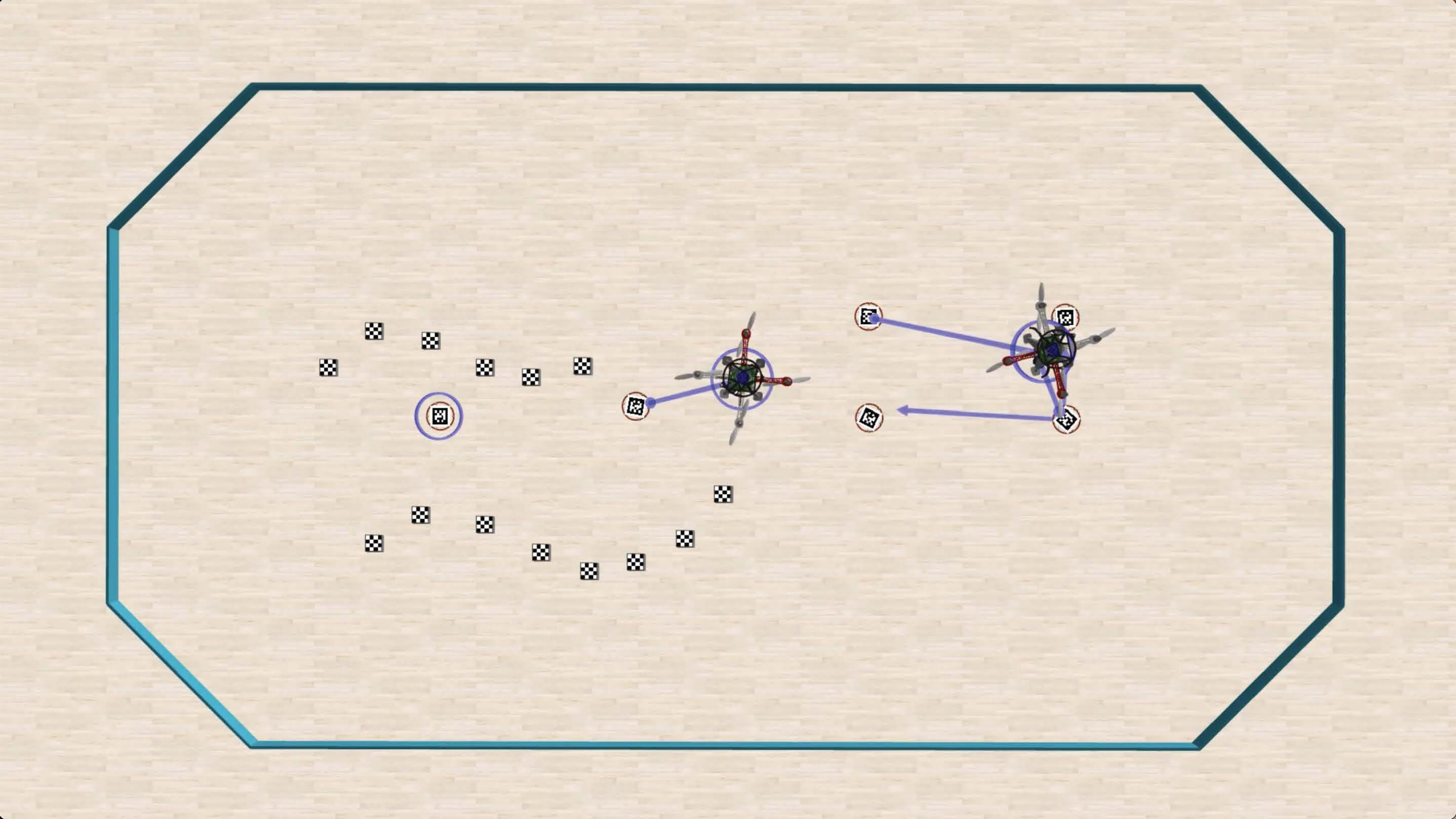}}\\
\vspace{-2mm}
\subfigure[]{
\includegraphics[trim=90 60 90 75,clip,width=0.4\textwidth]{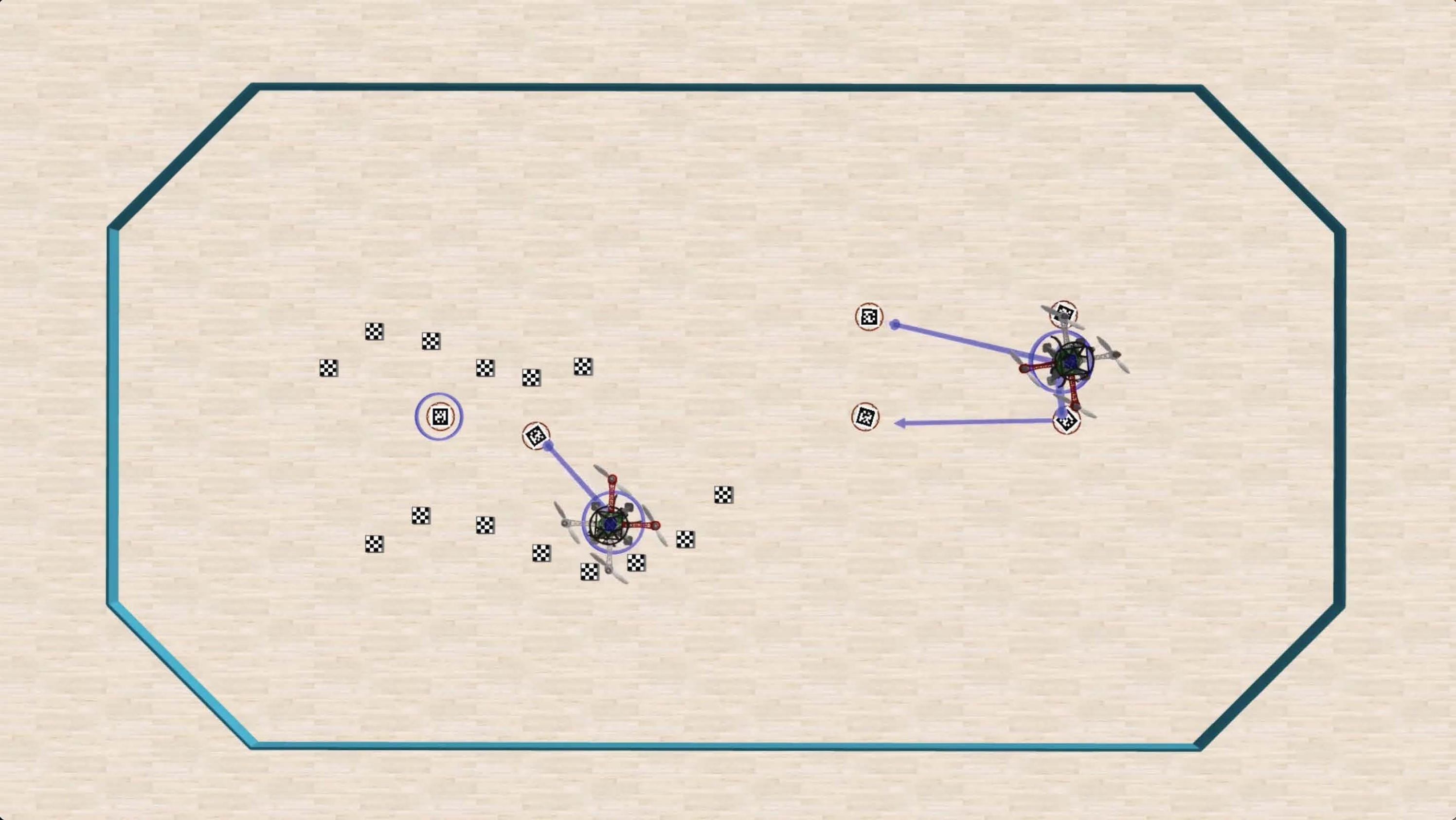}}
\subfigure[]{
\includegraphics[trim=90 60 90 75,clip,width=0.4\textwidth]{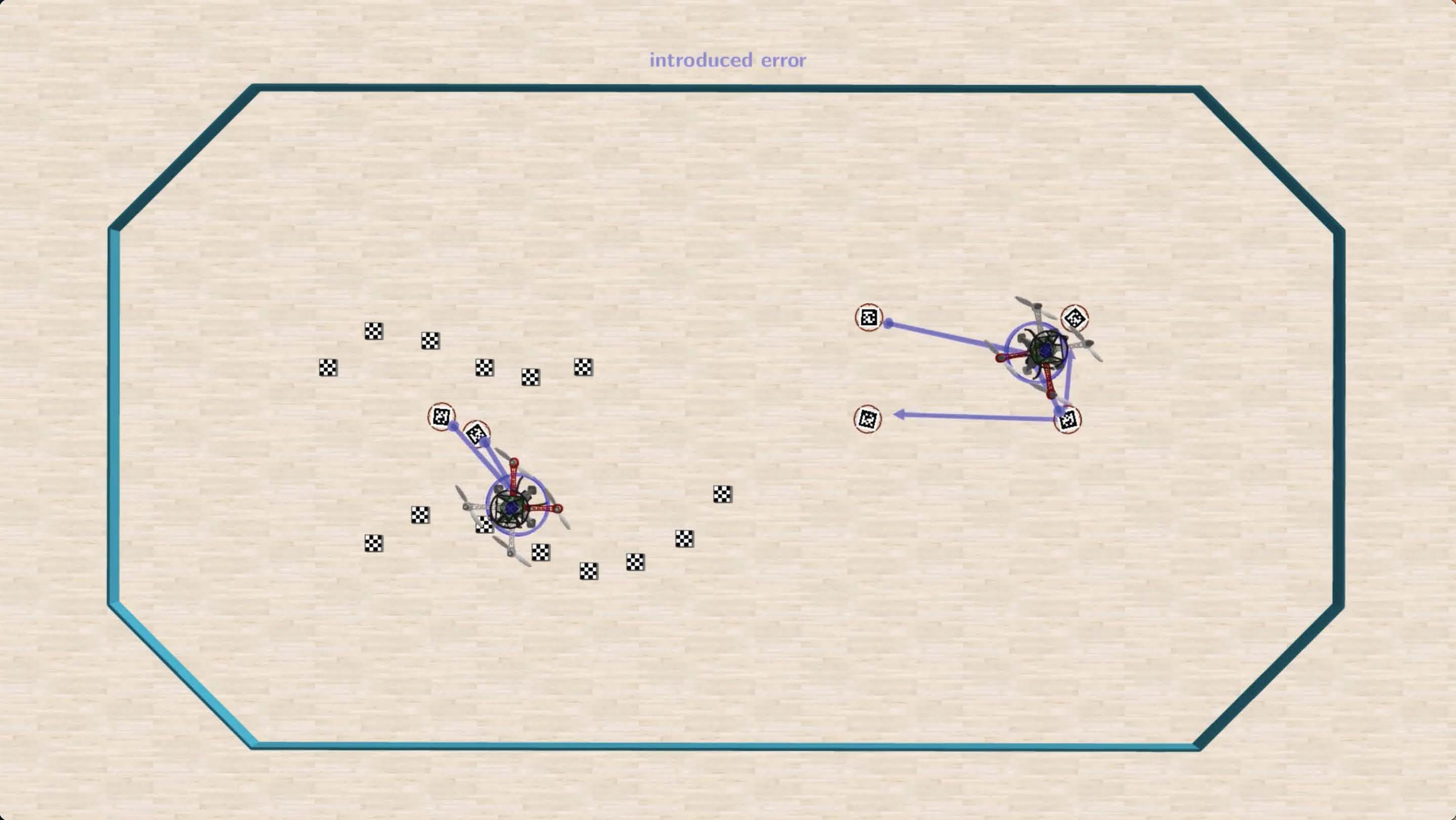}}\\
\vspace{-2mm}
\subfigure[]{
\includegraphics[trim=90 60 90 75,clip,width=0.4\textwidth]{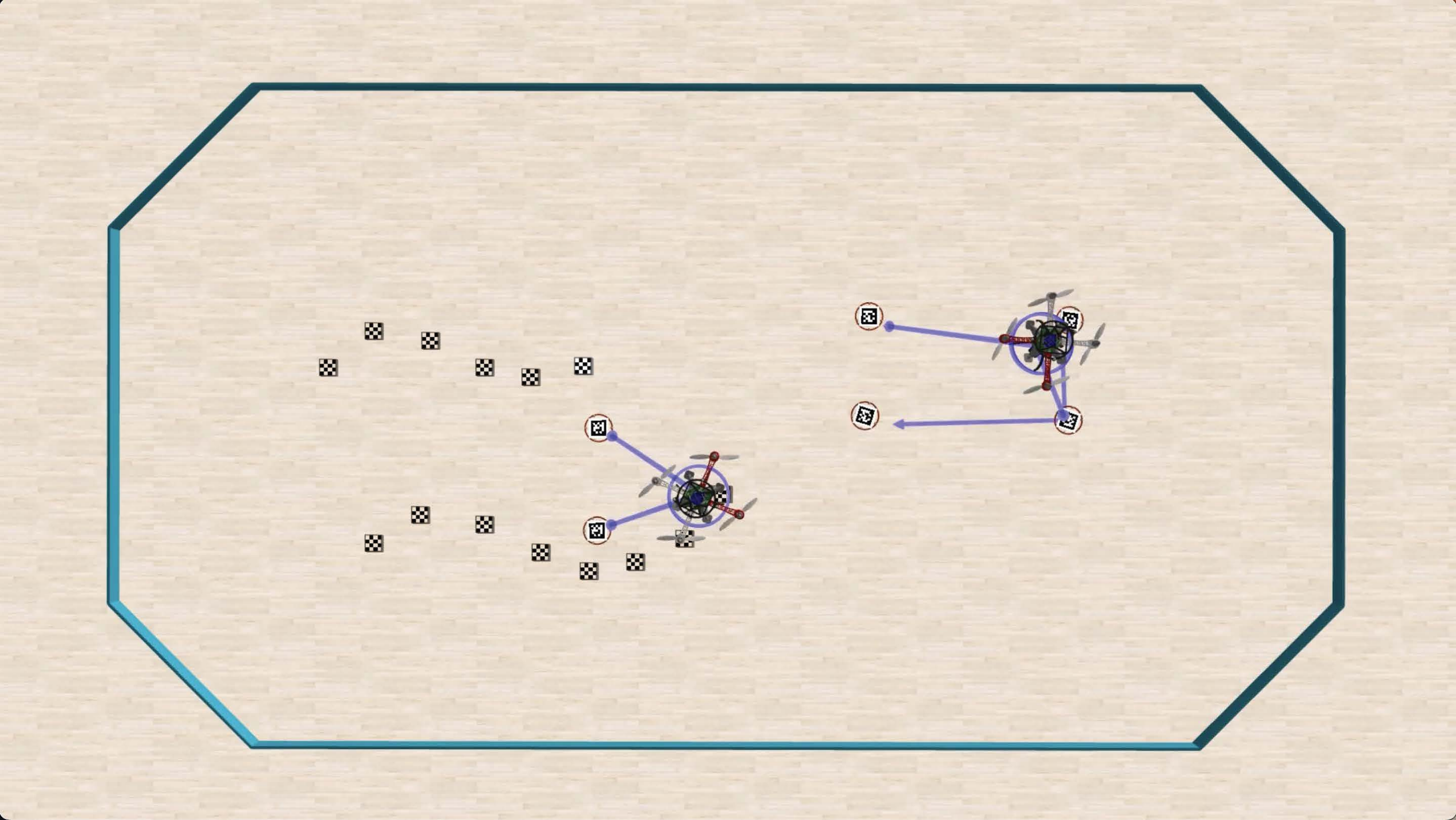}}
\subfigure[]{
\includegraphics[trim=90 60 90 75,clip,width=0.4\textwidth]{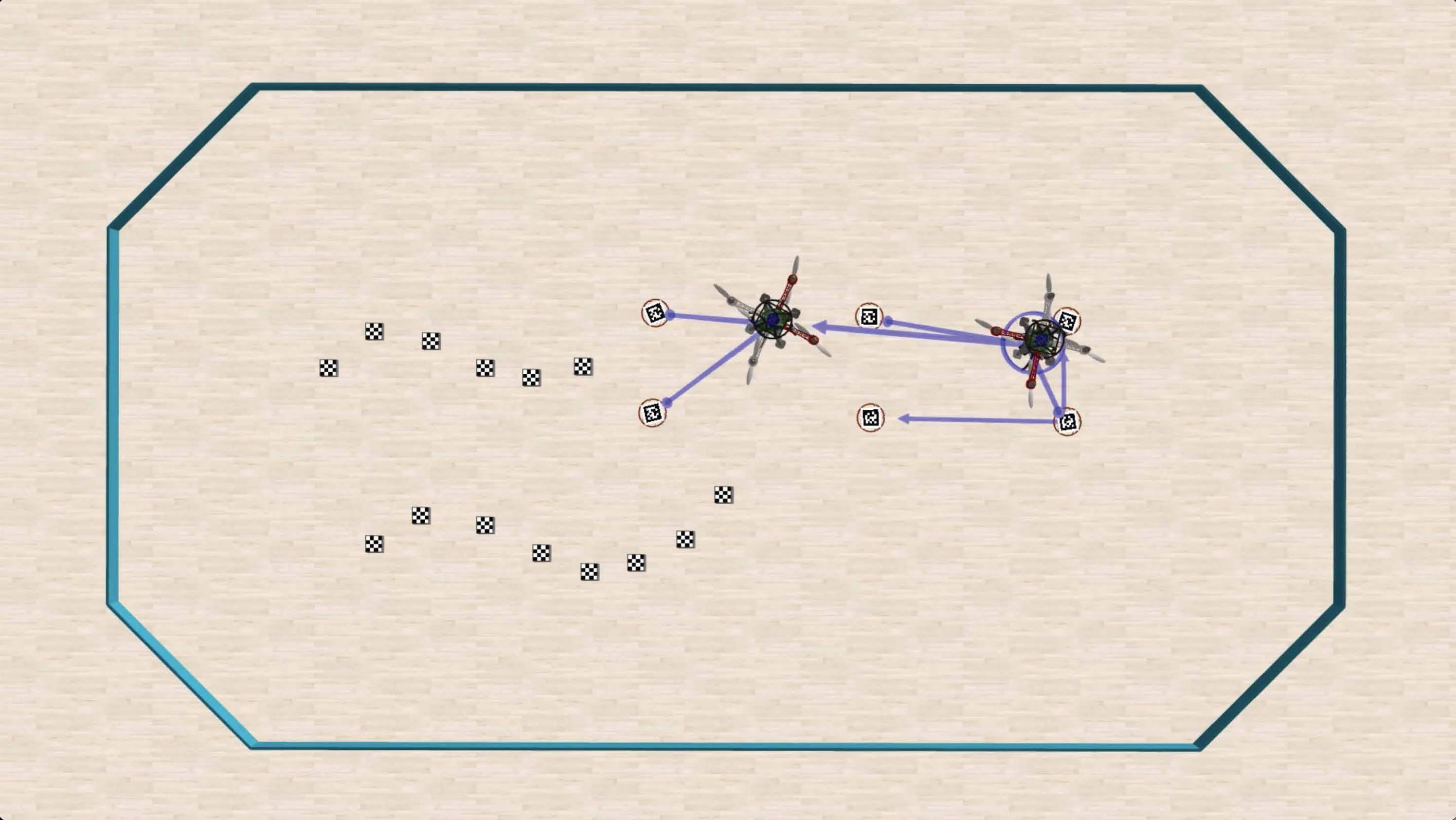}}\\
\vspace{-2mm}
\caption{{\bf Splitting and merging systems, search and rescue: Key frames.} (a) When robots start, they are in a SoNS that is missing one or two of its members. (b) The SoNS-brain instructs one of the robots to split from it and temporarily form its own multi-robot SoNS as a rescue team. (c,d) The rescue team SoNS searches the environment until it finds the missing robot(s) and merges with it/them. (e,f) The merged rescue team SoNS then returns to the location where it initially split off and re-merges with the remaining SoNS, so that all robots are reunited, and the mission is complete. 
Note that, in the experiments, there are more obstacles in the environment, forming the walls of an arbitrary passage that the robots need to navigate to complete the search and rescue mission.}
\label{fig:mission5-variant1-keyframes}
\end{figure}

\clearpage

\subsubsection*{Variant with real robots: Search and rescue}

\begin{figure}[h!]
\centering
\includegraphics[trim=120 60 120 80,clip,width=0.38\textwidth]{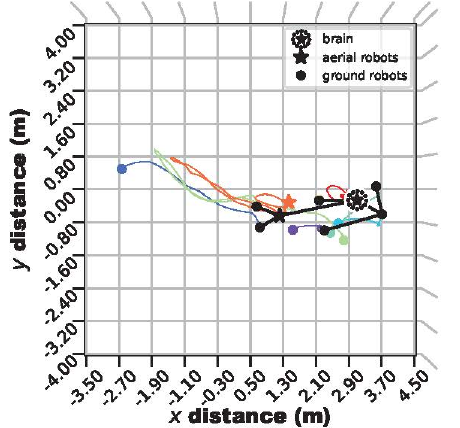}
\includegraphics[trim=20 0 40 20,clip,width=0.59\textwidth]{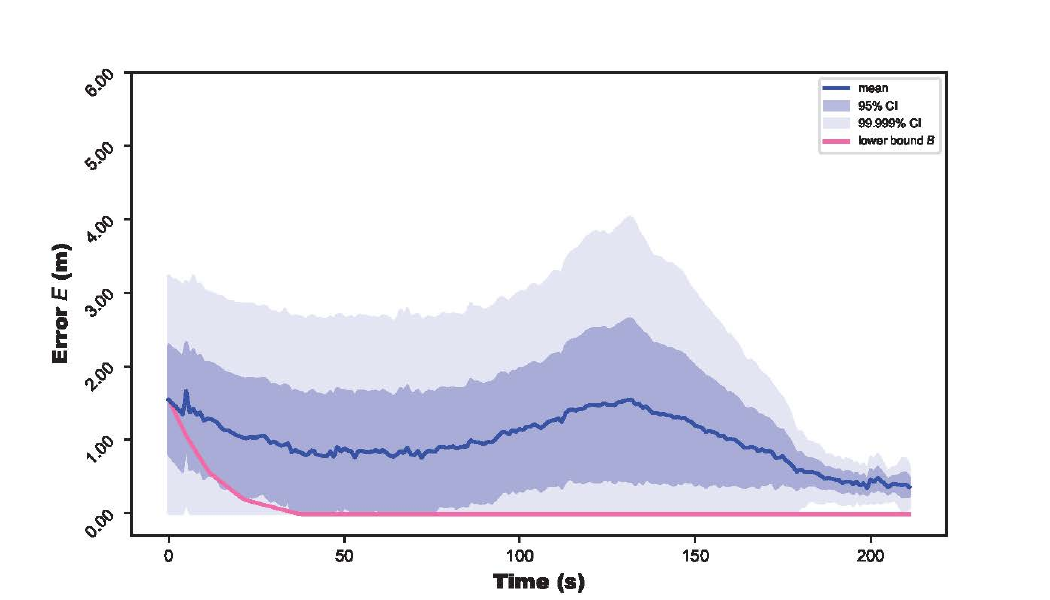}\\
\includegraphics[trim=120 60 120 80,clip,width=0.38\textwidth]{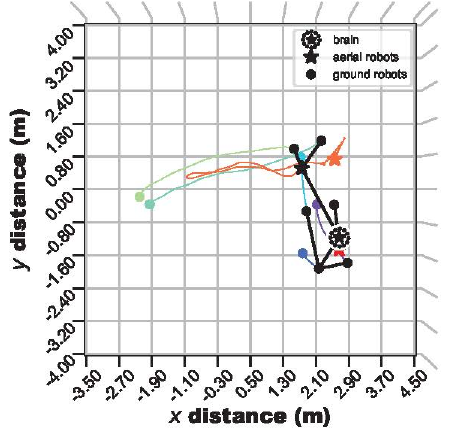}
\includegraphics[trim=20 0 40 20,clip,width=0.59\textwidth]{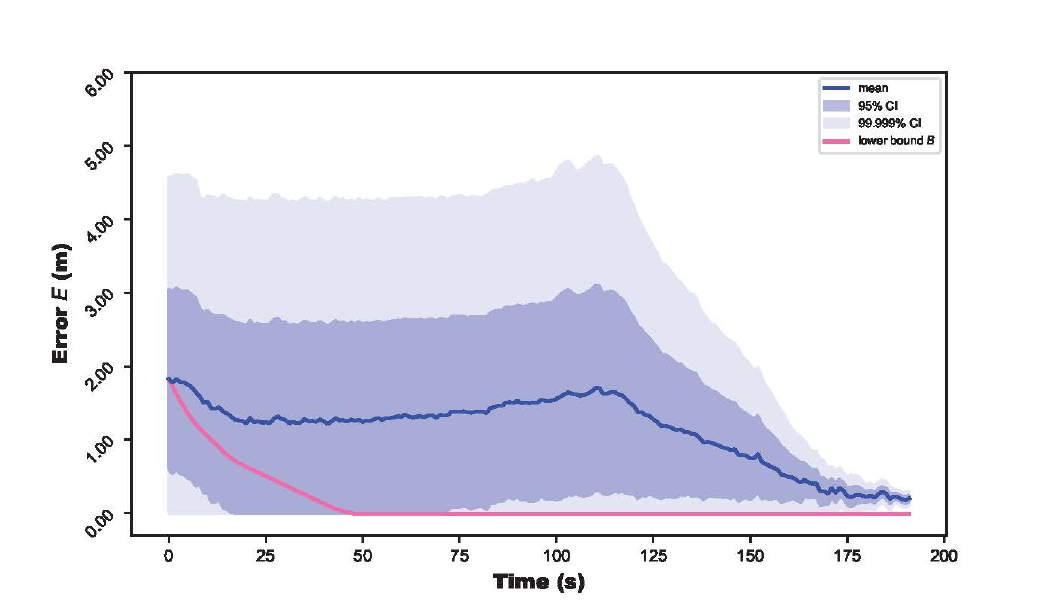}\\
\includegraphics[trim=120 60 120 80,clip,width=0.38\textwidth]{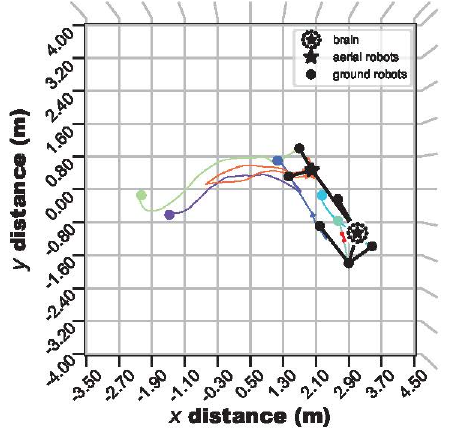}
\includegraphics[trim=20 0 40 20,clip,width=0.59\textwidth]{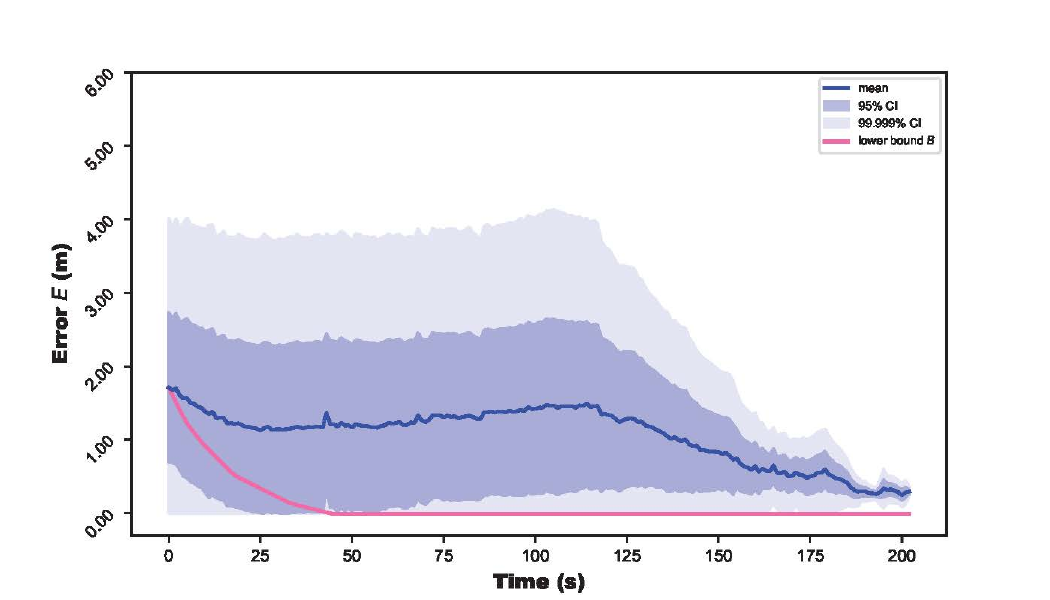}\\
\caption{{\bf Splitting and merging systems, search and rescue: Real robot trials.} Five trials with real robots were conducted, each with eight robots {\it (figure continued on next page)}.}
\label{fig:mission5-variant1-hardware}
\end{figure}

\clearpage

\begin{figure}[h!]
\ContinuedFloat
\centering
\includegraphics[trim=120 60 120 80,clip,width=0.38\textwidth]{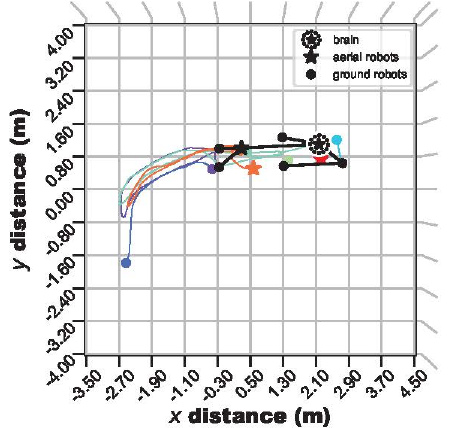}
\includegraphics[trim=20 0 40 20,clip,width=0.59\textwidth]{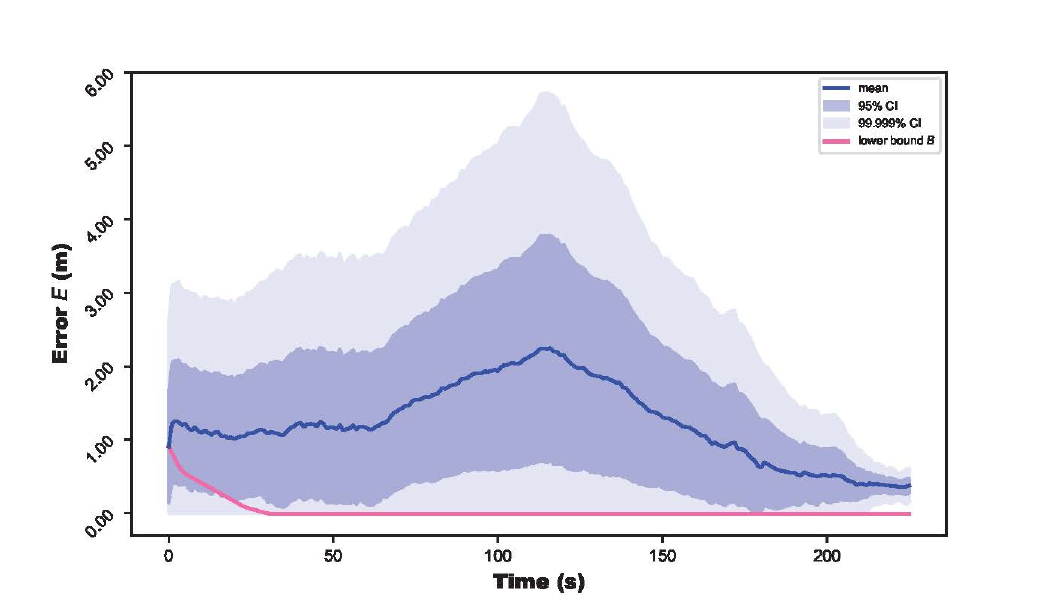}\\
\includegraphics[trim=120 60 120 80,clip,width=0.38\textwidth]{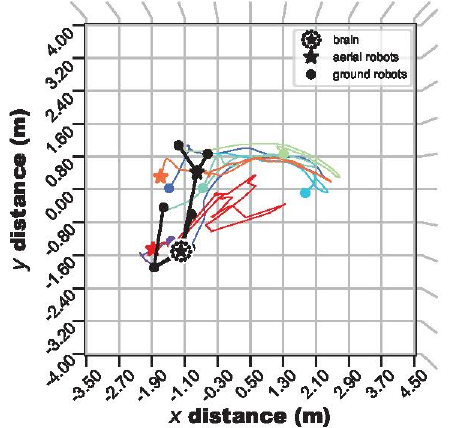}
\includegraphics[trim=20 0 40 20,clip,width=0.59\textwidth]{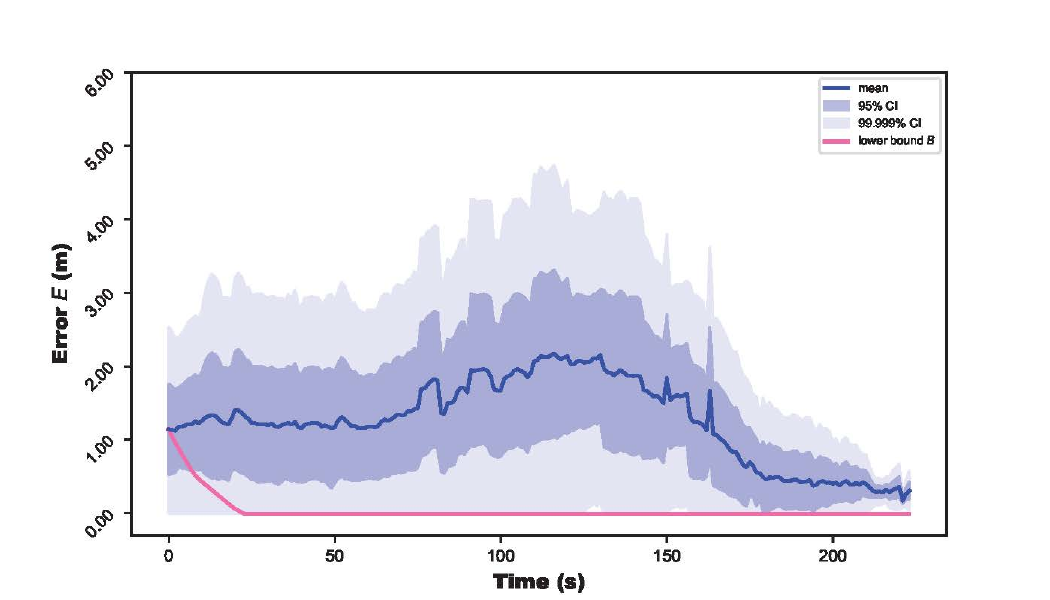}\\
\caption{{\it (cont'd)} {\bf Splitting and merging systems, search and rescue: Real robot trials.} Five trials with real robots were conducted, each with eight robots.}
\label{fig:mission5-variant1-hardware}
\end{figure}

\vspace{7mm}
\noindent
{\it (Section continued on next page.)}

\clearpage
\subsubsection*{Variant with real robots: Push away an obstruction}
\begin{figure}[h!]
\centering
\subfigure[]{
\includegraphics[trim=90 60 90 75,clip,width=0.4\textwidth]{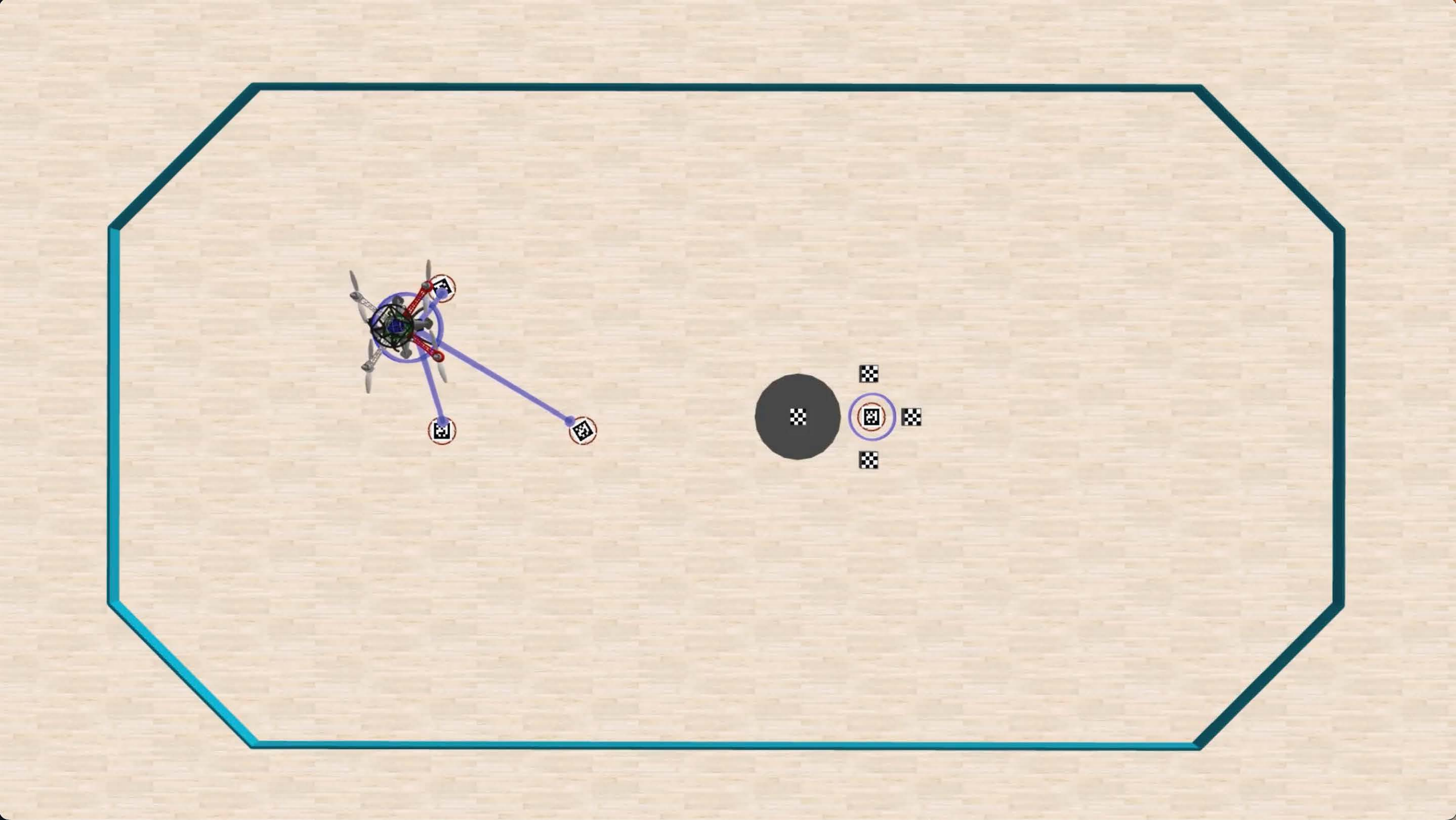}}
\subfigure[]{
\includegraphics[trim=90 60 90 75,clip,width=0.4\textwidth]{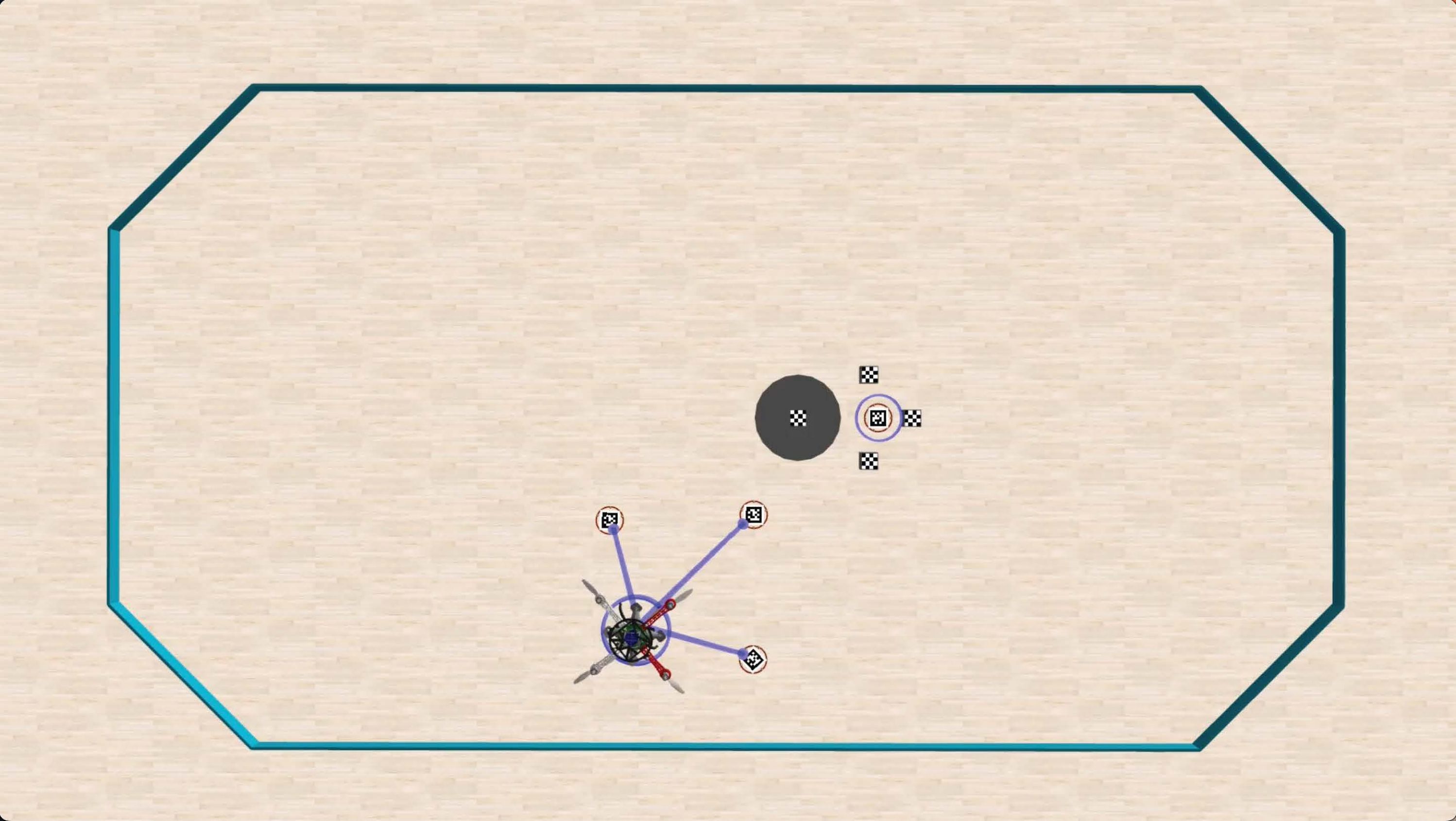}}\\
\vspace{-2mm}
\subfigure[]{
\includegraphics[trim=90 60 90 75,clip,width=0.4\textwidth]{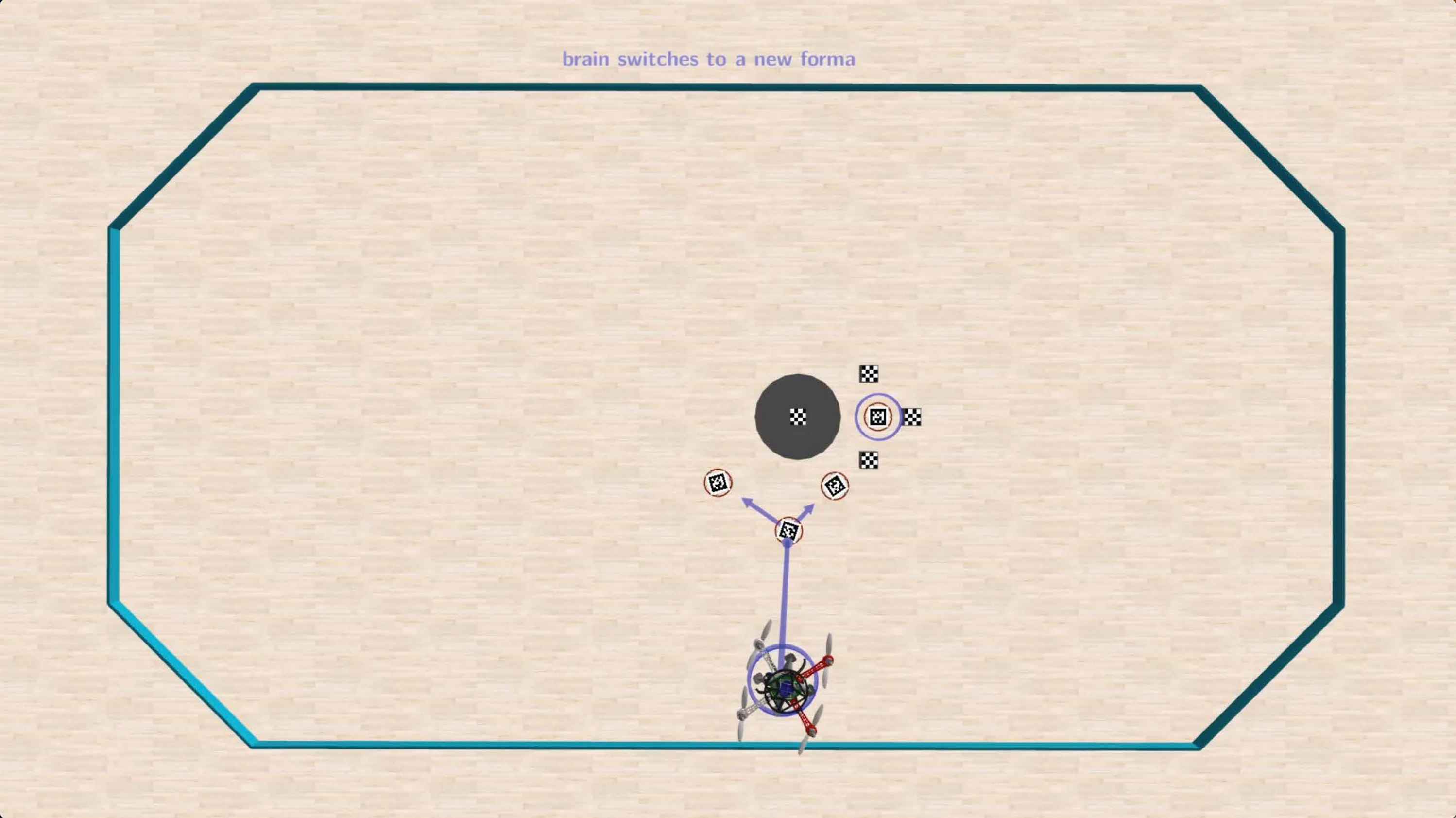}}
\subfigure[]{
\includegraphics[trim=90 60 90 75,clip,width=0.4\textwidth]{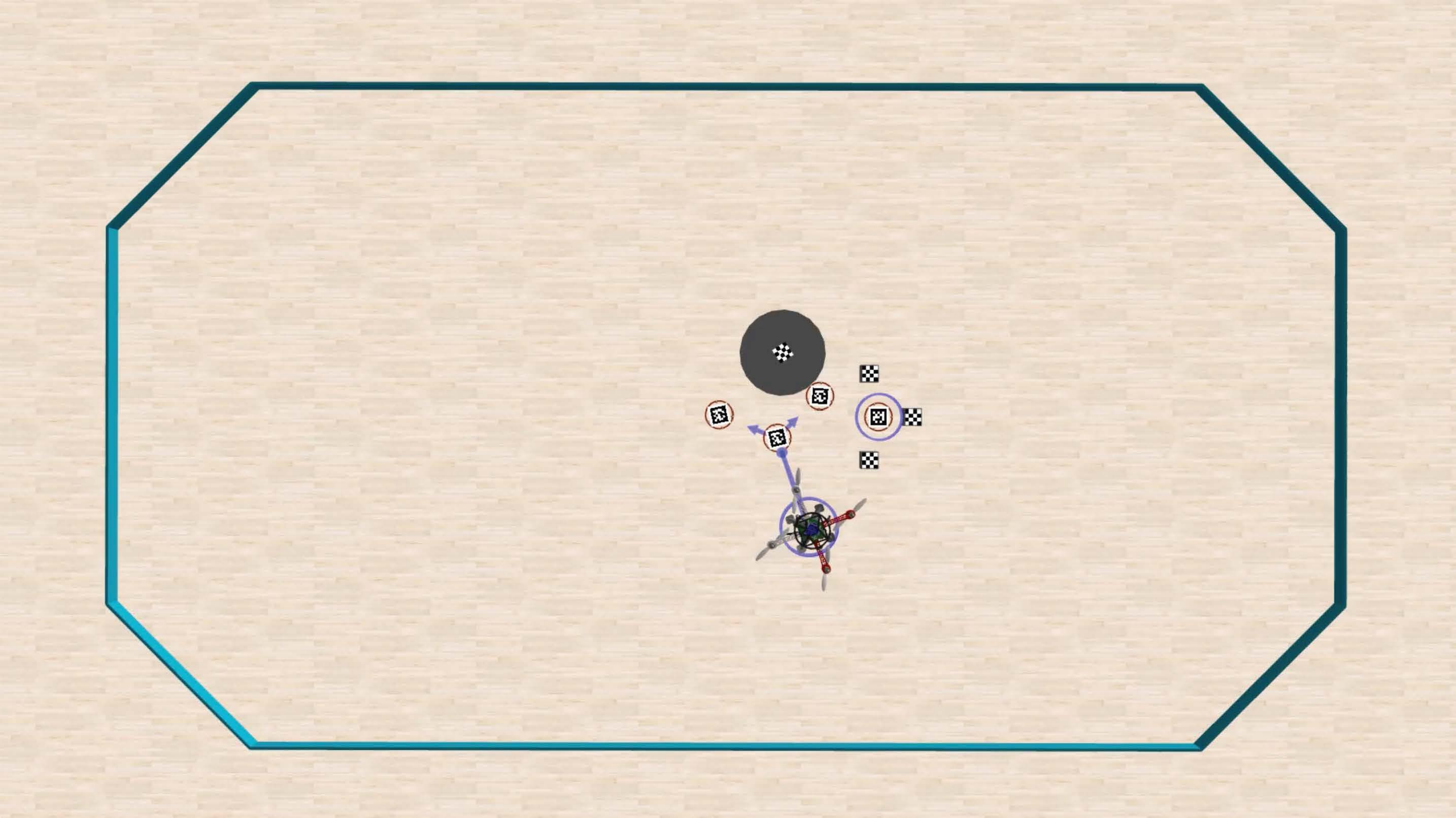}}\\
\vspace{-2mm}
\subfigure[]{
\includegraphics[trim=90 60 90 75,clip,width=0.4\textwidth]{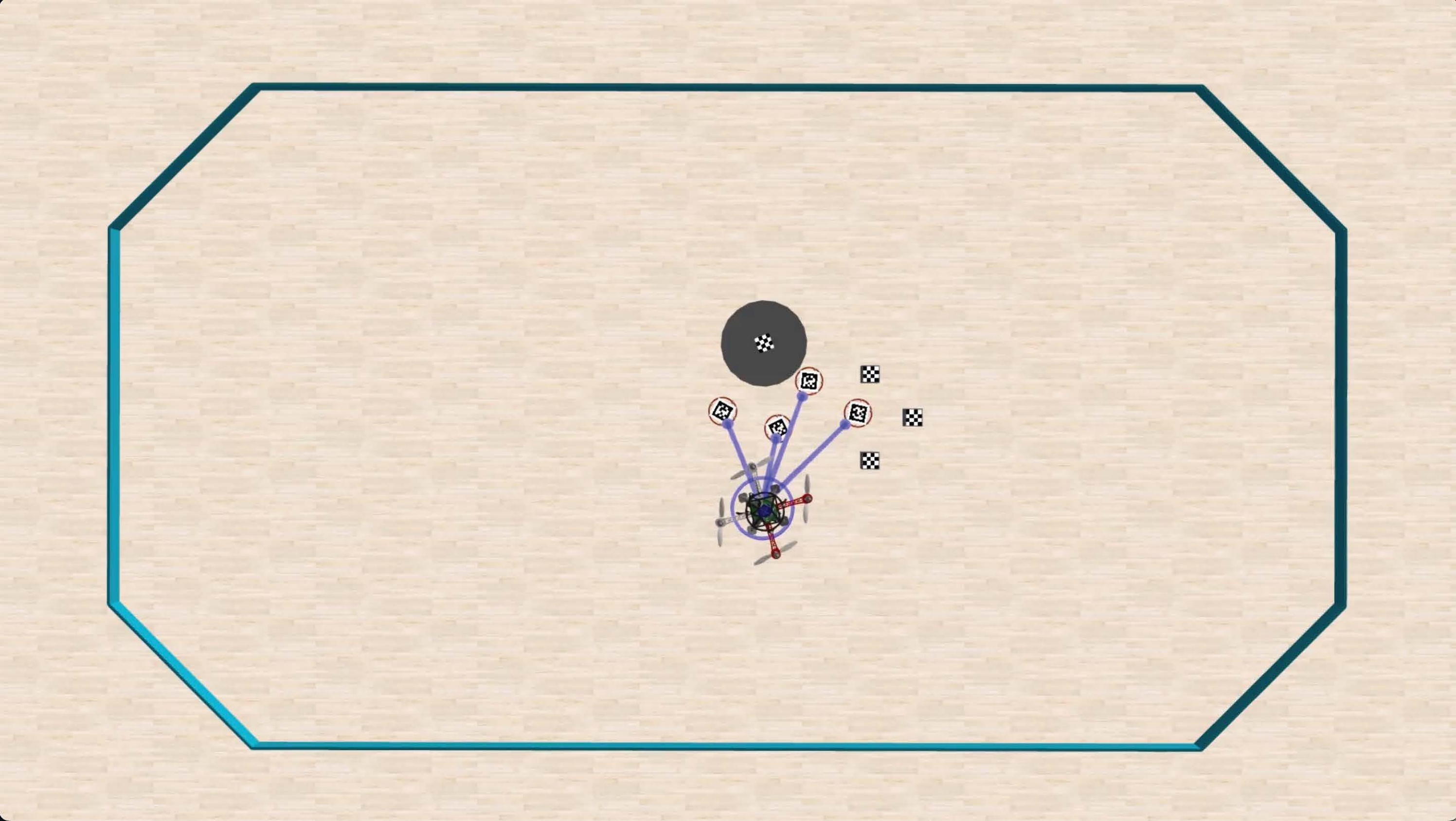}}
\subfigure[]{
\includegraphics[trim=90 60 90 75,clip,width=0.4\textwidth]{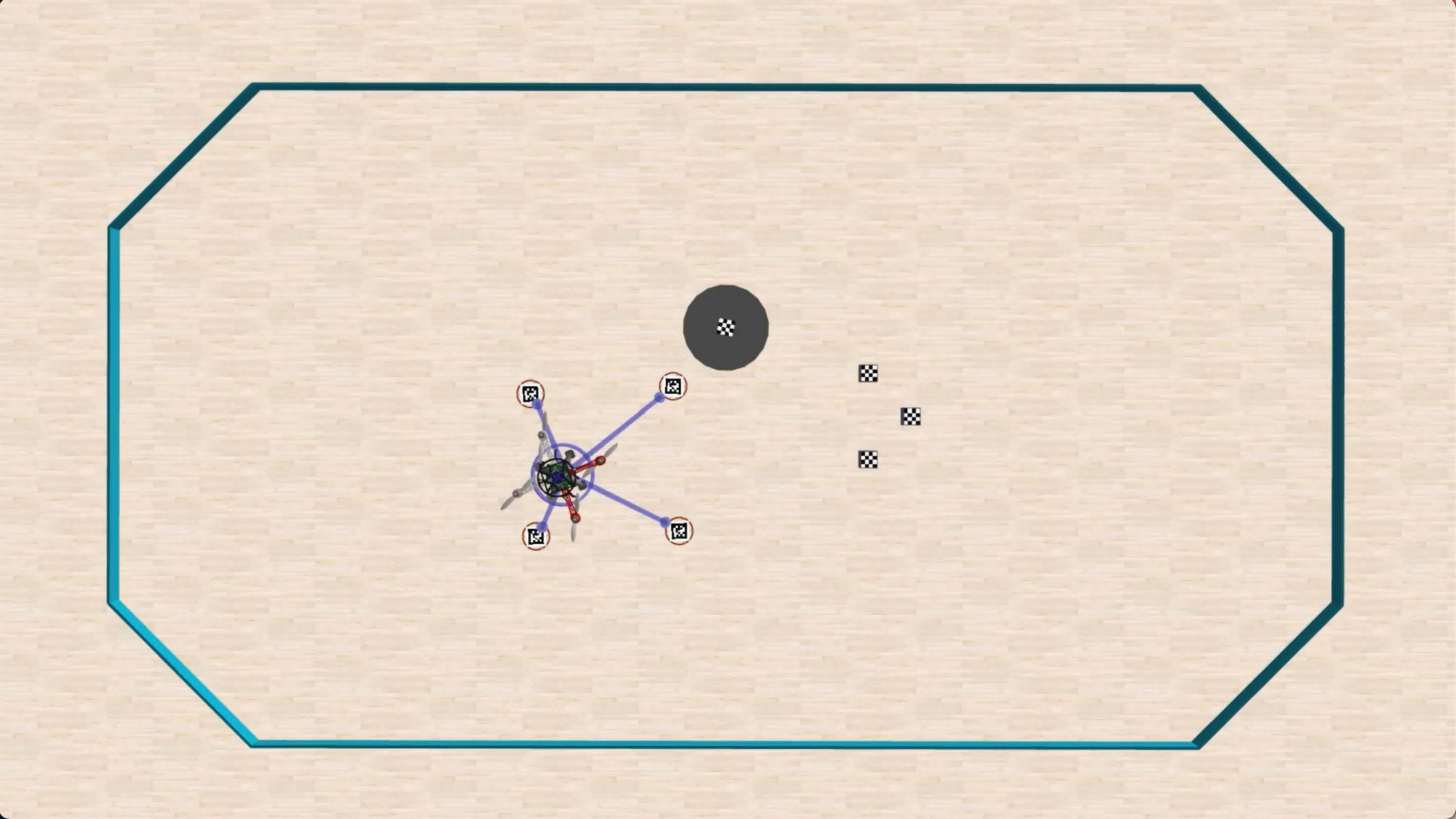}}\\
\vspace{-2mm}
\caption{{\bf Push away an obstruction: Key frames.} (a) When robots start, they are in a SoNS that is missing a member. (b) The SoNS searches the environment and finds the missing robot trapped by obstacles. (c) The SoNS reorganizes into a shape that allows its ground robots to collaboratively push a large obstruction. (d,e) The robots push the obstruction away and the SoNS merges with the missing robot, so that all robots are reunited. (f) The robots reorganize into the target SoNS and the mission is complete.}
\label{fig:mission5-variant2-keyframes}
\end{figure}

\vspace{7mm}
\noindent
{\it (Section continued on next page.)}

\clearpage

\subsubsection*{Variant with real robots: Push away an obstruction}

\begin{figure}[h!]
\centering
\includegraphics[trim=120 60 120 80,clip,width=0.38\textwidth]{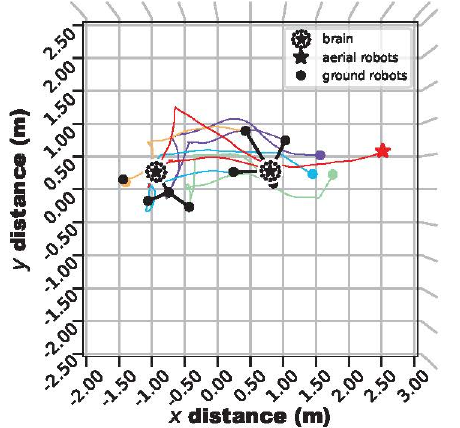}
\includegraphics[trim=20 0 40 20,clip,width=0.59\textwidth]{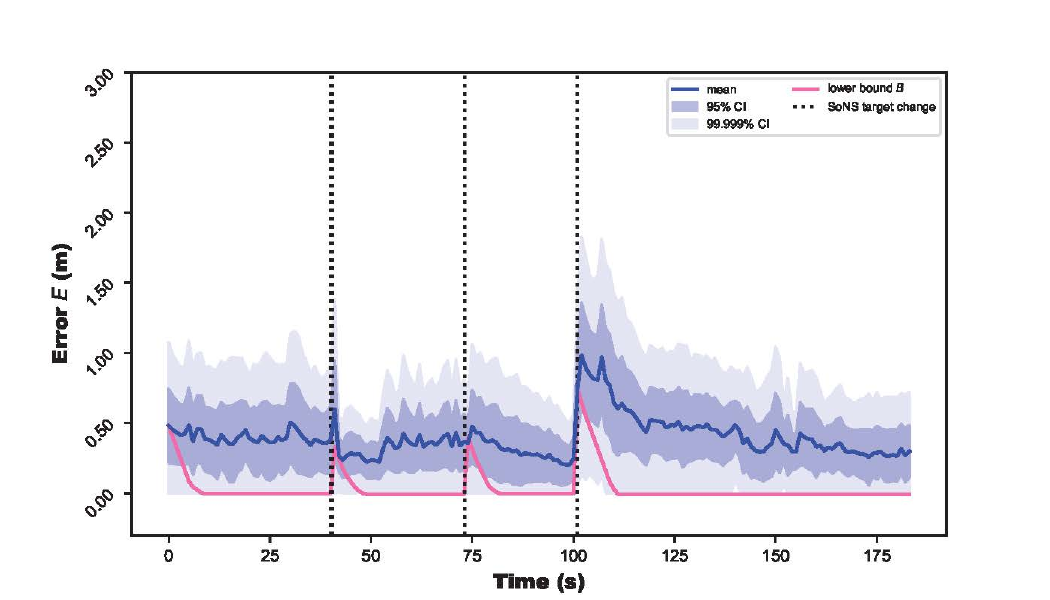}\\
\includegraphics[trim=120 60 120 80,clip,width=0.38\textwidth]{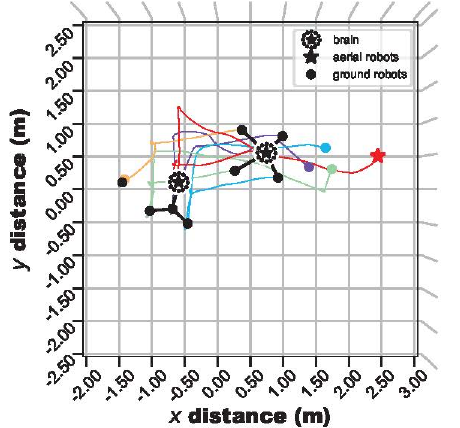}
\includegraphics[trim=20 0 40 20,clip,width=0.59\textwidth]{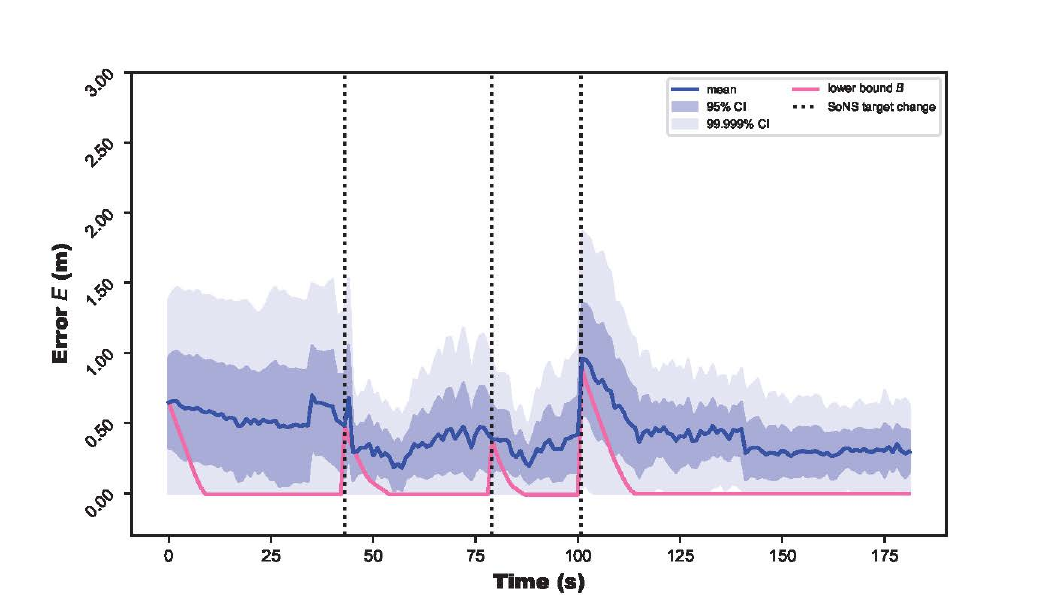}\\
\includegraphics[trim=120 60 120 80,clip,width=0.38\textwidth]{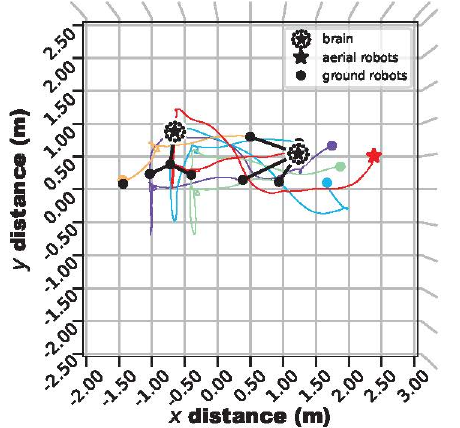}
\includegraphics[trim=20 0 40 20,clip,width=0.59\textwidth]{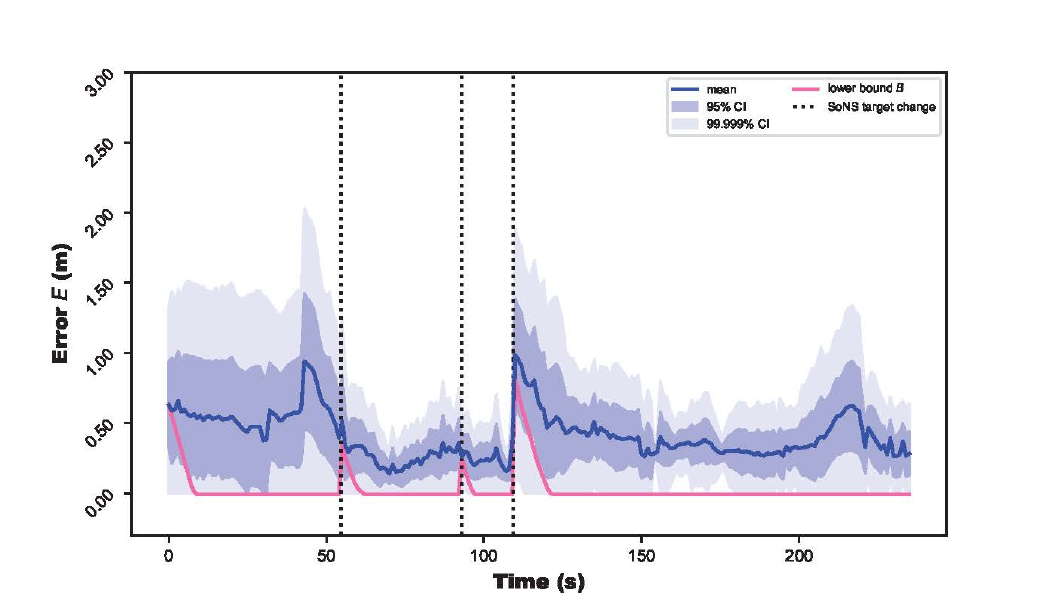}\\
\caption{{\bf Splitting and merging systems, push away an obstruction: Real robot trials.} Note that the highlighted keyframes shown here are from an intermediary time step, not the end of the trial. Five trials with real robots were conducted, each with eight robots {\it (figure continued on next page)}.}
\label{fig:mission5-variant2-hardware}
\end{figure}

\clearpage

\begin{figure}[h!]
\ContinuedFloat
\centering
\includegraphics[trim=120 60 120 80,clip,width=0.38\textwidth]{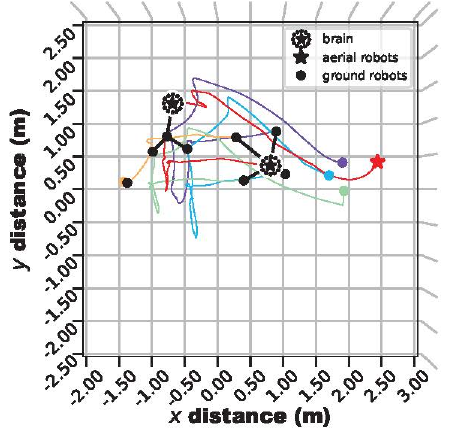}
\includegraphics[trim=20 0 40 20,clip,width=0.59\textwidth]{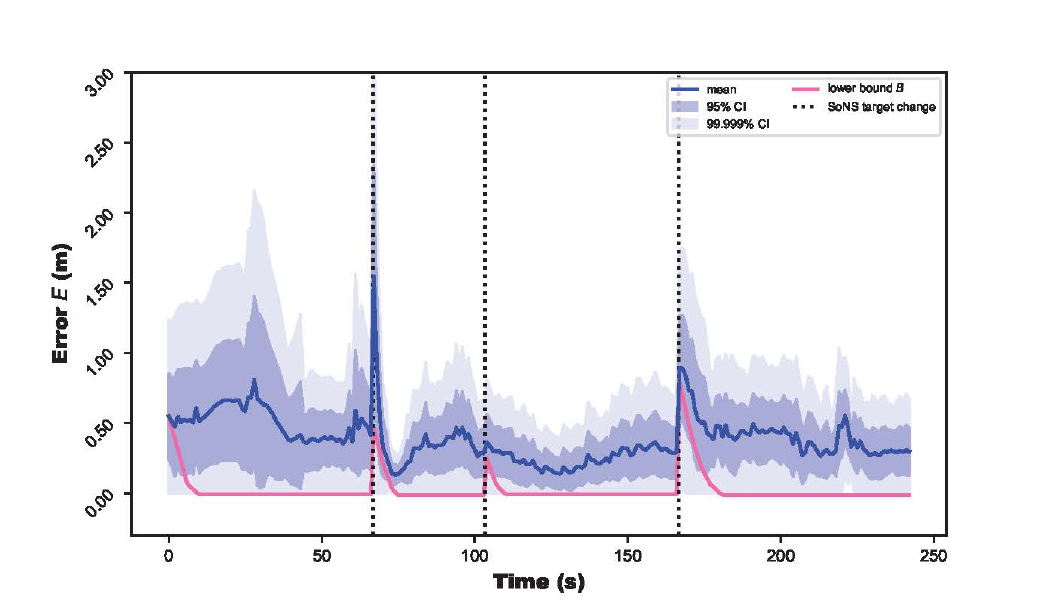}\\
\includegraphics[trim=120 60 120 80,clip,width=0.38\textwidth]{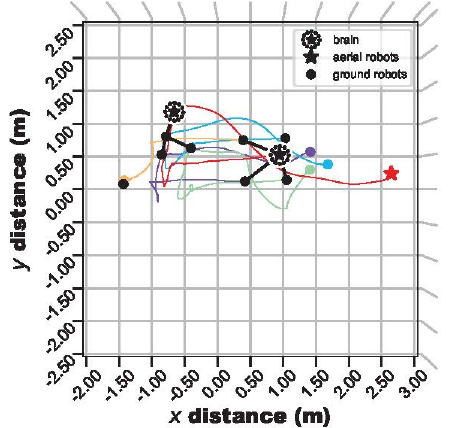}
\includegraphics[trim=20 0 40 20,clip,width=0.59\textwidth]{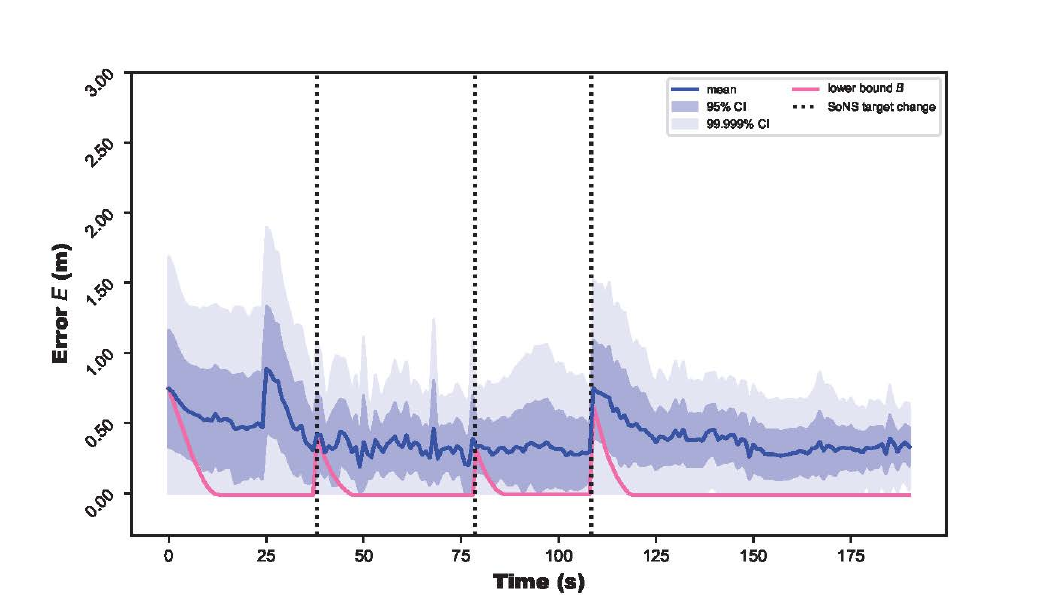}\\
\caption{{\it (cont'd)} {\bf Splitting and merging systems, push away an obstruction: Real robot trials.} Note that the highlighted keyframes shown here are from an intermediary time step, not the end of the trial. Five trials with real robots were conducted, each with eight robots.}
\label{fig:mission5-variant2-hardware}
\end{figure}

\vspace{7mm}
\noindent
{\it (Section continued on next page.)}

\clearpage
\subsubsection*{Simulation variant: Simple split and merge}

This variant consists simply of split and merge operations, without search-and-rescue or other constituent tasks.

\begin{figure}[h!]
\centering
\includegraphics[trim=120 60 120 80,clip,width=0.38\textwidth]{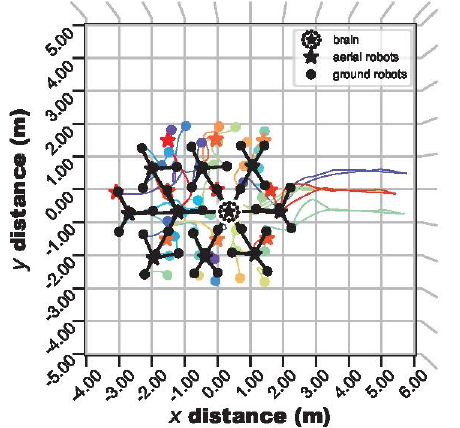}
\includegraphics[trim=20 0 40 20,clip,width=0.59\textwidth]{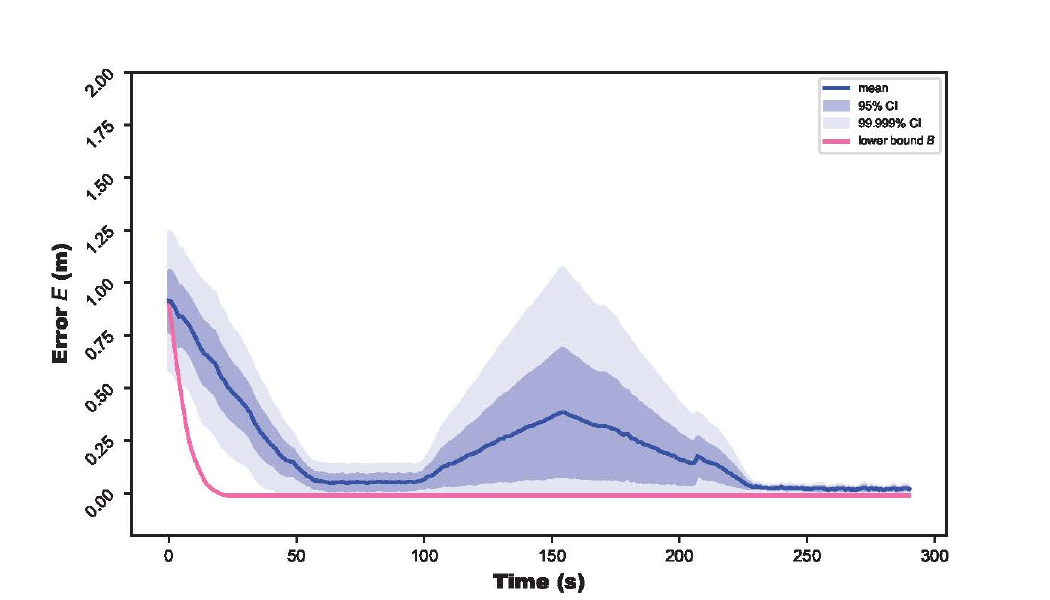}\\
\includegraphics[trim=120 60 120 80,clip,width=0.38\textwidth]{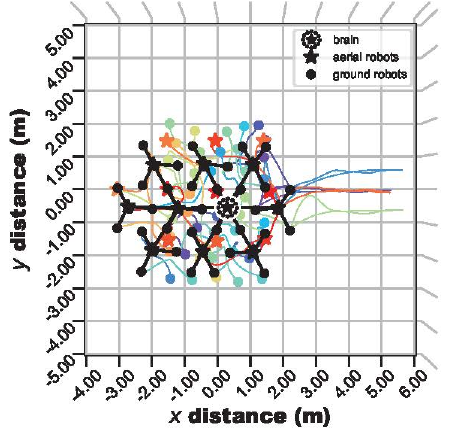}
\includegraphics[trim=20 0 40 20,clip,width=0.59\textwidth]{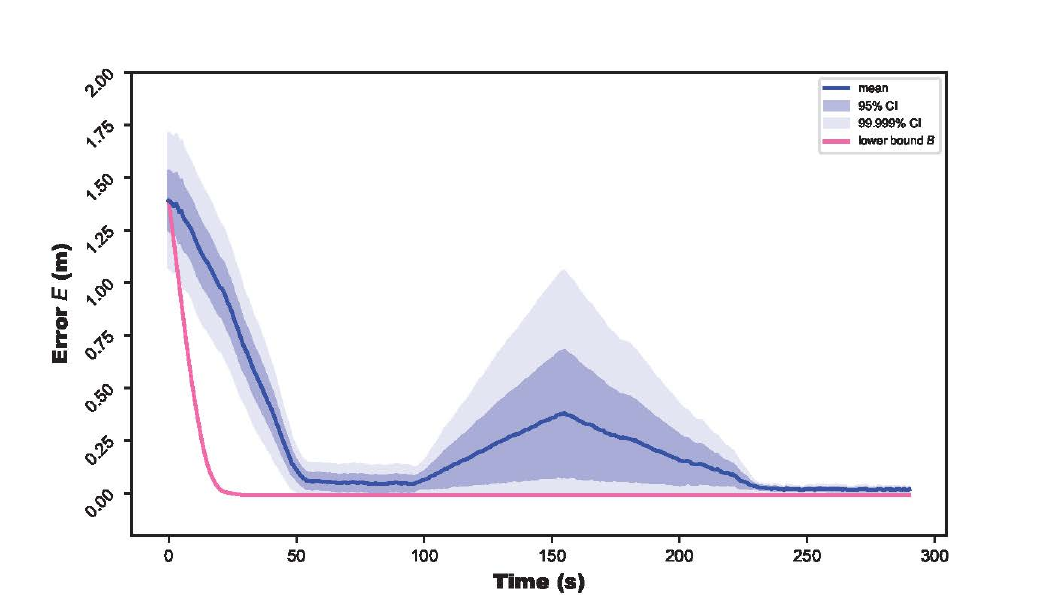}\\
\includegraphics[trim=120 60 120 80,clip,width=0.38\textwidth]{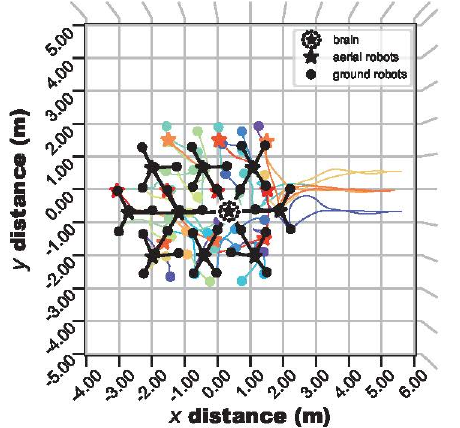}
\includegraphics[trim=20 0 40 20,clip,width=0.59\textwidth]{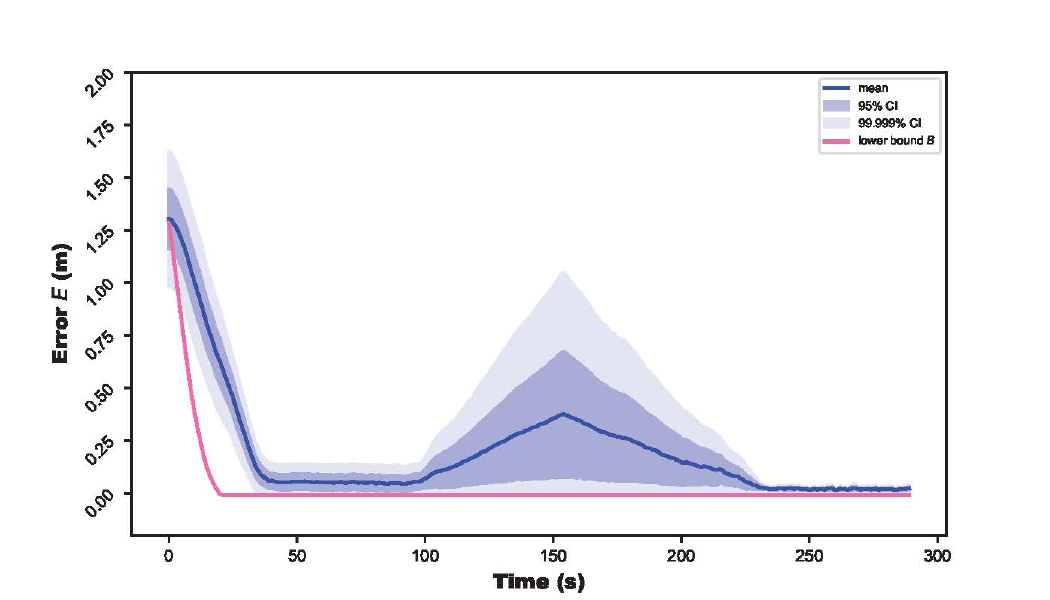}\\
\caption{{\bf Splitting and merging systems, simple split and merge: Example simulation trials.} 50 trials were conducted in simulation, each with 50 robots.}
\label{fig:mission5-variant3-simulation}
\end{figure}

\clearpage
\subsection*{Scalability setups (see Sec.~2.2 in the main paper)}
\rhead{Scalability setups}

The scalability setups include two variants based on two of the robot missions (those shown in Secs.~2.1.1 and~2.1.4 in the main paper), both run in simulation only.

\subsubsection*{Scalability in the binary decision-making mission}

\begin{figure}[h!]
\centering
\includegraphics[trim=120 60 120 80,clip,width=0.38\textwidth]{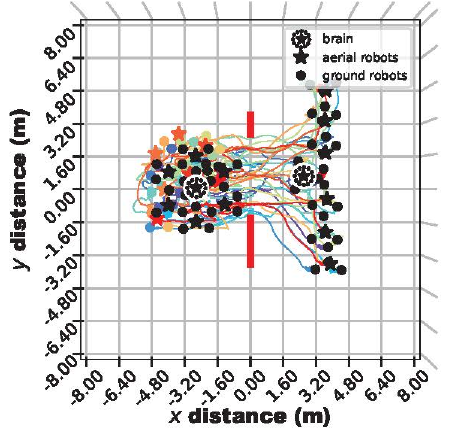}
\includegraphics[trim=20 0 40 20,clip,width=0.59\textwidth]{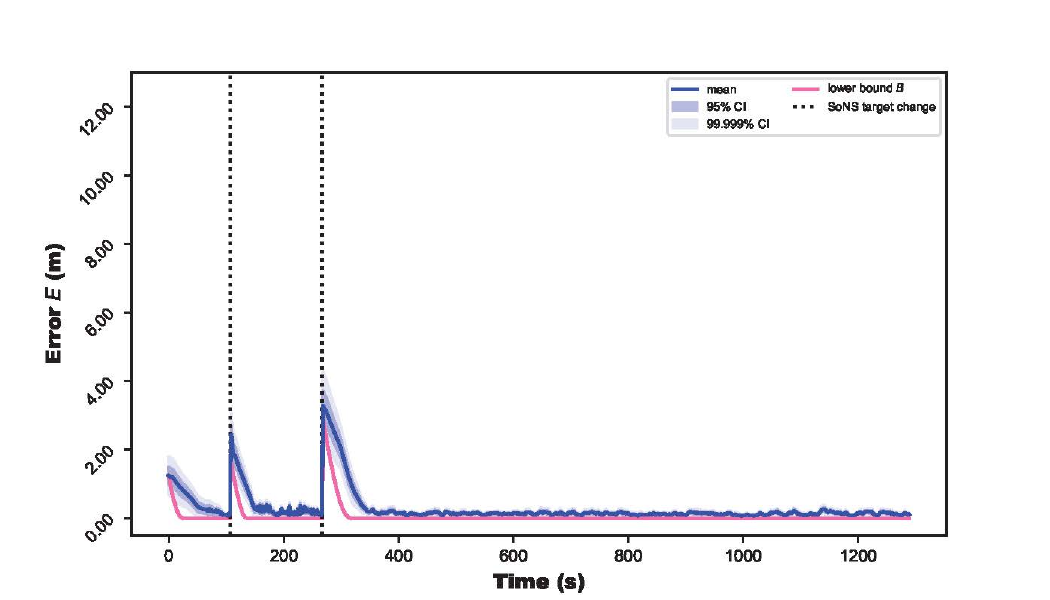}\\
\includegraphics[trim=120 60 120 80,clip,width=0.38\textwidth]{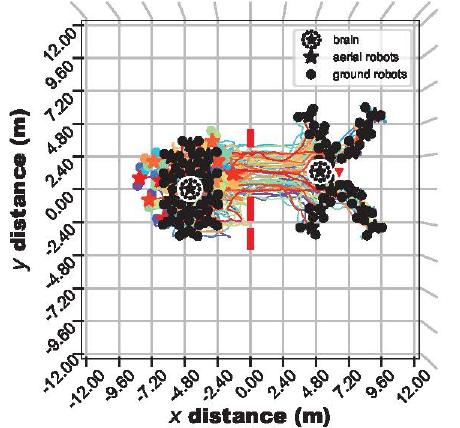}
\includegraphics[trim=20 0 40 20,clip,width=0.59\textwidth]{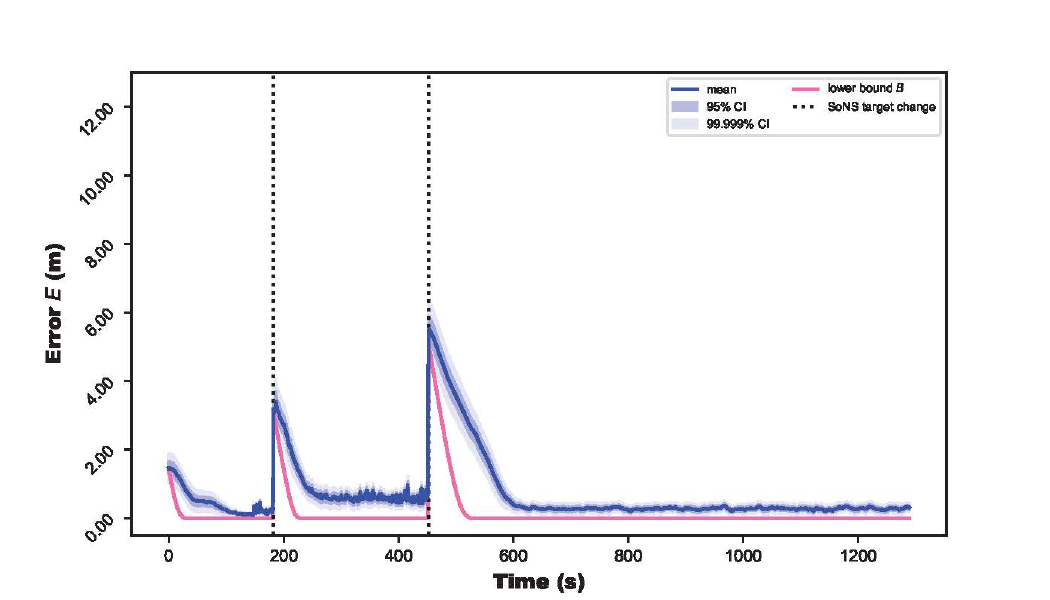}\\
\caption{{\bf Scalability in the binary decision-making mission:} Example simulation trials of four different system sizes (from top to bottom, 35, 65, 95, and 125 robots). 50 trials per system size were conducted in simulation, for four different system sizes that are all shown here {\it (figure continued on next page}).}
\label{fig:scalability-variant1-simulation}
\end{figure}

\vspace{7mm}
\noindent
{\it (Section continued on next page.)}

\clearpage

\begin{figure}[h!]
\ContinuedFloat
\centering
\includegraphics[trim=120 60 120 70,clip,width=0.38\textwidth]{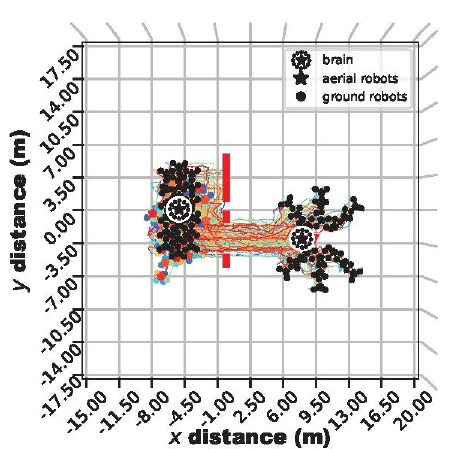}
\includegraphics[trim=20 0 40 20,clip,width=0.59\textwidth]{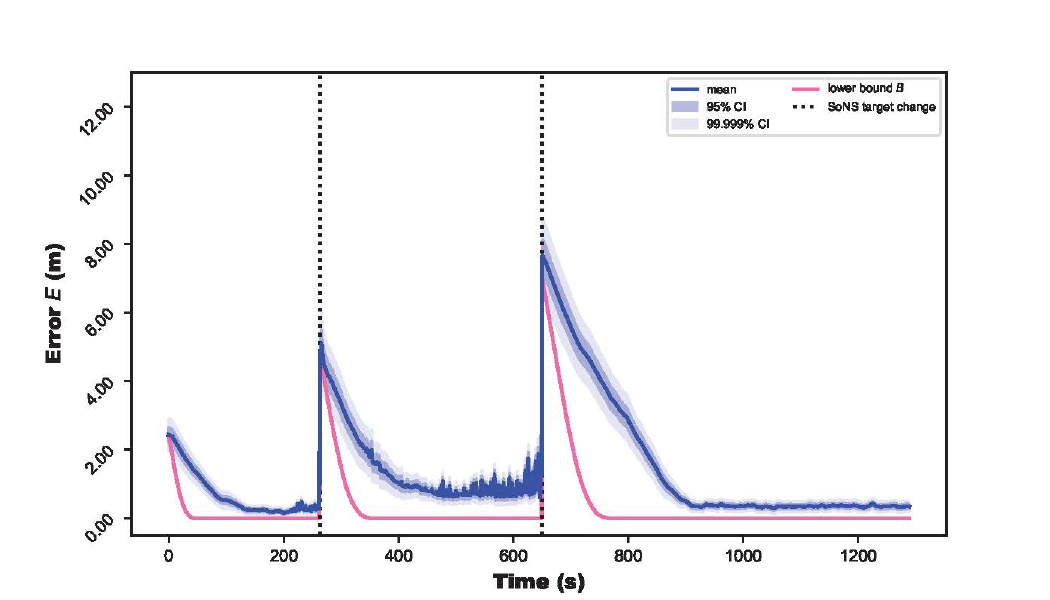}\\
\includegraphics[trim=120 60 120 70,clip,width=0.38\textwidth]{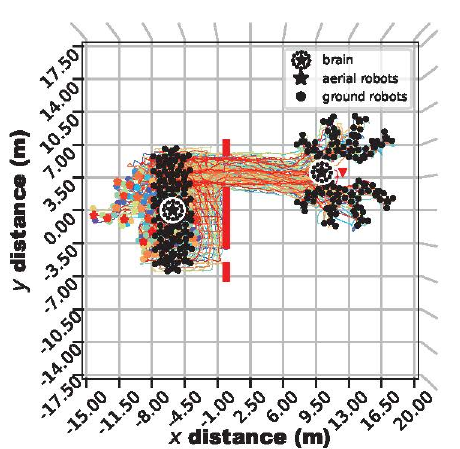}
\includegraphics[trim=20 0 40 20,clip,width=0.59\textwidth]{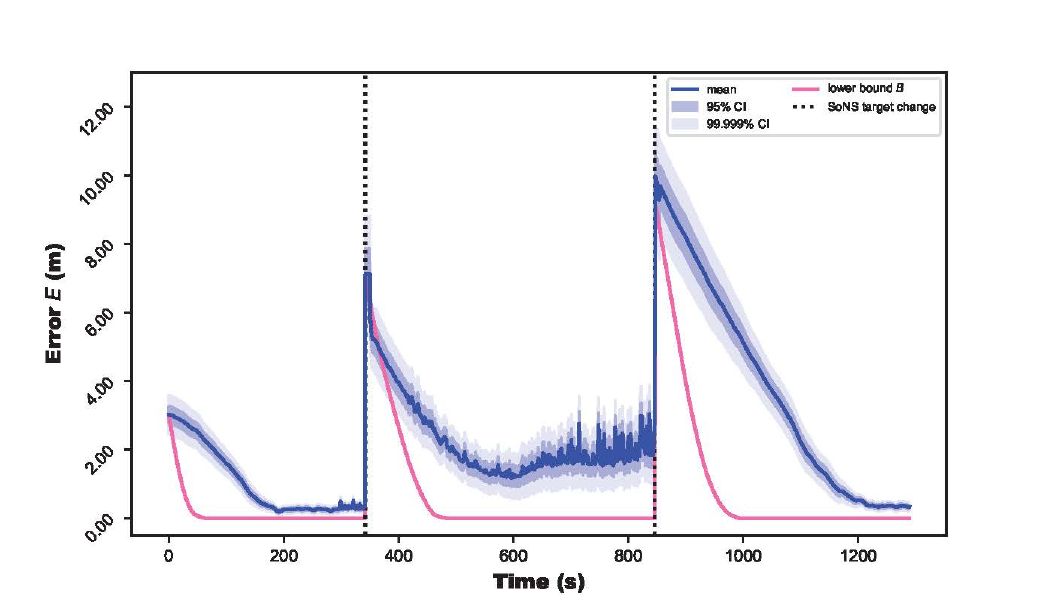}\\
\caption{{\it (cont'd)} {\bf Scalability in the binary decision-making mission:} Example simulation trials of four different system sizes (from top to bottom, 35, 65, 95, and 125 robots). 50 trials per system size were conducted in simulation, for four different system sizes that are all shown here.}
\label{fig:scalability-variant1-simulation}
\end{figure}

\vspace{7mm}
\noindent
{\it (Section continued on next page.)}

\clearpage
\subsubsection*{Scalability in the establishing self-organized hierarchy mission}

\begin{figure}[h!]
\centering
\includegraphics[trim=120 60 120 80,clip,width=0.38\textwidth]{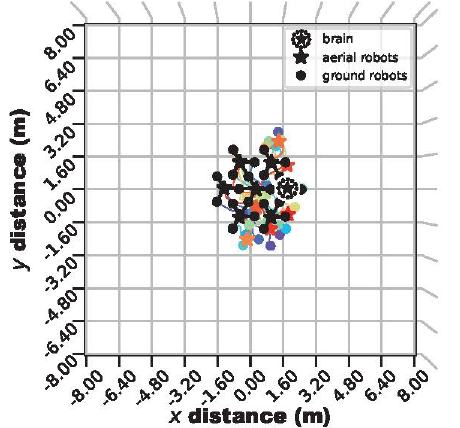}
\includegraphics[trim=20 0 40 20,clip,width=0.59\textwidth]{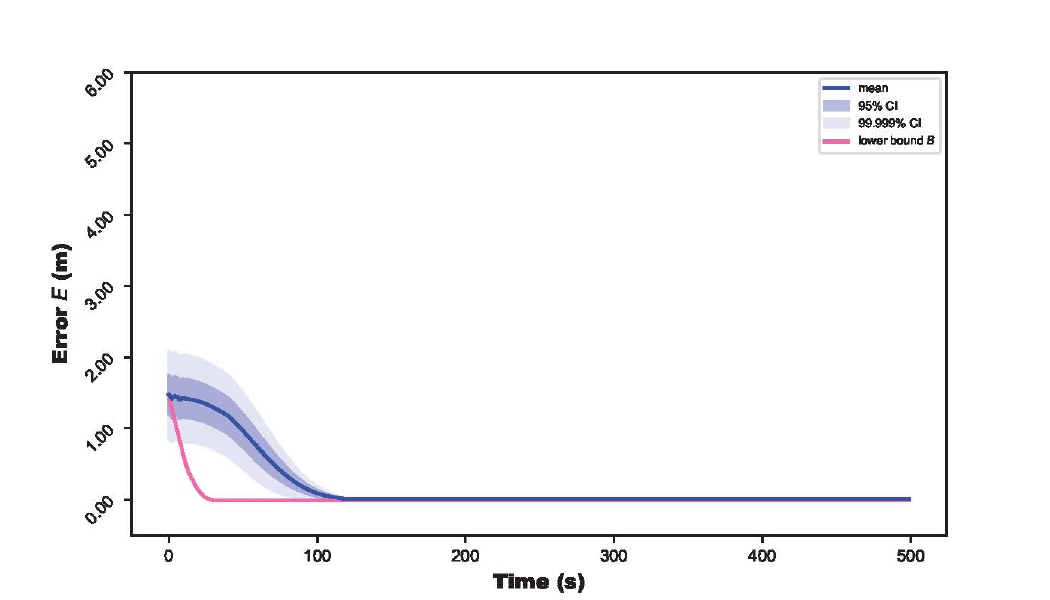}\\
\includegraphics[trim=120 60 120 80,clip,width=0.38\textwidth]{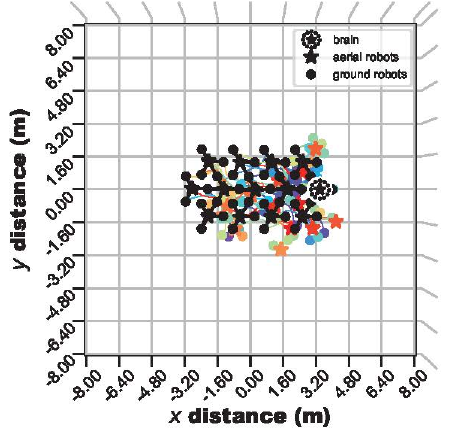}
\includegraphics[trim=20 0 40 20,clip,width=0.59\textwidth]{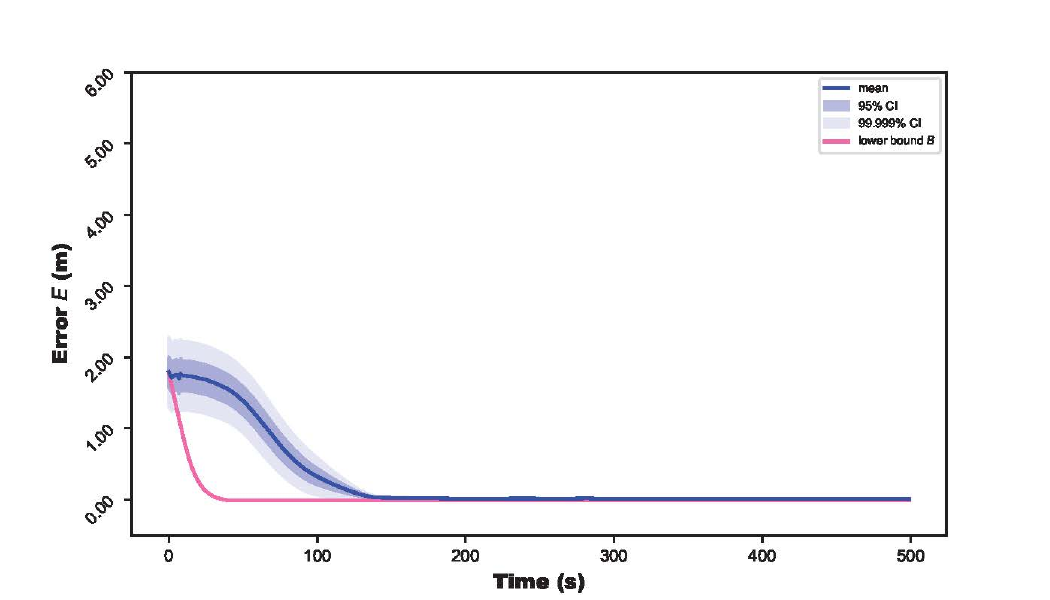}\\
\caption{{\bf Scalability in the establishing self-organized hierarchy mission:} Example simulation trials of four different system sizes (from top to bottom, 35, 65, 95, and 125 robots). 30 trials per system size were conducted in simulation, for 50 different system sizes in total {\it (figure continued on next page}).}
\label{fig:scalability-variant2-simulation}
\end{figure}

\vspace{7mm}
\noindent
{\it (Section continued on next page.)}

\clearpage

\begin{figure}[h!]
\ContinuedFloat
\centering
\includegraphics[trim=120 60 120 80,clip,width=0.38\textwidth]{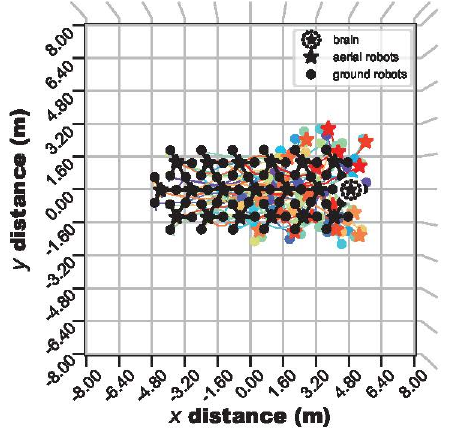}
\includegraphics[trim=20 0 40 20,clip,width=0.59\textwidth]{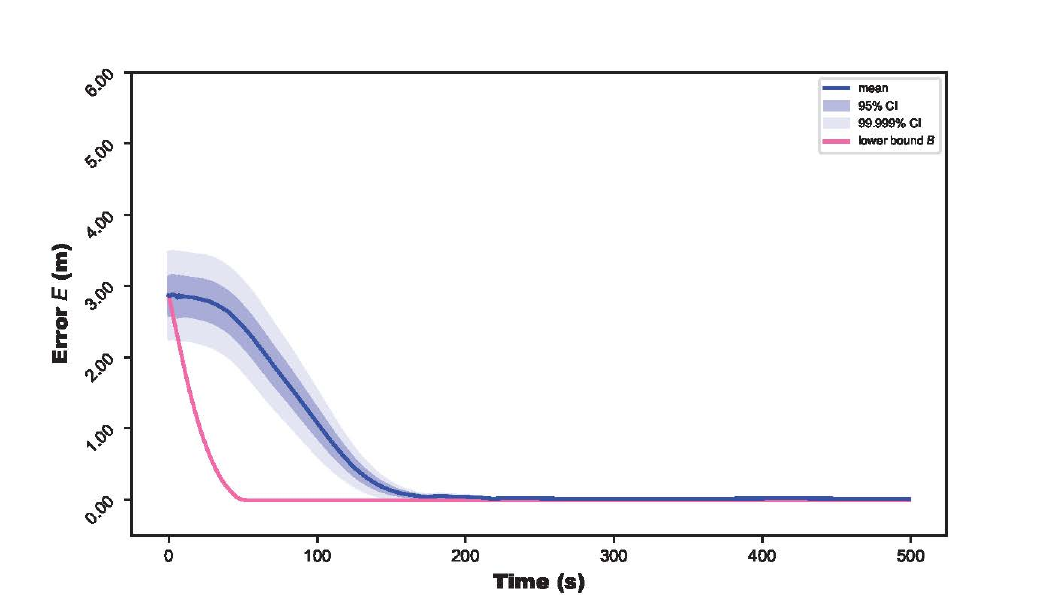}\\
\includegraphics[trim=120 60 120 80,clip,width=0.38\textwidth]{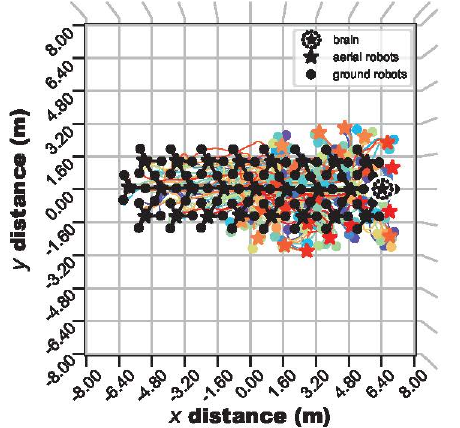}
\includegraphics[trim=20 0 40 20,clip,width=0.59\textwidth]{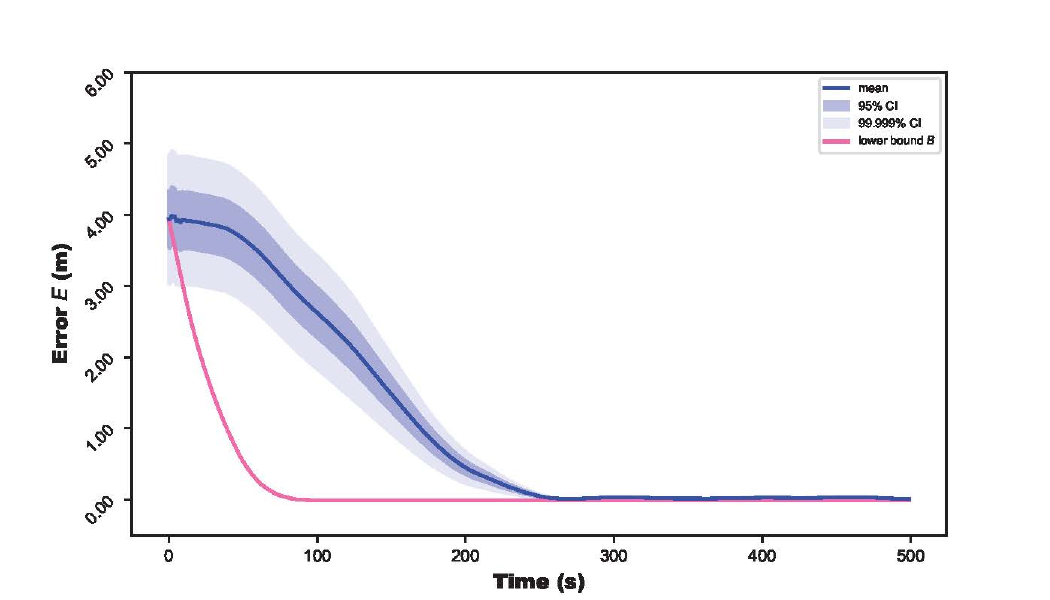}\\
\caption{{\it (cont'd)} {\bf Scalability in the establishing self-organized hierarchy mission:} Example simulation trials of four different system sizes (from top to bottom, 35, 65, 95, and 125 robots). 30 trials per system size were conducted in simulation, for 50 different system sizes in total.}
\label{fig:scalability-variant2-simulation}
\end{figure}

\vspace{7mm}
\noindent
{\it (Section continued on next page.)}

\clearpage
\subsection*{Fault tolerance setups (see Sec.~2.3 in the main paper)}
\rhead{Fault tolerance setups}

The fault tolerance setups include four variants that are all based on the same robot mission (shown in Sec.~2.1.4 in the main paper), with one variant run in experiments with real robots and the other three run in simulation.

\subsubsection*{Variant with real robots: Multiple permanent failures}
\begin{figure}[h!]
\centering
\includegraphics[trim=120 60 120 80,clip,width=0.38\textwidth]{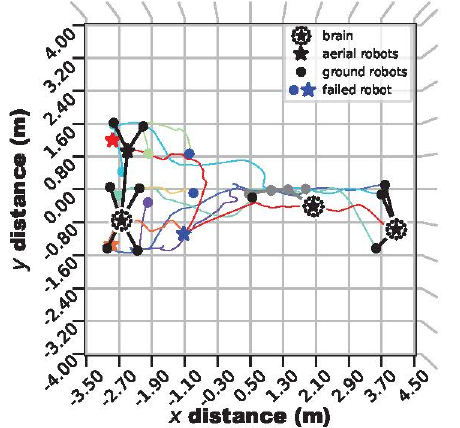}
\includegraphics[trim=20 0 40 20,clip,width=0.59\textwidth]{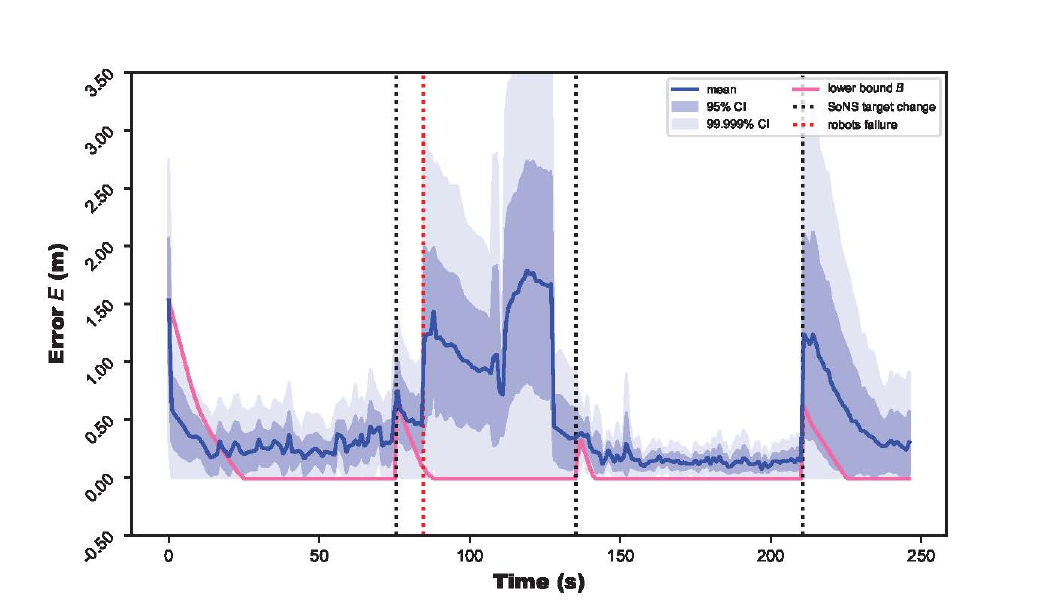}\\
\includegraphics[trim=120 60 120 80,clip,width=0.38\textwidth]{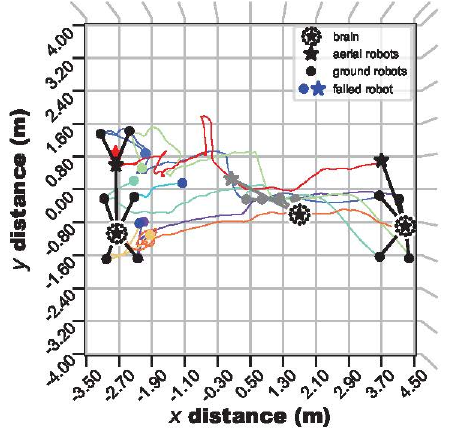}
\includegraphics[trim=20 0 40 20,clip,width=0.59\textwidth]{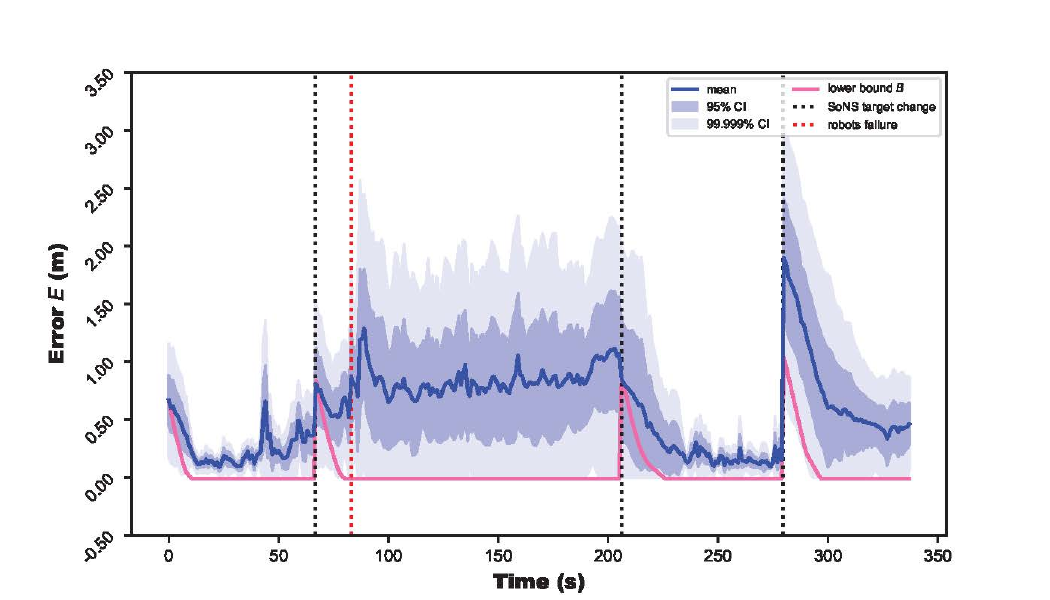}\\
\caption{{\bf Multiple permanent failures: Real robot trials.} Five trials with real robots were conducted, each with eight robots. Failures are triggered uniformly randomly. Note that in the third trial shown here, both aerial robots failed, and therefore the remaining functional robots (all ground robots) did not fulfill the requirements of the mission and did not continue with the mission after the failures occurred {\it (figure continued on next page)}.}
\label{fig:fault-hardware}
\end{figure}

\vspace{7mm}
\noindent
{\it (Section continued on next page.)}

\clearpage

\begin{figure}[h!]
\ContinuedFloat
\centering
\includegraphics[trim=120 60 120 80,clip,width=0.38\textwidth]{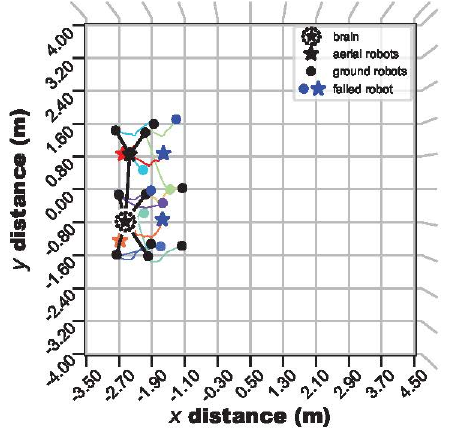}
\includegraphics[trim=20 0 40 20,clip,width=0.59\textwidth]{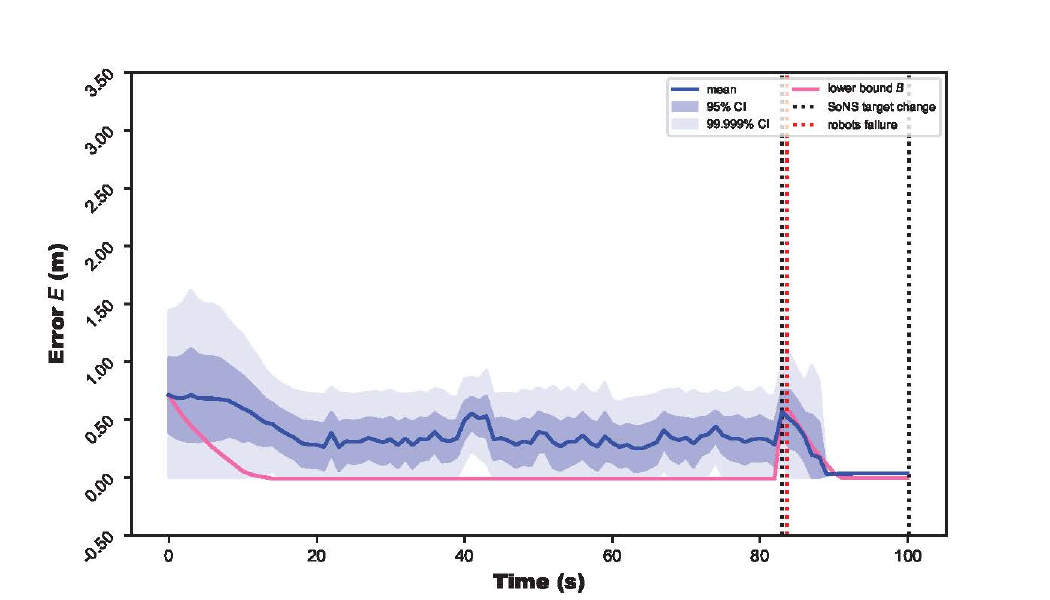}\\
\includegraphics[trim=120 60 120 80,clip,width=0.38\textwidth]{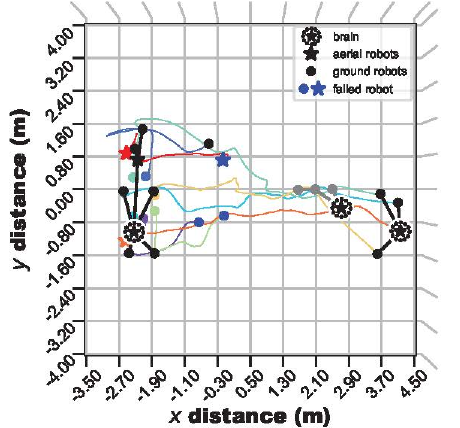}
\includegraphics[trim=20 0 40 20,clip,width=0.59\textwidth]{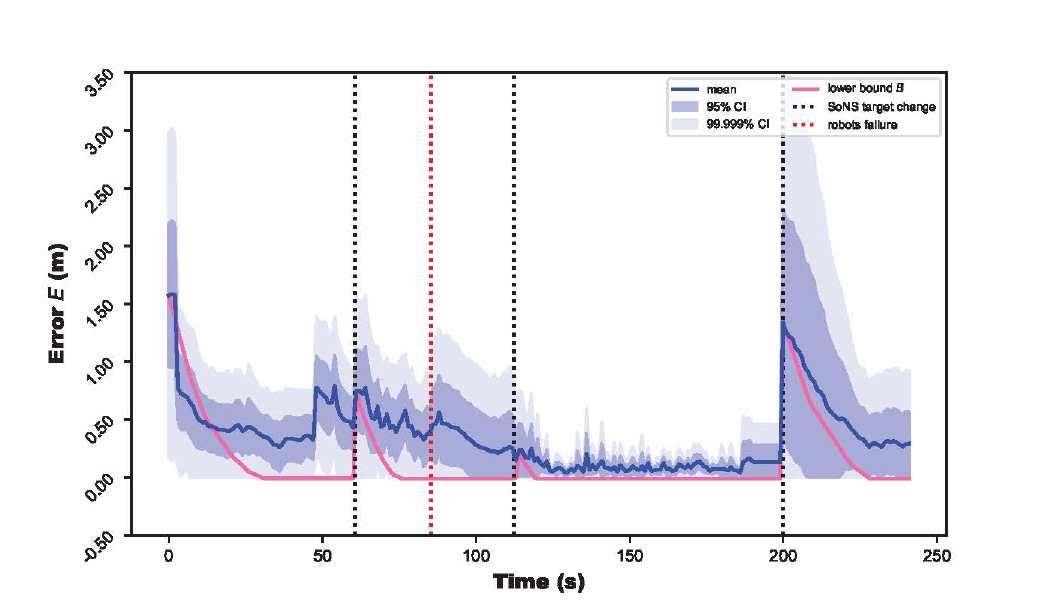}\\
\includegraphics[trim=120 60 120 80,clip,width=0.38\textwidth]{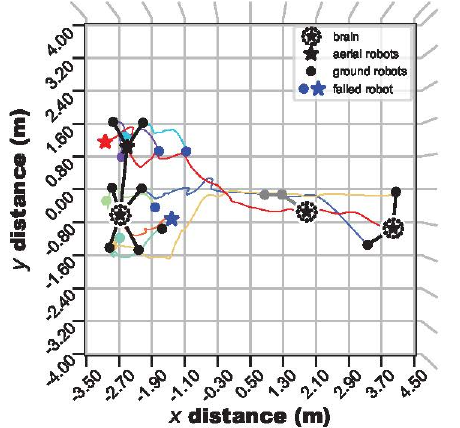}
\includegraphics[trim=20 0 40 20,clip,width=0.59\textwidth]{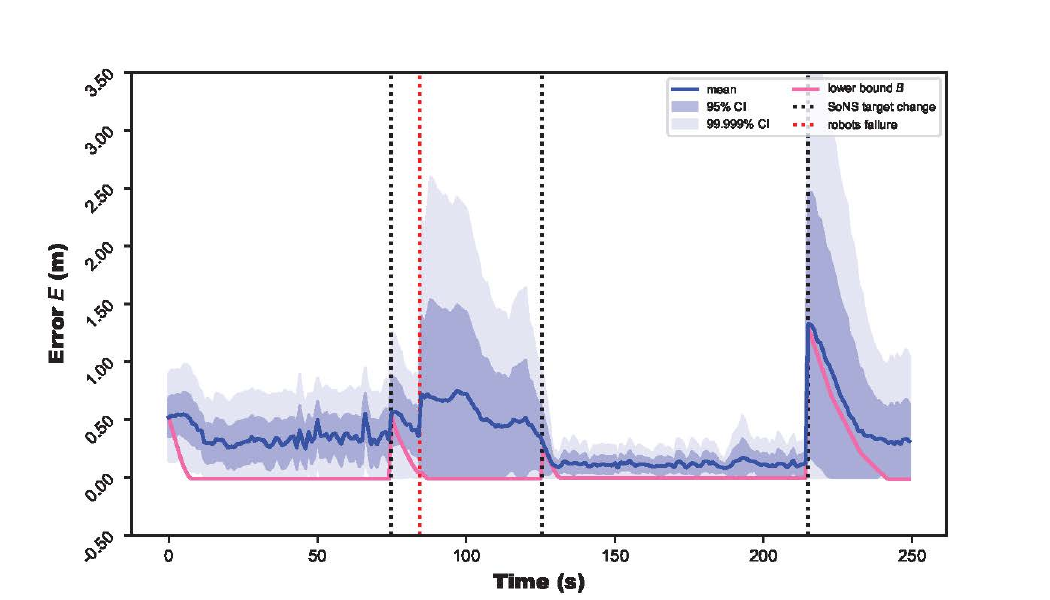}\\
\caption{{\it (cont'd)} {\bf Multiple permanent failures: Real robot trials.} Five trials with real robots were conducted, each with eight robots. Failures are triggered uniformly randomly. Note that in the third trial shown here, both aerial robots failed, and therefore the remaining functional robots (all ground robots) did not fulfill the requirements of the mission and did not continue with the mission after the failures occurred.}
\label{fig:fault-hardware}
\end{figure}

\clearpage
\subsubsection*{Simulation variant: High-loss conditions, 33.$\overline{3}$\% probability to fail}
\begin{figure}[h!]
\centering
\includegraphics[trim=120 60 120 70,clip,width=0.38\textwidth]{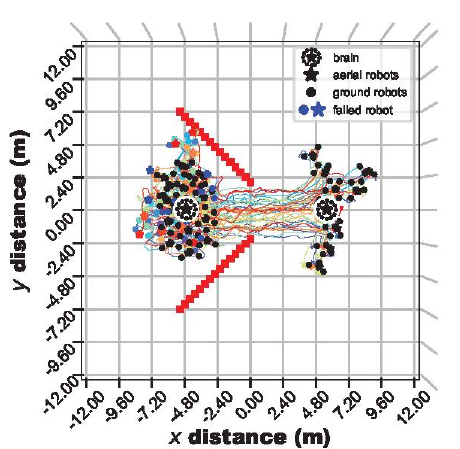}
\includegraphics[trim=20 0 40 20,clip,width=0.59\textwidth]{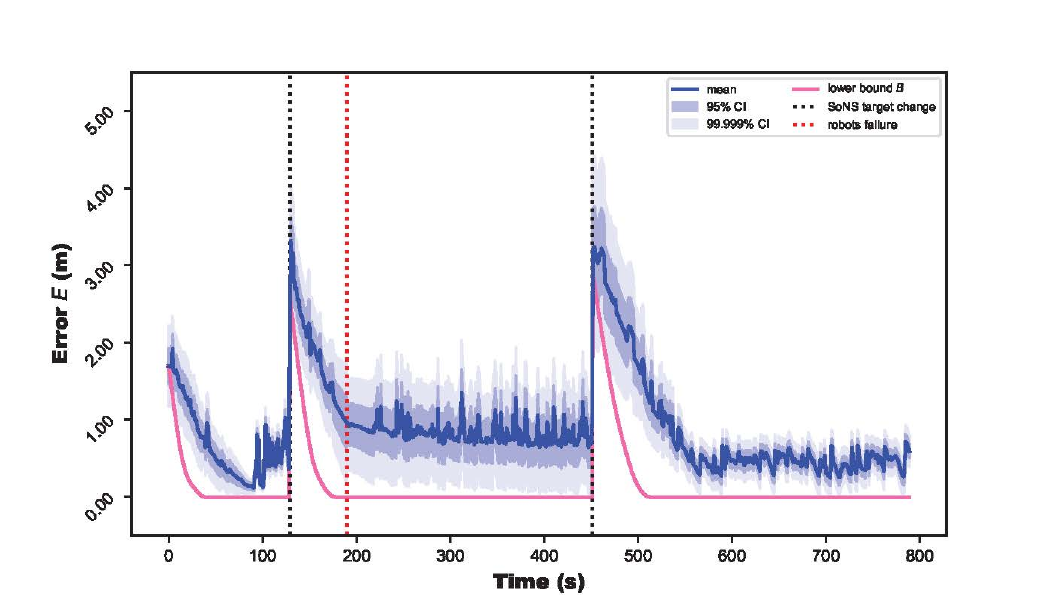}\\
\includegraphics[trim=120 60 120 70,clip,width=0.38\textwidth]{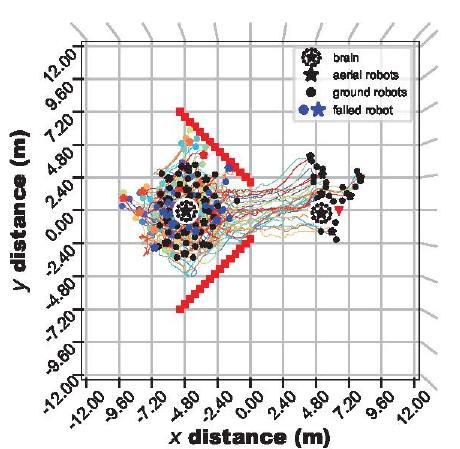}
\includegraphics[trim=20 0 40 20,clip,width=0.59\textwidth]{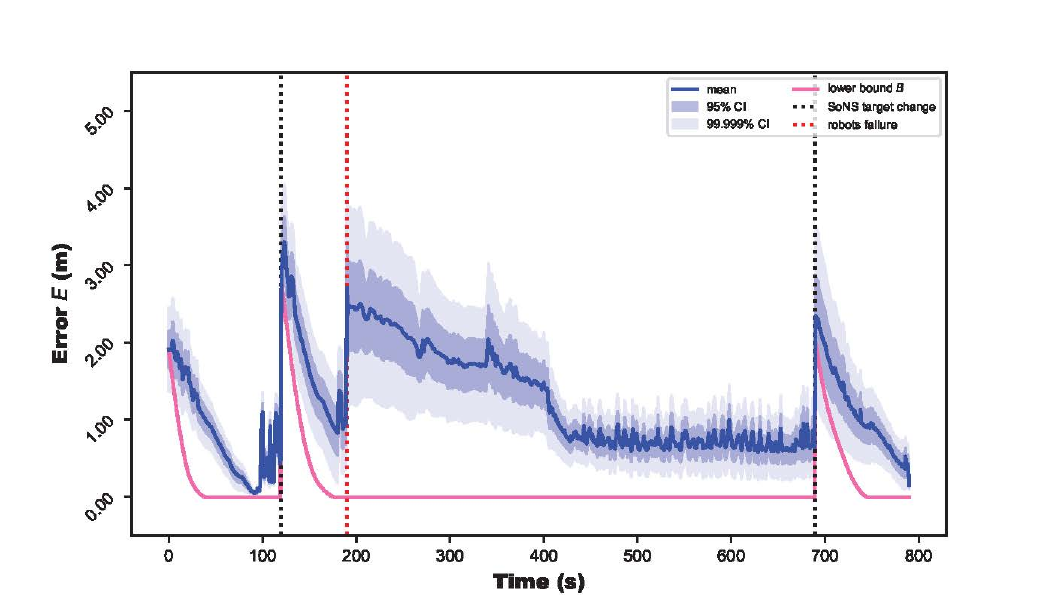}\\
\includegraphics[trim=120 60 120 70,clip,width=0.38\textwidth]{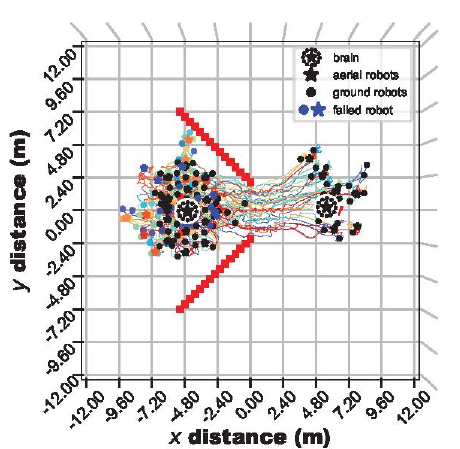}
\includegraphics[trim=20 0 40 20,clip,width=0.59\textwidth]{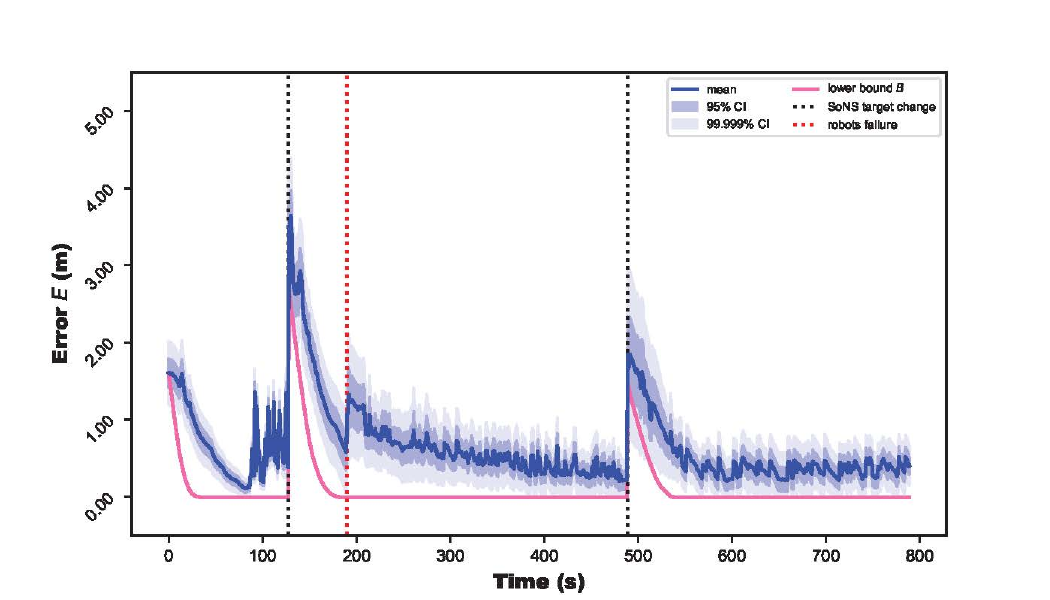}\\
\caption{{\bf High-loss conditions, 33.$\overline{3}$\% probability to fail: Example simulation trials.} 50 trials were conducted in simulation, each with 65 robots.}
\label{fig:fault-variant1-simulation}
\end{figure}

\clearpage
\subsubsection*{Simulation variant: High-loss conditions, 66.$\overline{6}$\% probability to fail}

\begin{figure}[h!]
\centering
\includegraphics[trim=120 60 120 70,clip,width=0.38\textwidth]{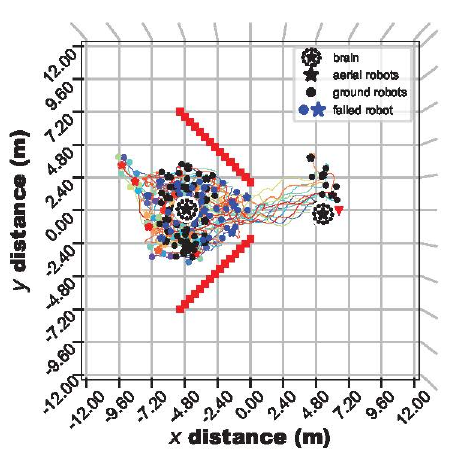}
\includegraphics[trim=20 0 40 20,clip,width=0.59\textwidth]{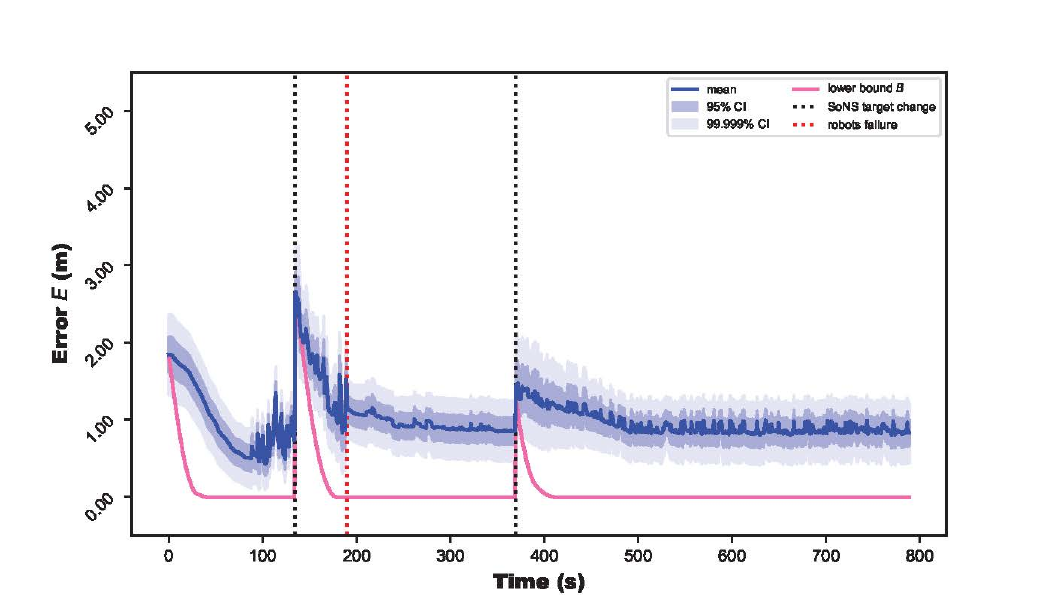}\\
\includegraphics[trim=120 60 120 70,clip,width=0.38\textwidth]{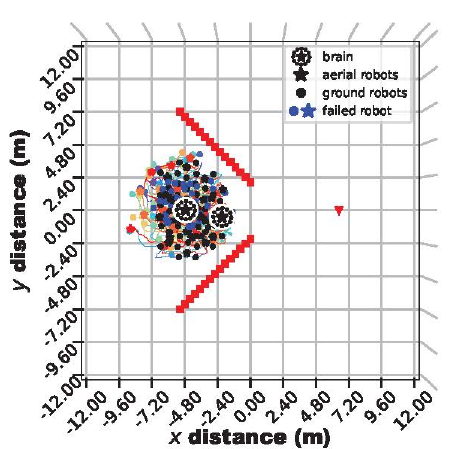}
\includegraphics[trim=20 0 40 20,clip,width=0.59\textwidth]{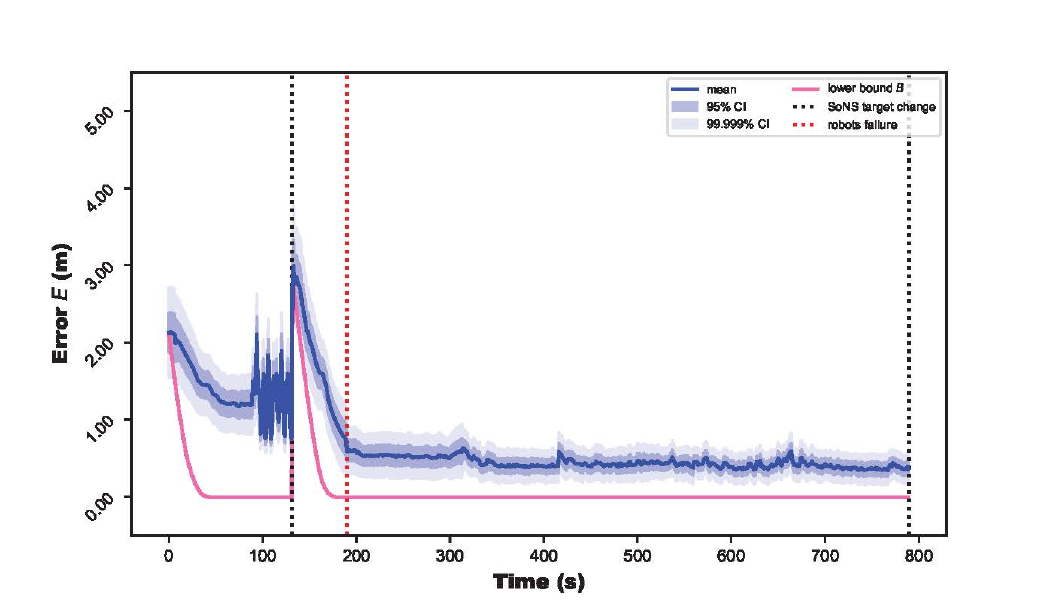}\\
\includegraphics[trim=120 60 120 70,clip,width=0.38\textwidth]{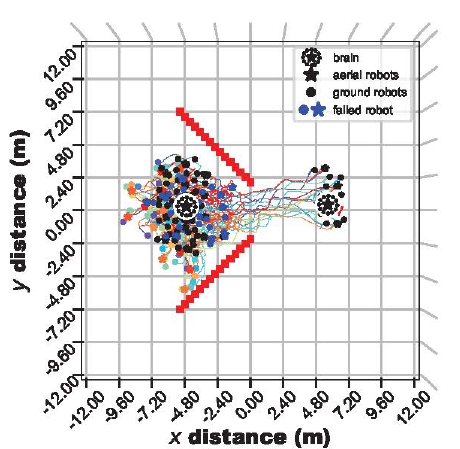}
\includegraphics[trim=20 0 40 20,clip,width=0.59\textwidth]{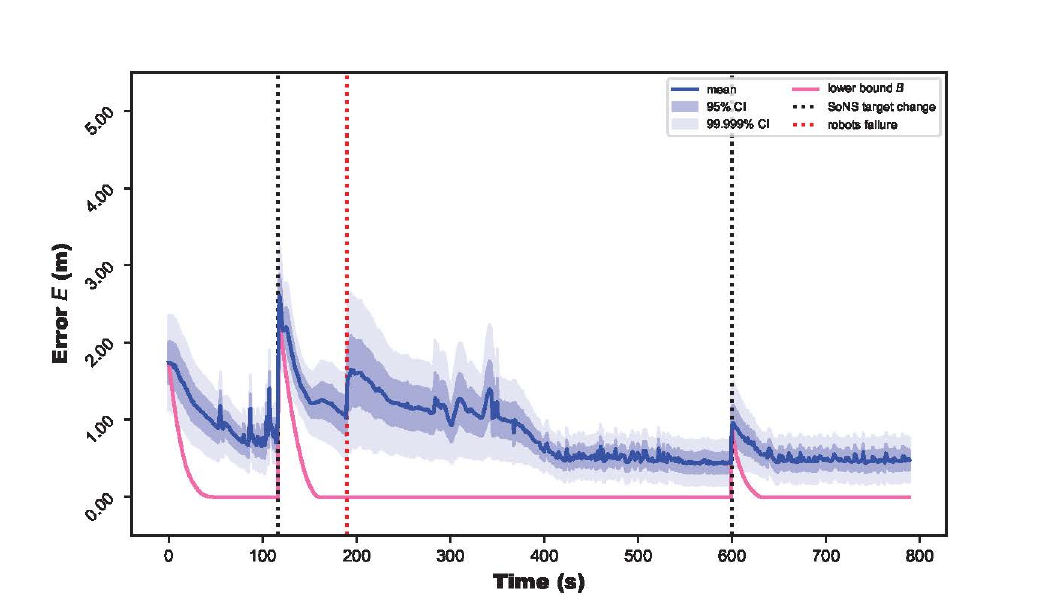}\\
\caption{{\bf High-loss conditions, 66.$\overline{6}$\% probability to fail: Example simulation trials.} Note that in some trials under this condition, the SoNS was able to re-establish itself after the failures occurred, but was not able to complete other portions of the mission due to, for example, functional ground robots in the SoNS becoming trapped by the failed ground robots (as in the middle trial shown here). 50 trials were conducted in simulation, each with 65 robots.}
\label{fig:fault-variant2-simulation}
\end{figure}

\clearpage
\subsubsection*{Simulation variant: Temporary system-wide vision failure}

\begin{figure}[h!]
\centering
\includegraphics[trim=120 60 120 70,clip,width=0.38\textwidth]{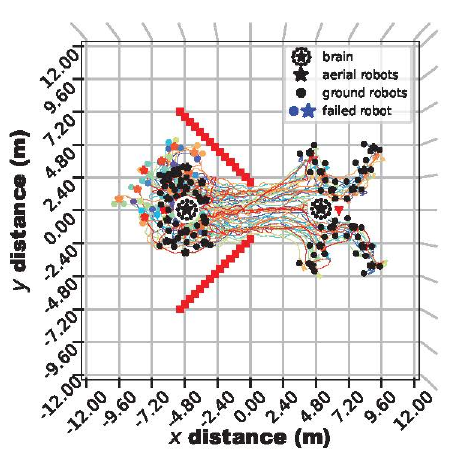}
\includegraphics[trim=20 0 180 20,clip,width=0.59\textwidth]{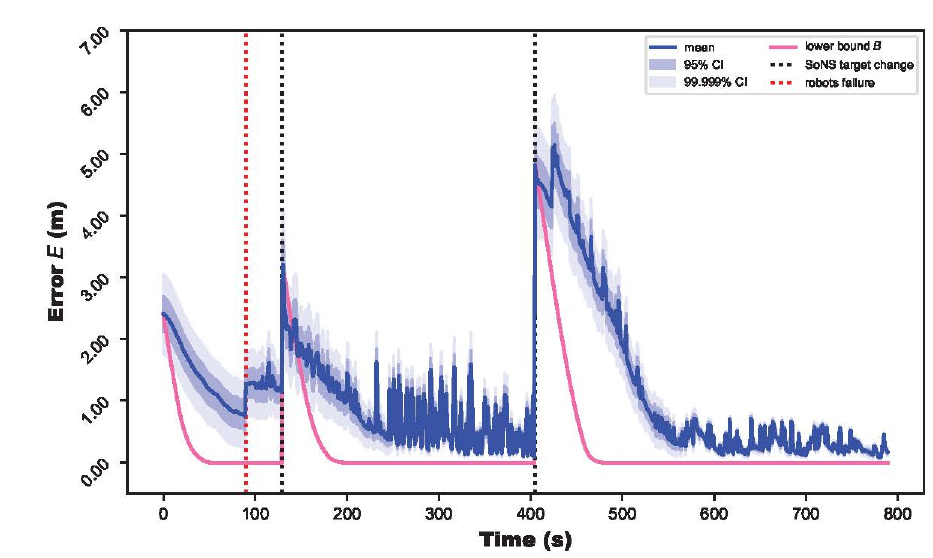}\\
\includegraphics[trim=120 60 120 70,clip,width=0.38\textwidth]{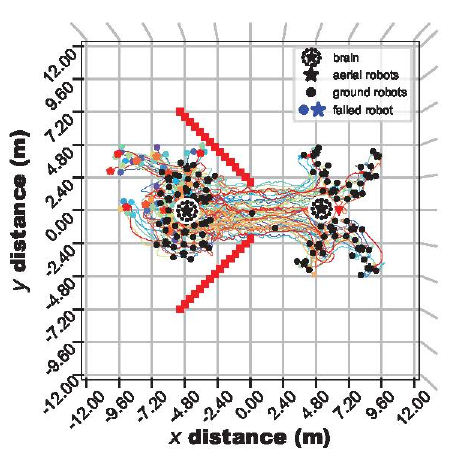}
\includegraphics[trim=20 0 180 20,clip,width=0.59\textwidth]{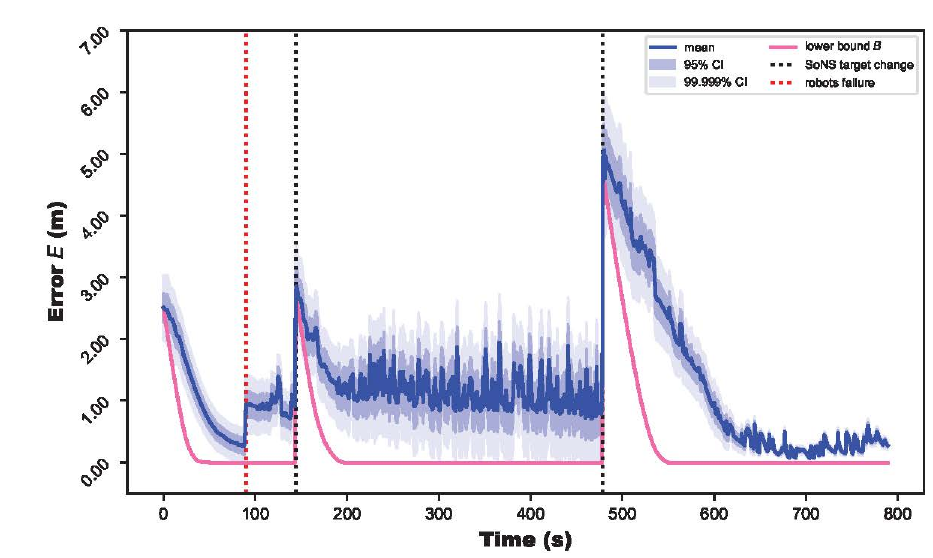}\\
\includegraphics[trim=120 60 120 70,clip,width=0.38\textwidth]{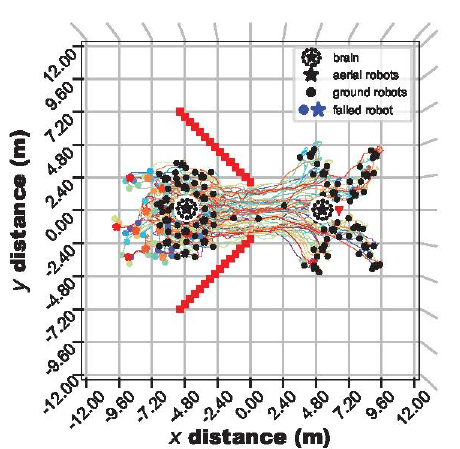}
\includegraphics[trim=20 0 180 20,clip,width=0.59\textwidth]{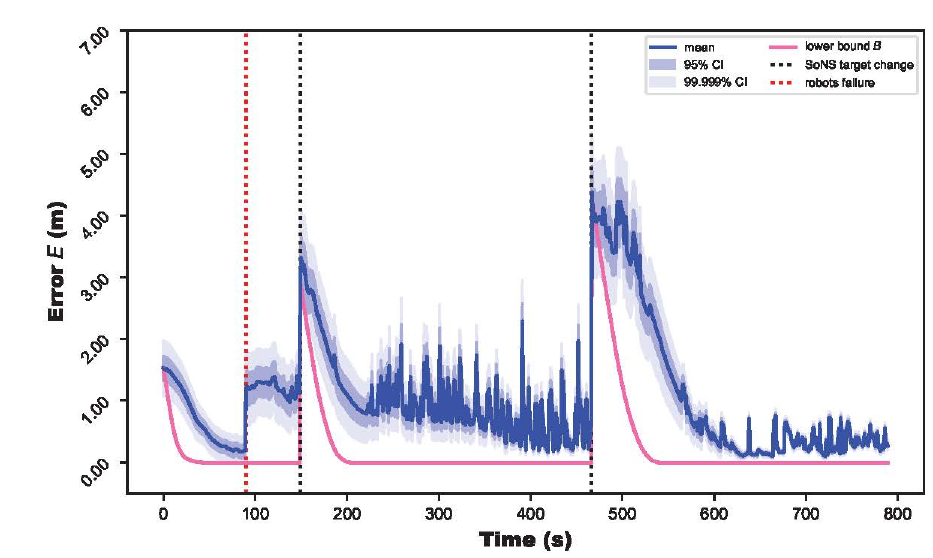}\\
\caption{{\bf Temporary system-wide vision failure: Example simulation trials of three different failure durations (from top to bottom, 0.5, 1.0, and 30~s).} 50 trials per duration were conducted in simulation, each with 65 robots.}
\label{fig:fault-variant3-simulation}
\end{figure}

\clearpage
\subsubsection*{Simulation variant: Temporary system-wide communication failure}

\begin{figure}[h!]
\centering
\includegraphics[trim=120 60 120 70,clip,width=0.38\textwidth]{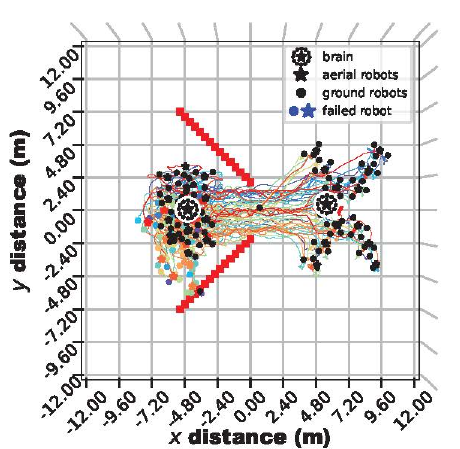}
\includegraphics[trim=20 0 180 20,clip,width=0.59\textwidth]{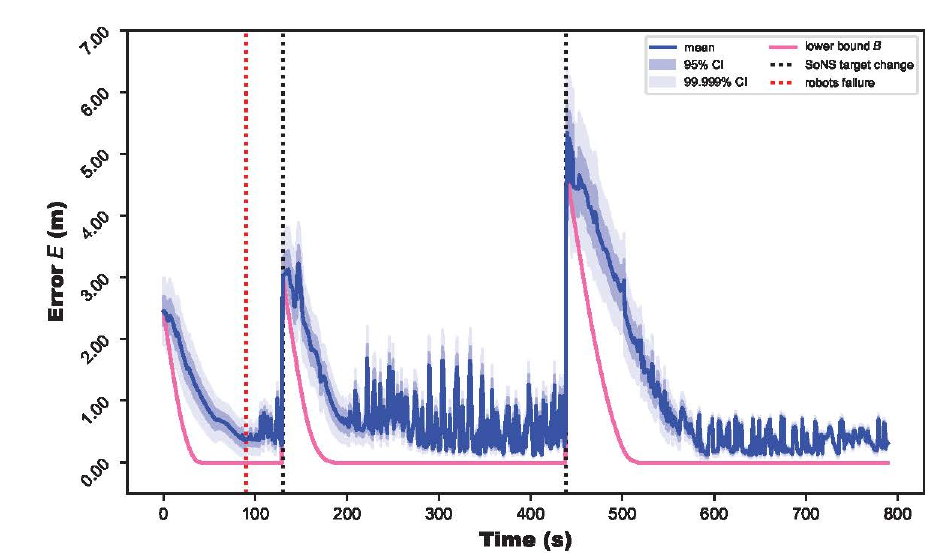}\\
\includegraphics[trim=120 60 120 70,clip,width=0.38\textwidth]{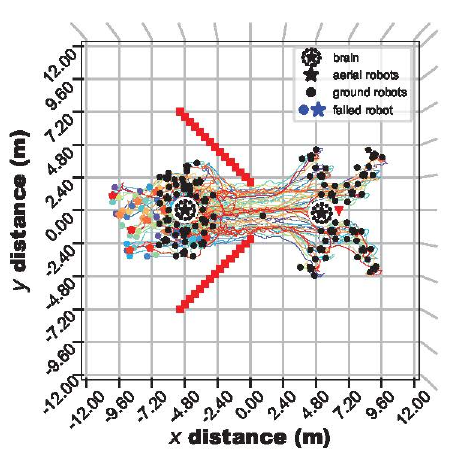}
\includegraphics[trim=20 0 180 20,clip,width=0.59\textwidth]{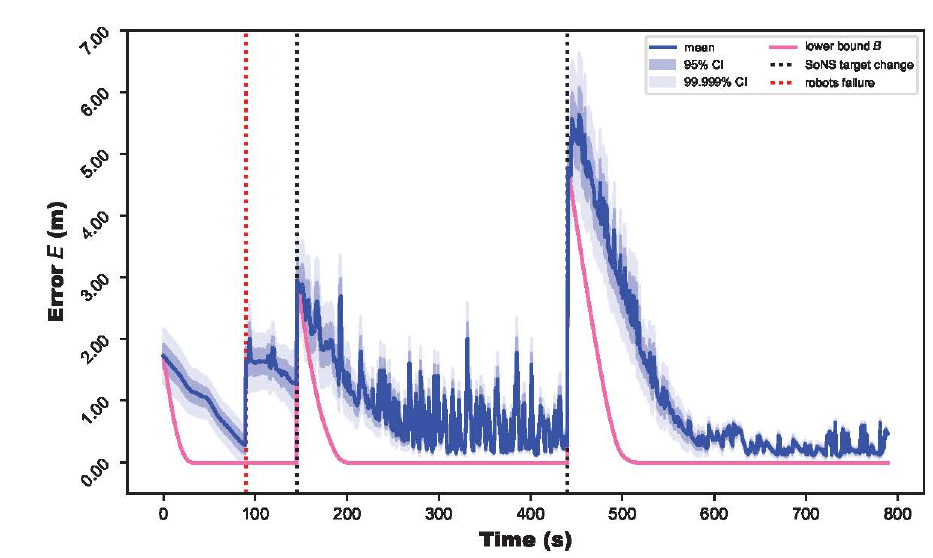}\\
\includegraphics[trim=120 60 120 70,clip,width=0.38\textwidth]{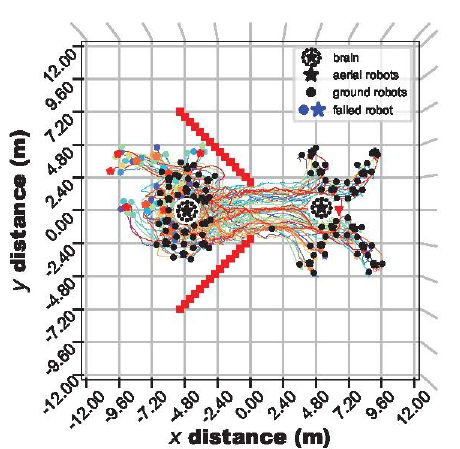}
\includegraphics[trim=20 0 180 20,clip,width=0.59\textwidth]{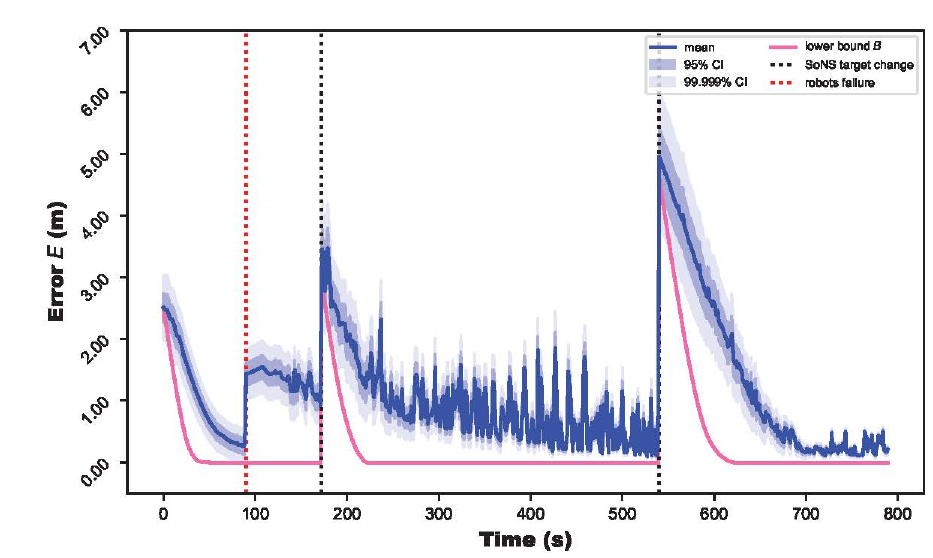}\\
\caption{{\bf Temporary system-wide communication failure: Example simulation trials of three different failure durations (from top to bottom, 0.5, 1.0, and 30~s).} 50 trials per duration were conducted in simulation, each with 65 robots.}
\label{fig:fault-variant4-simulation}
\end{figure}

\clearpage\rhead{}
\section*{Section \ref*{SM:cross-verify}. Cross-verification of the SoNS in simulation and on real hardware}
\lhead{Section \ref*{SM:cross-verify}. Cross-verification}
To cross-verify the SoNS behaviors in the simulator with those of the real robots, we ran simulation experiments (50 trials per setup) with the same setups as six of the experiments with real robots. This section provides results from an example simulation trial of each of the cross-verification setups as well as statistical comparisons of the 50 simulation trials and five real robot trials of the matching setup. The experiment data for all trials of all setups is available in the online data repository.

\subsection*{Example simulation trials that match real robot setups}

For each example trial included in this section, we provide the following results: (on the left) trajectories of the robots over time, with the final (and sometimes the initial and/or an intermediary) SoNS indicated in black; and (on the right) the mean and confidence interval per robot of the actuation error $E$ over time (see Eq.~1 in Sec.~4.2 in the main paper), with the lower bound $B$ (see Eq.~2 in Sec.~4.2 in the main paper) plotted for reference.

\begin{figure}[h!]
\centering
\includegraphics[trim=120 60 120 80,clip,width=0.38\textwidth]{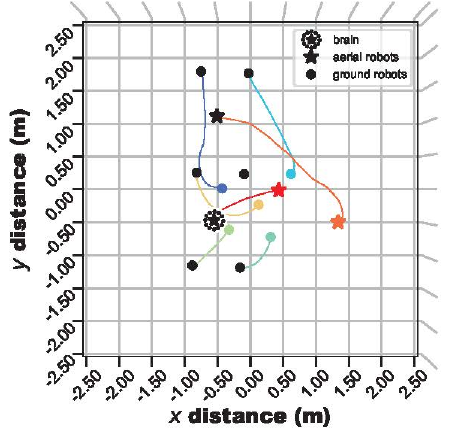}
\includegraphics[trim=20 0 40 20,clip,width=0.59\textwidth]{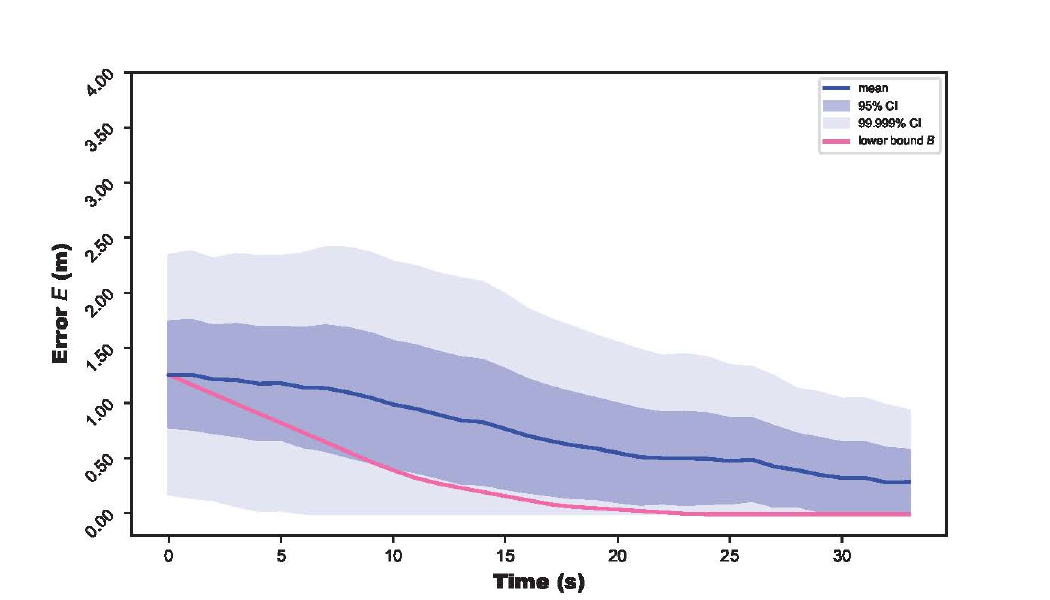}
\caption{{\bf Establishing self-organized hierarchy, clustered start:} Example simulation trial with eight robots that matches the setup of the real robot trials shown in Fig.~\ref{fig:mission1-variant1-hardware}.}
\label{fig:mission1-variant1-crossverify}
\end{figure}

\vspace{7mm}
\noindent
{\it (Section continued on next page.)}

\clearpage
\rhead{Example simulation trials}

\begin{figure}[h!]
\centering
\includegraphics[trim=120 60 120 80,clip,width=0.38\textwidth]{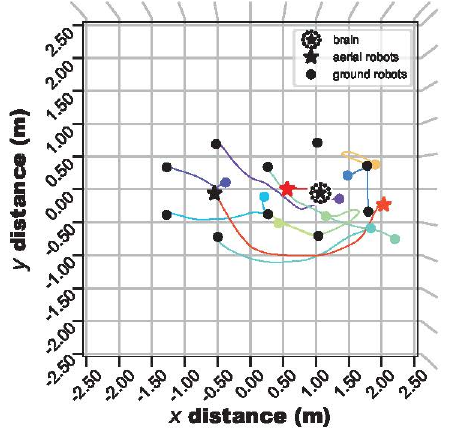}
\includegraphics[trim=20 0 40 20,clip,width=0.59\textwidth]{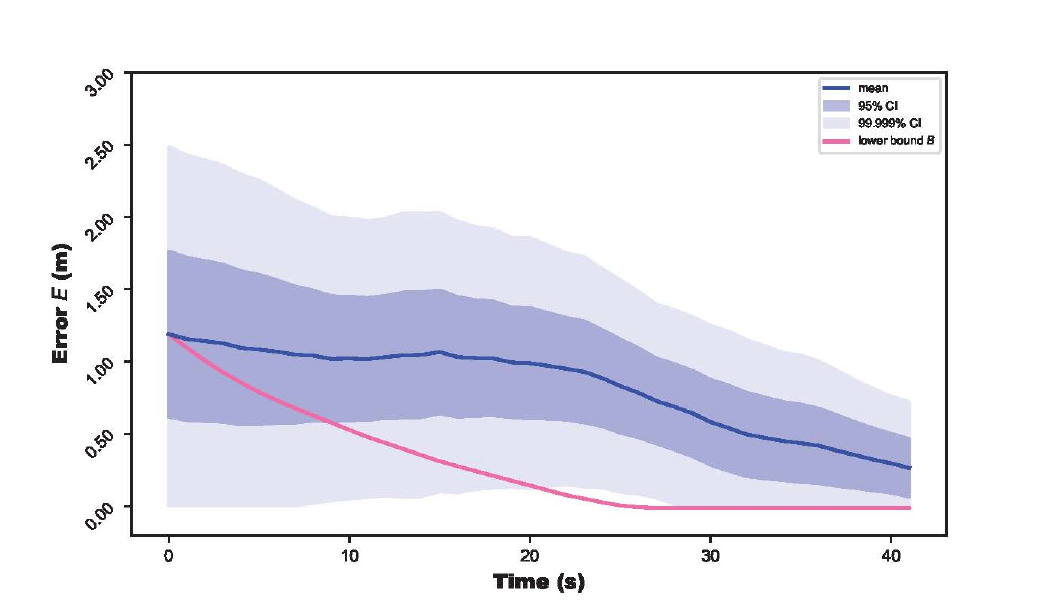}
\caption{{\bf Establishing self-organized hierarchy, scattered start:} Example simulation trial with 12 robots that matches the setup of the real robot trials shown in Fig.~\ref{fig:mission1-variant2-hardware}.}
\label{fig:mission1-variant2-crossverify}
\end{figure}

\begin{figure}[h!]
\centering
\includegraphics[trim=120 60 120 80,clip,width=0.38\textwidth]{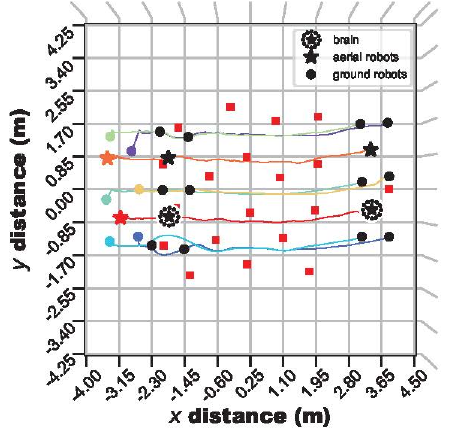}
\includegraphics[trim=20 0 40 20,clip,width=0.59\textwidth]{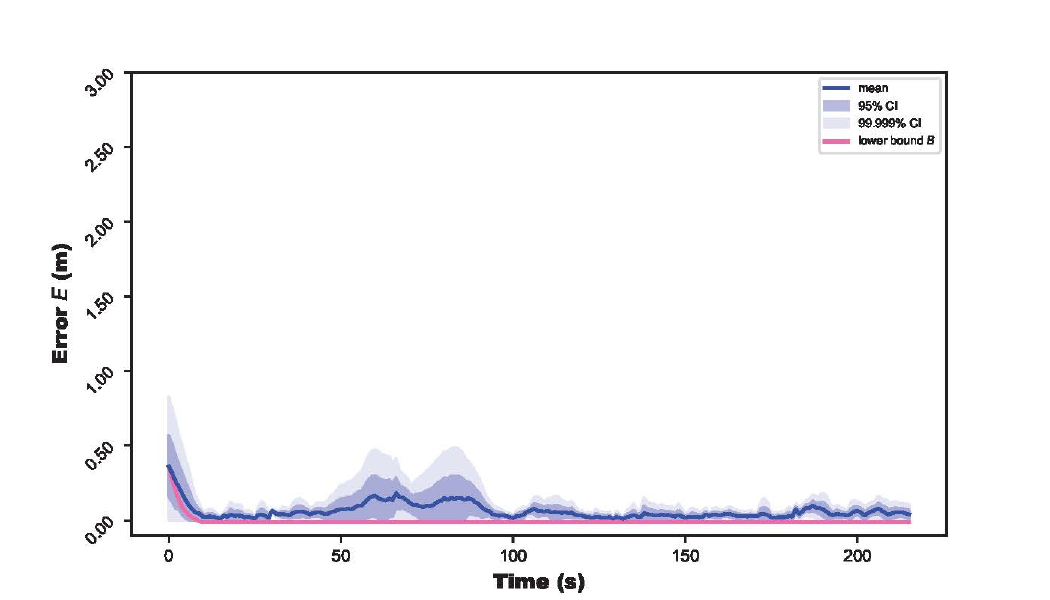}
\caption{{\bf Balancing global and local goals, with smaller, denser obstacles:} Example simulation trial with eight robots that matches the setup of the real robot trials shown in Fig.~\ref{fig:mission2-variant1-hardware}.}
\label{fig:mission2-variant1-crossverify}
\end{figure}

\vspace{7mm}
\noindent
{\it (Section continued on next page.)}

\clearpage

\begin{figure}[h!]
\centering
\includegraphics[trim=120 60 120 80,clip,width=0.38\textwidth]{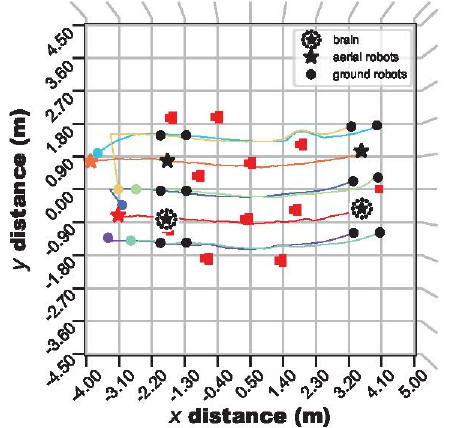}
\includegraphics[trim=20 0 40 20,clip,width=0.59\textwidth]{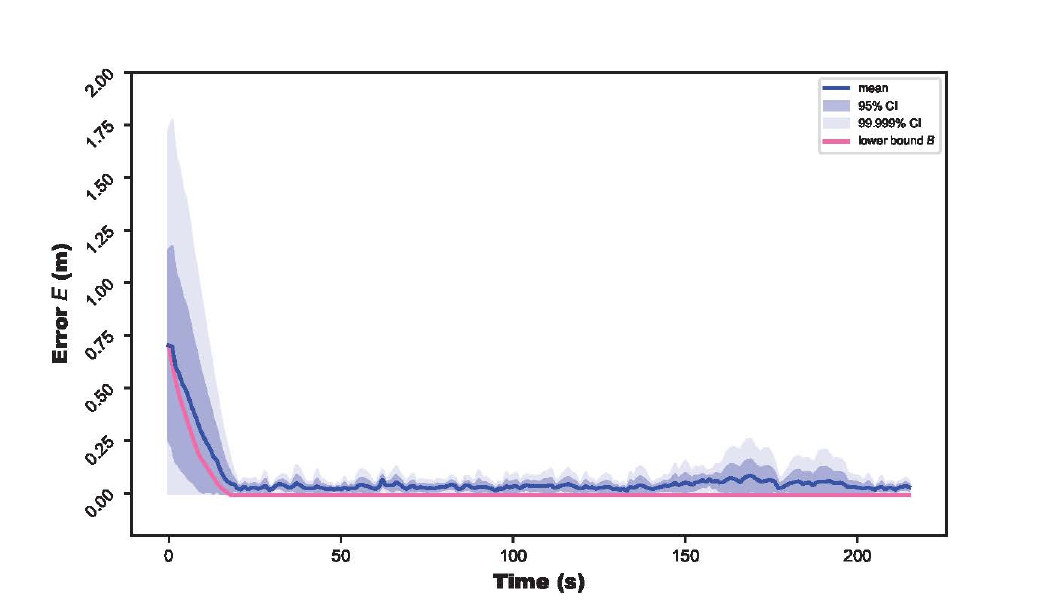}
\caption{{\bf Balancing global and local goals, with larger, less dense obstacles:} Example simulation trial with eight robots that matches the setup of the real robot trials shown in Fig.~\ref{fig:mission2-variant2-hardware}.}
\label{fig:mission2-variant2-crossverify}
\end{figure}

\begin{figure}[h!]
\centering
\includegraphics[trim=120 60 120 80,clip,width=0.38\textwidth]{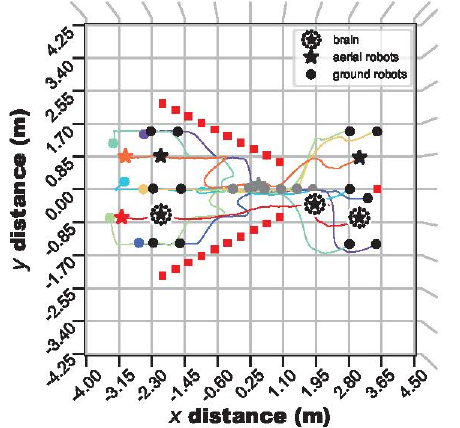}
\includegraphics[trim=20 0 40 20,clip,width=0.59\textwidth]{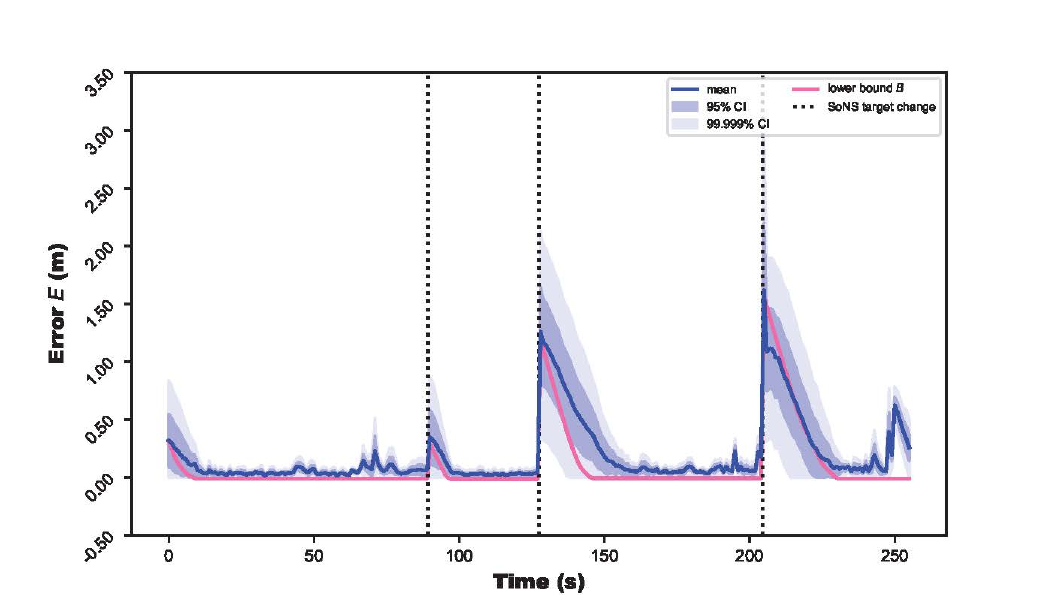}
\caption{{\bf Collective sensing and actuation:} Example simulation trial with eight robots that matches the setup of the real robot trials shown in Fig.~\ref{fig:mission3-hardware}.}
\label{fig:mission3-crossverify}
\end{figure}

\vspace{7mm}
\noindent
{\it (Section continued on next page.)}

\clearpage

\begin{figure}[h!]
\centering
\includegraphics[trim=120 60 120 80,clip,width=0.38\textwidth]{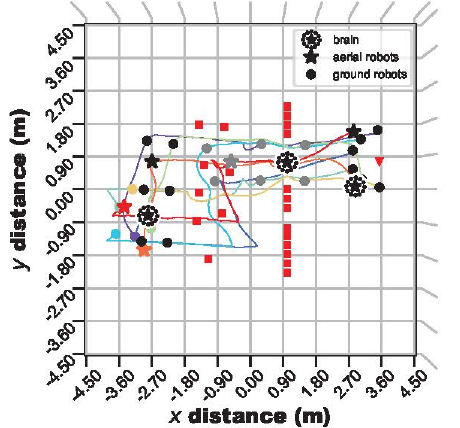}
\includegraphics[trim=20 0 40 20,clip,width=0.59\textwidth]{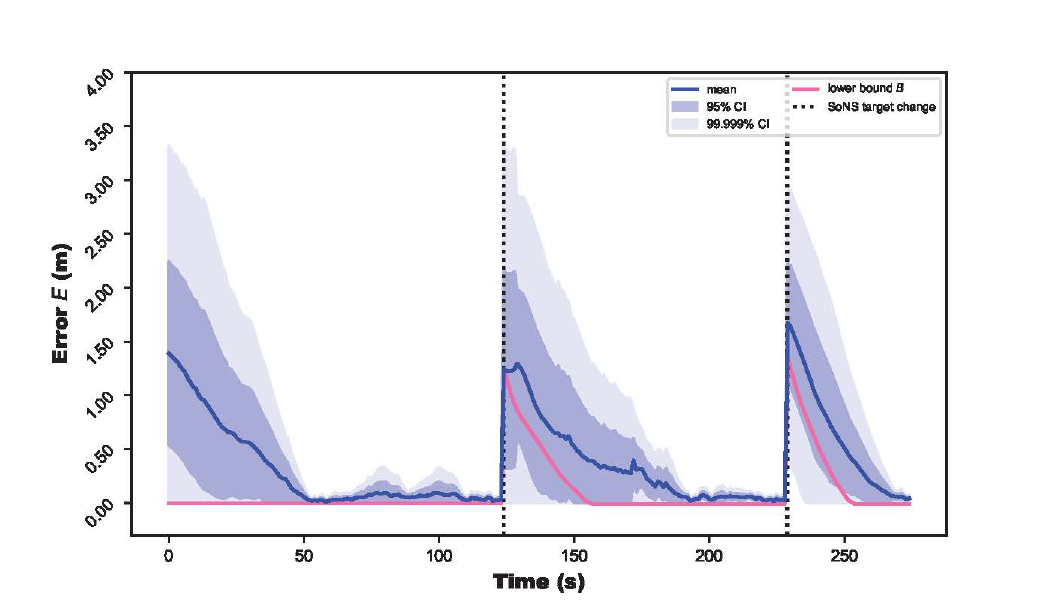}\\
\caption{{\bf Binary decision making:} Example simulation trial that matches the setup of the real robot trials shown in Fig.~\ref{fig:mission4-hardware}.}
\label{fig:mission4-crossverify}
\end{figure}

\vspace{10mm}

\subsection*{Comparisons of trials with real robots and trials in simulation}

The SoNS behaviors observed in the cross-verification simulation trials are very similar to those observed in their real-robot counterparts (see Figs~\ref{fig:mission1-variant1-crossverify}--\ref{fig:mission4-crossverify}). However, there is noticeably higher error in the real robot experiments than in the simulations. To assess the overall difference in error, we provide violin plots and Q--Q plots (quantile--quantile plots) comparing the real robot trials to their matching simulations (see Figs.~\ref{fig:violin-crossverify},~\ref{fig:qq-crossverify}).

It can be seen in the comparative violin plots that the difference in mean error between the real robots and simulation is very low (always less than $E = 0.5$\,m, usually less than $E = 0.25$\,m), but the error in the real robots is greater overall. It can also be seen that the biggest difference in mean error, between the real robots and simulation, occurs in the missions to establish self-organized hierarchy. Indeed, this is expected, as a large portion of experiment time in this mission is spent with the robots in a steady phase, and therefore with the aerial robots attempting to hover in place. Much of the difference in error here can be attributed to the simulation model of the aerial robot, rather than to the SoNS behaviors.
These observations are further confirmed by the Q--Q plots, which show a great difference in the probability distributions of the error between the real robots and simulation in the missions to establish self-organized hierarchy. In the other mission types, the real robots indeed always show greater error, but are much more similar to the simulations overall.

\vspace{7mm}
\noindent
{\it (Section continued on next page.)}

\clearpage

\rhead{Comparisons of simulation and reality}

\begin{figure}[h!]
\centering
\includegraphics[width=0.75\textwidth]{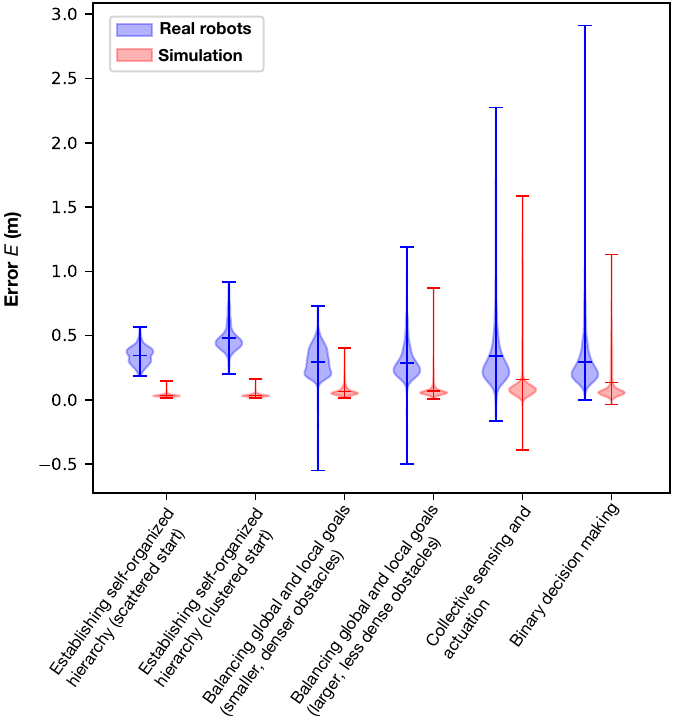}
\caption{{\bf Comparison of the real robot trials to their matching simulations.} Violin plots of the actuation error $E$ (mean and 95\% confidence interval per robot per second) in all trials of each experiment type.}
\label{fig:violin-crossverify}
\end{figure}

\vspace{7mm}
\noindent
{\it (Section continued on next page.)}

\clearpage

\begin{figure}[h!]
\centering
\subfigure[Establishing self-organized hierarchy, clustered start]{
\includegraphics[width=0.4\textwidth]{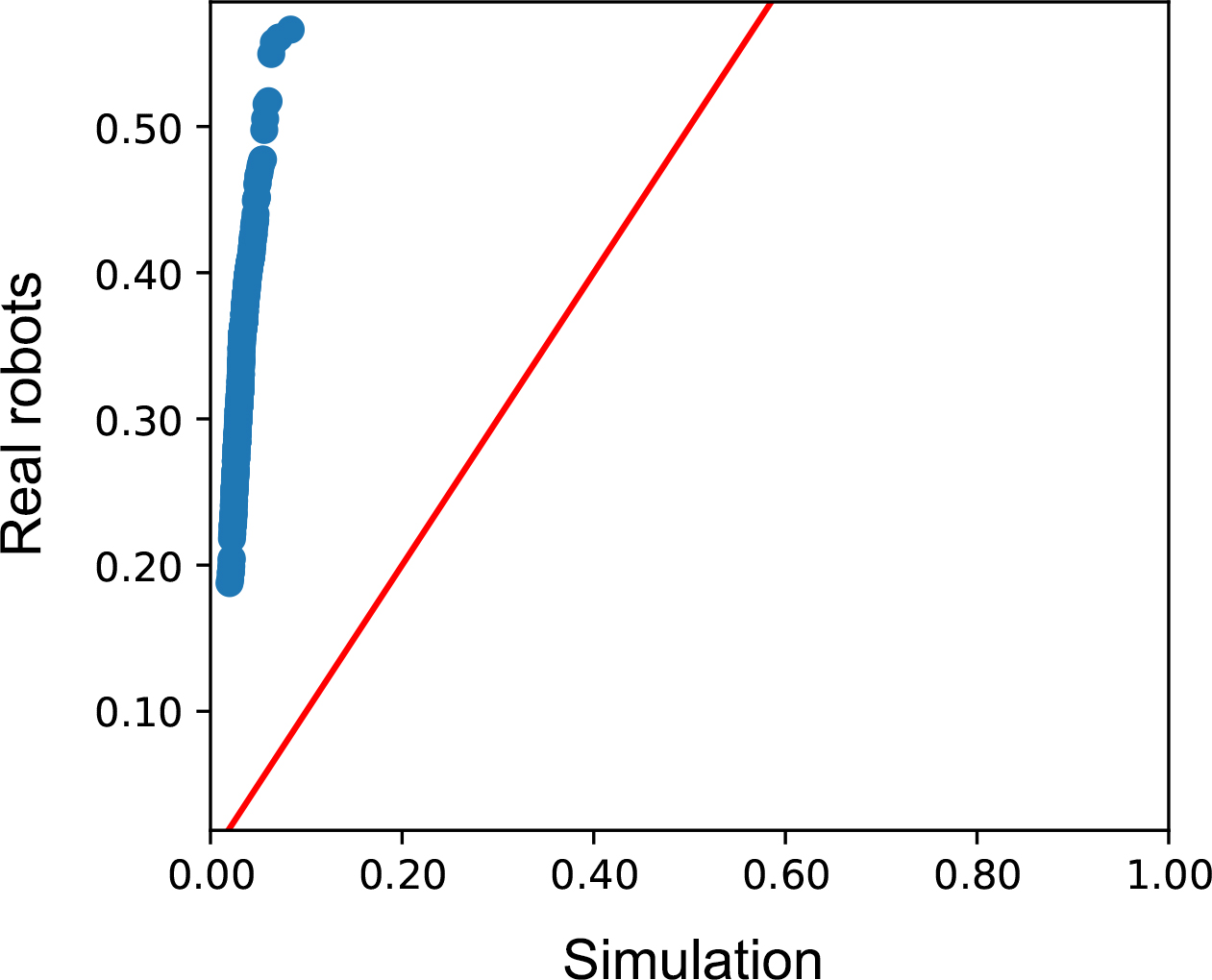}}
\hspace{5mm}
\subfigure[Establishing self-organized hierarchy, scattered start]{
\includegraphics[width=0.4\textwidth]{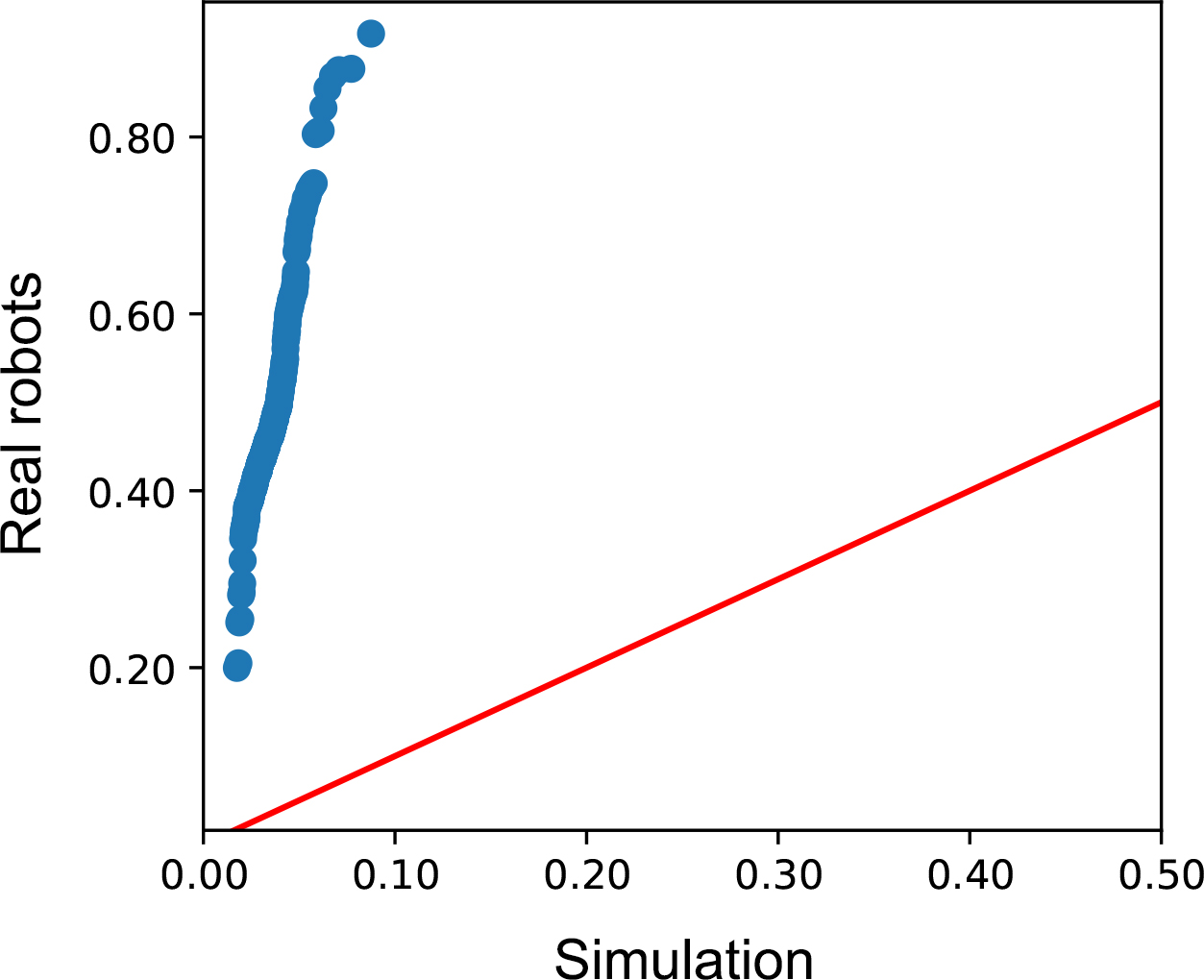}}
\subfigure[Balancing global and local goals, with smaller, denser obstacles]{
\includegraphics[width=0.4\textwidth]{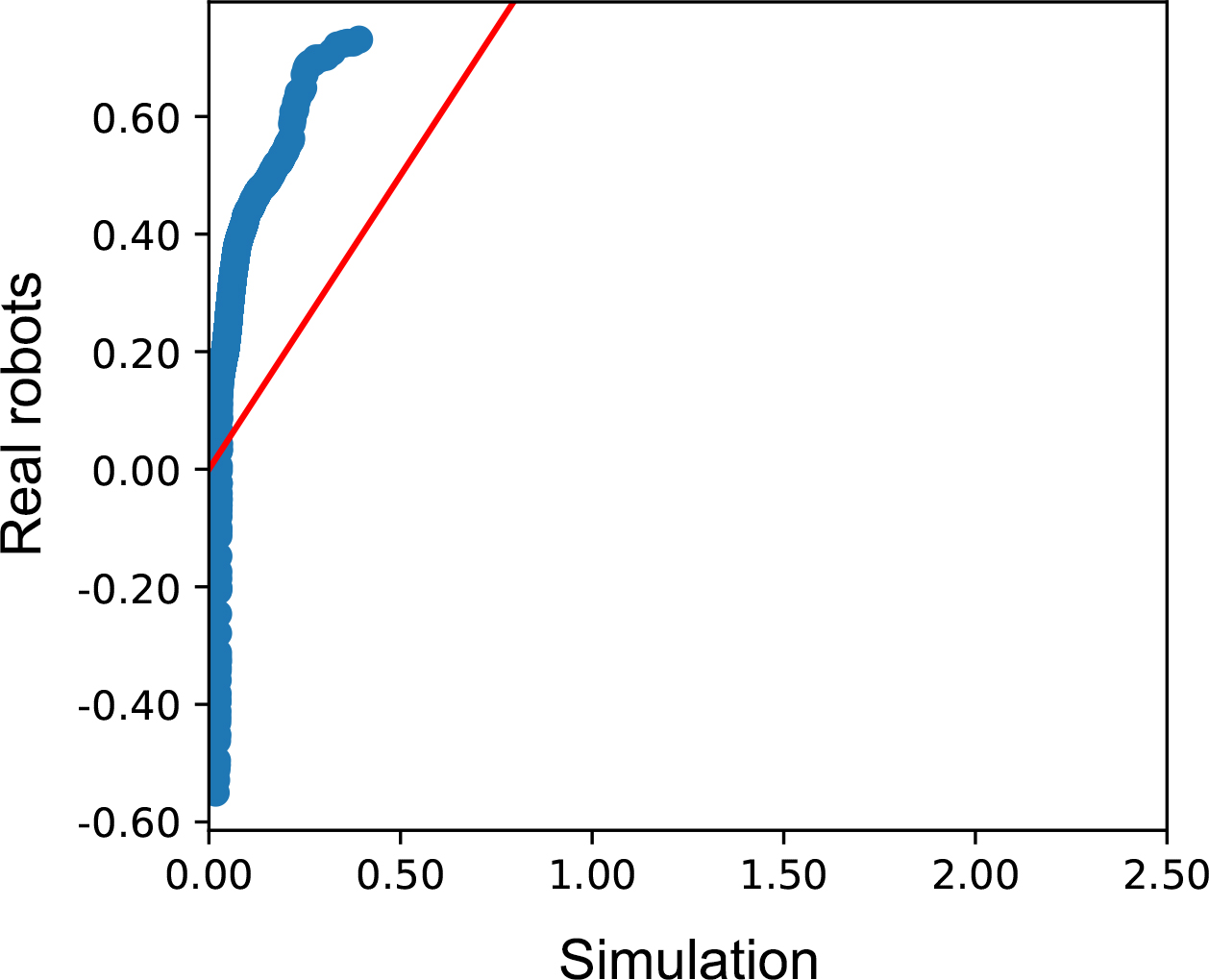}}
\hspace{5mm}
\subfigure[Balancing global and local goals, with larger, less dense obstacles]{
\includegraphics[width=0.4\textwidth]{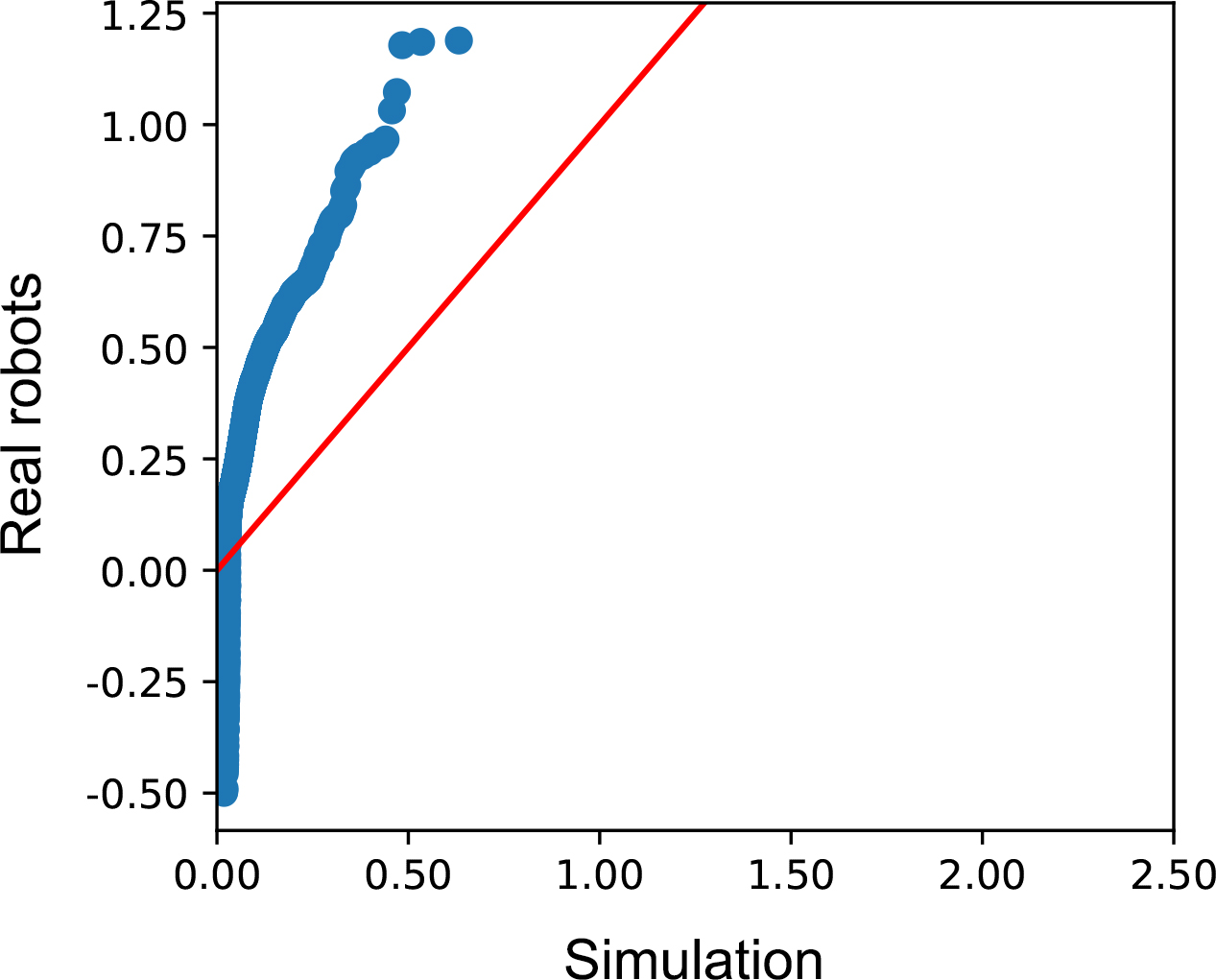}}
\subfigure[Collective sensing and actuation]{
\includegraphics[width=0.4\textwidth]{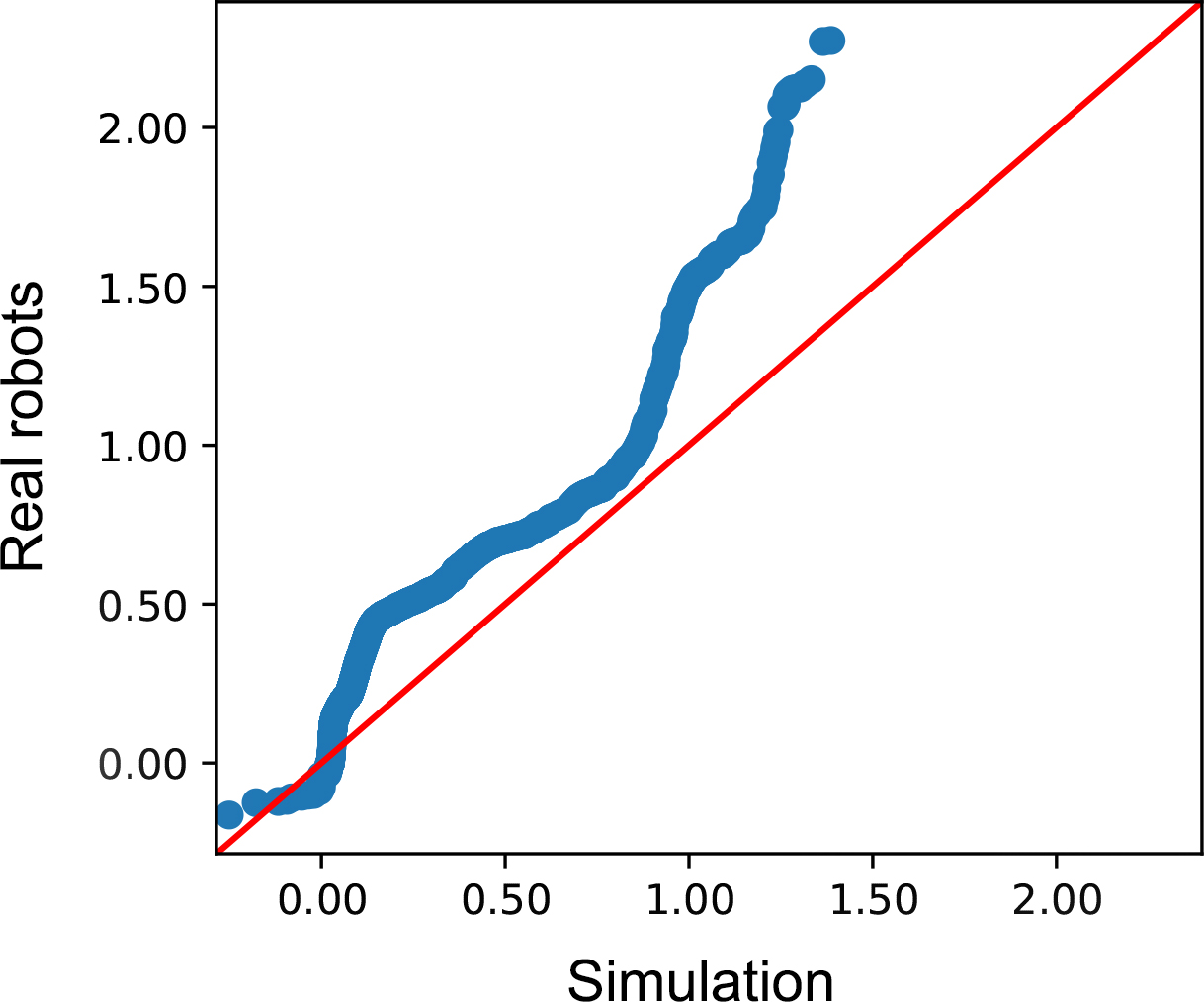}}
\hspace{5mm}
\subfigure[Binary decision making]{
\includegraphics[width=0.4\textwidth]{qqplot-5.jpg}}
\caption{{\bf Comparison of the real robot trials to their matching simulations.} Q--Q plots of the probability distributions of actuation error $E$ in all trials of the specified type.}
\label{fig:qq-crossverify}
\end{figure}

\clearpage\rhead{}
\section*{Section \ref*{SM:theory}. SoNS theoretical analysis and mathematical proofs}
\lhead{Section \ref*{SM:theory}. Theoretical analysis}
This section presents theoretical analyses and mathematical proofs related to the tracking of desired relative positions in a SoNS. We analyze convergence and stability with motionless and moving SoNS-brain robots. 

For analysis, we represent a SoNS of $n$ robots as a multi-robot formation and decompose it into $n-1$ sub-formations of leader-follower pairs.
To study the interactions and stability bounds of a SoNS, we analyze the local leader-follower formation tracking problems of these pairs. We use a standard reactive control law~\cite{oh2011formation, krick2009stabilisation} to generate appropriate inputs for the follower robot, in order to maintain the desired relative position with respect to its leader. This strictly reactive control law is the same approach we use in our real robot experiments; it establishes a performance baseline for the tracking of target positions within a SoNS.
We assess the convergence and closed-loop stability of position tracking in the system, utilizing the \textit{leader-to-formation stability} notion~\cite{tanner2004leader}. 
At the end of this section, we discuss other types of control laws that could be combined with the SoNS architecture to improve performance beyond the baseline established using a strictly reactive control law.

\subsection*{Modeling}
Consider a SoNS with $n$ robots (including both aerial robots and ground robots). The translational motion in $\mathbb{R}^2$ space of a robot $R_i$, $i \in \left\{ {1,2, \ldots ,n} \right\}$, is governed by~\cite{oh2011formation}:
\begin{equation}
\Dot{\Vec{p}}_i = \Vec{u}_i,
\label{eq:Motion_model}
\end{equation}
where $\Vec{p}_i = \begin{bmatrix} x_i ~~ y_i\end{bmatrix}^T \in \mathbb{R}^2$ is the absolute position in a global coordinate system $\mathcal{F}_\mathcal{I}$ used for analysis and $\Vec{u}_i \in \mathbb{R}^2$ is the control input.

\subsection*{Control of a leader-follower pair}

We first analyze the interactions of a single leader-follower pair, before extending to multiple leader-follower pairs.

\subsubsection*{Leader-follower kinematics}
We derive the leader-follower kinematics for a formation consisting of a single robot pair. Consider the leader-follower setup in Fig.~\ref{fig:LF_setup}, where $\Vec{p}_i = \begin{bmatrix} x_i & y_i \end{bmatrix}^T$ is the position of the leader robot $R_i$, ~$\Vec{p}_j = \begin{bmatrix} x_j & y_j\end{bmatrix}^T$ is the position of the follower robot $R_j$, $\Vec{d}_{ij} \in \mathbb{R}^2$ is the desired displacement of $R_j$ w.r.t. $R_i$ and is defined according to the control outputs of the SoNS software, all expressed in the global coordinate frame used for the analysis. 
From this setup, we can define $\Vec{z}_{ij} = \Vec{p}_i - \Vec{p}_j$ as the displacement of the follower $R_j$ with respect to the leader $R_i$, $\| \Vec{z}_{ij} \| \in \mathbb{R} $ as the Euclidean norm of that displacement, $\| \Vec{d}_{ij}\| = \| \Vec{d}_{ji}\| \in \mathbb{R}$ as the Euclidean norm of the desired displacement $\Vec{d}_{ij}$,
$\Vec{p}_j^{~d} = \Vec{p}_i - \Vec{d}_{ij}$ as the desired position of $R_j$, and $\Vec{e}_{ij} = \Vec{z}_{ij} - \Vec{d}_{ij} \in \mathbb{R}^2$ as the formation tracking error.

With this notation, the kinematics of a leader-follower pair can be defined as
\begin{equation}
    \begin{aligned}
        \Dot{\Vec{e}}_{ij} &= \Dot{\Vec{p}}_i - \Dot{\Vec{p}}_j \\
        &= \Vec{u}_i - \Vec{u}_j
    \end{aligned}~,
    \label{eq:LF_kinematics}
\end{equation}~.

\begin{remark}
Note that robots in a SoNS are assumed to not have access to any global position information or external reference frame. Each robot can only access the relative positions $\Vec{z}_{ij}$ of its neighbors $j \in \mathcal{N}_i$, with respect to its own local reference frame. Global position information is used exclusively for analysis.
\end{remark}

\begin{figure}[h]
    \centering
    \includegraphics[width=0.445\textwidth]{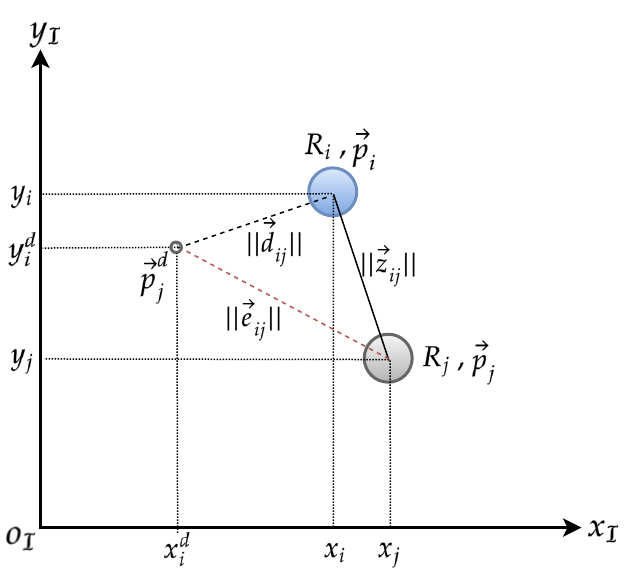}
    \caption{An example leader-follower pair.
    }
    \label{fig:LF_setup}
\end{figure}

\subsubsection*{Problem formulation}
The control law that generates inputs for the follower robot to move from its current position $\Vec{p}_j$ to the desired position $\Vec{p}_j^{~d}$ can be expressed as follows.

\begin{problem} 
The goal is to ensure that the formation tracking of the follower robot $R_j$ adheres to the motion model given in Eq.~\eqref{eq:Motion_model} and conforms to the leader-follower kinematics given in Eq.~\eqref{eq:LF_kinematics}, using the following control law $\Vec{u}_j$ for the follower robot: 
 \begin{equation}
           \Vec{u}_j = f(t, \Vec{z}_{ij}, \Vec{d}_{ij}) \in \mathbb{R}^2,
     \label{eq:Control_input}
    \end{equation}
    where $\Vec{z}_{ij}$ is the displacement with respect to the leader and $\Vec{d}_{ij}$ is the desired displacement. The function $f(\cdot)$ generates the required control inputs to move the follower robot from its current position $\Vec{p}_j$ to the desired position $\Vec{p}_j^{~d}$, such that the norm of the formation tracking error $\|\Vec{e}_{ij}\| \to 0$ and $\|\Vec{z}_{ij}\| \to \|\Vec{d}_{ij}\| $.
    \label{pr:Problem_1}
\end{problem}

\rhead{Leader-follower pair}

\subsubsection*{Control law design and analysis}
Based on the leader-follower kinematics given in Eq.~\eqref{eq:LF_kinematics}, we use the following standard reactive controller~\cite{oh2011formation, krick2009stabilisation} for the follower robot $R_j$:
\begin{equation}
    \begin{split}
        \Vec{u}_j &=   \boldsymbol{K}^j(\Vec{z}_{ij} - \Vec{d}_{ij}) \\
                 &= \boldsymbol{K}^j\Vec{e}_{ij}
    \end{split}~,
    \label{eq:Control_law}
\end{equation}
where $\boldsymbol{K}^j = (\boldsymbol{K}^j)^T = \begin{bmatrix} k_1^j & 0 \\  0 & k_2^j  \end{bmatrix} $ is a symmetric positive-definite matrix, the constants $k_1^j$ and $k_2^j$ are control gains, and $\Vec{u}_j \in \mathbb{R}^2$ is the input for maintaining the desired displacement $\|\Vec{d}_{ij}\|$ of the follower with respect to the leader. The control law is independent of any global position information; it uses only relative position information between the follower and the leader, making it independent of any global position information.

\subsubsection*{Input-to-state stability and bounding of formation tracking errors}
We first analyze the stability properties of the leader-follower kinematics Eq.~\eqref{eq:LF_kinematics} in the case of zero external input based on the following definition of Lyapunov stability.

\begin{definition}(\textbf{Exponential Stability}~\cite{khalil2002nonlinear})
For a time-invariant system $\Dot{x} = f(x)$ where $f:\mathcal{D} \subset \mathbb{R}^n \to \mathbb{R}^n $ is a locally Lipschitz function and $x = 0 \in \mathcal{D}$ is an equilibrium point of the system, exponential stability can be established through the use of a Lyapunov function $V(x)$ that satisfies the following conditions:
    \begin{itemize}
        \item $V(x)$ is $\mathcal{C}^1$ (i.e., continuously differentiable),
        \item $c_1\|x\|^a \leq V(x) \leq c_2\|x\|^a$,
        \item and $\Dot{V}(x) \leq -c_3\|x\|^a$,
    \end{itemize}
    where $a > 0$, $c_1 > 0$, $c_2 > 0$, and $c_3 > 0$. 
\end{definition}

\begin{theorem} 
For a leader-follower robot pair with the motion model Eq.~\eqref{eq:Motion_model}, an initial condition $\Vec{p}_i(0) \neq \Vec{p}_j(0)$, and control input $\Vec{u}_{i} = 0$, the controller Eq.~\eqref{eq:Control_law} guarantees exponential stability of $\Vec{e}_{ij} \to 0$ as $t \to \infty$, in the system given in Eq.~\eqref{eq:LF_kinematics}.
\label{tr:Theorem_1}
\end{theorem}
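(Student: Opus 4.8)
The plan is to reduce the coupled leader--follower kinematics to a simple autonomous linear error system and then certify exponential stability with a quadratic Lyapunov function whose three properties map directly onto the bullet conditions of the Exponential Stability definition above. First I would substitute the control law Eq.~\eqref{eq:Control_law} together with the hypothesis $\Vec{u}_i = 0$ into the kinematics Eq.~\eqref{eq:LF_kinematics}, obtaining the closed-loop error dynamics
\begin{equation}
    \Dot{\Vec{e}}_{ij} = \Vec{u}_i - \Vec{u}_j = -\boldsymbol{K}^j \Vec{e}_{ij}.
    \label{eq:closed_loop}
\end{equation}
Because $\boldsymbol{K}^j$ is constant and symmetric positive-definite, this is a linear time-invariant system whose unique equilibrium is $\Vec{e}_{ij} = 0$, which is exactly the target $\|\Vec{e}_{ij}\| \to 0$ of Problem~\ref{pr:Problem_1}.

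Next I would introduce the candidate Lyapunov function $V(\Vec{e}_{ij}) = \tfrac{1}{2}\Vec{e}_{ij}^T \Vec{e}_{ij} = \tfrac{1}{2}\|\Vec{e}_{ij}\|^2$ and verify the three conditions in turn. It is $\mathcal{C}^1$ by construction; the sandwich bound $c_1\|\Vec{e}_{ij}\|^a \leq V \leq c_2\|\Vec{e}_{ij}\|^a$ holds with $a = 2$ and $c_1 = c_2 = \tfrac{1}{2}$, since $V$ is exactly $\tfrac{1}{2}\|\Vec{e}_{ij}\|^2$. For the decrease condition I would differentiate $V$ along the closed loop Eq.~\eqref{eq:closed_loop}, giving $\Dot{V} = \Vec{e}_{ij}^T \Dot{\Vec{e}}_{ij} = -\Vec{e}_{ij}^T \boldsymbol{K}^j \Vec{e}_{ij}$, and then bound this quadratic form from below by the smallest eigenvalue of the gain matrix.

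The one step that needs care -- the closest thing here to a genuine obstacle -- is precisely this last bound. Since $\boldsymbol{K}^j = \mathrm{diag}(k_1^j, k_2^j)$ is symmetric positive-definite, I would invoke the Rayleigh-quotient inequality $\Vec{e}_{ij}^T \boldsymbol{K}^j \Vec{e}_{ij} \geq \lambda_{\min}(\boldsymbol{K}^j)\|\Vec{e}_{ij}\|^2$ with $\lambda_{\min}(\boldsymbol{K}^j) = \min\{k_1^j, k_2^j\} > 0$, yielding $\Dot{V} \leq -c_3\|\Vec{e}_{ij}\|^2$ with $c_3 = \lambda_{\min}(\boldsymbol{K}^j)$. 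All three conditions then hold, so the definition delivers exponential stability and hence $\Vec{e}_{ij}(t) \to 0$ as $t \to \infty$. As an independent cross-check that is not strictly required, I would note that Eq.~\eqref{eq:closed_loop} admits the explicit solution $\Vec{e}_{ij}(t) = e^{-\boldsymbol{K}^j t}\Vec{e}_{ij}(0)$, whose two components decay like $e^{-k_1^j t}$ and $e^{-k_2^j t}$, confirming the exponential rate directly. The hypothesis $\Vec{p}_i(0) \neq \Vec{p}_j(0)$ plays no essential role beyond signaling a nontrivial initial displacement, as convergence is guaranteed from any initial error.
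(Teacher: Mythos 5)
Your proof is correct and takes essentially the same route as the paper's: substitute the control law into the kinematics, take the quadratic Lyapunov function $V = \tfrac{1}{2}\Vec{e}_{ij}^T\Vec{e}_{ij}$, and bound $\Dot{V}$ below zero via the smallest gain $\min(k_1^j,k_2^j)$ of $\boldsymbol{K}^j$. The only difference is cosmetic: the paper carries a general leader input $\Vec{u}_i$ through the computation (so the same inequality can be reused in its subsequent ISS analysis) and substitutes $\Vec{u}_i = 0$ only at the very end, whereas you specialize immediately, and your sandwich constants $c_1 = c_2 = \tfrac{1}{2}$ (with decay constant $\lambda_{\min}(\boldsymbol{K}^j)$) are in fact stated more cleanly than the paper's.
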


\begin{proof}
   By substituting the control law Eq.~\eqref{eq:Control_law} into the motion model Eq.~\eqref{eq:Motion_model}, we obtain the following closed-loop system:
    \begin{equation}
        \begin{aligned}
             \Dot{\Vec{p}}_j =  \boldsymbol{K}^j\Vec{e}_{ij}
        \end{aligned}\,. 
        \label{eq:Closed-loop}
    \end{equation} 
The leader-follower kinematics Eq.~\eqref{eq:LF_kinematics} can then be expressed as
    \begin{equation}
        \Dot{\Vec{e}}_{ij} = -\boldsymbol{K}^j\Vec{e}_{ij} + \Vec{u}_i \,.
        \label{eq:Closed-loop_error}
    \end{equation}
We consider the Lyapunov function candidate $V_1 = \frac{1}{2}\Vec{e}_{ij}^T\Vec{e}_{ij}$ that satisfies the following condition:
    \begin{equation}
        \begin{aligned}
           c_1^j\|\Vec{e}_{ij}\|\leq V_1  \leq c_2^j\|\Vec{e}_{ij}\| \\
        \end{aligned},
        \label{eq:Lyapunov}
    \end{equation}
where $a_j = 2$, ~$0 < c_1^j = \min(k_1^j, k_2^j)$, and $ 0 < c_2^j = \max(k_1^j, k_2^j)$. Note that the constants $k_1^j$ and $k_2^j$ are the gains of the matrix $\boldsymbol{K}^j$ defined in Eq.~\eqref{eq:Control_law}. 
Then, differentiating Eq.~\eqref{eq:Lyapunov} with respect to time yields
    \begin{equation}
        \begin{aligned}
          \Dot{V}_1 &=\phantom{-}\Vec{e}_{ij}^T\Dot{\Vec{e}}_{ij} \\
          &=\phantom{-}\Vec{e}_{ij}^T(\Vec{u}_i  - \boldsymbol{K}^j\Vec{e}_{ij})\\
          &=-\Vec{e}_{ij}^T\boldsymbol{K}^j\Vec{e}_{ij} + \Vec{e}_{ij}^T\Vec{u}_i \\
          & \leq -2c_1^j\|\Vec{e}_{ij}\|^2 + \|\Vec{e}_{ij}\|\|\Vec{u}_i\| \leq -c_3^j\| \Vec{e}_{ij} \|^2 \leq 0 , ~ \forall ~ \|\Vec{e}_{ij}\| \geq \frac{\|\Vec{u}_i\|}{2c_1^j\theta}~,
        \end{aligned} 
        \label{eq:Lyapunov_der}
    \end{equation}
where $c_3^j \triangleq 2c_1^j(1 - \theta), ~ \theta \in (0, 1)$.The proof is concluded by substituting $\Vec{u}_i$.
\label{prf:Proof_1}
\end{proof} 
With minor modification and following the lines of \cite{seibert1990global}, the same proof can be used also to show that if the control input $\Vec{u}_{i} \to 0$ then the formation tracking error $\|\Vec{e}_{ij}\|$ converges to $0$. 

If the leader is moving with other velocity regimes, then there is a lower bound that the formation tracking error can attain, according to $\Vec{u}_{i}$.
In order to ensure that the quadrotors maintain stability within some flight safety requirements, it is crucial to bound the error amplitudes in the worst-case scenario. We use the input-to-state stability (ISS) notion~\cite{sontag1995input,isidori1985nonlinear, khalil2002nonlinear} to establish an upper limit for the formation tracking error and an upper limit for the admissible leader input $\Vec{u}_i$ that can maintain flight safety at all times. We then use the analysis of the formation's ISS to establish a link between the magnitude of the leader's input and the evolution of the formation tracking errors, i.e., the error dynamics given in Eq.~\eqref{eq:Closed-loop_error}.

\begin{definition} (\textbf{Input-to-State-Stability}~\cite{sontag1995input})
Let a leader-follower pair be input-to-state stable. Then, there is a class $\mathcal{KL}$ function $\beta$ and a class $\mathcal{K}$ function $\gamma$ such that, for any initial formation tracking error $\Vec{e}_{ij}(0)$ and for any bounded input of the leader $\Vec{u}_i(t)$, the solution $\Vec{e}_{ij}(t)$ exists for all $0 \leq t$ and satisfies the following inequality~\cite{tanner2004leader}:
 \begin{equation}
    \|\Vec{e}_{ij}(t)\| \leq \beta_{ij}(q,t) + \gamma_{ij} \left( r \right),
    \label{eq:ISS}
\end{equation}
where the functions $\beta_{ij}(q,t)$ and $\gamma_{ij}(r)$ are transient and asymptotic ISS gain functions, respectively. These functions help to measure the impact of initial conditions and the leader's input on the formation tracking errors~\cite{tanner2002effect}.
\label{definition1}
\end{definition}

In order to further analyze the ISS properties of the leader-follower pair, we treat the error dynamics given by Eq.~\eqref{eq:Closed-loop_error} as a perturbed system and derive an upper bound on the error norm $\|\Vec{e}_{ij}(t) \|$. This bound provides a measure of the rate at which the formation tracking error converges and indicates that it is ISS with respect to the leader's velocity $\Vec{u}_i$~\cite{tanner2002effect, khalil2002nonlinear, isidori1985nonlinear}. Using the initial error norm $\|\Vec{e}_{ij}(0)\|$ we can then rewrite the inequality Eq.~\eqref{eq:ISS} as
\begin{equation}
\begin{aligned}
         \|\Vec{e}_{ij}(t)\| &\leq  \beta_{ij}(\|\Vec{e}_{ij}(0)\|,t) + \gamma_{ij} \left( \underset{0 \leq \tau \leq t}{\sup} \|\Vec{u}_i(\tau)\| \right) \\
                             &\leq \hat{\beta}_{ij}\|\Vec{e}_{ij}(0)\| e^{-\frac{c_3^j}{c_2^ja_j}t} +  \hat{\gamma}_{ij} \underset{\tau \leq t}{\sup} \|\Vec{u}_i(\tau)\|
\end{aligned},
        \label{eq:ISS_gains}
\end{equation}
where the terms $\hat{\beta}_{ij}$ and $\hat{\gamma}_{ij}$ are gain estimates that provide insight into the relationship between the initial error, the leader's input, and the observed interconnection errors observed. They are defined as
\begin{equation}
     \hat{\beta}_{ij} \triangleq \left( \frac{c_2^j}{c_1^j}\right)^\frac{1}{a_j} \hspace{-4mm}, ~~~~~ \hat{\gamma}_{ij} \triangleq  \frac{c_2^j}{c_1^j\theta}.
     \label{eq:ISS_est}
\end{equation}
By substituting the gain estimates from Eq.~\eqref{eq:ISS_est} into Eq.~\eqref{eq:ISS_gains}, we obtain
\begin{align}  
             \|\Vec{e}_{ij}(t)\| \leq \left( \frac{c_2^j}{c_1^j}\right)^\frac{1}{a_j} \|\Vec{e}_{ij}(0)\| e^{-\frac{c_3^j}{c_2^ja_j}t} + \frac{c_2^j}{c_1^j\theta}\underset{\tau \leq t}{\sup} \| 
 \Vec{u}_i(\tau)\| .
 \label{eq:ISS_gains2}
\end{align}

We use the formation ISS measure as a metric to help provide an upper bound on the leader's input and ensure that the formation remains within desired specifications. Additionally, we use it to compare the stability properties of different formation shapes and connection schemes.

\begin{definition} (\textbf{Formation ISS Measure}~\cite{tanner2002effect})
Consider a leader-follower pair that is ISS with gain functions $\beta_{ij}(q,t)$ and $\gamma_{ij}(r)$. Assume that $\gamma_{ij}(r) \in \mathcal{C}^1$ (i.e., it is continuously differentiable) and let $\mathcal{U} \subseteq \mathbb{R}^n$ be a compact neighborhood of the origin containing all $\Vec{u}_i \in \mathcal{U}$ that are of interest. The formation tracking error $\Vec{e}_{ij}$ always satisfies the following inequality if the leader-follower pair is ISS:
    \begin{equation}
        \lim_{t\to\infty} \|\Vec{e}_{ij}(t)\| \leq \gamma_{ij} \left( r \right).
    \end{equation}
If we consider a specification such as the first leader's input bounded inside a unit sphere where $r = 1$, we can derive the formation's ISS performance measure $P_{ISS} \in [0,~ 1]$, based on the performance measure of leader-to-follower stability $P_{LFS}$ in~\cite{tanner2004leader}, as follows:
\begin{equation}
        P_{ISS} \triangleq \frac{1}{\gamma_{ij}(1)}.
        \label{eq:formation_ISS}
    \end{equation}
\end{definition}

\subsubsection*{Numerical simulations: formation tracking error with bounded leader inputs}
Consider a leader-follower pair as depicted in Fig.~\ref{fig:LF_setup}, where the initial positions are $\Vec{p}_i(0) = [5,~10]^T$ and $\Vec{p}_j(0) = [0,~0]^T$, the desired displacement vector is given as $\Vec{d}_{ij} = [3,~4]^T$, and the controller gain matrix $\boldsymbol{K}^j$ has $k_1^j = 5$ and $k_2^j = 5$.  

\begin{remark}
    In the real-robot and simulated experiments in this study, we use a strictly reactive control law; there is no feedforward control nor preview of the reference signal. We use this reactive control with the aim of establishing a performance baseline for the tracking of target positions within a SoNS---in other words, to study the lower bounds of performance that the formation tracking error can attain. For a discussion of incorporating feedforward information in SoNS for improved performance, see the end of this section.
\end{remark}

Under a stationary leader (i.e., $\Vec{u}_i = [0,0]$), the formation tracking error exponentially converges to zero (see Fig.~\ref{fig:ErrorNorm}a). The upper bound defined in Eq.~\eqref{eq:ISS_gains} decays with respect to the initial formation tracking error and the lower bound defined in Eq.~\eqref{eq:Lyapunov_der} is always $0$ (the error norm converges to $0$).  

\begin{figure}[ht]
\centering
\subfigure[]{
\includegraphics[width=0.4451\textwidth]{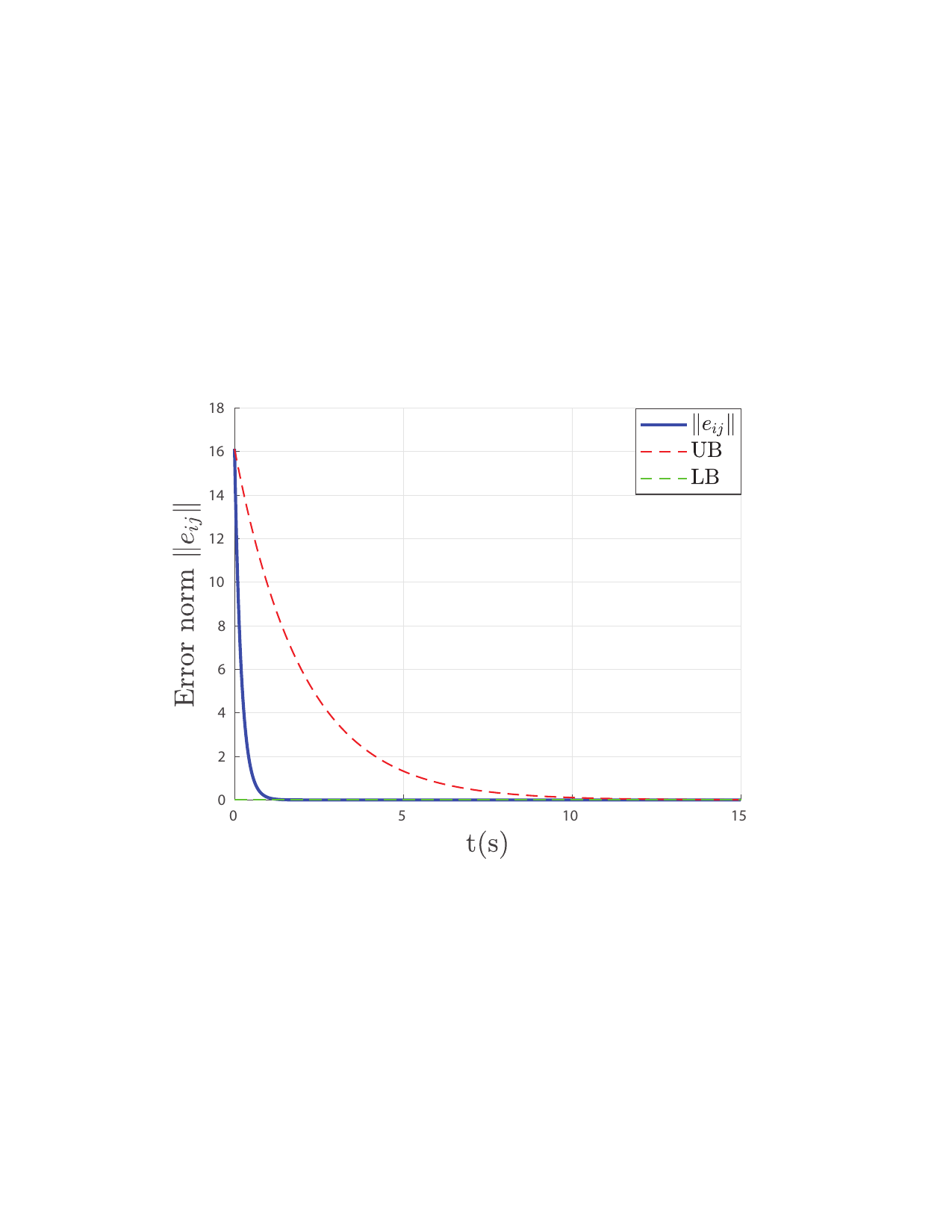}
}
\hspace{-10mm}
\subfigure[]{
\includegraphics[width=0.4451\textwidth]{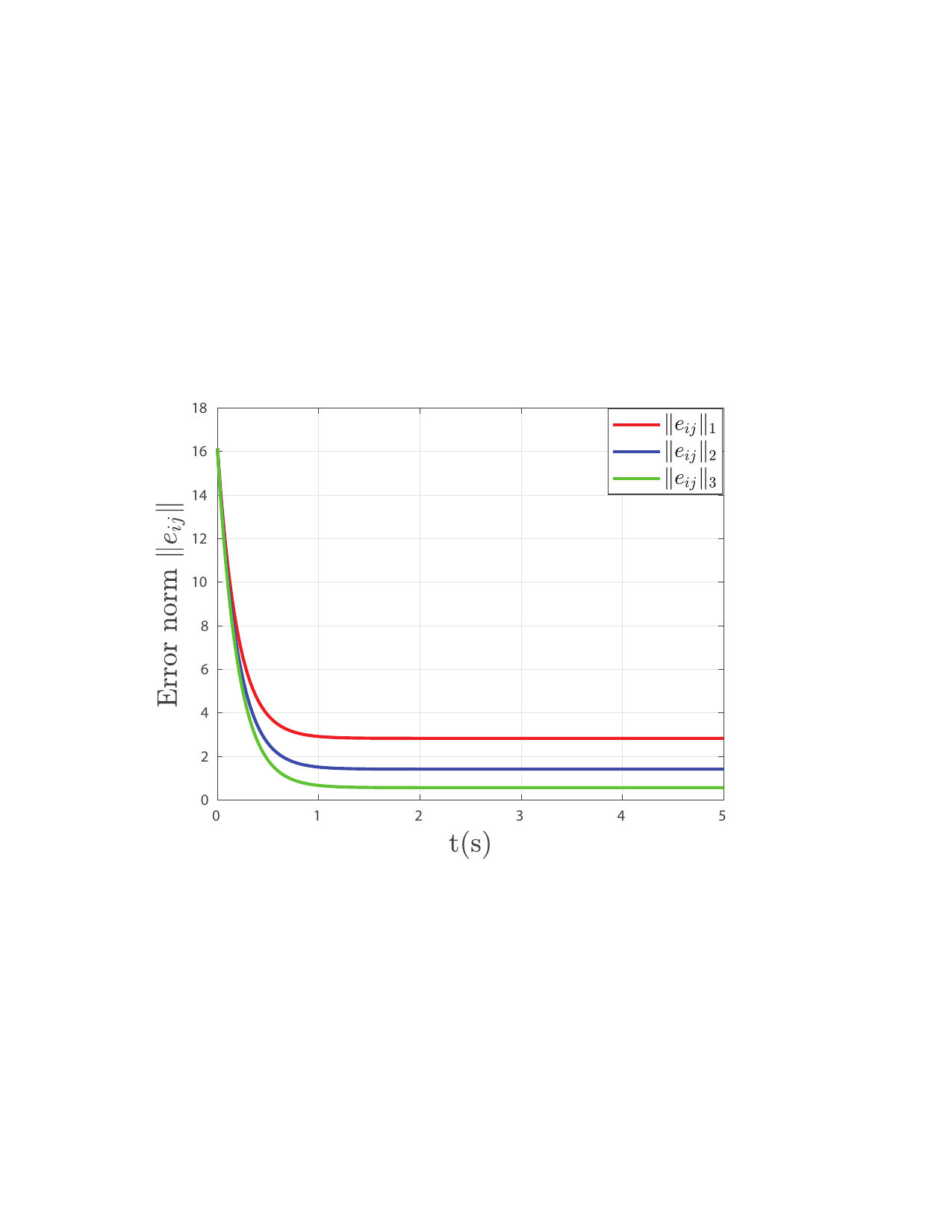}
}
\caption{Norm of the formation tracking error without feed-forward information. (a) Error norm under a stationary leader, with the upper bound (UB) and lower bound (LB). Note that the lower bound line (green) is along the bottom of the graph. (b) Error norm under a moving leader with different velocity regimes.}
\label{fig:ErrorNorm}
\end{figure}

For cases of moving leaders, we consider the lower bound that the error can attain under velocity regimes that span the conditions occurring in our real robot experiments. We consider the example error norms $\|e_{ij}\|_1$, $\|e_{ij}\|_2$, and $\|e_{ij}\|_3$, which represent the formation tracking errors under leader velocities $\Vec{u}_i^1  = [10,~10]^T$, $\Vec{u}_i^2  = [5,~5]^T$, and $\Vec{u}_i^3 = [2,~2]^T$, respectively. Under these velocity regimes, the error norms converge to a lower bound (see Fig.~\ref{fig:ErrorNorm}b). For instance, when the leader velocity is $\Vec{u}_i^2$ and $\theta = 0.5$, the norm of the formation tracking error $\|e_{ij}\|_2$ converges to a lower bound around $1.5$.  

\subsection*{Control of multiple leader-follower pairs}

We also assess how stability bounds are propagated in formations composed of multiple leader-follower pairs. 

\subsubsection*{Graph theory preliminaries}
For a SoNS, we define a target topology $G = (V, E, D)$, where $V = \{ v_1,\dots, v_n\}$ is a set of vertices, $E = \{\xi_{ij} = (\overrightarrow{v_j,v_i})\ |\ v_j,v_i \in V, v_i \neq v_j\}$ is a set of edges, and $D = {\Vec{p}_j^{~d}}$ is a set of formation attributes that includes information such as the desired positions. Each vertex $v_i$ has a set of neighbors $\mathcal{N}_i = \{v_j \in V\ |\ (i, j) \in E\}$. We also define an adjacency matrix $A_n = [a_{ij}] \in \mathbb{R}^{n \times n}$ to represent the connections between vertices. For example, if vertices $v_i$ and $v_j$ are connected, the corresponding entry $a_{ij}$ in the adjacency matrix will be non-zero. An entry of the adjacency matrix is defined as
\begin{equation}
     a_{ij} = 
\begin{cases}
              0, &~~~ i=j \\
              0, &~~~ (i,~j) \not\in E\\
              1, &~~~ (i,~j) \in E
\end{cases}~.           
\end{equation}
The Laplacian matrix $L = [l_{ij}] \in \mathbb{R}^{n \times n} $ related to the adjacency matrix $A$ for $i \in \left\{ {1,2, \ldots ,n} \right\}$ and $j \in \left\{ {1,2, \ldots ,n} \right\}$ is defined as 
\begin{equation}
     l_{ij} = 
\begin{cases}
               \sum_{k=1}^{n} a_{ij}, & i=j \\
              -a_{ij}, & i \neq j
\end{cases}~.
\end{equation}

\begin{remark}
    Note that, in the SoNS approach, a directed edge between vertices $v_j$ and $v_i$ represents a communication and control link between the corresponding follower robot $R_j$ and the leader $R_i$, and the indegree of each vertex is 1 (i.e., each follower has only one leader). Therefore, an $n$-node graph has $n-1$ edges and can be considered an $n$-robot formation with $n-1$ pairs of leaders and followers. For each pair, the control law Eq.~\eqref{eq:Control_law} drives the follower robot to its desired position. If Theorem \ref{tr:Theorem_1} is satisfied for all pairs, the formation of $n$ robots will be stable~\cite{das2002vision}.
\end{remark}

\rhead{Multiple leader-follower pairs}

\subsubsection*{Propagation of the stability bounds}
As we demonstrate in Eq.~\eqref{eq:ISS_gains2}, 
a leader-follower pair has input-to-state stability (ISS). Input-to-state stability is preserved in cascaded connections~\cite{isidori1985nonlinear}, such that ISS bounds are calculated from one robot to another, from the first leader to the last follower of the formation. However, the upper bound of the formation's ISS depends on the initial magnitude of the formation tracking error, $\|\Vec{e}_{ij}(0)\|$, and this error term tends to increase as more leader-follower pairs form complex structures. Thus, the formation's ISS depends on the longest path length of information passed from leaders to their respective followers (i.e., the length of the path from the first leader to the last follower) in the formation. Any formation can be constructed from two types of 3-robot primitives: either with cascaded connections or parallel connections. We analyze the ISS properties of these formation primitives.

\begin{figure}[h]
    \centering
    \subfigure[]{
    \includegraphics[width=0.3\linewidth]{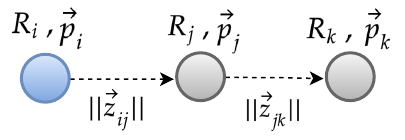}
    }
    \subfigure[]{
    \includegraphics[width=0.2\linewidth]{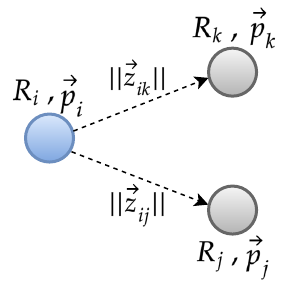}
    }
    \caption{Formation primitives. (a) Cascaded formation primitive. Robot $R_j$ follows the first leader of the formation $R_i$, and robot $R_k$ in turn follows robot $R_j$. Dashed arrows denote information passed from leader to follower. (b) Cascaded formation primitive. Robot $R_j$ follows the first leader of the formation $R_i$, and robot $R_k$ in turn follows robot $R_j$. Dashed arrows denote information passed from leader to follower.}
    \label{fig:primitives}
\end{figure}

In a formation primitive of three robots with cascaded connections (see Fig.~\ref{fig:primitives}a), the control laws for each follower robot $R_j$ and $R_k$ are defined as
\begin{equation}
\begin{aligned}
            \Vec{u}_j &=   \boldsymbol{K}^j(\Vec{z}_{ij} - \Vec{d}_{ij}) = \boldsymbol{K}^j\Vec{e}_{ij} \\
            \Vec{u}_k &=   \boldsymbol{K}^k(\Vec{z}_{jk} - \Vec{d}_{jk}) = \boldsymbol{K}^k\Vec{e}_{jk}~ \\
\end{aligned}.
\end{equation}
The ISS bounds of each pair can be expressed as follows:
\begin{align}
\begin{cases}
  \|\Vec{e}_{ij}(t)\| &\leq  \beta_{ij}(\|\Vec{e}_{ij}(0)\|,t) + \gamma_{ij} \left( \underset{0 \leq \tau \leq t}{\sup} \|\Vec{u}_i(\tau)\| \right) \\
                      &\leq \left( \frac{c_2^j}{c_1^j}\right)^\frac{1}{a_j} \|\Vec{e}_{ij}(0)\| e^{-\frac{c_3^j}{c_2^ja_j}t} + \frac{c_2^j}{c_1^j\theta}\underset{\tau \leq t}{\sup} \|\Vec{u}_i(\tau)\| \\
  \|\Vec{e}_{jk}(t)\| &\leq  \beta_{jk}(\|\Vec{e}_{jk}(0)\|,t) + \gamma_{jk} \left( \underset{0 \leq \tau \leq t}{\sup} \|\Vec{u}_j(\tau)\| \right) \\
                      &\leq \left( \frac{c_2^k}{c_1^k}\right)^\frac{1}{a_k} \|\Vec{e}_{jk}(0)\| e^{-\frac{c_3^k}{c_2^ka_k}t} + \frac{c_2^k}{c_1^k\theta}\underset{\tau \leq t}{\sup} \|\Vec{u}_j(\tau)\| 
  \end{cases}.
\end{align}

The proof of \textit{Proposition III.1} in~\cite{tanner2004leader} shows that the error norm $\|\Vec{e}_{jk}(t)\|$ between two followers can be expressed in terms of the first leader's velocity $\Vec{u}_i$, as follows:
\begin{equation}
\begin{aligned}
  \|\Vec{e}_{jk}(t)\| &\leq  \beta_{jk}(\|\Vec{e}_{jk}(0)\|,t) + \gamma_{jk} \left( \underset{0 \leq \tau \leq t}{\sup} \|\Vec{u}_i(\tau)\| \right) \\
\beta_{jk}(\|\Vec{e}_{jk}(0)\|,t) \triangleq   \left( \frac{c_2^k}{c_1^k}\right)^\frac{1}{a_k} &\|\Vec{e}_{jk}(0)\| e^{-\frac{c_3^k}{c_2^ka_k}t} ~,~~~~~~~~
\gamma_{jk} \left( \underset{0 \leq \tau \leq t}{\sup} \|\Vec{u}_i(\tau)\| \right) \triangleq \frac{c_2^kc_2^j}{c_1^k\theta}\underset{\tau \leq t}{\sup} \|\Vec{u}_i(\tau)\|~.
\end{aligned}
\end{equation}
To demonstrate that the ISS of the first leader-follower pair (robots $R_i$ and $R_j$) can be extended to the second pair (robots $R_j$ and $R_k$) in the cascaded formation primitive in Fig.~\ref{fig:primitives}a, we can construct the following composite error vector: 
\begin{align}
    \Vec{e}_{ik} \triangleq [ \Vec{e}_{ij} ~~ \Vec{e}_{jk}]^T~.
\end{align}

A system composed of two cascaded ISS systems is also ISS~\cite{isidori1985nonlinear,khalil2002nonlinear}. Therefore, the composite formation tracking error of two systems $\Vec{e}_{ik}$ satisfies the inequality
\begin{equation}
\begin{aligned}
         \|\Vec{e}_{ik}(t)\| &\leq  \beta_{ik}(\|\Vec{e}_{ik}(0)\|,t) + \gamma_{ik} \left( \underset{0 \leq \tau \leq t}{\sup} \|\Vec{u}_i(\tau)\| \right),
\end{aligned}
        \label{eq:ISS_composite}
\end{equation}
where
\begin{equation}
\begin{split}
 &\beta_{ik}(\|\Vec{e}_{ik}(0)\|,t) =  \beta_{ik1}(\|\Vec{e}_{ik}(0)\|,t)  + \beta_{ik2}(\|\Vec{e}_{ik}(0)\|,t) \\
 &\beta_{ik1}(\|\Vec{e}_{ik}(0)\|,t) = \beta_{jk}\left(\left(2\beta_{jk}(\|\Vec{e}_{ik}(0)\|,\frac{t}{2})
 + \gamma_{jk}(2\beta_{ij}(\|\Vec{e}_{ik}(0)\|,\frac{t}{2})) + 2\gamma_{jk}(2\beta_{ij}(\|\Vec{e}_{ik}(0)\|,0))\right), \frac{t}{2}\right) \\
 &\beta_{ik2}(\|\Vec{e}_{ik}(0)\|,t) = \beta_{ij}(\|\Vec{e}_{ik}(0)\|,t)~,
\end{split}
\label{eq:ISS_gains_composite}
\end{equation}
and
\begin{equation}
\begin{split}
 \gamma_{ik} \left( \underset{0 \leq \tau \leq t}{\sup} \|\Vec{u}_i(\tau)\| \right) = ~ &\gamma_{jk}(2\gamma_{ij}(\sup\|\Vec{u}_i(\tau)\|) + 2\sup\|\Vec{u}_i(\tau)\|) + \beta_{ik}(2\gamma_{jk}(2\gamma_{ij}(\sup\|\Vec{u}_i(\tau)\|)  \\ &+2\sup\|\Vec{u}_i(\tau)\|), 0) + \gamma_{ij}(\sup\|\Vec{u}_i(\tau)\|)~.
\end{split}
\label{eq:ISS_gains_composite2}
\end{equation}
Then, Eqs.~\eqref{eq:ISS_gains_composite} and \eqref{eq:ISS_gains_composite2} can be transformed into
\begin{equation}
\begin{split}
 &\beta_{ik}(q,t) = \beta_{jk}\left(2\beta_{jk}(q,\frac{t}{2})
 + 2\gamma_{jk}(2\beta_{ij}(q,0)), \frac{t}{2}\right) + \gamma_{jk}\left(2\beta_{ij}(q,\frac{t}{2})\right)  + \beta_{ij}(q,t) \\
 &\gamma_{ik} (r) = \gamma_{jk}(2\gamma_{ij}(r) + 2r) + \beta_{ik}(2\gamma_{jk}(2\gamma_{ij}(r))  +2r), 0) + \gamma_{ij}(r)~,
\end{split}
\label{eq:ISS_gains_compositeSimple}
\end{equation}
where $q = \|\Vec{e}_{ik}(0)\| $ and $r = \sup\|\Vec{u}_i(\tau)\|$.

Because a system composed of two ISS systems is also ISS~\cite{isidori1985nonlinear,khalil2002nonlinear}, the composite formation tracking error satisfies the inequality
\begin{equation}
\begin{aligned}
         \|\Vec{e}_{ijk}(t)\| &\leq  \beta_{ijk}(\|\Vec{e}_{ijk}(0)\|,t) + \gamma_{ijk} \left( \underset{0 \leq \tau \leq t}{\sup} \|\Vec{u}_i(\tau)\| \right),
\end{aligned}
\end{equation}
where
\begin{equation}
\begin{split}
&\beta_{ijk}(\|\Vec{e}_{ijk}(0)\|,t) = \beta_{ij}(\|\Vec{e}_{ij}(0)\|,t) + \beta_{ik}(\|\Vec{e}_{ik}(0)\|,t) \\ 
&\gamma_{ijk} \left( \underset{0 \leq \tau \leq t}{\sup} \|\Vec{u}_i(\tau)\| \right) = \gamma_{ij}(\sup \|\Vec{u}_i(\tau)\|) + \gamma_{ik}(\sup \|\Vec{u}_i(\tau)\|).
\end{split}
\label{eq:ISS_gains_composite3}
\end{equation}
Then, Eq.~\eqref{eq:ISS_gains_composite3} can be rewritten as
\begin{equation}
\begin{split}
&\beta_{ijk}(q,t) = \beta_{ij}(q,t) + \beta_{ik}(q,t) \\ 
&\gamma_{ijk} (r) = \gamma_{ij}(r) + \gamma_{ik}(r)~.
\end{split}
\label{eq:ISS_gains_composite4}
\end{equation}

In a formation primitive of three robots with parallel connections (see Fig.~\ref{fig:primitives}b), both follower robots $R_j$ and $R_k$ can be assumed to be equivalent to the first follower robot $R_j$ in the cascaded formation primitive shown in Fig.~\ref{fig:primitives}a.

\subsubsection*{Numerical simulation of error norm between the first leader and the first follower}

\begin{figure}
\centering
\subfigure[]{
\includegraphics[width=0.445\textwidth]{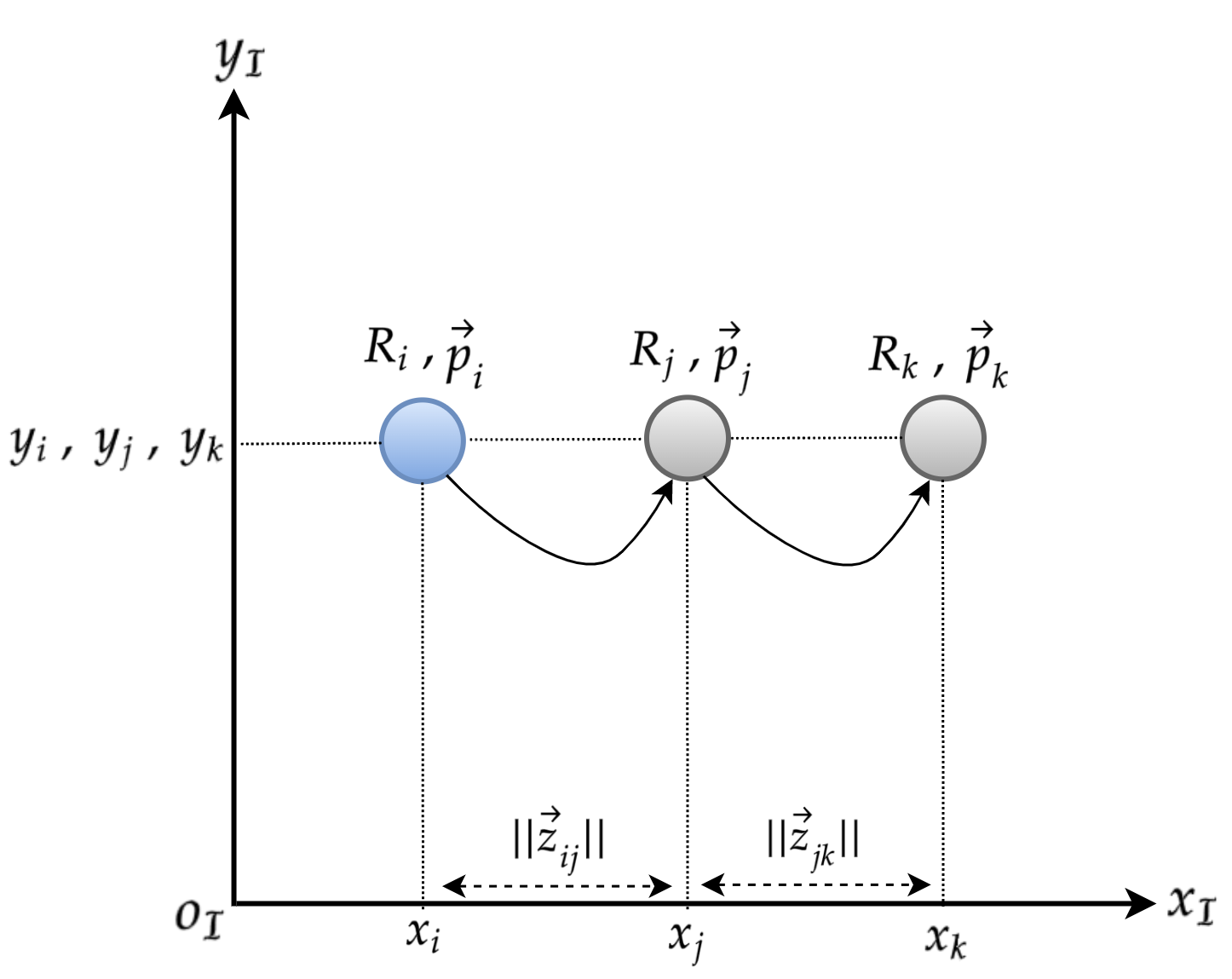}
}
\subfigure[]{
\includegraphics[width=0.445\textwidth]{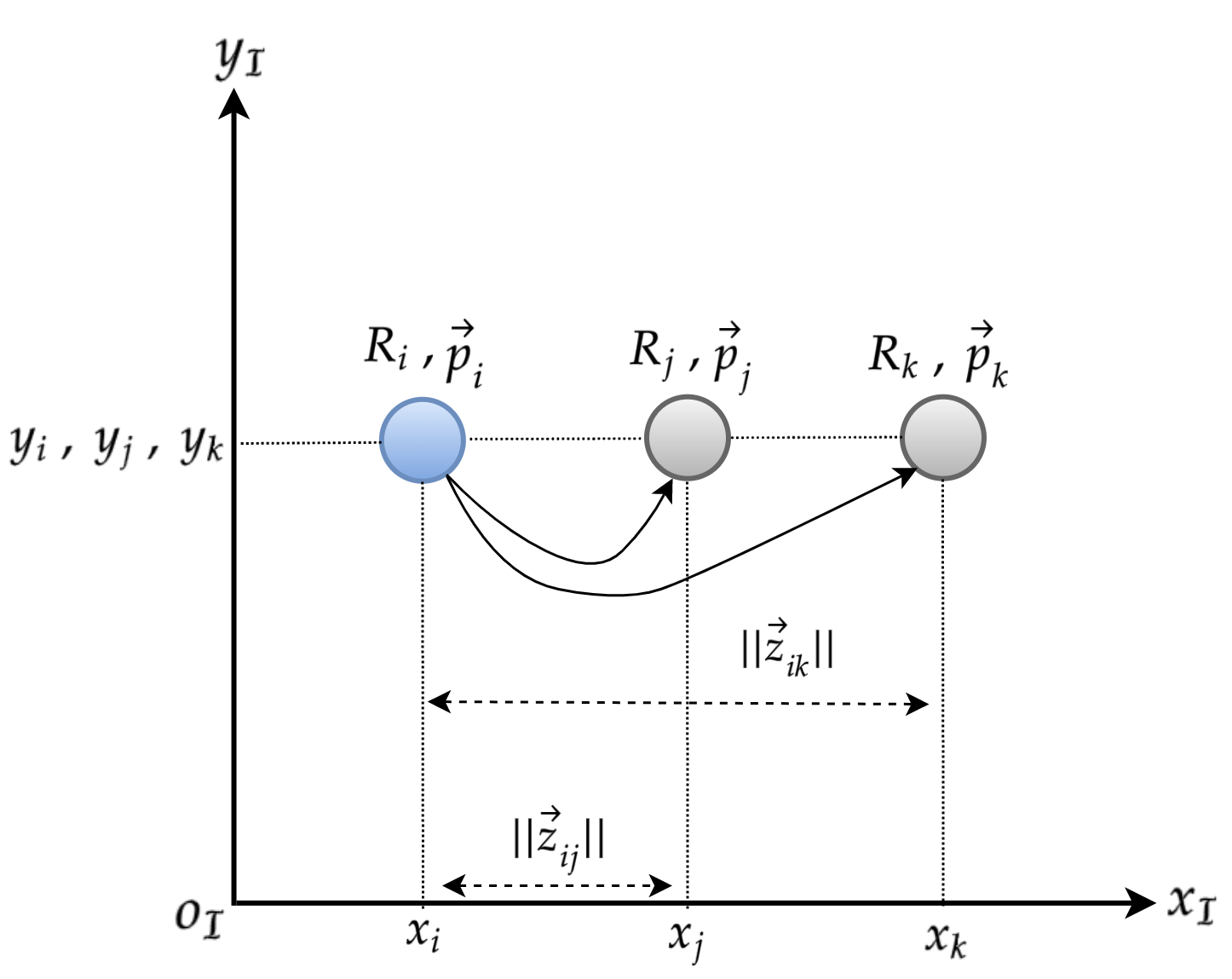}
}
\caption{Two example formations of three robots. The arrows denote graph connections, where information is passed from leader to follower. (a) Cascaded connections. (b) Parallel connections.}
    \label{fig:Compare}
\end{figure}

We assume that both formations move along the $y$-axis. The closed-loop kinematics for the connections can be given as
\begin{equation}
\begin{aligned}
    \Vec{u}_j &=   \boldsymbol{K}^j(\Vec{z}_{ij} - \Vec{d}_{ij}) \\ 
    \Vec{u}_k &=   \boldsymbol{K}^k(\Vec{z}_{jk} - \Vec{d}_{jk}) \\
\end{aligned}~,
\end{equation}
where $\boldsymbol{K}^j$ and $\boldsymbol{K}^k$ are controller gain matrices and $k_1^j = k_2^j = k_1^k = k_2^k =5 $. Note that the initial positions of the robots are $\Vec{p}_i(0) = [5,~10]^T$, $\Vec{p}_j(0) = [1,~3]^T$, and $\Vec{p}_j(0) = [3,~-2]^T$. 
We set the constant reference velocity and desired displacement vectors as~$\Vec{u}_i = [0, ~ 10]^T$, $\Vec{d}_{ij} = [6, ~0]^T$, $\Vec{d}_{jk} = [6, ~0]^T$, $\Vec{d}_{ik} = [12, ~0]^T$. 

To calculate the formation ISS measure $P_{ISS}$ for the cascaded connections (shown in Fig.~\ref{fig:Compare}a), we redefine the asymptotic ISS gain Eq.~\eqref{eq:ISS_gains_compositeSimple} by setting $r = 1$, $q = 1$, and $t = 0$. This bounds the inputs of the first leader inside the unit ball and ensures that $P_{ISS}$ varies in the range of $[0,~ 1]$. This results in the following expression:
\begin{equation}
\begin{split}
\gamma_{ik} (1)  &= \gamma_{jk}(2\gamma_{ij}(1) + 2) + \beta_{ik}(2\gamma_{jk}(2\gamma_{ij}(1) + 2), 0) + \gamma_{ij}(1) \\
&= \gamma_{jk}(2\hat{\gamma}_{ij} + 2) + \beta_{jk}(2\gamma_{jk}(2\hat{\gamma}_{ij} + 2), 0) + \hat{\gamma}_{ij} \\
&= 2\hat{\gamma}_{jk}\hat{\gamma}_{ij} + 2\hat{\gamma}_{jk} + \beta_{jk}(4\hat{\gamma}_{jk}\hat{\gamma}_{ij} + 4\hat{\gamma}_{jk}), 0) + \hat{\gamma}_{ij} \\
&= 2\hat{\gamma}_{jk}\hat{\gamma}_{ij} + 2\hat{\gamma}_{jk} + (4\hat{\beta}_{jk}\hat{\gamma}_{jk}\hat{\gamma}_{ij} + 4\hat{\beta}_{jk}\hat{\gamma}_{jk})q+ \hat{\gamma}_{ij} \\
&= 2\hat{\gamma}_{jk}\hat{\gamma}_{ij} + 2\hat{\gamma}_{jk} + 4\hat{\beta}_{jk}\hat{\gamma}_{jk}\hat{\gamma}_{ij} + 4\hat{\beta}_{jk}\hat{\gamma}_{jk}+ \hat{\gamma}_{ij} 
\end{split}~,
\end{equation}
where $\hat{\beta}_{ij}  \triangleq \left( \frac{c_2^j}{c_1^j}\right)^\frac{1}{a_j}$, $\hat{\beta}_{jk}  \triangleq \left( \frac{c_2^k}{c_1^k}\right)^\frac{1}{a_k}$, $\hat{\gamma}_{ij} \triangleq  \frac{c_2^j}{c_1^j\theta}$, $\hat{\gamma}_{jk} \triangleq  \frac{c_2^k}{c_1^k\theta}$, $c_1^j \triangleq  \min(k_1^j, k_2^j)$, $ c_2^j \triangleq  \max(k_1^j, k_2^j)$, $c_1^k \triangleq  \min(k_1^k, k_2^k)$, $ c_2^k \triangleq  \max(k_1^j, k_2^k)$, and $a_j = a_k = 2$.
This results in the following $P_{ISS}$ value:
\begin{equation}
    P_{ISS}^c = \frac{1}{1 + \gamma_{ik} (1)} = \frac{\theta^2}{ \theta^2 + 7\theta + 6}~,
\end{equation}
where $\theta \in (0,~1)$.

The follower robots $R_j$ and $R_k$ with parallel connections (see Fig.~\ref{fig:primitives}b) can be assumed to be equivalent to the first follower robot $R_j$ with cascaded connections (see Fig.~\ref{fig:primitives}a).
A follower that is directly connected to the first leader has a lower magnitude of relative errors with respect to the first leader's velocity than a follower that is instead connected to another follower.
Fig.~\ref{fig:CompareRes}a shows a comparison of the error norm for the first leader and its first follower in the two formations. In both formations, $R_j$ is in the first hierarchy layer and therefore the error norms $\| e_{ij} \|$ for $R_j$ are the same. However, in the cascaded connections, robot $R_k$ follows $R_j$, while in the parallel connections, $R_k$ directly follows $R_i$. 
As a result, the error norm $\| e_{ik} \|$ for $R_k$ in the cascaded connections is higher than that for all other followers in both formations. 

\begin{figure}
    \centering
\subfigure[]{
    \includegraphics[width=0.445\textwidth]{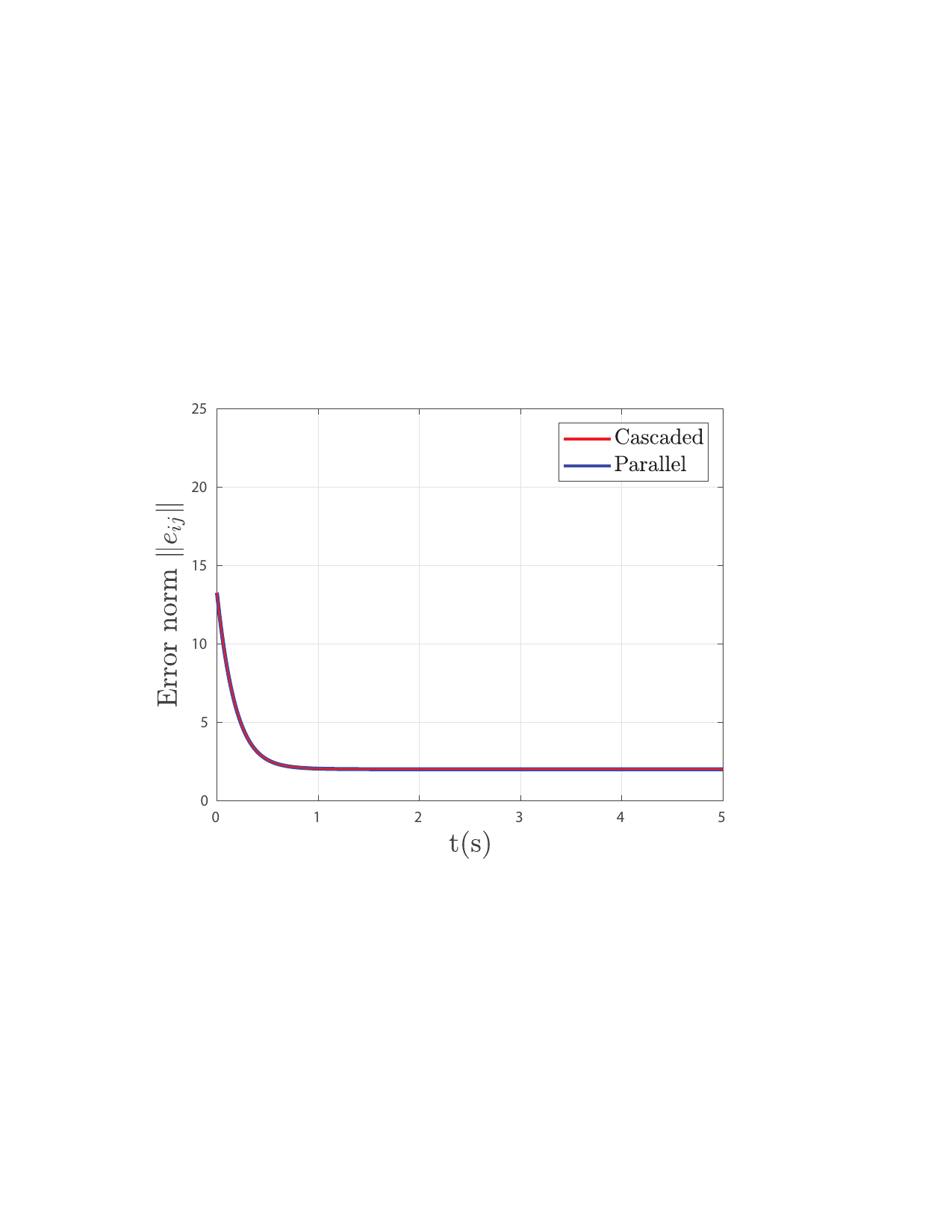}
}
\subfigure[]{
\includegraphics[width=0.445\textwidth]{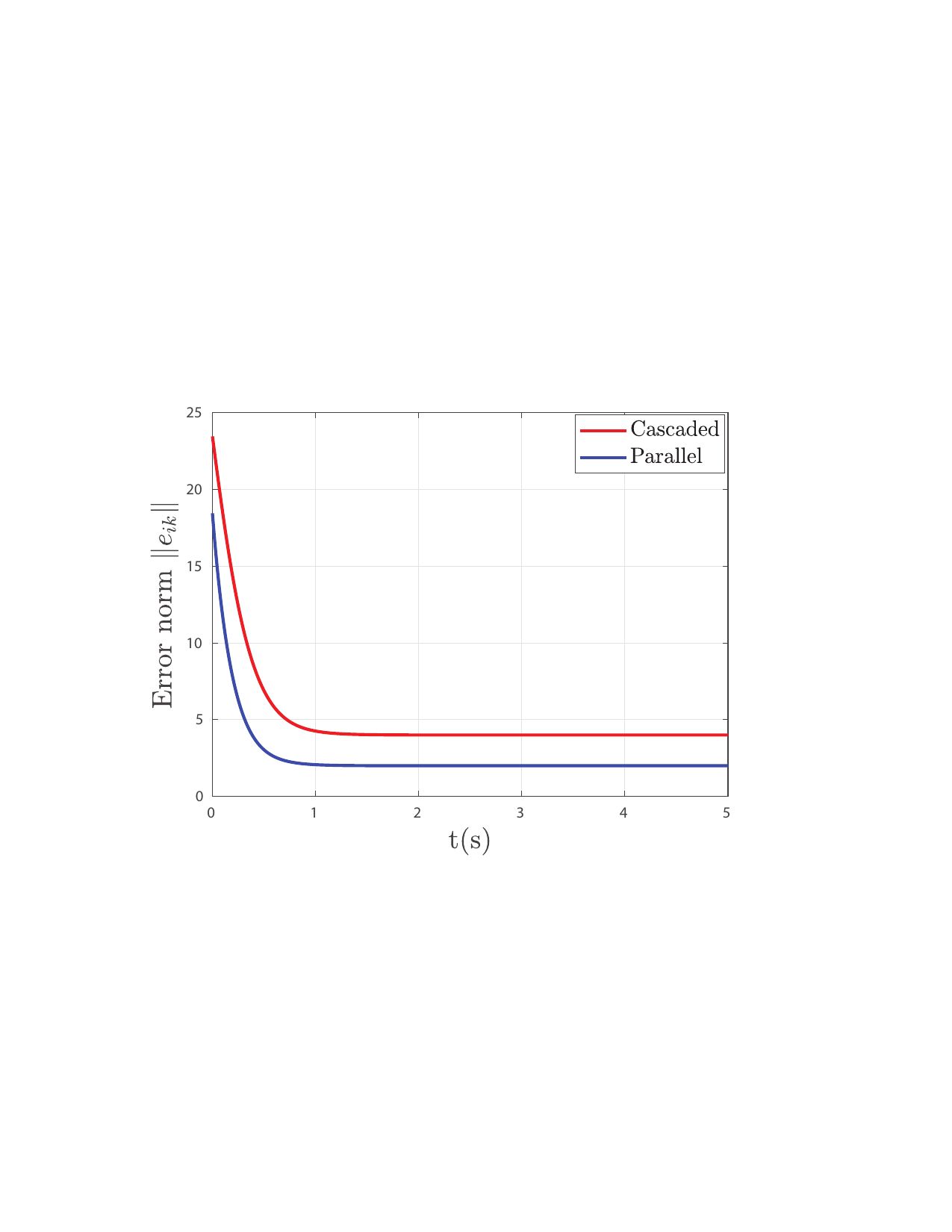}
}   
\caption{Comparison of two formations' tracking error evolutions. (a) Error norm between the first leader $R_i$ and first follower $R_j$. Note that the lines for the cascaded and parallel cases are the same (i.e., the red and blue lines of the graph are in the same position, and therefore not fully visible). (b) Error norm between the first leader $R_i$ and last robot $R_k$. }
\label{fig:CompareRes}
\end{figure}

\subsubsection*{ISS gains calculation for $n$-robot formations}
To obtain the total ISS gains for a formation of multiple leader--follower pairs, we start from the terminal nodes (i.e., vertices with outdegree of 0) and apply Eqs.~\eqref{eq:ISS_gains_composite}, \eqref{eq:ISS_gains_composite2}, and \eqref{eq:ISS_gains_composite3} iteratively based on the algorithm given in~\cite{tanner2004leader}, thus reducing the original graph to a depth of one.

To illustrate this calculation for a specific target topology, consider a graph $G = (V, E, D)$ with an $n \times n$ adjacency matrix $A$, where $a_i$ represents the $i$-th row. Begin by defining the following vectors:
\begin{align}
    &\hat{\beta}^0 \triangleq [\hat{\beta_1}^0 \ldots \hat{\beta_n}^0]^T \\
    &\hat{\gamma}^0 \triangleq [\hat{\gamma_1}^0 \ldots \hat{\gamma_n}^0]^T~,
\end{align}
where $\hat{\beta}$ and $\hat{\gamma}$ are the gain estimates defined in Eq.~\eqref{eq:ISS_est}. 

After $k+1$ iterations, we obtain:
\begin{equation}
\begin{split}
    &\hat{\beta}^{k+1} \triangleq [\hat{\beta_1}^{k+1} \ldots \hat{\beta_n}^{k+1}]^T \\
    &\hat{\gamma}^{k+1} \triangleq [\hat{\gamma_1}^{k+1} \ldots \hat{\gamma_n}^{k+1}]^T
\end{split}~.
\end{equation}
Then, $\hat{\beta}^{k+1}$ and $\hat{\gamma}^{k+1}$ can be calculated as
\begin{equation}
\begin{split}
    &\hat{\beta}^{k+1} =  \hat{\beta}^k + \eta_{n-k}c_{\beta}^k \\
    &\hat{\gamma}^{k+1} =  \hat{\gamma}^k + \eta_{n-k}c_{\gamma}^k \\
    &\eta_{n-k} = [\underbrace{0~ \ldots ~0}_{\text{$n-k-1$}} ~ 1 ~\ldots ~ 0]^T \\
    &c_{\beta}^k = a_{n-k}\hat{\beta}^ka_{n-k}\hat{\gamma}^k\hat{\beta}_{n-k}^k + (a_{n-k}\hat{\beta}^k)^2 \\
    &c_{\gamma}^k = a_{n-k}\hat{\beta}^ka_{n-k}\hat{\gamma}^k\hat{\gamma}_{n-k}^k + a_{n-k}\hat{\gamma}^k\hat{\gamma}_{n-k}^k
\end{split}~.
\end{equation}

For any formation, the algorithm that is iteratively applied, Eqs.~\eqref{eq:ISS_gains_composite}, \eqref{eq:ISS_gains_composite2}, and \eqref{eq:ISS_gains_composite3}, will terminate in at most $n-1$ steps (i.e., the maximum path length in a graph with $n$ vertices). 

It is important to note that the depth of a formation's graph will affect its stability: the higher the depth of the graph, the larger the ISS gains will be. Similarly, the depth of the graph will affect its robustness~\cite{wang2009robustness} in response to noisy local information (i.e., under random disturbances in the information transferred between robots).

\subsection*{Discussion: incorporating feed-forward information for improved performance}

If a follower robot receives velocity information (feed-forward) from its leader without any time delay, the control law Eq.~\eqref{eq:Control_input} can be rewritten as follows:
\begin{equation}
          \begin{aligned}
           \Vec{u}_j &=  f(t, \Vec{z}_{ij}, \Vec{d}_{ij}) + \Vec{u}_i  \\
           &= \boldsymbol{K}^j(\Vec{z}_{ij} - \Vec{d}_{ij}) + \Vec{u}_i \\
           &= \boldsymbol{K}^j\Vec{e}_{ij} + \Vec{u}_i
        \end{aligned}~,
     \label{eq:Control_law2}
\end{equation} 
where the first term in the control input is used to maintain the desired displacement from the leader and the second term is used to follow the leader's reference velocity. 
 
If we substitute the control law given by Eq.~\eqref{eq:Control_law2} into the equation for the motion model given by Eq.~\eqref{eq:Motion_model}, we obtain
     \begin{align}
     \Dot{\Vec{p}}_j =  \boldsymbol{K}^j\Vec{e}_{ij} + \Vec{u}_i~.
     \end{align}
Then, the leader-follower kinematics given by Eq.~\eqref{eq:LF_kinematics} can be rewritten as
    \begin{equation}
        \Dot{\Vec{e}}_{ij} = -\boldsymbol{K}^j\Vec{e}_{ij}~.
        \label{eq:Closed-loop_error2}
    \end{equation}
    
By solving the differential equation Eq.~\eqref{eq:Closed-loop_error2}, we then obtain
    \begin{equation}
        \Vec{e}_{ij}(t) = \mathcal{C}e^{-(\boldsymbol{K}^j)^Tt}~,
        \label{eq:diff_eq_sol}
    \end{equation}
where $\mathcal{C}$ is any positive constant. Since $(\boldsymbol{K}^j)^T = \boldsymbol{K}^j$ is positive definite, then $\|\Vec{e}_{ij}(t)\| \to 0$. As $t \to \infty$, $\Dot{\Vec{p}}_j$ (or $\Vec{u}_j$) approaches $\Vec{u}_i$, such that the leader can steer the follower with the velocity $\Vec{u}_i$.

\begin{remark}
    Unlike the control law given by Eq.~\eqref{eq:Control_law}, the control law in Eq.~\eqref{eq:Control_law2} is not influenced by the leader's velocity, due to the use of feed-forward information. This means that the stability of the leader-follower pair is not affected by whether the leader is stationary or moving. 
\end{remark}

To demonstrate the benefit of using feed-forward information, we conduct a simulation using the same parameters as the previous case, but with the follower robot receiving the leader's velocity information without any delay. The results in Fig.~\ref{fig:ErrorNormFF}a,b, demonstrate that the error norm indeed approaches zero regardless of the leader's velocity.

\rhead{Discussion: feed-forward information}

\begin{figure}[h]
\centering
\subfigure[]{
\includegraphics[width=0.455\textwidth]{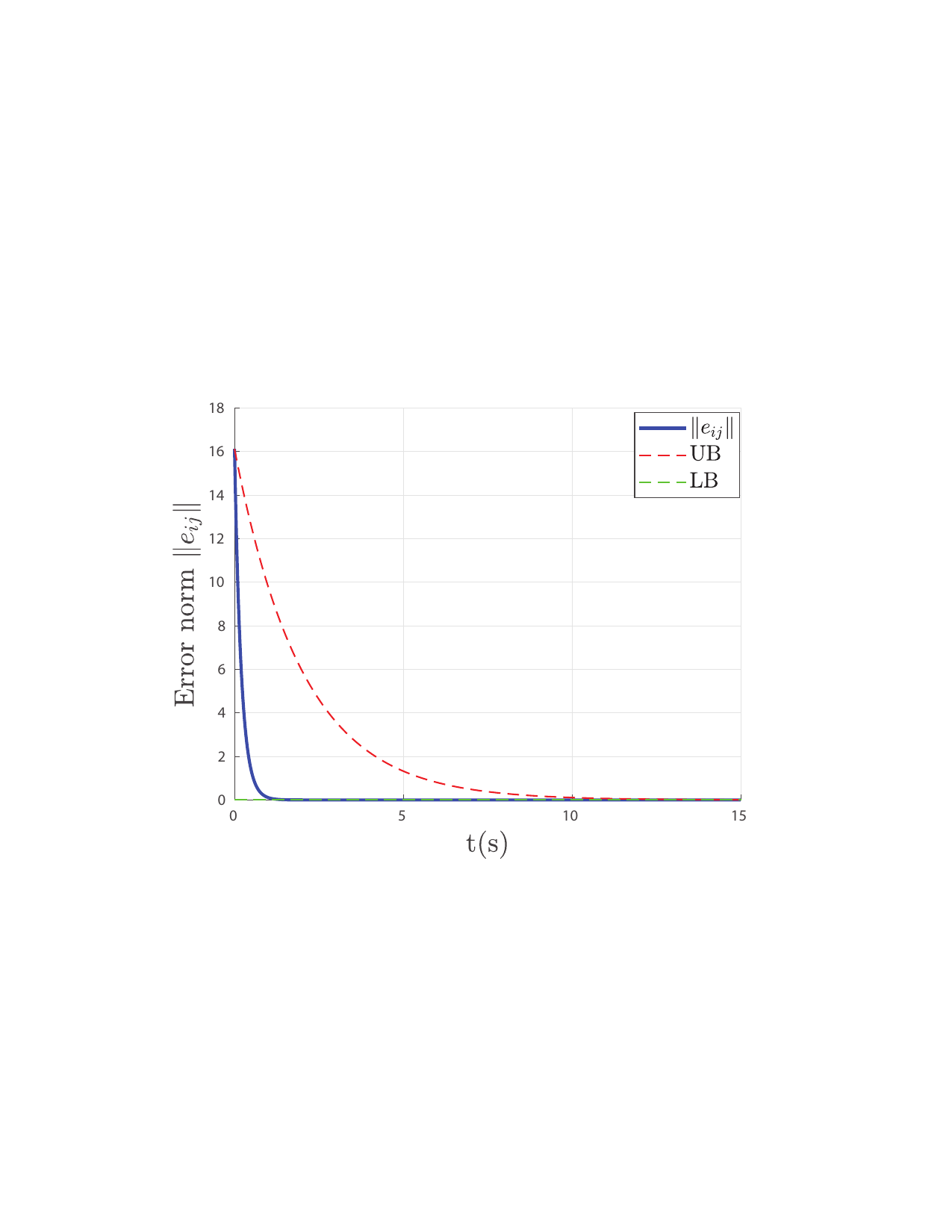}
}
\subfigure[]{
\includegraphics[width=0.475\textwidth]{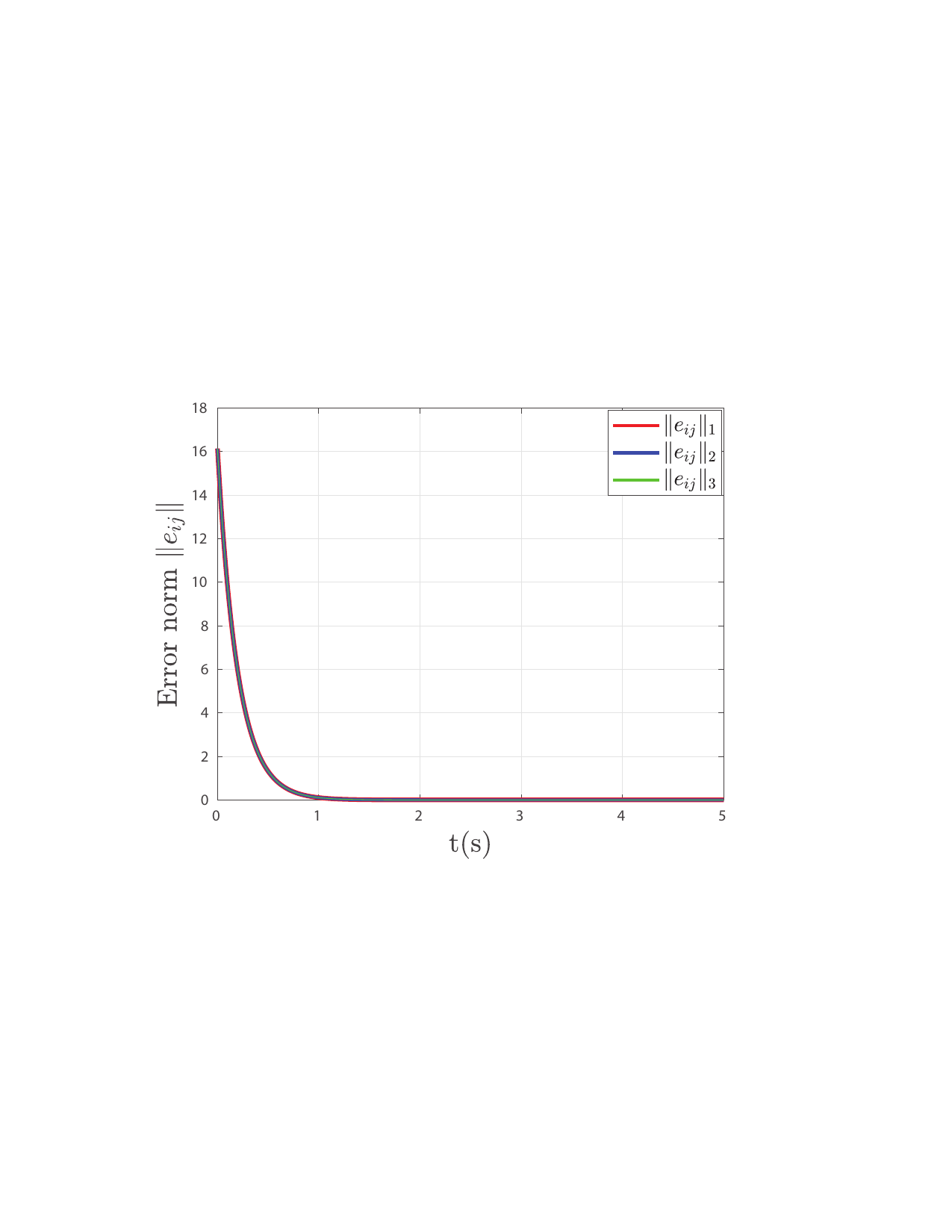}
}
\caption{Norm of the relative formation tracking error with feed-forward information. (a) Stationary leader with the upper bound (UB) and lower bound (LB) of the error norm. Note that the lower bound line (green) is along the bottom of the graph. (b) Moving leader under different velocity regimes. Note that the lines for the three velocity regimes are the same (i.e., the red, blue, and green lines of the graph are all in the same position, and therefore not fully visible). }
\label{fig:ErrorNormFF}
\end{figure}

\clearpage\rhead{}
\section*{Section \ref*{SM:algorithm}. SoNS control algorithm details}
\lhead{Section \ref*{SM:algorithm}. SoNS algorithm}
The open-source code for the SoNS control algorithm and all experiment setups is available in an online repository.

\subsection*{SoNS preliminaries}
\rhead{SoNS preliminaries}

\subsubsection*{Graph preliminaries}

A SoNS is a directed rooted tree, denoted by a directed graph $G=\{V,E\}$ with sets of attributes $A_V$ and $A_E$ associated to $V$ and $E$ respectively, where $V$ represents the set of robots $r_i$ ($|V|=n$), $E$ represents the links $e_{ij}$ between parent robots $r_i$ and child robots $r_j$, and the robot at the root node $r_1$ is the SoNS-brain.   
In other words, a parent of robot $r_j$ is the robot immediately upstream from $r_j$ in the SoNS graph $G$, and similarly, a child of robot $r_i$ is a robot immediately downstream from $r_i$ in the SoNS graph $G$. The subgraph of graph $G$ that includes robot $r_i$ and all the robots downstream from it (i.e., all of its children, all of its children's children, and so on) is denoted as graph $H_{r_i}$.

The set of children of a robot $r_i$ is denoted by $C_{r_i}$ and the set of all robots that are directly connected (whether child or parent) to robot $r_i$ is denoted by $F_{r_i}$.
Note that, because the SoNS graph $G$ is a tree (i.e., a connected acyclic graph), a robot cannot have more than one parent and no cycles are present in the SoNS.

The set of attributes $A_V$ associated to $V$ includes one categorical variable $\textsc{type}_i = \tau_\ell$ for each robot $r_i$, where $\tau_\ell \in \{\tau_1,\tau_2,\ldots,\tau_n\}$.\footnote{In our current implementation, because we use aerial and ground robots and do not make other distinctions in robot type, $\textsc{type}$ is a binary condition, where $\tau_\ell \in \{\tau_1,\tau_2\}$.}
The set of attributes $A_E$ associated to $E$ includes two attributes for each link $e_{ij}$, which are: the relative position of $r_j$ w.r.t.~$r_i$, denoted by the displacement vector $\boldsymbol{d}_{r_i r_j}$, and the relative orientation of $r_j$ w.r.t. $r_i$, denoted by the unit quaternion $\boldsymbol{q}_{r_i r_j}$.

The target subgraph that a robot $r_i$ is attempting to have built downstream from it is denoted as $H_{r_i}^{\boldsymbol{*}}$, 
and its components are denoted likewise (e.g., a target robot node is denoted as $r_i^{\boldsymbol{*}}$, a target link as $e_{ij}^{\boldsymbol{*}}$, and a target displacement as $\boldsymbol{d}_{r_i r_j}^{\boldsymbol{*}}$).
Note that the target subgraph $H_{r_1}^{\boldsymbol{*}}$ of the SoNS-brain at root node $r_1$ is equivalent to $G^{\boldsymbol{*}}$, which denotes the overall target graph of a SoNS.

\begin{remark}
    No robot in a SoNS has access to the graph $G=\{V,E\}$ and attributes $A_V, A_E$ that would represent the current state of the SoNS (nor the subgraph $H$ that would represent the current state of any branch of the SoNS) at any point in time. The current SoNS graph $G$ and the overall target graph $G^{\boldsymbol{*}}$ are used only for analysis of the experiment results (see calculations of error and lower bound in Sec.~4.2 in the main paper). However, each robot $r_i$ has access to the target subgraph $H_{r_i}^{\boldsymbol{*}}$ that it is attempting to have built downstream from it. Each robot also tracks the attributes of some of its immediate neighbors, i.e., robots in $F_{r_i}$, using strictly local communication and local sensing.
\end{remark}

\subsubsection*{Robot initialization requirements}

Independent of a SoNS, each robot is required to initialize with three static identifiers: a non-unique robot type $\textsc{robotTYPE} = \tau_\ell$, where $\tau_\ell \in \{\tau_1,\tau_2,\ldots,\tau_n\}$; a unique robot identifier $\textsc{robotID} \in \{1,2,\ldots,n\}$; and a unique robot rank $\textsc{robotRANK}$ that can be assigned manually or can be assigned randomly according to the uniform distribution $U(0,1)$.\footnote{In our current implementation, the robot ranks are normally generated randomly, with the generated number having 15 decimal digits of precision (double precision floating point) and the seed used for the pseudorandom generator being the microsecond of the UTC time at the moment of initialization. For safety considerations in the constrained indoor robot arena, we sometimes manually assign a robot the rank of 1, which helps ensure that the self-organized processes of the SoNS do not result in robots getting too close to the boundary of the indoor arena, see below for details.}
Note that $\textsc{robotTYPE}$, $\textsc{robotID}$ and $\textsc{robotRANK}$ are all independent of a robot's vertex position $r_i$ in the SoNS graph $G$.

The SoNS control algorithm considers all robots to have the same motion and robot-to-robot sensing capabilities.
All robots are required to accept omnidirectional motion control inputs. All robots are also required to be capable of sensing the relative position, relative orientation, and robot identifier $\textsc{robotID}_i$ of robots in their sensing range (e.g., using computer vision and unique fiducial markers) and to be capable of mutual sensing (i.e., if robot A can sense robot B, then robot B can also sense robot A). If not all of the robot platforms being used have all of these features by default, then any gap in capability and/or heterogeneity of robot platforms is handled by external control layers that are specific to the individual robots (see Secs.~\ref{SM:aerial} and~\ref{SM:ground} respectively for the aerial and ground robot control layers used in this study).

Each robot initializes as $r_1$ of a graph $G$ with one node and no links. In other words, it initializes as the SoNS-brain of its own single-robot SoNS. Each robot also initializes with a default target $H_{r_1}^{\boldsymbol{*}}$, which is equivalent to $G^{\boldsymbol{*}}$, and with a certain configuration of the SoNS control algorithm to use as its local copy, detailed below.

\subsection*{Self-organized node attributes}

Each robot updates its individual node attributes at each time step in a self-organized way, using strictly local communication and local sensing. Because the shared information used to inform these node attributes is updated asynchronously, the node attributes are usually updated with some delay. 
Using the SoNS control algorithm, the desired overall SoNS behaviors execute correctly without all (or even most) of the node attributes having to be fully up-to-date.

\subsubsection*{SoNS identifier and SoNS rank}

The SoNS identifier of robot $r_i$ is denoted as $\textsc{SoNSrootID}_i$. The SoNS identifier $\textsc{SoNSrootID}_i$ of robot $r_i$ is equivalent to the robot identifier of its respective SoNS-brain, i.e., of the root of graph $G$, denoted as $\textsc{robotID}(r_1)$. 
At each time step, each child robot $r_j$ receives a SoNS identifier $\textsc{SoNSrootID}_i$ from its parent and updates its own $\textsc{SoNSrootID}_j \leftarrow \textsc{SoNSrootID}_i$ accordingly, which it will then send to its own children, if it has any. 
If a robot $r_i$ has no parent (i.e., it is the SoNS-brain), then it takes its own robot identifier as its SoNS identifier, such that $\textsc{SoNSrootID}_i \leftarrow \textsc{robotID}(r_i)$. 

Similarly to the SoNS identifer $\textsc{SoNSrootID}$, $\textsc{SoNSrootRANK}_i$ denotes the SoNS rank of robot $r_i$ and is equivalent the robot rank of its respective SoNS-brain $\textsc{robotRANK}(r_1)$. Like the SoNS identifier, each robot takes the SoNS rank received from its parent (Eq.~\ref{eq:rank-sons}) or takes its own robot rank as its SoNS rank if it has no parent (Eq.~\ref{eq:rank-brain}):
\begin{equation}
\label{eq:rank-sons}
\textsc{SoNSrootRANK}_j \leftarrow~\textsc{SoNSrootRANK}_i~,
\end{equation}
\begin{equation}
\label{eq:rank-brain}
\textsc{SoNSrootRANK}_i \leftarrow \textsc{robotRANK}(r_i)~. 
\end{equation}
In other words, every robot in a SoNS has the same SoNS identifier and SoNS rank, once all robots' attributes are up-to-date.

Note that, when the robot occupying the role of SoNS-brain changes, the SoNS identifier and SoNS rank of robot $r_i$ will take $\mu_1$ time steps to reflect the change, where $\mu_1$ is equal to the path length between robot $r_i$ and the new SoNS-brain.

Each robot $r_i$ also stores its most recent former SoNS identifier, if it has any former ones, denoted by $\textsc{SoNSrootID}^\textsc{OLD}_i$, and the number of time steps $t^\textsc{OLD}_i$ passed since it had that former SoNS identifier, according to its own internal clock.

\rhead{Self-organized node attributes}

\subsubsection*{Downstream vertex cardinality and vertex height}

Recall that the subgraph $H_{r_i}$ is the subgraph of robot $r_i$ and all robots downstream from it.
The vertex cardinality (i.e., the order) of the subgraph $H_{r_i}$ of robot $r_i$ is denoted by
$\lambda^{\textsc{card}}(r_i)$.

The cardinality of vertices in subgraph $H_{r_i}$ that have the vertex attribute $\textsc{type}_i = \tau_\ell$ is denoted by $\lambda^{\textsc{card}}_{\tau_\ell}(r_i)$, and is defined as 
\begin{equation}
\lambda^{\textsc{card}}_{\tau_\ell}(r_i) = |r_j \in V(H_{r_i}) : \textsc{type}_j = \tau_\ell|~,
\end{equation}
where $V(H_{r_i})$ denotes the set of vertices of $H_{r_i}$.
At each time step, each child robot $r_j$ sends its parent $r_i$ its respective $\lambda^{\textsc{card}}_{\tau_\ell}(r_j)$ values---one for each $\tau_\ell$ represented in the range set $\{ A_V(\textsc{type}): \textsc{type} = \tau_{1}, \ldots, \tau_{n} \}$ of its subgraph $H_{r_j}$. 
In other words, each robot $r_i$ that has children receives at least one $\lambda^{\textsc{card}}_{\tau_\ell}(r_j)$ value from each child. 
Each robot $r_i$ calculates its own $\lambda^{\textsc{card}}_{\tau_\ell}(r_i)$ value(s) at each time step as
\begin{equation}
    \lambda^{\textsc{card}}_{\tau_\ell}(r_i) \leftarrow
    \left\{ 
    \begin{aligned}
    & \sum_{r_j \in C_{r_i}}\lambda^{\textsc{card}}_{\tau_\ell}(r_j) + 1 ~~~~~\text{if}~\textsc{type}_i = \tau_\ell\\
    & \sum_{r_j \in C_{r_i}}\lambda^{\textsc{card}}_{\tau_\ell}(r_j) ~~~~~~~~~~\text{otherwise}\\
    \end{aligned} \right.~,
\end{equation}
which it will then send to its own parent, if it has one. 

The vertex height (i.e., the longest path from the respective vertex to any downstream leaf node) of robot $r_i$ is denoted as $\lambda^{\textsc{high}}(r_i)$, defined as
\begin{equation}
\lambda^{\textsc{high}}(r_i) = |[r_i, \ldots, r_{\alpha}]|~,
\end{equation}
where $r_{\alpha}$ is the furthest leaf node from $r_i$ in subgraph $H_{r_i}$. 
The vertex height $\lambda^{\textsc{high}}(r_i)$ is updated similarly to the downstream vertex cardinality $\lambda^{\textsc{card}}_{\tau_\ell}(r_i)$. At each time step, each child robot $r_j$ sends its vertex height $\lambda^{\textsc{high}}(r_j)$ to its parent and each robot $r_i$ calculates its own $\lambda^{\textsc{high}}(r_i)$ value as
\begin{equation}
    \lambda^{\textsc{high}}(r_i) \leftarrow \max(\lambda^{\textsc{high}}(r_j) | r_j \in C_{r_i}) + 1~,
\end{equation}
which it will then send to its own parent, if it has one. 

Thus, if a robot $r_i$ with type $\tau_\ell$ has no children, its $\lambda^{\textsc{card}}_{\tau_\ell}(r_i) = 1$ and its $\lambda^{\textsc{high}}(r_i) = 1$.
Note that, when a change occurs to the subgraph $H_{r_i}$, the downstream vertex cardinality and vertex height of robot $r_i$ will take at least $\mu_2$ time steps to reflect the change, where $\mu_2$ is equal to the path length between robot $r_i$ and the closest added or removed robot. In some cases, more time steps might be needed, if the closest added or removed robot also has experienced simultaneous changes to its own downstream subgraph.

\subsubsection*{SoNS targets}

Each robot $r_i$ always has a target subgraph $H_{r_i}^{\boldsymbol{*}}$ that it is attempting to have built downstream from it.
At each time step, each child robot $r_j$ receives its target subgraph $H_{r_j}^{\boldsymbol{*}}$ from its parent $r_i$. If a robot $r_i$ has no parent (i.e., it is the SoNS-brain), then it uses its default target subgraph $H_{r_1}^{\boldsymbol{*}}$.
Also at each time step, each robot $r_i$ that has at least one target child $r_j^{\boldsymbol{*}} \in V(H_{r_i}^{\boldsymbol{*}})$ subdivides its $H_{r_i}^{\boldsymbol{*}}$ into a new target subgraph for each target child $r_j^{\boldsymbol{*}}$, as follows:
\begin{equation}
H_{r_j}^{\boldsymbol{*}} = \left\{ V(H_{r_j}^{\boldsymbol{*}}),~ E(H_{r_j}^{\boldsymbol{*}})\right\} =
\left\{ R_{H_{r_i}^{\boldsymbol{*}}}^{+}(r_j^{\boldsymbol{*}}) \cup r_j^{\boldsymbol{*}} \in V(H_{r_i}^{\boldsymbol{*}}),~~~
e_{ij} | i \in V(H_{r_j}^{\boldsymbol{*}}) \right\}
\end{equation}
where $R_{H_{r_i}^{\boldsymbol{*}}}^{+}(r_j^{\boldsymbol{*}})$ denotes all vertices in the directed graph ${H_{r_i}^{\boldsymbol{*}}}$ reachable from vertex $r_j^{\boldsymbol{*}}$, i.e., all downstream vertices.
When robot $r_i$ becomes connected to a new child $r_j$ that matches its target $r_j^{\boldsymbol{*}} \in V(H_{r_i}^{\boldsymbol{*}}$), it sends the child the respective target subgraph $H_{r_j}^{\boldsymbol{*}}$.

From its own target subgraph $H_{r_i}^{\boldsymbol{*}}$ and the target subgraphs $H_{r_j}^{\boldsymbol{*}}$ it calculates for its children, each robot $r_i$ also calculates the target downstream vertex cardinality and target vertex height for itself, $\lambda^{\textsc{card}}_{\tau_\ell}(r_i)^{\boldsymbol{*}}$ and $\lambda^{\textsc{high}}(r_i)^{\boldsymbol{*}}$, and for each of its children, $\lambda^{\textsc{card}}_{\tau_\ell}(r_j)^{\boldsymbol{*}}$ and $\lambda^{\textsc{high}}(r_j)^{\boldsymbol{*}}$.

\subsubsection*{Neighbor information}

For each of its current children $r_j \in C_{r_i}$, robot $r_i$ stores the robot identifier $\textsc{robotID}(r_j)$, vertex position $r_j$ in subgraph $H_{r_i}$, and the most recent downstream vertex cardinality $\lambda^{\textsc{card}}_{\tau_\ell}(r_j)$ and vertex height $\lambda^{\textsc{high}}(r_j)$ values it has received.

At each time step, robot $r_i$ also stores the most recent displacement $\boldsymbol{d}_{r_i r_j}$ and the relative orientation $\boldsymbol{q}_{r_i r_j}$ for each current child or parent robot $r_j \in F_{r_i}$, according to its own sensor information.
Recall that the target subgraph $H_{r_i}^{\boldsymbol{*}}$ of robot $r_i$ includes the target displacement $\boldsymbol{d}_{r_i r_j}^{\boldsymbol{*}}$ and the target relative orientation $\boldsymbol{q}_{r_i r_j}^{\boldsymbol{*}}$ w.r.t.~robot $r_i$ for each child $r_j \in F_{r_i}$. Thus, the robot $r_i$ also has access to this information for its children (but not for its parent).

\subsubsection*{Hierarchically-organized sensor information}

When a child robot $r_j$ senses an environmental feature $a$, it uses its own sensor information and local decisions to determine whether to respond to this feature itself (detailed in the next subsection), whether to send this information upstream to be considered by its parent $r_i$, or both.
At each time step that $r_j$ sends feature information to its parent $r_i$, it sends a displacement ${\boldsymbol{d}_{r_j a}}$ and relative orientation ${\boldsymbol{q}_{r_j a}}$ for each feature $a$ w.r.t. $r_j$. At each time step that $r_i$ receives information about a feature $a$ from its child, it first converts the information into its own coordinate system, such that
\begin{equation}
\begin{aligned}
    {\boldsymbol{d}_{r_i a}} &= \boldsymbol{d}_{r_i r_j} + \textsc{RT}(\boldsymbol{d}_{r_j a}, \boldsymbol{q}_{r_i r_j}) \\
    \boldsymbol{q}_{r_i a} &= \textsc{H}(\boldsymbol{q}_{r_i r_j}, \boldsymbol{q}_{r_j a})
\end{aligned}~,
\end{equation}
where $\textsc{RT}(\boldsymbol{x}, \boldsymbol{y})$ is a function to rotate vector $\boldsymbol{y}$ by unit quaternion $\boldsymbol{x}$ using the Euler–Rodrigues formula, with the Euler parameters given by the coefficients of quaternions $\boldsymbol{y}^p = (0,\boldsymbol{y})$ and $\boldsymbol{x}$, and $\textsc{H}(\boldsymbol{x}, \boldsymbol{y})$ takes the Hamilton product of two quaternions $\boldsymbol{x}$ and $\boldsymbol{y}$. 
Then, robot $r_i$ makes its own decision about whether to respond to this feature, forward it upstream to be considered by its respective parent, or both.

\subsubsection*{Hierarchically-organized actuation instructions}

In the current SoNS implementation, all actuation is motion-based. Motion instructions are communicated asynchronously in a SoNS using six reference vectors, which each robot then uses to calculate its motion control outputs at each time step.

Each robot $r_i$ has three linear velocity reference vectors $\boldsymbol{v}_{r_i}^{\textsc{local}}$, $\boldsymbol{v}_{r_i}^{\textsc{hierarchical}}$, and $\boldsymbol{v}_{r_i}^{\textsc{global}}$; and three angular velocity reference vectors $\boldsymbol{\omega}_{r_i}^{\textsc{local}}$, $\boldsymbol{\omega}_{r_i}^{\textsc{hierarchical}}$, and $\boldsymbol{\omega}_{r_i}^{\textsc{global}}$. 

The reference vectors $\boldsymbol{v}_{r_i}^{\textsc{local}}$ and $\boldsymbol{\omega}_{r_i}^{\textsc{local}}$ are dedicated to local goals and can only be updated by $r_i$ according to its own sensor information and local decisions. 
At each time step, robot $r_i$ can either set its $\boldsymbol{v}_{r_i}^{\textsc{local}} = [0,0,0]$ or define its $\boldsymbol{v}_{r_i}^{\textsc{local}}$ according to the displacement of a target ${\boldsymbol{d}_{r_i a}}$ based on its own sensor information and its goal to either reach or avoid some position, object, or other robot in the environment, such that
\begin{equation}
  \boldsymbol{v}_{r_i}^{\textsc{local}} =
    \begin{cases}
      \boldsymbol{v}^{\textsc{maximum}} \times \widehat{\boldsymbol{d}_{r_i a}} 
         & \text{if $\delta < k_1$}\\
      \min\left( \boldsymbol{v}^{\textsc{maximum}}, ~~\lim_{\delta \to\ {k_1}} -\log(\delta) \times \frac{\delta - k_1}{k_3} \times k_2 \right) \times \widehat{\boldsymbol{d}_{r_i a}}~~ 
         & \text{if $k_1 < \delta < k_3$}\\
      [0,0,0] 
         & \text{otherwise}
    \end{cases}       
\label{eq:avoider}
\end{equation}
where $\boldsymbol{v}^{\textsc{maximum}}$ is a vector constant, $\widehat{\boldsymbol{d}_{r_i a}}$ denotes the unitized vector $\boldsymbol{d}_{r_i a}$, $\delta$ is the Euclidean distance between the robot $r_i$ and the position that it is targeting to be reached or avoided, and $k_1$, $k_2$, $k_3$ are scalar constants, with all constants to be defined in the SoNS control algorithm before robots are initialized.
Then, the relative orientation of a target ${\boldsymbol{q}_{r_i a}}$ is translated into an angular vector and
an equivalent of Eq.~\ref{eq:avoider}, where $\boldsymbol{\omega}^{\textsc{maximum}}$ and $\widehat{\boldsymbol{q}_{r_i a}}$ are used instead of $\boldsymbol{v}^{\textsc{maximum}}$ and $\widehat{\boldsymbol{d}_{r_i a}}$ respectively, is used by $r_i$ to calculate $\boldsymbol{\omega}_{r_i}^{\textsc{local}}$.

The reference vectors $\boldsymbol{v}_{r_i}^{\textsc{hierarchical}}$ and $\boldsymbol{\omega}_{r_i}^{\textsc{hierarchical}}$ are dedicated to global goals that are defined by the SoNS-brain and can only be updated by $r_i$ according to information received from its parent, in the following way. 
At each time step, for each child robot $r_j$, a parent robot $r_i$ first converts the target displacement $\boldsymbol{d}_{r_i r_j}^{\boldsymbol{*}}$ and target relative orientation $\boldsymbol{q}_{r_i r_j}^{\boldsymbol{*}}$ into the coordinate system of $r_j$, producing the new target displacement $\boldsymbol{d}_{r_j {r_j}'}^{\boldsymbol{*}}$ and target relative orientation $\boldsymbol{q}_{r_j {r_j}'}^{\boldsymbol{*}}$, which it then sends to its child $r_j$. The target displacement is then used by $r_j$ to calculate $\boldsymbol{v}_{r_j}^{\textsc{hierarchical}}$, such that
\begin{equation}
     \boldsymbol{v}_{r_j}^{\textsc{hierarchical}} =
    \begin{cases}
      \boldsymbol{v}^{\textsc{default}} \times \widehat{\boldsymbol{d}_{r_j {r_j}'}^{\boldsymbol{*}}}         
           & \text{if $|\boldsymbol{d}_{r_j {r_j}'}^{\boldsymbol{*}}| > k_4 $}\\
      \boldsymbol{v}^{\textsc{default}} \times \frac{(|\boldsymbol{d}_{r_j {r_j}'}^{\boldsymbol{*}}| - k_5)}{k_4} \times \widehat{\boldsymbol{d}_{r_j {r_j}'}^{\boldsymbol{*}}}   
           & \text{if $k_5 < |\boldsymbol{d}_{r_j {r_j}'}^{\boldsymbol{*}}| < k_4$}\\
      [0,0,0]          
           & \text{otherwise}
    \end{cases}~,       
\label{eq:v-hierarchical}
\end{equation}
where $\boldsymbol{v}^{\textsc{default}}$ is a vector constant, $\widehat{\boldsymbol{d}_{r_j {r_j}'}^{\boldsymbol{*}}}$ denotes the unitized vector $\boldsymbol{d}_{r_j {r_j}'}^{\boldsymbol{*}}$, and $k_4$ and $k_5$ are scalar constants, with all constants to be defined in the SoNS control algorithm before robots are initialized.
Then, an equivalent of Eq.~\ref{eq:v-hierarchical}, where $\boldsymbol{\omega}^{\textsc{default}}$ and $\boldsymbol{q}_{r_j {r_j}'}^{\boldsymbol{*}}$ are used instead of $\boldsymbol{v}^{\textsc{default}}$ and $\boldsymbol{d}_{r_j {r_j}'}^{\boldsymbol{*}}$ respectively, is used by $r_j$ to calculate $\boldsymbol{\omega}_{r_j}^{\textsc{hierarchical}}$.

The reference vectors $\boldsymbol{v}_{r_i}^{\textsc{global}}$ and $\boldsymbol{\omega}_{r_i}^{\textsc{global}}$ are dedicated to global goals defined by any robot in the same SoNS as $r_i$ and can be updated according to the sensor information of $r_i$ or according to messages received from any robot that $r_i$ is connected to (i.e., parent or child). Likewise, robot $r_i$ can also send messages of this type to any robot it is connected to.
At each time step, each robot $r_i$ first converts any $\boldsymbol{v}_{r_j}^{\textsc{global}}$ it has received from a robot $r_j$ (whether parent or child) into its own coordinate system, using the current relative orientation $\boldsymbol{q}_{r_i r_j}$ it has stored for robot $r_j$, such that
\begin{equation}
    \boldsymbol{v}_{r_i}^{\textsc{global}} = \textsc{RT}(\boldsymbol{v}_{r_j}^{\textsc{global}}, \boldsymbol{q}_{r_i r_j})~.
\end{equation}
Robot $r_i$ then sends the produced $\boldsymbol{v}_{r_i}^{\textsc{global}}$ to any robot $r_j \in F_{r_i}$ from which it did not originally receive the respective $\boldsymbol{v}_{r_j}^{\textsc{global}}$.
If a robot $r_i$ receives $\boldsymbol{v}_{r_j}^{\textsc{global}}$ vectors from multiple robots $r_j$ in one time step, it sums all its $\boldsymbol{v}_{r_i}^{\textsc{global}}$ from that time step, producing a new ${\boldsymbol{v}_{r_i}^{\textsc{global}}}'$ for its own use.
Likewise, the same series of operations is applied to $\boldsymbol{\omega}_{r_j}^{\textsc{global}}$ to produce ${\boldsymbol{\omega}_{r_i}^{\textsc{global}}}'$.

If a robot is currently a SoNS-brain, it only uses reference vectors $\boldsymbol{v}_{r_i}^{\textsc{global}}$ and $\boldsymbol{\omega}_{r_i}^{\textsc{global}}$ (all other reference vectors are set to $[0,0,0]$). 

At each time step, the SoNS control algorithm implemented on robot $r_i$ produces two motion control outputs. All robots, regardless of their type, receive the same style of omnidirectional motion control outputs from the SoNS control algorithm. These omnidirectional outputs are translated into the appropriate motor inputs by the motion control layer of the individual robot (see Secs.~\ref{SM:aerial} and~\ref{SM:ground}, respectively, for details of the motion control layer of the aerial robots and ground robots used in this study).

The motion control outputs that each robot $r_i$ produces are a target linear velocity vector $\boldsymbol{v}_{r_i}^{\boldsymbol{*}}$ and target angular velocity vector $\boldsymbol{\omega}_{r_i}^{\boldsymbol{*}}$, updated at each time step according to the current reference vectors of robot $r_i$, as follows:
\begin{equation}
\begin{aligned}
\boldsymbol{v}_{r_i}^{\boldsymbol{*}} &= \boldsymbol{v}_{r_i}^{\textsc{hierarchical}} + \boldsymbol{v}_{r_i}^{\textsc{local}} + \boldsymbol{v}_{r_i}^{\textsc{global}}\\
\boldsymbol{\omega}_{r_i}^{\boldsymbol{*}} &= \boldsymbol{\omega}_{r_i}^{\textsc{hierarchical}} + \boldsymbol{\omega}_{r_i}^{\textsc{local}} + \boldsymbol{\omega}_{r_i}^{\textsc{global}}\\
\end{aligned}~.
\end{equation}

\subsection*{Establishing and reconfiguring SoNS connections}

A SoNS connection $e_{ij}$ between a parent robot $r_i$ and a child $r_j$ is established when one robot successfully recruits another robot, in the following way.

When two robots mutually sense each other, both robots send recruitment messages to each other and reach a consensus about which one of them should become the parent. Depending on the configuration of the SoNS algorithm, the robot that becomes the parent $r_i$ of the new link is either the robot with the larger
SoNS rank $\textsc{SoNSrootRANK}$, larger downstream vertex cardinality $\lambda^{\textsc{card}}$, or larger combination of the two values. 
When the new link $e_{ij}$ is formed, if the child robot $r_j$ had a former parent $r_k$, it breaks its link $e_{kj}$ with its former parent and also sets a timer to ignore any new recruitment messages from any robot with its former SoNS identifier $\textsc{SoNSrootID}^\textsc{OLD}_i$ until the time steps passed since it had that former SoNS identifier $t^\textsc{OLD}_i$ is equal to $\lambda^{\textsc{high}}(r_j)$. Note that, because the $r_j$ and its former parent $r_k$ are no longer connected and are likely to mutually sense each other, they will likely try to recruit each other, and in this case the robot $r_j$ will become the parent of $r_k$ if the SoNS rank $\textsc{SoNSrootRANK}$ is the recruitment metric, because $r_j$ would then have the new (larger) SoNS rank $\textsc{SoNSrootRANK}$ that it inherited from its new parent $r_i$.

A SoNS connection $e_{ij}$ can be broken for reasons other than a new recruitment (either because of a local decision by robot $r_i$ or $r_j$, or because of a disturbance or other error). When such a break occurs, the former child $r_j$ will no longer have a parent. It then becomes the SoNS-brain of its own SoNS, returns to its original defaults with which it initialized, and randomly generates a new $\textsc{robotRANK}$ according to the defined uniform distribution. If it has children, it sends them updated information accordingly.

At any SoNS connection $e_{ij}$, the parent robot $r_i$ can choose to transfer its child $r_j$ to another robot $r_k$, in an operation called a ``handover." When parent robot $r_i$ hands over its child to $r_k$, a new link $e_{kj}$ is established and then the former link $e_{ij}$ is broken.

\rhead{SoNS connections}

\subsection*{Node allocation}

For each parent robot $r_i$, each of its children robots $r_j$ need to be allocated to one of its target child positions $r_j^{\boldsymbol{*}} \in C_{r_i}^{\boldsymbol{*}}$. Each node allocation operation occurs in the following way.
For all children $r_j \in C_{r_i}$ and all target child positions $r_j^{\boldsymbol{*}} \in C_{r_i}^{\boldsymbol{*}}$ of parent robot $r_i$, define
\begin{equation}
\begin{aligned}
    \textit{Source displacements:}&~~~~~~ \bm{\mathcal{S}}^{\textsc{d}} &=& ~\boldsymbol{d}_{r_i r_j} ~\forall r_j \in C_{r_i}  \\
    \textit{Source downstream cardinalities:}&~~~~~~ \bm{\mathcal{S}}^{\textsc{card}} &=& ~\lambda^{\textsc{card}}(r_j) ~\forall r_j \in C_{r_i}  \\ 
    \textit{Target displacements:}&~~~~~~ \bm{\mathcal{T}}^{\textsc{d}} &=& ~\boldsymbol{d}_{r_i r_j}^{\boldsymbol{*}} ~\forall r_j^{\boldsymbol{*}} \in C_{r_i}^{\boldsymbol{*}}  \\
    \textit{Target downstream cardinalities:}&~~~~~~ \bm{\mathcal{T}}^{\textsc{card}} &=& ~\lambda^{\textsc{card}}(r_j^{\boldsymbol{*}}) ~\forall r_j^{\boldsymbol{*}} \in C_{r_i}^{\boldsymbol{*}}  \\
    \textit{Displacement costs:}&~~~~~~ \bm{\mathcal{W}}^{\textsc{d}} &=& ~|| \mathcal{S}_j^{\textsc{d}} - \mathcal{T}_j^{\textsc{d}}|| ~\forall~ \textsc{CB}(r_j \in C_{r_i} ,~ r_j^{\boldsymbol{*}} \in C_{r_i}^{\boldsymbol{*}}) \\
    \textit{Cardinalities costs:}&~~~~~~ \bm{\mathcal{W}}^{\textsc{card}} &=& ~|| \mathcal{S}_j^{\textsc{card}} - \mathcal{T}_j^{\textsc{card}}|| ~\forall~ \textsc{CB}(r_j \in C_{r_i} ,~ r_j^{\boldsymbol{*}} \in C_{r_i}^{\boldsymbol{*}})
\end{aligned}~,
\label{eq:costs}
\end{equation}
where $\textsc{CB}(\boldsymbol{x},\boldsymbol{y})$ denotes all unique combinations of entries in $\boldsymbol{x}$ and $\boldsymbol{y}$.
Then, using the matrices defined by Eq.~\ref{eq:costs}, use an algorithm to allocate nodes. In the current implementation, we use the following algorithm:
\begin{algorithm}[h!]
\caption{Node Allocation}
\label{alg:nodes}
\begin{algorithmic}[1]
\Procedure{allocate}{$\bm{\mathcal{S}}^{\textsc{d}}, \bm{\mathcal{S}}^{\textsc{card}}, \bm{\mathcal{T}}^{\textsc{d}}, \bm{\mathcal{T}}^{\textsc{card}}, \bm{\mathcal{W}}^{\textsc{d}}, \bm{\mathcal{W}}^{\textsc{card}}$}
\State Sort the values in $\bm{\mathcal{W}}^{\textsc{d}}$ and apply the same sorting transformation to $\bm{\mathcal{W}}^{\textsc{card}}$
\State Construct a network flow graph $G_{\textsc{flow}}$ using the defined matrices
\State Apply the Network Flow Algorithm in~\cite{wayne1999generalized,cormen2001section} to obtain the network with the maximum possible flow rate and select the source in $\bm{\mathcal{S}}^{\textsc{d}}, \bm{\mathcal{S}}^{\textsc{card}}$ that should be matched to each target in $\bm{\mathcal{T}}^{\textsc{d}}, \bm{\mathcal{T}}^{\textsc{card}}$ accordingly
\EndProcedure
\end{algorithmic}
\end{algorithm}

Robots in a SoNS continually redistribute themselves by adjusting their node allocations based on currently sensed conditions.
For example, when at least one new SoNS connection $e_{ij}$ is being established and the parent robot $r_i$ currently has more than one target child position $r_j^{\boldsymbol{*}} \in C_{r_i}^{\boldsymbol{*}}$, or vice versa (i.e., more than one connection being established and at least one target child position), each incoming child robot $r_j$ needs to be allocated to a target child position $r_j^{\boldsymbol{*}}$. At this time, any existing children $r_j$ of robot $r_i$ are likewise (re)allocated.
In any SoNS containing more than two robots, the robots continually redistribute themselves at every time step (even when no new robots are being added), in the following way.

\rhead{Node allocation}

At each time step, each robot $r_i$ that has a parent $r_k$ and has children $r_j$ runs the node allocation algorithm (see Algorithm~\ref{alg:nodes}) twice. 
First, robot $r_i$ runs the node allocation algorithm considering itself and all its downstream robots as members of the source matrices, and likewise considering all target positions of its parent's downstream subgraph $H_{r_k}$ as members of the target matrices. If a child robot $r_j$ matches with a single target, robot $r_i$ instructs it to move towards that target position, or hands it over to robot $r_k$ if the target is on a branch that is not reachable by $r_i$. If a child $r_j$ matches with multiple targets, robot $r_i$ hands it over to robot $r_k$ so that $r_j$ and its respective downstream robots can be reallocated to the multiple targets. If robot $r_i$ matches a target, it moves towards that target position, otherwise, it moves towards its parent $r_k$.
Second, if robot $r_i$ has any remaining robots $r_j$ that are still its children, it checks if any of its children should substitute it, based on the inequality
\begin{equation}
\boldsymbol{d}_{r_k r_j} \times \frac{\boldsymbol{d}^*_{r_k r_j}}{\|\boldsymbol{d}^*_{r_k r_j}\|} < \boldsymbol{d}_{r_k r_i} \times \frac{\boldsymbol{d}^*_{r_k r_i}}{\|\boldsymbol{d}^*_{r_k r_i}\|}~.
\end{equation}
If a child $r_j$ is in a better position, robot $r_i$ hands it over to $r_k$ so that it can be reassigned.
Third, robot $r_i$ runs the node allocation algorithm for its remaining children $r_j$. If one child matches one target, robot $r_i$ instructs it to move to the target position. If multiple children match the same target, robot $r_i$ selects the nearest child, instructs it to move to the target position, and then hands over to it the other children that matched with its target. If a child matches with multiple targets, robot $r_i$ sends it the list of targets so that it can reallocate its own downstream robots accordingly. If a child does not meet any of these conditions, robot $r_i$ hands it over to its parent $r_k$. 
At any point during these reallocations, if a robot $r_i$ senses that one of its links $e_{ij}$ spatially intersects with another link $e_{k \ell}$ in its sensing range, it triggers hand over operations such that $r_i$ and $r_k$ will swap children, resulting in the new links $e_{i \ell}$ and $e_{kj}$.

\clearpage\rhead{}
\section*{Section \ref*{SM:simulator}. Simulator setup}
\lhead{Section \ref*{SM:simulator}. Simulator}

We conduct our simulated experiments in the ARGoS multi-robot simulator~\cite{pinciroli2012argos}, a widely used simulator for swarm robotics research, using simulation models---custom-developed for this study~\cite{All2022:techreport-002,All2022:techreport-001}---of the aerial and ground robots used in our real experiments (see Fig.~\ref{fig:robots}). 

\begin{figure}[hbtp]
\centering
\subfigure[]{
\includegraphics[width=0.495\textwidth]{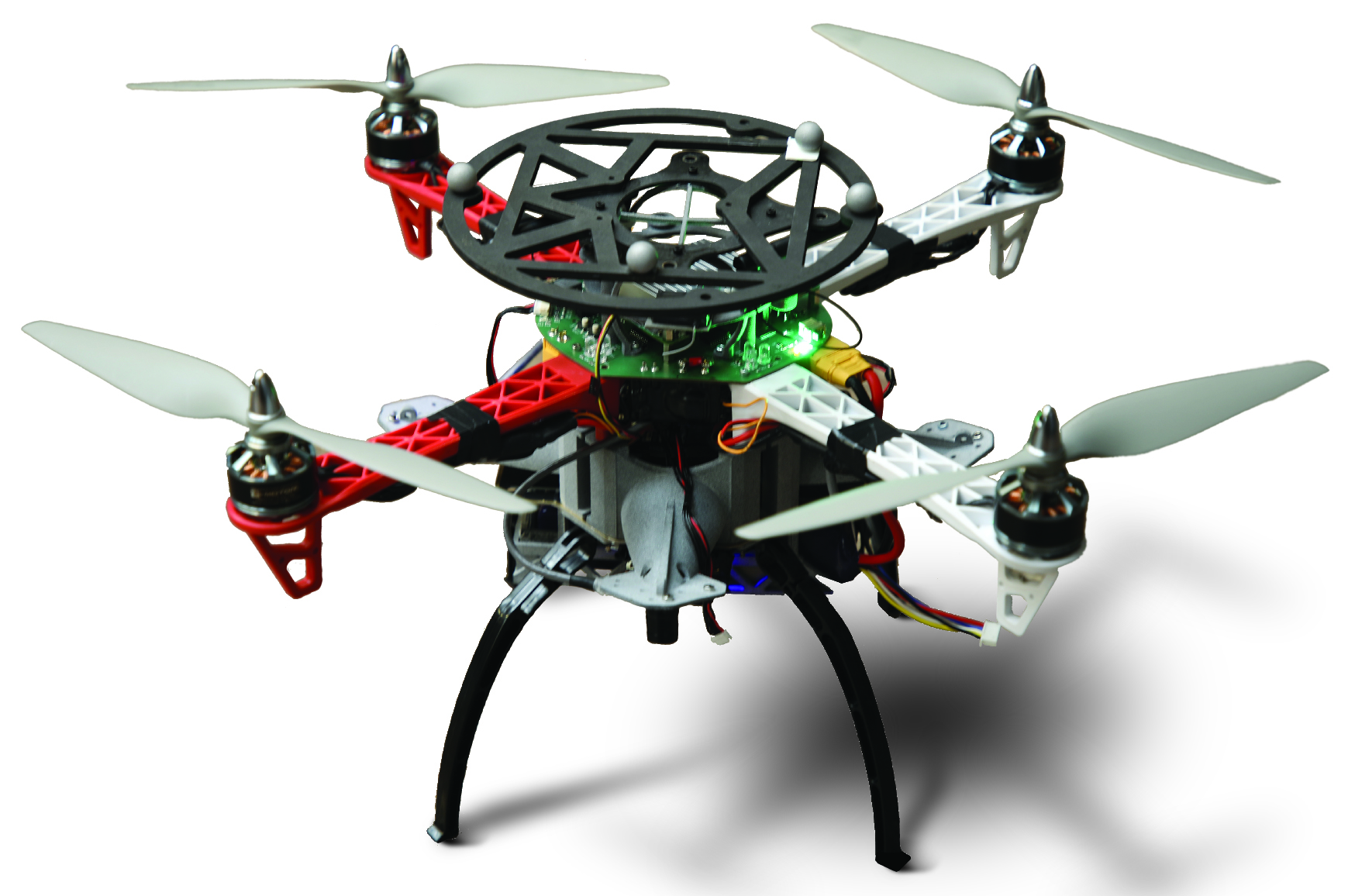}
}\hspace{-5mm}
\subfigure[]{
\includegraphics[width=0.495\textwidth]{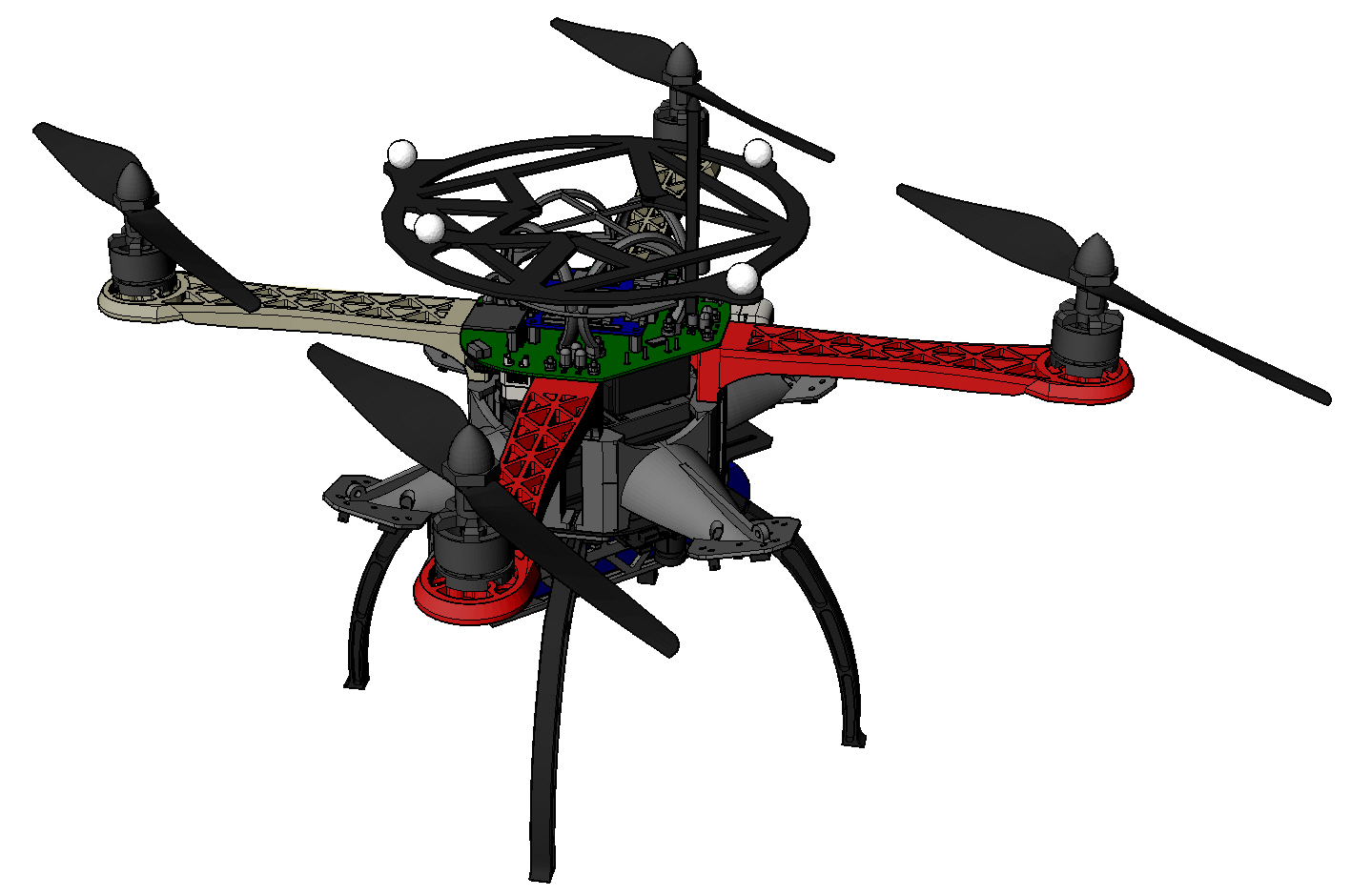}
}
\subfigure[]{
\includegraphics[width=0.35\textwidth]{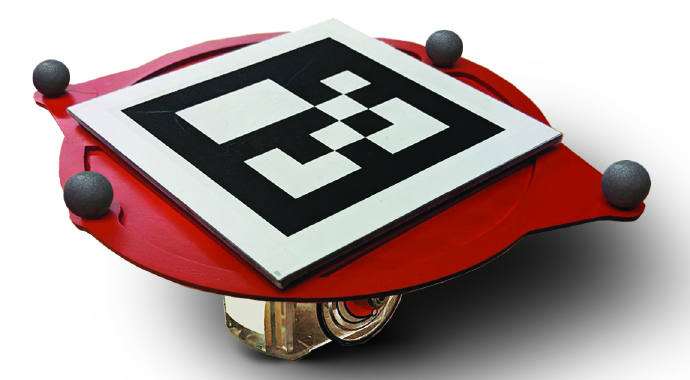}
}
\subfigure[]{
\includegraphics[width=0.35\textwidth]{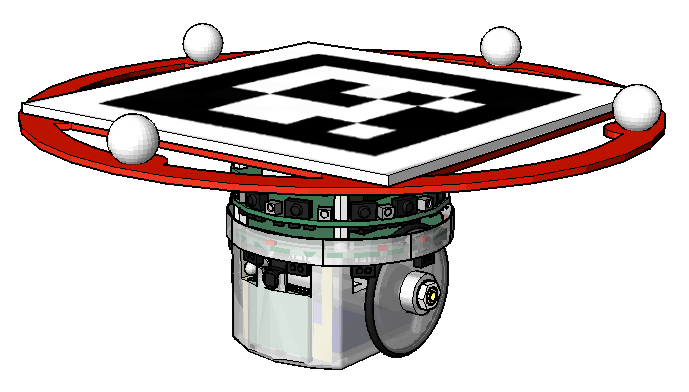}
}
\subfigure[]{
\includegraphics[width=0.25\textwidth]{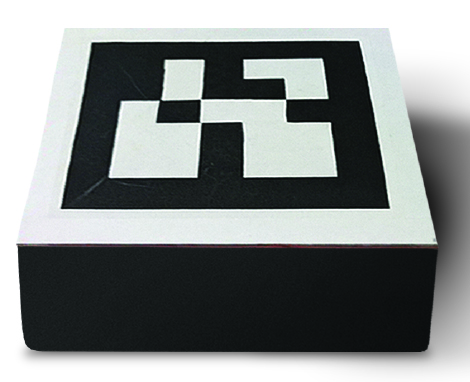}
}
\subfigure[]{
\includegraphics[width=0.25\textwidth]{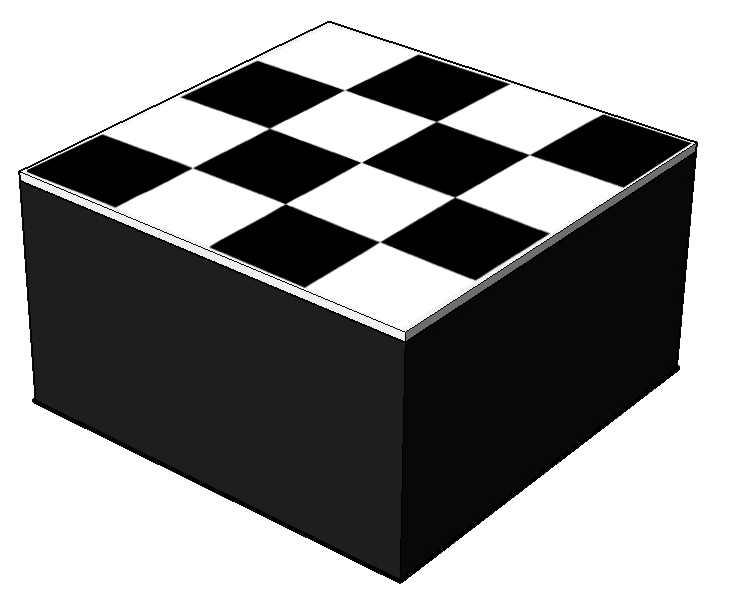}
}
\caption{The (a) real and (b) simulated aerial robot. The (c) real and (d) simulated ground robot. A (e) real and (f) simulated physical obstacle; not to size. (The spherical markers visible in (c) and (d) are not used by the SoNS software; they are used for data logging in the real experiments, see Sec.~S8.)}
\label{fig:robots}
\end{figure}

In ARGoS, sensors and actuators are plug-ins that have either read-only access or the ability to modify specific entities in the simulated 3D space~\cite{pinciroli2012argos}.
In our setup, to control the aerial and ground robots in a way that is replicable in simulation, we use executables based on the libraries of ARGoS that we have custom-developed for this study~\cite{All2022:techreport-002,All2022:techreport-001}. These executables initialize the sensors and actuators of the robots in such a way that the provided high-level control interface matches that of the sensor and actuator plug-ins of our robot models in the ARGoS simulator. In other words, each robot's control interface provided by the ARGoS libraries is an abstraction layer on top of the physical hardware (similar to the layer in~\cite{allwright2019open}) and we have used this same abstraction layer to create simulation models of the robots in ARGoS. Therefore, we can run exactly the same control software on both the real robots and the simulated robots, using C++ and the Lua scripting language. 

The robot motion models used in simulation as well as the motion control layers used to execute outputs from the SoNS software in both simulation and reality are detailed in Sec.~\ref{SM:aerial} for the aerial robots and Sec.~\ref{SM:ground} for the ground robots. 

For the models of the robot sensors and actuators in simulation, we conducted a trial-and-error calibration process to tune the speed and noise parameters. The speed parameters of the robot actuators in simulation have been tuned so that the speeds of the real robots match those of the simulated robots when controlled by the same control scripts. This calibration is straightforward, thus the maximum and average speeds of the robots in our simulated and real experiments are equivalent when running the same controllers. By contrast, noise is influenced by many (often unknown) factors and therefore the noise parameters of simulated sensors and actuators are much more challenging to calibrate and the result often underestimates the noise present in reality. To help compensate for this shortcoming, we tuned the noise parameters in simulation so that the simulated robots displayed greater noise in simple behaviors than the real robots when running the same controllers. However, in more complex missions (i.e., those used in our experiments), the noise displayed in reality is still noticeably greater than the noise displayed in simulation. The difference in error in our simulator and in reality is detailed in Sec.~\ref{SM:cross-verify}. The difference is most noticeable in portions of the missions when robots are in a steady phase and trying to remain stationary: in these steady phases, the observed error can be primarily attributed to the noise present in the aerial robot's hovering behavior, which is noticeably greater in reality than in simulation (see Sec.~\ref{SM:cross-verify}). (Note that in our cross-verification between the simulator and reality in Sec.~\ref{SM:cross-verify}, the observed differences are associated primarily with the noise of the robots, not with the SoNS behaviors.)

For the interactions between the robots and their environment, the ground robots are simulated using ARGoS's 3D-dynamics engine based on the ODE library~\cite{pinciroli2012argos}, because they can physically interact with objects on the ground, and the aerial robots are simulated using ARGoS's more lightweight 3D particle engine~\cite{pinciroli2012argos}.
Objects on the ground (such as obstacles to be avoided, see Fig~\ref{fig:robots}e,f) are also simulated using ARGoS's 3D-dynamics engine based on the ODE library, so that the ground robots can physically interact with them.

\begin{figure}[hbpt]
\centering
\subfigure[]{
\includegraphics[width=0.9\textwidth]{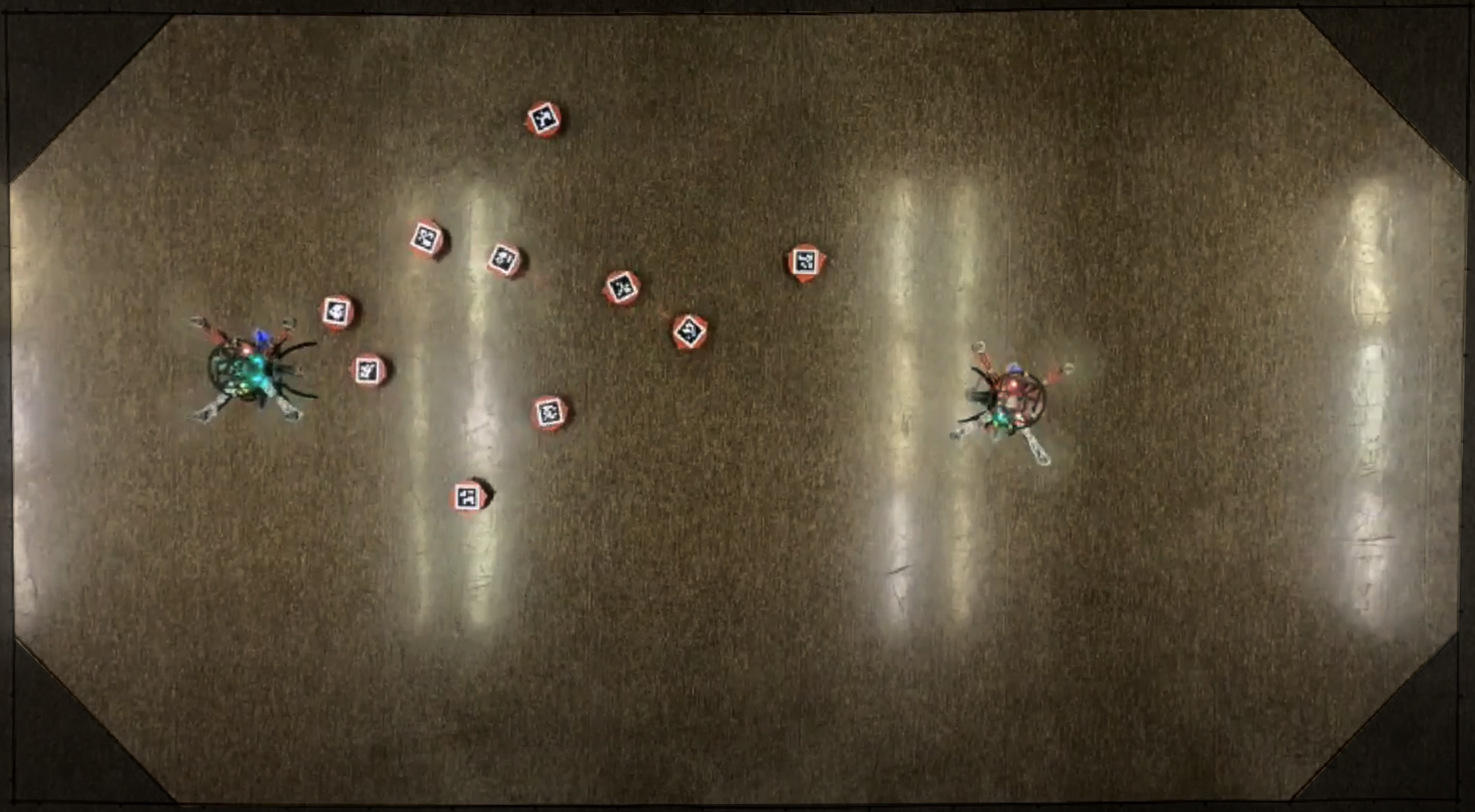}
}
\subfigure[]{
\includegraphics[width=0.9\textwidth]{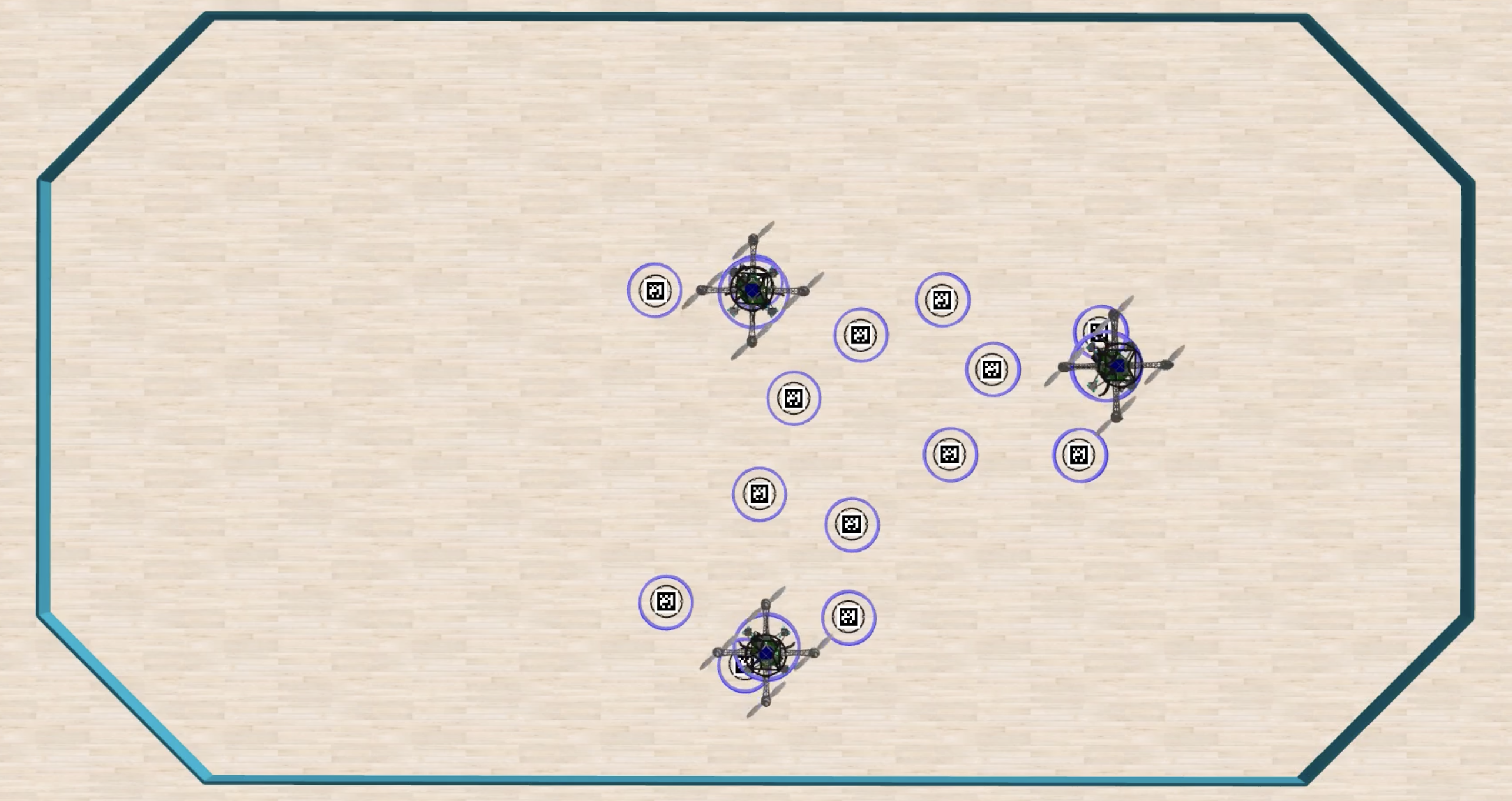}
}
\caption{The (a) real and (b) simulated arenas. Note that the blue circles around robots in (b) are for visualization purpose only and are not part of the robot models. Without these blue circles, the dimensions of the robots proportional to the arenas in the real and simulated environments can be seen to be equivalent.}
\label{fig:arenas}
\end{figure}

For the different experiment setups, customized environments in ARGoS are created using a python script to generate the .argos setup file. This generation can include arena walls, environmental features composed of obstacles that occur at certain positions relative to the size of the arena, and uniformly random initial positions of robots and positions of obstacles. 
In simulated experiment setups that are supposed to match a real experiment exactly (see Fig.~\ref{fig:arenas}), the generated arena walls in ARGoS match the border dimensions of the real arena floor. In other simulated experiments, in which more robots are used than in any real experiments, the arena size is generated to be large enough for the respective system size.
We also display some objects (blue arrows to denote SoNS connections, blue circles to denote SoNS-brains) for visualization only, when recording videos directly from ARGoS (see key frames of missions in Sec.~\ref{SM:results}, for example in Fig.~\ref{fig:mission1-keyframes}).
All simulation models and experiment setups are available in the online code repository.

In all experiments, the time step in the simulator is equivalent to 0.2~seconds.
Data logging is executed at each time step and records the global position and orientation of each robot as well as the SoNS information held locally by each robot. The experiment logs are available in the online data repository.

Depending on the system size, simulated experiments were run either on the experimenter's local machine or on our in-house IRIDIA computing cluster\footnote{\url{https://majorana.ulb.ac.be/wordpress/cluster-composition/}}, composed of 36 computational nodes for a total of 1536 CPU cores, in four logical racks. In analysis of our experiment results, computation work has only been assessed using CPU clock cycles, which is not affected by the machine on which the simulation is run.

\clearpage\rhead{}
\section*{Section \ref*{SM:aerial}. Aerial robot setup}
\lhead{Section \ref*{SM:aerial}. Aerial robot}

The aerial robot used in the experiments is the \textit{S-drone} (Swarm-drone) quadrotor platform, which we custom-developed for this study.

The full hardware details of the \textit{S-drone} quadrotor are available in an open-access technical report and open-source repositories~\cite{OguHeiAllZhuWahGarDor2022:techreport-010}\footnote{The open-access technical report of the \textit{S-drone} hardware, which includes URLs to the open-source repositories: \hyperlink{https://iridia.ulb.ac.be/IridiaTrSeries/link/IridiaTr2022-010.pdf}{https://iridia.ulb.ac.be/IridiaTrSeries/link/IridiaTr2022-010.pdf}.}, including the hardware description and specifications; design files and bill of materials; instructions for assembly, operation, tuning, and camera calibration for detection and tracking of fiducial markers; and example routines.
As described in~\cite{OguHeiAllZhuWahGarDor2022:techreport-010}, a Linux operating system compiled by Yocto is installed on the UpCore single-board computer of the S-drone~\cite{salvador2014embedded, All2022:techreport-002}.
The control software running in Linux is ARGoS~\cite{pinciroli2012argos}. The SoNS software is comprised of Lua scripts loaded and executed by ARGoS.

The \textit{S-drone} quadrotor platform has several possible operation modes. In this study, we use its operation mode for fully autonomous flight control, based on autonomous vision-based navigation and relative positioning. These capabilities are primarily supported by the quadrotor's single-board computer for onboard processing, its downward-facing optical flow smart camera module paired with single-point LiDAR for relative position estimation, and its four downward-facing camera modules for detection of fiducial markers and for relative position estimation.

In the remainder of Sec.~\ref*{SM:aerial}, we present the modeling and flight control of the \textit{S-drone} quadrotor. The section is organized in four subsections in which we give:
\begin{itemize}
\item the modeling preliminaries, including the reference frames, rotation matrix, quadrotor states, sensor modeling, and motor dynamics,
\item the nonlinear model of the quadrotor system, including its kinematics and dynamics,
\item a state space representation for system behavior analysis, 
\item the linearized model used for flight control, and \item the design of the position and attitude controllers.\end{itemize} 

The presented quadrotor modeling is used in the simulated experiments (see Sec.~\ref{SM:simulator} of the supplementary materials for details about the simulator setup).
The presented flight controllers and support for system behavior analysis (which are both based on the quadrotor modeling) are used for the simulated experiments and experiments with real robots.
The flight controllers and simulation model are also available in open-source code repositories~\cite{OguHeiAllZhuWahGarDor2022:techreport-010}\footnote{The open-access technical report of the \textit{S-drone} hardware, which includes URLs to the open-source repositories: \hyperlink{https://iridia.ulb.ac.be/IridiaTrSeries/link/IridiaTr2022-010.pdf}{https://iridia.ulb.ac.be/IridiaTrSeries/link/IridiaTr2022-010.pdf}.}.

\subsection*{Modeling preliminaries}

The preliminary information needed for the quadrotor modeling includes the reference frames, the rotation matrix for transformations between reference frames, the quadrotor states, and the sensor and motor information. 

\begin{remark}
In this study, there is no use of remote control, GPS, or other methods for off-board control or absolute positioning. 
All positioning is relative.
The only absolute measurements used in the study are those of the quadrotor's on-board magnetometer, which gives measurements in a fixed inertial frame, but these measurements are not shared between quadrotors. The flight control inputs also make partial use of the fixed inertial frame (during the calculation of waypoints). However, the quadrotor only has access to its own calculations in its fixed inertial frame, not the measurements and calculations of other quadrotors, and only uses them to control its flight based on autonomous navigation commands it has generated onboard using relative positioning. Hence, there is no absolute reference synchronized between quadrotors or otherwise used to coordinate navigation.
Navigation is strictly self-organized, using exclusively local communication and relative positioning.
\end{remark}

\subsubsection*{Reference frames}

\rhead{Modeling preliminaries}

The quadrotor is modeled using the body frame (denoted $B$) and the inertial frame (denoted $\mathcal{I}$), as shown in Fig.~\ref{fig:Ref_Frames}.

The body frame is a relative coordinate system that represents the body of the quadrotor. The origin of the frame is the quadrotor center of mass, the $x$-axis of the frame is the quadrotor roll axis (i.e., longitudinal axis, directed to the front), the $y$-axis is the pitch axis (i.e., transverse axis, directed to the right), and the $z$-axis is the yaw axis (i.e., vertical axis, directed to the bottom).

The inertial frame is a fixed coordinate system defined at an arbitrary point on the Earth's surface and can be defined at any point on the surface. The inertial frame uses a North-East-Down (NED) configuration, in which the $x$-axis is directed northward, the $y$-axis is directed eastward, and the $z$-axis is directed downward.

\begin{figure}
    \centering
    \includegraphics[width=0.9\textwidth]{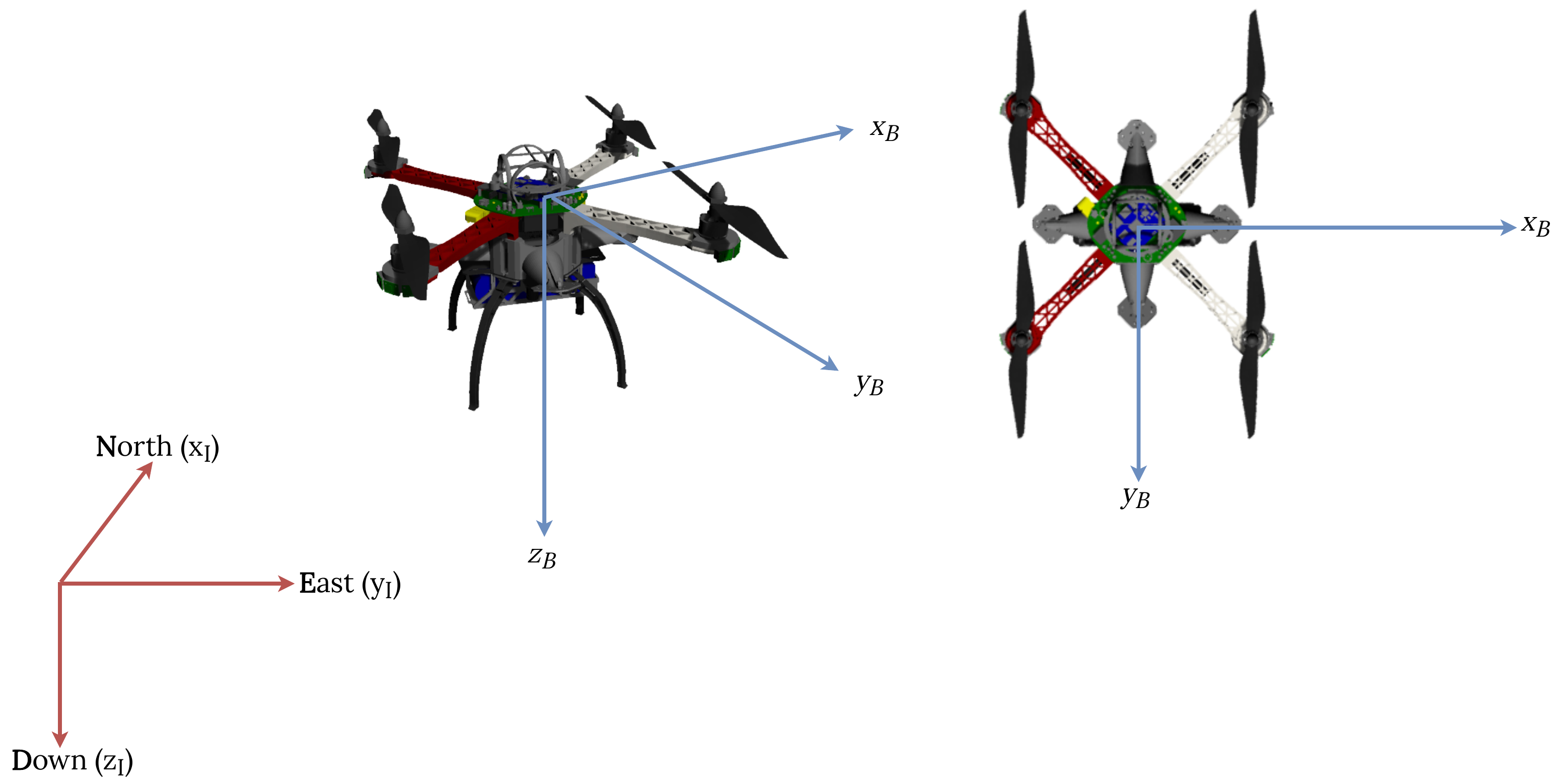}
    \caption{Reference frames.}
    \label{fig:Ref_Frames}
\end{figure}

\subsubsection*{Rotation matrix}

To transform vectors defined in the body frame $B$ into the inertial frame $\mathcal{I}$, we construct a $zyx$ rotation matrix using Euler angles (roll, pitch, yaw).
A $zyx$ rotation involves three rotations (see Fig.~\ref{fig:Rotation}): the frame is rotated around the $z$-axis (yaw rotation), then around the $y$-axis (pitch rotation), and lastly around the $x$-axis (roll rotation).
The $zyx$ rotation matrix $\tensor[^{\mathcal{I}}]{\boldsymbol{R}}{_{\mathcal{B}}}(\Phi, \Theta, \Psi) = \boldsymbol{R}_z(\Psi)\boldsymbol{R}_y(\Theta)\boldsymbol{R}_x(\Phi)$ is defined as:
\begin{equation}
    \label{eq:Rotation_xyz}
    \tensor[^{\mathcal{I}}]{\boldsymbol{R}}{_{\mathcal{B}}}(\Phi, \Theta, \Psi) =
       \begin{bmatrix*}[r]
            \phantom{-}\cos\Psi\cos\Theta  &  \cos\Psi \sin\Theta\sin\Phi  - \sin\Psi\cos\Phi  & \cos\Phi\sin\Theta\cos\Psi + \sin\Phi\sin\Psi \\
            \phantom{-}\sin\Psi\cos\Theta  &  \sin\Psi\sin\Theta\sin\Phi  + \cos\Psi\cos\Phi   & \cos\Phi\sin\Theta\sin\Psi - \sin\Phi\cos\Psi \\
            -\sin\Theta & \sin\Phi\cos\Theta & \cos\Phi\cos\Theta
        \end{bmatrix*}.
\end{equation}

\begin{figure}[ht]
    \centering
    \includegraphics[width=0.95\textwidth]{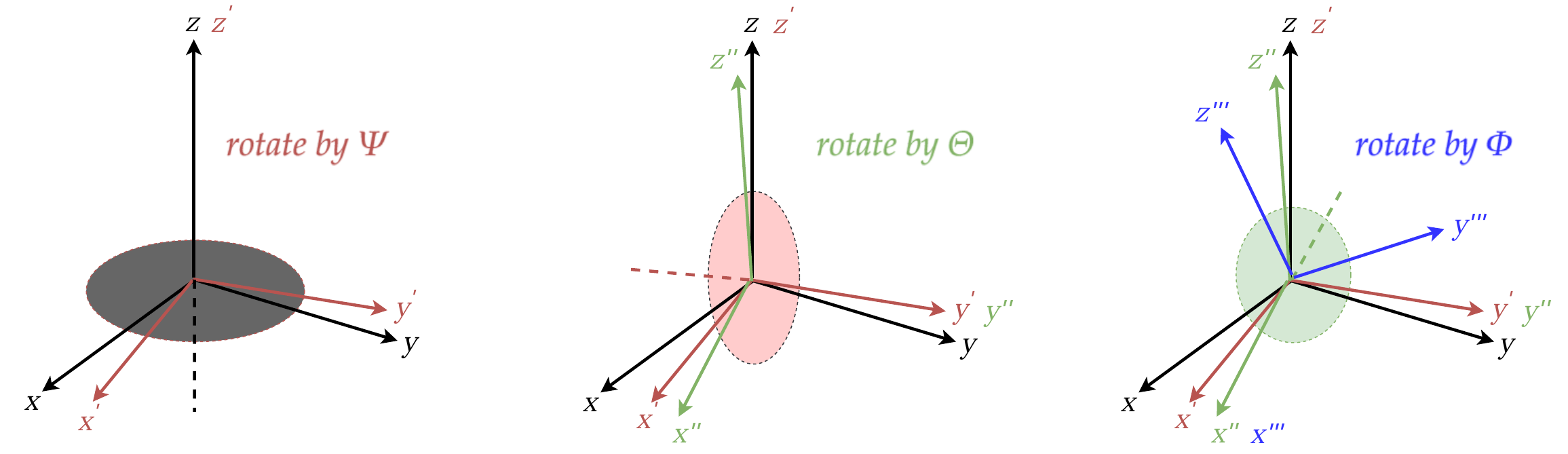}
    \caption{The $zyx$ rotation configuration.}
    \label{fig:Rotation}
\end{figure}

\subsubsection*{Quadrotor states}
The quadrotor has both inertial and body frame states, which can be used to describe its position, attitude, and velocity (see Table \ref{tab:states}). The inertial frame states, which are defined relative to a fixed reference point (the reference point is fixed locally and used only for onboard calculations, not shared or synchronized between quadrotors), include the quadrotor's position coordinates ($x, y, z$), its rotational angles ($\phi, \theta, \psi$), and its linear and angular velocities ($\Dot{x}, \Dot{y}, \Dot{z}, \Dot{\phi}, \Dot{\theta}, \Dot{\psi}$). The body frame states, which are defined relative to the quadrotor itself, include the body frame linear velocities ($u, v, w$) and body frame angular velocities ($p, q, r$). These states are important for understanding the quadrotor's motion and internal dynamics.

\begin{table}[h]
\footnotesize
\centering 
\begin{tabular}{l c c rrrrrrr} 
\hline
\textbf{Vector} &  \textbf{Description}
\\ [0.5ex]
\hline  
 \\[-1ex]
$\boldsymbol{\zeta}_I = \begin{bmatrix} x & y & z \end{bmatrix} $ & Positions in the inertial frame  \\ 
 \\[-1ex]
\hline 
 \\[-1ex]
$\boldsymbol{\eta}_I = \begin{bmatrix} \phi & \theta & \psi \end{bmatrix} $ & Euler angles (roll, pitch, yaw) in the inertial frame \\
 \\[-1ex]
\hline
 \\[-1ex]
$\boldsymbol{V}_I = \begin{bmatrix} \Dot{x} & \Dot{y} & \Dot{z} \end{bmatrix} $ & Linear velocities in the inertial frame \\
 \\[-1ex]
\hline 
 \\[-1ex]
$\boldsymbol{\omega}_I = \begin{bmatrix} \Dot{\phi} & \Dot{\theta}  & \Dot{\psi} \end{bmatrix} $ & Angular velocities in the inertial frame \\
 \\[-1ex]
\hline 
 \\[-1ex]
$\boldsymbol{V}_B = \begin{bmatrix} u & v & w \end{bmatrix} $ & Linear velocities in the body frame \\
 \\[-1ex]
\hline 
 \\[-1ex]
$\boldsymbol{\omega}_B = \begin{bmatrix} p & q & r \end{bmatrix} $ & Angular velocities in the body frame \\  [1ex] 
 \\[-1ex]
\hline
\end{tabular}
\caption{State vectors.}
\label{tab:states}
\end{table}

\subsubsection*{Sensor modeling}
To control the quadrotor, it is necessary to observe the states. The simulation model of the drone uses the following measurement models for the inertial sensors: a three-axis accelerometer, a three-axis gyroscope, and a three-axis magnetometer.

\begin{remark}
The \textit{S-drone} does not rely on external localization information, such as GPS signals, so we do not consider those types of sensors. Instead, we use inertial sensors in our quadrotor to measure the platform's orientation, velocity, and acceleration. These sensors provide independent and non-jammable measurements of the platform, as described in~\cite{farrell2008aided}. 
\end{remark}

The gyroscope measures angular rotation around the body frame axis by measuring the Coriolis Force, which acts on objects in a rotating reference frame~\cite{farrell2008aided}. 
In our measurement model, we assume that the gyroscope measures body frame angular velocities ($p,q,r$) directly, which can be integrated over time to compute the orientation of the sensor ($\phi, \theta,\psi$). The measurement model for the gyroscope is as follows:
\begin{equation}
    \begin{aligned}
        \boldsymbol{\omega}_B^m &= \boldsymbol{\omega}_B + \boldsymbol{b}(t) + \boldsymbol{\mu}  \\ 
        \Dot{\boldsymbol{b}}(t) &= \boldsymbol{\nu}_g \boldsymbol{1}_{3x1}
    \end{aligned} ,
\end{equation}
where $\boldsymbol{\omega}_B^m \in \mathbb{R}^3$ represents the measurement value of the angular velocities ($p,q,r$) in the body frame, 
the slowly time-varying bias term $\boldsymbol{b}(t) \in \mathbb{R}^3$ changes with white Gaussian noise $\boldsymbol{\nu}_g$, $\boldsymbol{\mu} \in \mathbb{R}^3 $ is the measurement noise term, and $\boldsymbol{1}_{3x1}$ is a matrix of 1s in the shape 3 by 1.

The accelerometer detects the forces present and uses them to calculate acceleration, according to the mass of the object and D'Alembert's force principle~\cite{farrell2008aided}. 
Our measurement model for the accelerometer determines the forces acting on the quadrotor and calculates its acceleration as follows:
\begin{equation}
    \begin{aligned}
        \boldsymbol{a}_B^m &= \boldsymbol{a}_B^l + \tensor[^{\mathcal{B}}]{\boldsymbol{R}}{_{\mathcal{I}}}\boldsymbol{a}_I^g \boldsymbol{b}(t) + \boldsymbol{\mu} \\ 
        \Dot{\boldsymbol{b}}(t) &= \boldsymbol{\nu}_a \boldsymbol{1}_{3x1} 
    \end{aligned} ~,
\end{equation}
where $\boldsymbol{a}_B^l \in \mathbb{R}^3$ is the linear acceleration vector in the body frame, $\boldsymbol{a}_I^g \in \mathbb{R}^3$ is the gravitational acceleration vector in the inertial frame, the slowly time-varying bias term  $\boldsymbol{b}(t) \in \mathbb{R}^3$ changes with the white Gaussian noise $\boldsymbol{\nu}_a$, and $\boldsymbol{\mu} \in \mathbb{R}^3 $ is the measurement noise term. 

The magnetometer measures the strength and direction of the earth's magnetic field, is used to help the quadrotor maintain a stable hover, and is used in the autonomous navigation and positioning processes happening onboard. (It is not used as a reference for coordination between quadrotors.) Our measurement model for the magnetometer is:
\begin{equation}
    \begin{aligned}
    \boldsymbol{H}_B^m &= \tensor[^{\mathcal{B}}]{\boldsymbol{R}}{_{\mathcal{I}}}\boldsymbol{H}_I + \boldsymbol{\mu}\\
    \boldsymbol{H}_I = \begin{bmatrix}
        H_x \\ H_y \\ H_z 
    \end{bmatrix}
    &= \| \boldsymbol{H}_I \| \cdot \begin{bmatrix}
        \cos\beta \\ 0 \\ \sin\beta
    \end{bmatrix}
    \end{aligned} ~,
\end{equation}
where $\boldsymbol{H}_I \in \mathbb{R}^3 $ is the magnetic field vector and $\beta$ is the magnetic inclination. 
If the measurements of the roll and pitch angles are already known, the yaw angle can then be calculated as:
\begin{equation}
    \begin{aligned}
        \psi = \arctan\frac{H_x^m\cos\theta + (H_y^m\sin\phi + H_z^m\cos\phi\sin\theta)}{H_x^m\sin\phi - H_y^m\cos\phi}
    \end{aligned}~.
\end{equation}

\subsubsection*{Motor dynamics}
We represent the dynamics of the motors using a first-order transfer function~\cite{bouabdallah2007full}, which allows for modeling the time-varying behavior of the motors and thereby accurately predicting the response of the quadrotor to control inputs. The transfer function describes the relationship between the input and output of the motor and can be used to design control algorithms that accurately regulate the speed and torque of the motors. The model is given as
\begin{equation}
    \begin{aligned}
        \frac{\Omega}{\Omega_d}  = \frac{1}{T_{rot}s + 1}~,
    \end{aligned}
\end{equation}
where $T_{rot}$ is a time constant, $\Omega_d$ is the desired motor speed, and $\Omega$ is the calculated motor speed.

\subsection*{Nonlinear model of the quadrotor}

For our nonlinear model of the quadrotor's dynamic behavior, we make the following assumptions, as in~\cite{bouabdallah2007full}, to simplify some of the modeling calculations:
\begin{itemize}
    \item the quadrotor body structure is rigid,
    \item the quadrotor is symmetrical in all axes, 
    \item the propeller structure is rigid and the oscillation on the quadrotor body does not show motion in the vertical direction, 
    \item the body frame coincides with the quadrotor's center of gravity, 
    \item motors are identical, 
    \item motors are positioned perpendicular to the body frame, 
    \item propeller thrust and drag moment are directly proportional to the square of the motor speed, 
    \item the ground effect is neglected,
    \item the gyroscopic effect of the motors is neglected.
\end{itemize}

\subsubsection*{Quadrotor kinematics} 

\rhead{Nonlinear quadrotor model}

To study the motion of the quadrotor using kinematics, we use the rotation matrix \eqref{eq:Rotation_xyz} to transform linear velocities measured in the body frame to the inertial frame, as follows:
\begin{equation}
    \begin{aligned}
       \Dot{\boldsymbol{\zeta}}_I = \boldsymbol{V}_I  &= \tensor[^{\mathcal{I}}]{\boldsymbol{R}}{_{\mathcal{B}}} \boldsymbol{V}_B \\ 
       \tensor[^{\mathcal{B}}]{\boldsymbol{R}}{_{\mathcal{I}}} &= (\tensor[^{\mathcal{I}}]{\boldsymbol{R}}{_{\mathcal{B}}})^T 
    \end{aligned}~.
    \label{eq:rotation_relation}
\end{equation}

Then, to transform the angular velocities from the body frame to the inertial frame, we use the angular transformation matrix $\tensor[^{\mathcal{I}}]{\boldsymbol{T}}{_{\mathcal{B}}}$, defined as
\begin{equation}
    \begin{aligned}
       \tensor[^{\mathcal{I}}]{\boldsymbol{T}}{_{\mathcal{B}}}
        =
        \begin{bmatrix}
            1     & \sin\phi\tan\theta    & \phantom{-}\cos\phi\tan\theta   \\
            0     & \cos\phi              & -\sin\phi      \\
            0     & \sin\phi\sec\theta    & \phantom{-}\cos\phi\sec\theta
        \end{bmatrix} 
        ~~\text{and}~~
        \tensor[^{\mathcal{B}}]{\boldsymbol{T}}{_{\mathcal{I}}} &= (\tensor[^{\mathcal{I}}]{\boldsymbol{T}}{_{\mathcal{B}}})^T
    \end{aligned}~.
    \label{eq:angular_trans2}
\end{equation}
We can then transform the angular velocities from the body frame to the inertial frame, as
\begin{equation}
    \begin{aligned}
        \Dot{\boldsymbol{\eta}}_I & = \tensor[^{\mathcal{I}}]{\boldsymbol{T}}{_{\mathcal{B}}} \boldsymbol{\omega}_B 
    \end{aligned}~,
    \label{eq:angular_trans}
\end{equation}
and write the quadrotor kinematics as
\begin{equation}
    \begin{aligned}
        \begin{bmatrix}
             \Dot{\boldsymbol{\zeta}}_I  \\  \Dot{\boldsymbol{\eta}}_I
        \end{bmatrix} 
        =
        \begin{bmatrix}
            \tensor[^{\mathcal{I}}]{\boldsymbol{R}}{_{\mathcal{B}}}  & \boldsymbol{0}_{3x3} \\ 
            \boldsymbol{0}_{3x3}&  \tensor[^{\mathcal{I}}]{\boldsymbol{T}}{_{\mathcal{B}}}
        \end{bmatrix}
        \begin{bmatrix}
            \boldsymbol{V}_B \\ \boldsymbol{\omega}_B 
        \end{bmatrix}          
    \end{aligned}~,
    \label{eq:kinematics}
\end{equation}
where $\boldsymbol{0}_{3x3}$ is a matrix of 3s in the shape 3 by 3 and
\begin{equation}
  \Dot{\boldsymbol{\zeta}}_I =
  \begin{cases}
              \Dot{x} &= w[\sin\phi\sin\psi + \cos\phi\cos\psi\sin\theta] -v[\cos\phi\sin\psi + \cos\psi\sin\phi\sin\theta] + u[\cos\psi\cos\theta] \\
        \Dot{y} &= -w[\cos\psi\sin\phi - \cos\phi\sin\psi\sin\theta] + v[\cos\phi\cos\psi + \sin\phi\sin\psi\sin\theta] + u[\cos\theta\sin\psi] \\
        \Dot{z} &= w[\cos\phi\cos\theta] + v[\cos\theta\sin\phi]  - u\sin\theta \\
  \end{cases}
        \label{eq:kinematics2} 
\end{equation}
and
\begin{equation}
  \Dot{\boldsymbol{\eta}}_I =
  \begin{cases}
        \Dot{\phi} &= p + r[\cos\phi\tan\theta] + q[\sin\phi\tan\theta] \\
        \Dot{\theta} &= q\cos\phi - r\sin\phi \\
        \Dot{\psi} &= r\frac{\cos\phi}{\cos\theta} + q\frac{\sin\phi}{\cos\theta}
  \end{cases}~.
        \label{eq:kinematics3} 
\end{equation}

\subsubsection*{Quadrotor dynamics} For the dynamics of the quadrotor, we need to consider the mass and inertia, the forces, and the torques acting on the body. 

To derive the differential equations describing the quadrotor dynamics using the Newton-Euler method~\cite{fossen1999guidance}, we start with the following equalities:
\begin{equation}
    \begin{aligned}
       \begin{bmatrix}
           \boldsymbol{F}_B \\ \boldsymbol{\tau}_B 
       \end{bmatrix}
       =
       \begin{bmatrix}
            m\boldsymbol{I}_{3} & \boldsymbol{0}_{3x3} \\
            \boldsymbol{0}_{3x3} & \boldsymbol{J}
        \end{bmatrix}
        \begin{bmatrix}
            \Dot{\boldsymbol{V}}_B \\ \Dot{\boldsymbol{\omega}}_B 
        \end{bmatrix}
        +
        \begin{bmatrix}
            \boldsymbol{\omega}_B \times m\boldsymbol{V}_B \\
            \boldsymbol{\omega}_B \times \boldsymbol{J}\boldsymbol{V}_B
        \end{bmatrix}
    \end{aligned}
    \label{eq:dynamics}
\end{equation}
\begin{equation*}
\boldsymbol{F}_B = 
   \begin{cases}
        F_x &= m(\Dot{u} + qw -rv) \\
        F_y &= m(\Dot{v} - pw -ru) \\
        F_z &= m(\Dot{w} + pv -qu) \\
    \end{cases} 
    ,~~ 
    \boldsymbol{\tau}_B  = 
   \begin{cases}
        \tau_x &= \Dot{p}J_x - qrJ_y + qrJ_z \\
        \tau_y &= \Dot{q}J_y + prJ_x - prJ_z \\
        \tau_z &= \Dot{r}J_z + pqJ_x + pqJ_y 
    \end{cases}~,
\end{equation*}
where $\boldsymbol{I}_3$ is a $3$ by $3$ identity matrix, $\boldsymbol{J}$ is a $3$ by $3$ diagonal inertia matrix, $m$ is the mass of the quadrotor, and $\boldsymbol{F}_B = \begin{bmatrix} F_x & F_y & F_z \end{bmatrix}^T  \in \mathbb{R}^3, \boldsymbol{\tau}_B = \begin{bmatrix} \tau_x & \tau_y & \tau_z \end{bmatrix}^T \in \mathbb{R}^3$ are force and torque vectors that act on the body. The force vector ($\boldsymbol{F}_B$) is composed of the gravitational force and the thrust force generated by the propulsion system. Similarly, the torque vector ($\boldsymbol{\tau}_B$) is composed of the roll, pitch, and yaw moments as well as the gyroscopic moments generated by the propulsion system. They can be expressed as
\begin{equation}
    \begin{aligned}
        &\boldsymbol{F}_B = \overbrace{mg\tensor[^{\mathcal{B}}]{\boldsymbol{R}}{_{\mathcal{I}}}.\boldsymbol{\hat{e}}_z}^\text{the gravity effect}
        - \overbrace{U_1.\boldsymbol{\hat{e}}_3}^\text{the thrust force} \\
         &\boldsymbol{\tau}_B = \underbrace{\boldsymbol{U}_{\tau}}_{\text{the moment vector}}
        - \underbrace{\boldsymbol{\tau}_g}_{\text{the gyroscopic moment}} \\
    \end{aligned}~,
    \label{eq:dynamics3}
\end{equation}
where $g$ is the gravitational acceleration, $\boldsymbol{\hat{e}}_z$ is the $z$-axis unit vector in the inertial frame, $\boldsymbol{\hat{e}}_3$ is the $z$-axis unit vector in the body frame, and the total thrust force $U_1$ (i.e., the combination of forces $F_1$, $F_2$, $F_3$, and $F_4$) provides the lift necessary for the quadrotor to ascend or descend (see Fig.~\ref{fig:ForcesMoments}). The moment vector $\boldsymbol{U}_{\tau} = \begin{bmatrix} U_2 & U_3 & U_4\end{bmatrix}^T$ includes moments $U_2$, $U_3$, and $U_4$. $U_2$ and $U_3$ are generated through the arms of the quadrotor and correspond to roll and pitch moments, respectively, whereas $U_4$ is the total yaw moment, which is the combination of moments $M_1$, $M_2$, $M_3$, and $M_4$ (see Fig.~\ref{fig:ForcesMoments}). The gravitational force $mg$ acting on the quadrotor is defined in the inertial frame and transformed to the body frame using the rotation matrix $\tensor[^{\mathcal{B}}]{\boldsymbol{R}}{_{\mathcal{I}}}$. In addition to these forces, the quadrotor's propulsion system generates a gyroscopic moment $\boldsymbol{\tau}_g$, which arises due to the change in the direction of the angular momentum vector of the motors as the quadrotor rolls and pitches, that can be expressed as
\begin{equation}
   \begin{aligned}
      \boldsymbol{\tau}_g = \sum_{i=1}^{4} \boldsymbol{J}(\boldsymbol{\omega}_B \times \boldsymbol{\hat{e}}_3)(-1)^{i+1}\Omega_i 
    \end{aligned}~,
    \label{eq:gyroMoment}
\end{equation}
where $\boldsymbol{J}$ is the inertia of the motors and is one of the moments acting on the quadrotor, and $\Omega_i$ is the angular velocity of the $i$-th motor. Because two of the motors on the quadrotor rotate clockwise and the other two rotate counterclockwise, the total angular velocity of the motors can be expressed as $\Omega_T = \Omega_1 - \Omega_2 + \Omega_3 - \Omega_4$.
Then, we can rewrite the gyroscopic moment \eqref{eq:gyroMoment} as
\begin{equation}
   \begin{aligned}
      \boldsymbol{\tau}_g = \boldsymbol{J} \left( \begin{bmatrix} p \\ q \\ r \end{bmatrix} \times \begin{bmatrix} 0 \\ 0 \\ \Omega_T \end{bmatrix} \right)
    \end{aligned}~.
    \label{eq:gyroMoment2}
\end{equation}

\begin{figure}
    \centering
    \includegraphics[width=0.7\textwidth]{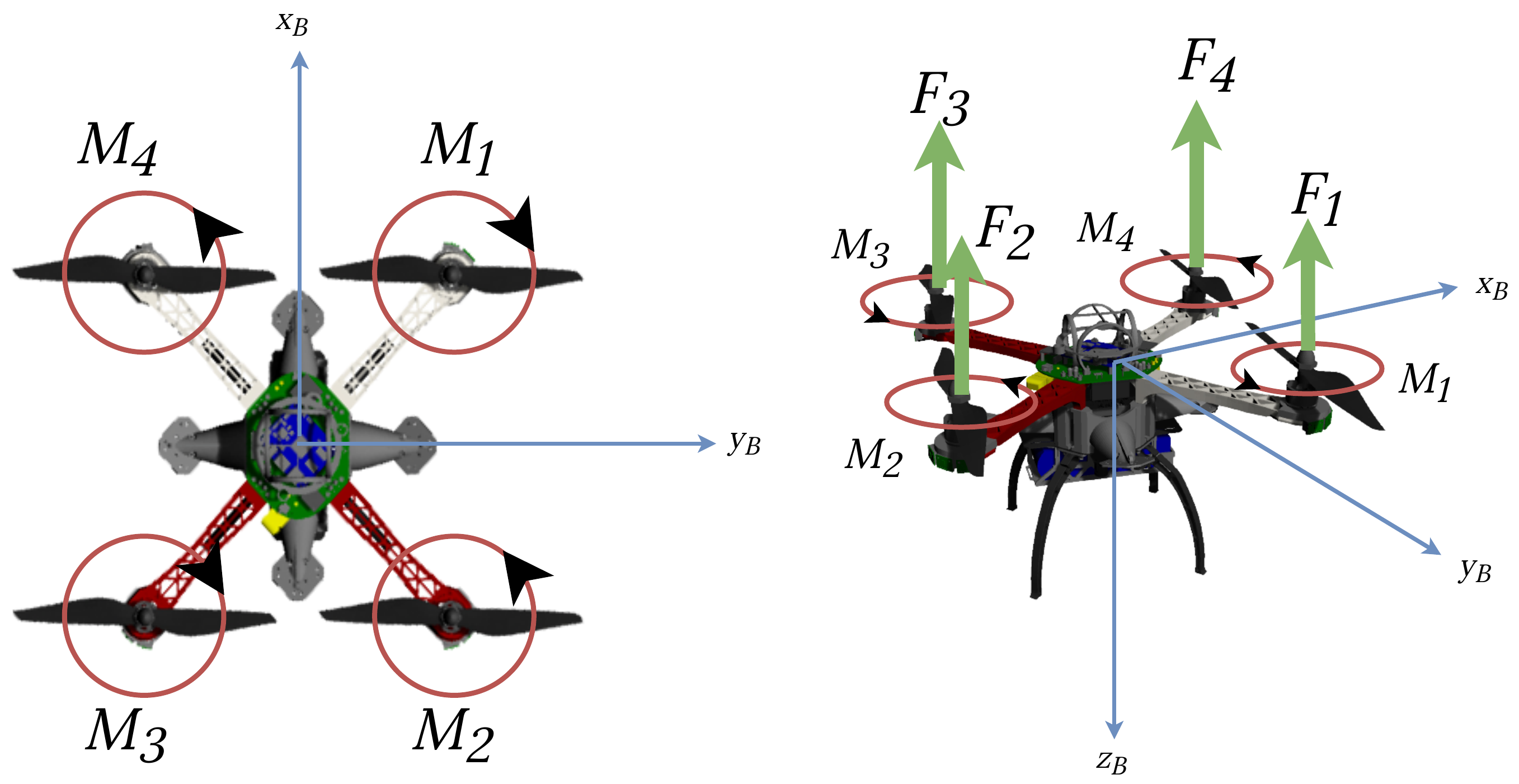}
    \caption{The moments and forces acting on the quadrotor body.
}
    \label{fig:ForcesMoments}
\end{figure}

The motor inertia $\boldsymbol{J}$ is minimal compared to the other moments acting on the quadrotor~\cite{bouabdallah2007full}. Therefore, when building the model of the quadrotor, it is generally safe to neglect the gyroscopic effect of the motors, the ground effect during takeoff or landing, the dynamic effects of the propellers' airflow and blade flapping, or other aerodynamic effects that might be observed during flight. 
Using these simplifications, and under the assumptions previously mentioned, we can construct the dynamics model of the quadrotor as follows:
\begin{equation}
    \begin{aligned}
        -mg\sin\theta &= m(\Dot{u} + qw -rv) \\
        mg\cos\theta\sin\phi &= m(\Dot{v} - pw -ru) \\
        mg\cos\theta\cos\phi - U_1 &= m(\Dot{w} + pv -qu) \\
        U_2 &= \Dot{p}J_x - qrJ_y + qrJ_z \\
        U_3 &= \Dot{q}J_y + prJ_x - prJ_z \\
        U_4 &= \Dot{r}J_z + pqJ_x + pqJ_Y 
    \end{aligned}~,
    \label{eq:dynamics4}
\end{equation}

In this model, $U_1$ is the total thrust force and only acts in the $z$-axis direction of the body frame. Assuming $K_T$ is a constant thrust coefficient, $U_1$ can be defined as
\begin{equation}
    \begin{aligned}
        U_1 = 
        \begin{bmatrix}
           0 \\ 0 \\ K_T(\Omega_1^2 + \Omega_2^2 + \Omega_3^2 + \Omega_4^2) 
        \end{bmatrix}
    \end{aligned}~,
    \label{eq:U1}
\end{equation}
such that the total thrust force is a function of the square of the rotational speed of the motors. 
$U_2$ is the roll angular moment, which is generated about the $x$-axis of the body frame, and is defined as
\begin{equation}
    \begin{aligned}
        U_2 = 
           l_xK_T(-\Omega_1^2 - \Omega_2^2 + \Omega_3^2 + \Omega_4^2)\sin45^{\circ}
    \end{aligned}~,
    \label{eq:U2}
\end{equation}
where $l_x$ is the moment arm distance of the motor frame from the $x$-axis of the body frame.

$U_3$ is the pitch angular moment, which is generated about the $y$-axis of the body frame, and is defined as
\begin{equation}
    \begin{aligned}
        U_3 = 
           l_yK_T(\Omega_1^2 - \Omega_2^2 - \Omega_3^2 + \Omega_4^2)\sin45^{\circ}
    \end{aligned}~, 
    \label{eq:U3}
\end{equation}
where $l_y$ is the moment arm distance of the motor frame from the $y$-axis of the body frame.

$U_4$ is the yaw angular momentum, which is generated about the $z$ axis of the body frame, and is defined as
\begin{equation}
    \begin{aligned}
        U_4 = 
           K_M(\Omega_1^2 - \Omega_2^2 + \Omega_3^2 - \Omega_4^2)
    \end{aligned}~,
    \label{eq:U4}
\end{equation}
where $K_M$ is a constant moment coefficient that reflects the inherent moment-generating capabilities of the motor-propeller system. Note that there is no moment arm coefficient included here because, in our quadrotor, there is no distance between the point at which the moment is applied and the yaw axis. 

By combining equations \eqref{eq:U1}, \eqref{eq:U2}, \eqref{eq:U3}, and \eqref{eq:U4}, the forces and moments generated by the propulsion system of the quadrotor can be obtained as
\begin{equation}
    \begin{aligned}
        \begin{bmatrix}
            U_1 \\ \boldsymbol{U}_{\tau}
        \end{bmatrix}
        = \begin{bmatrix} U_1 \\ U_2 \\ U_3 \\ U_4  \end{bmatrix} = 
        \begin{bmatrix}
            K_T & K_T & K_T & K_T \\
            -K_Tl_x\frac{\sqrt{2}}{2} & -K_Tl_x\frac{\sqrt{2}}{2} & -K_Tl_x\frac{\sqrt{2}}{2} & -K_Tl_x\frac{\sqrt{2}}{2} \\
            K_Tl_y\frac{\sqrt{2}}{2} & -K_Tl_y\frac{\sqrt{2}}{2} & -K_Tl_y\frac{\sqrt{2}}{2} & K_Tl_y\frac{\sqrt{2}}{2} & \\
            K_M & -K_M & K_M & -K_M 
        \end{bmatrix}
        \begin{bmatrix}
            \Omega_1^2 \\ \Omega_2^2 \\ \Omega_3^2 \\ \Omega_4^2  
        \end{bmatrix}
    \end{aligned}~.
    \label{eq:momentVec}
\end{equation}
The inverse of the transformation matrix in Eq.~\eqref{eq:momentVec} can then be used to determine the motor angular velocities needed to produce the desired forces and moments, as follows:
\begin{equation}
    \begin{aligned}
        \begin{bmatrix}
            \Omega_1^2 \\ \Omega_2^2 \\ \Omega_3^2 \\ \Omega_4^2  
        \end{bmatrix} = 
        \begin{bmatrix}
            \frac{1}{4K_T} & -\frac{1}{4K_Tl_x} & \frac{1}{4K_Tl_y} & \frac{1}{4K_M} \\
            \frac{1}{4K_T} & -\frac{1}{4K_Tl_x} & -\frac{1}{4K_Tl_y} & -\frac{1}{4K_M} \\
            \frac{1}{4K_T} & \frac{1}{4K_Tl_x} & -\frac{1}{4K_Tl_y} & \frac{1}{4K_M} \\
            \frac{1}{4K_T} & \frac{1}{4K_Tl_x} & \frac{1}{4K_Tl_y} & -\frac{1}{4K_M} \\
        \end{bmatrix}
        \begin{bmatrix} U_1 \\ U_2 \\ U_3 \\ U_4  \end{bmatrix}
    \end{aligned}~.
    \label{eq:momentVec2}
\end{equation}

\begin{remark}
The motor angular velocities serve as inputs that determine the thrust and torque generated by the motors, which in turn affect the motion of the quadrotor. These inputs are determined by an onboard controller, which receives input from sensors and processes it to compute the desired motor angular velocities. The controller uses these velocities to drive the motors, which produce the necessary thrust and torque to achieve the desired behavior of the quadrotor. In other words, the motor angular velocities are an important factor in the control and operation of the quadrotor.
\end{remark}

\subsection*{State space modeling}

To construct the quadrotor model we use for control and simulation, we now express the model in state space form, which combines the kinematic and dynamic equations. 
To create the state space representation of the quadrotor's model, we define the following state vector:
\begin{equation}
    \begin{aligned}
        \boldsymbol{X} = 
            [~ x ~~ y ~~ z ~~ u ~~ v~~ w ~~ \phi ~~ \theta ~~ \psi ~~ p ~~ q ~~ r ~]^T
    \end{aligned}~.
    \label{eq:state}
\end{equation}
Then, the complete model of the quadrotor can be written using the equation sets in Eq.~\eqref{eq:kinematics2} \eqref{eq:kinematics3} and \eqref{eq:dynamics4}, as
\begin{equation}
   \begin{aligned}
\Dot{\boldsymbol{X}} = 
\begin{bmatrix}
        \Dot{x} \\ \Dot{y} \\ \Dot{z} \\ \Dot{u} \\ \Dot{v} \\ \Dot{w} \\ \Dot{\phi} \\ \Dot{\theta} \\  \Dot{\psi}  \\ \Dot{p} \\ \Dot{q} \\       \Dot{r} 
\end{bmatrix}
=
\begin{bmatrix}
            w(\sin\phi\sin\psi + \cos\phi\cos\psi\sin\theta) -v(\cos\phi\sin\psi + \cos\psi\sin\phi\sin\theta) + u\cos\psi\cos\theta \\
         -w(\cos\psi\sin\phi - \cos\phi\sin\psi\sin\theta) + v(\cos\phi\cos\psi + \sin\phi\sin\psi\sin\theta) + u\cos\theta\sin\psi \\
         w\cos\phi\cos\theta + v\cos\theta\sin\phi  - u\sin\theta \\
         rv - qw - g\sin\theta \\
         pw - ru + g\cos\theta\sin\phi \\
         qu - pv + g\cos\theta\cos\phi - \frac{U_1}{m}\\
         p + r\cos\phi\tan\theta + q\sin\phi\tan\theta \\
         q\cos\phi - r\sin\phi \\
         r\frac{\cos\phi}{\cos\theta} + q\frac{\sin\phi}{\cos\theta}\\
         qr\frac{I_y - I_z}{I_x} + \frac{U_2}{I_x}\\
         pr\frac{I_z - I_x}{I_y} + \frac{U_3}{I_y} \\
         pq\frac{I_x - I_y}{I_z} + \frac{U_4}{I_z}
   \end{bmatrix}
    \end{aligned}~.
    \label{eq:mathmodel}
\end{equation}

The model given in Eq.~\eqref{eq:mathmodel} includes states that are defined in the body frame, namely translational acceleration states ($\Dot{u},~\Dot{v},~\Dot{w}$) and rotational acceleration states ($\Dot{p},~\Dot{q},~\Dot{r}$). Because it is more convenient to work with calculations in the inertial frame for control studies~\cite{bouabdallah2007full}, we redefine the body frame states of the state vector in the inertial frame. 
To express the translational states ($\Dot{u},\Dot{v},\Dot{w}$) in the inertial frame, we apply the rotation matrix $\tensor[^{\mathcal{I}}]{\boldsymbol{R}}{_{\mathcal{B}}}$ as follows:
\begin{equation}
    \begin{aligned}
    &\boldsymbol{F}_B = 
     m\begin{bmatrix}
         \Dot{u} & \Dot{v} & \Dot{w}
     \end{bmatrix}^T
     \\ 
 &\boldsymbol{F}_B = {mg\tensor[^{\mathcal{B}}]{\boldsymbol{R}}{_{\mathcal{I}}}.\boldsymbol{\hat{e}}_z}
        - {U_1.\boldsymbol{\hat{e}}_3}
\\
   &  \tensor[^{\mathcal{I}}]{\boldsymbol{R}}{_{\mathcal{B}}}\boldsymbol{F}_B = 
     mg\boldsymbol{\hat{e}}_z - U_1 \tensor[^{\mathcal{I}}]{\boldsymbol{R}}{_{\mathcal{B}}}\boldsymbol{\hat{e}}_3
    \end{aligned}~.
    \label{eq:state02}
\end{equation}
Then we can obtain the translational states in the inertial frame, denoted as ($\Ddot{x},\Ddot{y},\Ddot{z}$):
\begin{equation}
    \begin{aligned}
    \tensor[^{\mathcal{I}}]{\boldsymbol{R}}{_{\mathcal{B}}}\boldsymbol{F}_B &= 
     mg\boldsymbol{\hat{e}}_z - U_1 \tensor[^{\mathcal{I}}]{\boldsymbol{R}}{_{\mathcal{B}}}\boldsymbol{\hat{e}}_3 \\
      m\Dot{\boldsymbol{V}}_I 
      &= 
     mg\boldsymbol{\hat{e}}_z - U_1 \tensor[^{\mathcal{I}}]{\boldsymbol{R}}{_{\mathcal{B}}}\boldsymbol{\hat{e}}_3 
     \\   
      m\begin{bmatrix}
        \Ddot{x} \\ \Ddot{y} \\ \Ddot{z} 
      \end{bmatrix} 
      &=
      mg\boldsymbol{\hat{e}}_z - U_1 \tensor[^{\mathcal{I}}]{\boldsymbol{R}}{_{\mathcal{B}}}\boldsymbol{\hat{e}}_3
      \\
     \begin{bmatrix}
        \Ddot{x} \\ \Ddot{y} \\ \Ddot{z} 
    \end{bmatrix} 
    &=  
    g\boldsymbol{\hat{e}}_z - \frac{U_1 \tensor[^{\mathcal{I}}]{\boldsymbol{R}}{_{\mathcal{B}}}\boldsymbol{\hat{e}}_3}{m} 
    \\
    &= 
    \begin{bmatrix}
        -\frac{U_1}{m}(\sin\phi\sin\psi + \cos\phi\cos\psi\sin\theta) \\
        -\frac{U_1}{m}(-\cos\psi\sin\phi + \cos\phi\sin\psi\sin\theta) \\ 
        g - \frac{U_1}{m}\cos\phi\cos\theta
    \end{bmatrix}
    \end{aligned}~.
    \label{eq:state2}
\end{equation}

To express the rotational states  ($\Dot{p},~\Dot{q},~\Dot{r}$) in the inertial frame, we establish the relationship between the body frame and the inertial frame, using equation~\eqref{eq:angular_trans}, as
\begin{equation}
    \begin{aligned}
\begin{bmatrix}
    p \\ q \\ r
\end{bmatrix}
=
\tensor[^{\mathcal{I}}]{\boldsymbol{T}}{_{\mathcal{B}}}
\begin{bmatrix}
    \Dot{\phi} \\ \Dot{\theta} \\ \Dot{\psi} 
\end{bmatrix}
&,~~
\tensor[^{\mathcal{I}}]{\boldsymbol{T}}{_{\mathcal{B}}} = \begin{bmatrix}
    1 & 0 & 0 \\ 0 & 1 & 0 \\ 0 & 0 &1 
\end{bmatrix}
\\
\begin{bmatrix}
    p \\ q \\ r
\end{bmatrix}
&=
\begin{bmatrix}
    \Dot{\phi} \\ \Dot{\theta} \\ \Dot{\psi} 
\end{bmatrix}
    \end{aligned}~.
    \label{eq:state3}
\end{equation}

\begin{remark}
   In the equation, \eqref{eq:state3}, we make the assumption that the quadrotor primarily hovers or moves with small angles, meaning that the roll and pitch angles are approximately zero. This assumption, known as the small-angle approximation~\cite{bouabdallah2007full}, allows us to simplify the angular transformation matrix given $\tensor[^{\mathcal{I}}]{\boldsymbol{T}}{_{\mathcal{B}}}$ to a 3 by 3 identity matrix. 
\end{remark}

\rhead{State space modeling}

With the relationship established and the corresponding equalities for ($\Dot{p},\Dot{q},\Dot{r}$) in Eq.~\eqref{eq:mathmodel}, we can obtain the transformed rotational states ($\Ddot{\phi},\Ddot{\theta},\Ddot{\psi}$) as
\begin{equation}
    \begin{aligned}
        \begin{bmatrix}
        \Ddot{\phi} \\
        \Ddot{\theta} \\
        \Ddot{\psi} 
\end{bmatrix}
=
    \begin{bmatrix}
         \Dot{\psi}\Dot{\theta}\frac{I_y - I_z}{I_x} + \frac{U_2}{I_x}\\
         \Dot{\phi}\Dot{\psi}\frac{I_z - I_x}{I_y} + \frac{U_3}{I_y} \\
         \Dot{\phi}\Dot{\theta}\frac{I_x - I_y}{I_z} + \frac{U_4}{I_z}
\end{bmatrix}
    \end{aligned}~.
    \label{eq:state4}
\end{equation}

Then, the model in Eq.~\eqref{eq:mathmodel} becomes
\begin{equation}
 \begin{aligned}
\Dot{\boldsymbol{X}} 
&= g(\boldsymbol{X}, \boldsymbol{U}) \\
&= f(\boldsymbol{X}) + \sum_{i=1}^{4} \boldsymbol{a}_i(\boldsymbol{X})U_i
\end{aligned}~,
    \label{eq:mathmodel2}
\end{equation}
where
\begin{equation}
 \begin{aligned}
   &\boldsymbol{X} = [~ x ~~ y ~~ z ~~ \phi ~~ \theta~~ \psi ~~ \Dot{x} ~~ \Dot{y} ~~ \Dot{z} ~ \Dot{\phi} ~~ \Dot{\theta} ~~ \Dot{\psi}  ~]^T \\
&\Dot{\boldsymbol{X}} 
=
[~ \Dot{x} ~~ \Dot{y} ~~ \Dot{z} ~~ \Dot{\phi} ~~  \Dot{\theta} ~~  \Dot{\psi} ~~  \Ddot{x} ~~  \Ddot{y} ~~  \Ddot{z} ~~  \Ddot{\phi} ~~  \Ddot{\theta} ~~   \Ddot{\psi} ~]^T ,~~ \\
&\boldsymbol{f}(\boldsymbol{X})
= [~ \Dot{x} ~~ \Dot{y} ~~ \Dot{z} ~~ \Dot{\phi} ~~  \Dot{\theta} ~~  \Dot{\psi} ~~ 0 ~~ 0 ~~ g ~~ (\Dot{\psi}\Dot{\theta})\frac{I_y - I_z}{I_x} ~~ (\Dot{\phi}\Dot{\psi})\frac{I_z - I_x}{I_y} ~~  (\Dot{\phi}\Dot{\theta})\frac{I_x - I_y}{I_z} ~]^T
\end{aligned}
\end{equation}
and
\begin{equation}
 \begin{aligned}
\boldsymbol{a}_1 &=  [~ 0 ~~ 0 ~~ 0 ~~ 0 ~~ 0~~ 0 ~~ a_1^7 ~~ a_1^8 ~~ a_1^9 ~~ 0 ~~ 0 ~~ 0 ~]^T ,~~ \\ 
a_1^7 &= -\frac{1}{m}(\sin\phi\sin\psi + \cos\phi\cos\psi\sin\theta) , ~~ \\
a_1^8 &= -\frac{1}{m}(-\cos\psi\sin\phi + \cos\phi\sin\psi\sin\theta) , ~~ \\
a_1^9 &= - \frac{1}{m}\cos\phi\cos\theta, ~~ \\
\boldsymbol{a}_2 &=  [~ 0 ~~ 0 ~~ 0 ~~ 0 ~~ 0~~ 0 ~~ 0 ~~ 0~~ 0 ~~ \frac{1}{I_x} ~~ 0 ~~ 0 ~]^T , ~~ \\
\boldsymbol{a}_3 &=  [~ 0 ~~ 0 ~~ 0 ~~ 0 ~~ 0~~ 0 ~~ 0 ~~ 0~~ 0 ~~ 0 ~~ \frac{1}{I_y} ~~ 0 ~]^T , ~~ \\
\boldsymbol{a}_4 &=  [~ 0 ~~ 0 ~~ 0 ~~ 0 ~~ 0~~ 0 ~~ 0 ~~ 0~~ 0 ~~ 0 ~~ 0 ~~ \frac{1}{I_z} ~]^T , ~~ \\
\end{aligned}~.
\end{equation}

\subsection*{Linearized modeling for flight control}

In order to design a linear flight controller, for instance, a proportional–integral–derivative (PID) controller, it is necessary to linearize the model \eqref{eq:mathmodel2} around an operating point that brings the system to an equilibrium state. This process is described, following~\cite{koo1998output}, as
\begin{equation}
    \begin{aligned}
     \Dot{\boldsymbol{X}}  = g(\Bar{\boldsymbol{X}},\Bar{\boldsymbol{U}}) \to \boldsymbol{0}
    \end{aligned}~,
\end{equation}
where the control input vector $\boldsymbol{U} = \begin{bmatrix} U_1 & U_2 & U_3 & U_4\end{bmatrix}^T$ consists of the total thrust $U_1$ and control moments. $\Bar{\boldsymbol{X}}$ represents an operating point with a constant input $\Bar{\boldsymbol{U}}$, which is known as the trim condition and is defined as
\begin{equation}
    \begin{aligned}
     \Bar{\boldsymbol{U}}  = \begin{bmatrix} mg & 0 & 0 &0 \end{bmatrix}^T
    \end{aligned}~,
\end{equation}
where $mg$ represents the total thrust required to counteract the inertial force along the $+z$-axis (due to gravity and to the weight of the quadrotor) and maintain the hover state of the quadrotor.

To get the linearized model of Eq.~\eqref{eq:mathmodel2} around the operating point $ \Bar{\boldsymbol{X}}  = [~ \Bar{x}~~\Bar{y}~~\Bar{z}~~0~~0~~0~~0~~0~~0~]^T$ with the constant input $\Bar{\boldsymbol{U}}$, we use Taylor expansion as follows:
\begin{equation}
    \begin{aligned}
        \Dot{\boldsymbol{X}} = g(\Bar{\boldsymbol{X}}, \Bar{\boldsymbol{U}}) 
        + \frac{\delta g}{\delta \boldsymbol{X}}|_{\Bar{\boldsymbol{X}}}\delta \boldsymbol{X} 
        + \frac{\delta g}{\delta \boldsymbol{U}}|_{\Bar{\boldsymbol{U}}}\delta \boldsymbol{U} 
        + \frac{1}{2}\frac{\delta^2 g}{\delta^2 \boldsymbol{X}}|_{\Bar{\boldsymbol{X}}}\delta^2 \boldsymbol{X} 
        + \frac{1}{2}\frac{\delta^2 g}{\delta^2 \boldsymbol{U}}|_{\Bar{\boldsymbol{U}}}\delta^2 \boldsymbol{U}  + \ldots
    \end{aligned}~,
    \label{eq:taylor}
\end{equation}
where $\delta \boldsymbol{X}$ and $\delta \boldsymbol{U}$ represent deviations from the trim condition, with $\delta \boldsymbol{X} = \boldsymbol{X} - \Bar{\boldsymbol{X}}$ and $\delta \boldsymbol{U} = \boldsymbol{U} - \Bar{\boldsymbol{U}}$. 

To simplify the Taylor expansion, we can ignore higher-order terms and consider only the first-order terms. Since $g(\Bar{\boldsymbol{X}}, \Bar{\boldsymbol{U}}) \to 0$, Eq.~\eqref{eq:taylor} becomes
\begin{equation}
    \begin{aligned}
        \Dot{\boldsymbol{X}} &\approx \frac{\delta g}{\delta \boldsymbol{X}}|_{\Bar{\boldsymbol{X}}}\delta \boldsymbol{X} 
        + \frac{\delta g}{\delta \boldsymbol{U}}|_{\Bar{\boldsymbol{U}}}\delta \boldsymbol{U}  \\
        &\approx \boldsymbol{A}\boldsymbol{X} + \boldsymbol{B}\boldsymbol{U}
    \end{aligned}~,
        \label{eq:taylor2}
\end{equation}
where
\begin{equation*}
\small
    \begin{aligned}
       \boldsymbol{A} =  \begin{bmatrix*}[r]
            0 & 0 & 0 & 0 & 0 & 0 & 0 & 1 & 0 & 0 & 0 & 0 \\
            0 & 0 & 0 & 0 & 0 & 0 & 0 & 0 & 1 & 0 & 0 & 0 \\
            0 & 0 & 0 & 0 & 0 & 0 & 0 & 0 & 0 & 1 & 0 & 0 \\
            0 & 0 & 0 & 0 & 0 & 0 & 0 & 0 & 0 & 0 & 1 & 0 \\
            0 & 0 & 0 & 0 & 0 & 0 & 0 & 0 & 0 & 0 & 0 & 1 \\
            0 & 0 & 0 & 0 & -g & 0 & 0 & 0 & 0 & 0 & 0 & 0 \\
            0 & 0 & 0 & g & 0 & 0 & 0 & 0 & 0 & 0 & 0 & 0 \\
            0 & 0 & 0 & 0 & 0 & 0 & 0 & 0 & 0 & 0 & 0 & 0 \\
            0 & 0 & 0 & 0 & 0 & 0 & 0 & 0 & 0 & 0 & 0 & 0 \\
            0 & 0 & 0 & 0 & 0 & 0 & 0 & 0 & 0 & 0 & 0 & 0
       \end{bmatrix*} ~~ \text{and} ~~
       \boldsymbol{B} = \begin{bmatrix*}[r]
            0 & 0 & 0 & 0  \\
            0 & 0 & 0 & 0  \\
            0 & 0 & 0 & 0  \\
            0 & 0 & 0 & 0  \\
            0 & 0 & 0 & 0  \\
            0 & 0 & 0 & 0  \\
            0 & 0 & 0 & 0  \\
            0 & 0 & 0 & 0  \\
            \frac{1}{m} & 0 & 0 & 0  \\
            0 & \frac{1}{I_x} & 0 & 0  \\
            0 & 0 & \frac{1}{I_y} & 0  \\
            0 & 0 & 0 & \frac{1}{I_z}  \\
       \end{bmatrix*} 
    \end{aligned}~.
\end{equation*}

\begin{remark}
    The differential equation given in Eq.~\eqref{eq:taylor2} drives the deviation variables $\delta \boldsymbol{X}$ and $\delta \boldsymbol{U}$ towards zero as long as they remain small. This linearization procedure is known as Jacobian linearization \cite{isidori1985nonlinear}. 
\end{remark}

\rhead{Linearized modeling for flight control}

\subsection*{Design of position and attitude controllers}

The overall structure of the flight control system for the simulation studies is shown in Fig.~\ref{fig:FCS}. This system is based on the controllers used in the real quadrotor. We use a cascaded control structure that effectively decouples the position (position and linear velocity) and attitude (angle and angular velocity) control loops. 

\begin{figure}[hbt]
    \centering
    \includegraphics[width=0.99\textwidth]{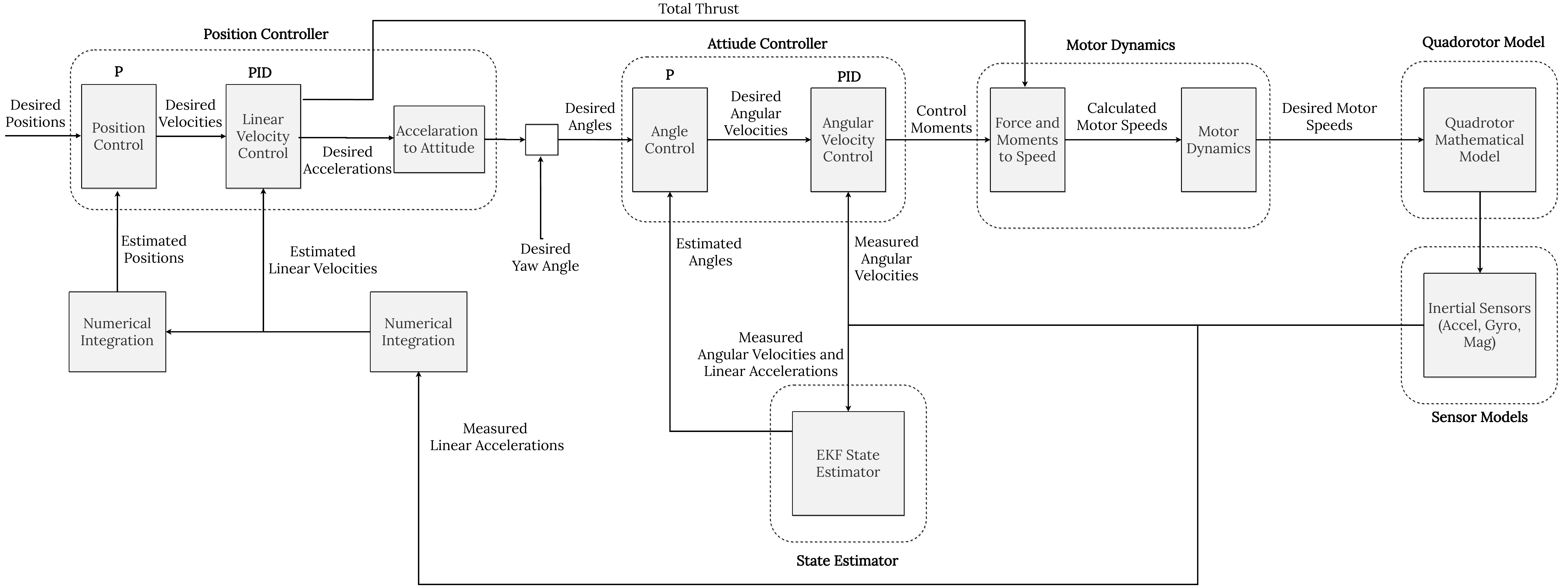}
    \caption{The flight control system.}
    \label{fig:FCS}
\end{figure}

\subsubsection*{Position control}
We use a position controller with a cascaded P/PID structure, with the first stage being a 3D position controller and the second stage being a linear velocity controller.
Note that the quadrotor position control system is underactuated~\cite{hua2013introduction}, therefore, the position control for the $xy$-axes can be achieved through attitude control.
The position controller uses a P controller and its outputs for the $xy$ axes produce the desired acceleration values $\Ddot{x}^d$, $\Ddot{y}^d$ in the inertial frame.
The outputs of the position controller for the $xy$ axes produce the desired acceleration values $\Ddot{x}^d$, $\Ddot{y}^d$ in the inertial frame. These values are then used to generate the desired roll $\phi^d$ and pitch $\theta^d$ angles for the attitude controller to use. The overall structure of the position controller (see Fig.~\ref{fig:posControl}) is given as
\begin{equation}
    \begin{aligned}
        \textbf{Altitude control}: U_1 = K_{Pz} e_{Vz} + K_{Iz}\int e_{Vz} - K_{Dz}\hat{\Dot{z}} + mg \\
       \textbf{Desired roll and pitch angles}: 
                                      \phi^d = \frac{m}{U_1} (-\Ddot{x}^d\sin\hat{\psi} + \Ddot{y}^d\cos\hat{\psi} ) \\
                                      \theta^d = \frac{m}{U_1} (-\Ddot{x}^d\cos\hat{\psi}- \Ddot{y}^d\sin\hat{\psi}) \\
                    \text{where}   ~~~  \Ddot{x}^d =  K_{Px} e_{Vx} + K_{Ix}\int e_{Vx} - K_{Dx}\hat{\Dot{x}} \\ 
                                      \Ddot{y}^d =  K_{Py} e_{Vy} + K_{Iy}\int e_{Vy} - K_{Dy}\hat{\Dot{y}} \\
    \end{aligned}~.
    \label{eq:poscont}
\end{equation}

\begin{figure}[hbt]
    \centering
    \includegraphics[width=0.85\textwidth]{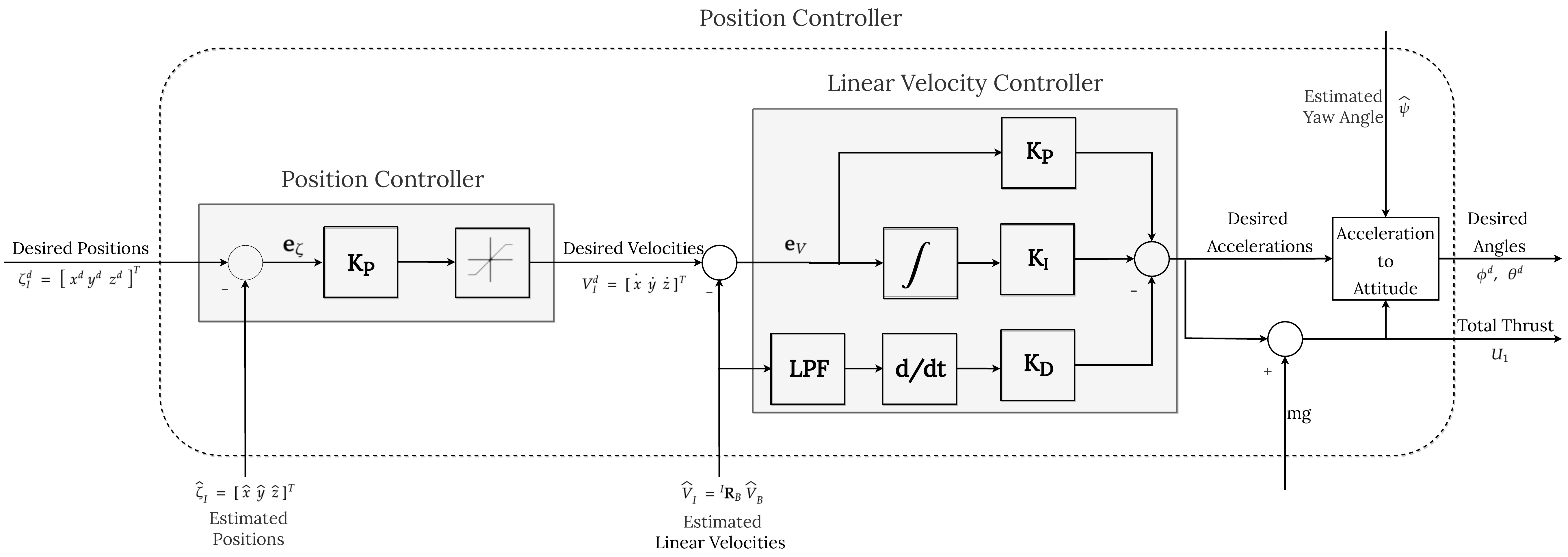}
   \caption{The cascaded position controller.}
    \label{fig:posControl}
\end{figure}

\subsubsection*{Attitude Control}

We use an attitude controller with a cascaded P/PID (proportional/proportional-integral-derivative) structure, with the first stage being an angle controller and the second stage being an angular velocity controller.

The angle controller uses a P controller that produces an output signal proportional to the error between the desired and actual values. The output of the angle controller becomes the desired value for the angular velocity controller, which uses a PID controller. 

The outputs of the angular velocity controller, $U_2$, $U_3$, and $U_4$, are the control moments that are used to manipulate the quadrotor's attitude. The overall structure of the attitude controller (see Fig.~\ref{fig:AttitudeControl}) is given as
\begin{equation}
    \begin{aligned}
        &\textbf{Roll control}: U_2 = K_{P\phi} e_{\omega\phi} + K_{I\phi}\int e_{\omega\phi} - K_{D\phi}\hat{\Dot{\phi}} \\
        &\textbf{Pitch control}: U_3 = K_{P\theta} e_{\omega\theta} + K_{I\theta}\int e_{\omega\theta} - K_{D\theta}\hat{\Dot{\theta}} \\ 
        &\textbf{Yaw control}: U_4   = K_{P\psi} e_{\omega\psi} + K_{I\psi}\int e_{\omega\psi} - K_{D\psi}\hat{\Dot{\psi}}
    \end{aligned}~.
    \label{eq:attcont}
\end{equation}

\rhead{Position and attitude controllers}

\begin{figure}[hbt]
    \centering
    \includegraphics[width=0.8\textwidth]{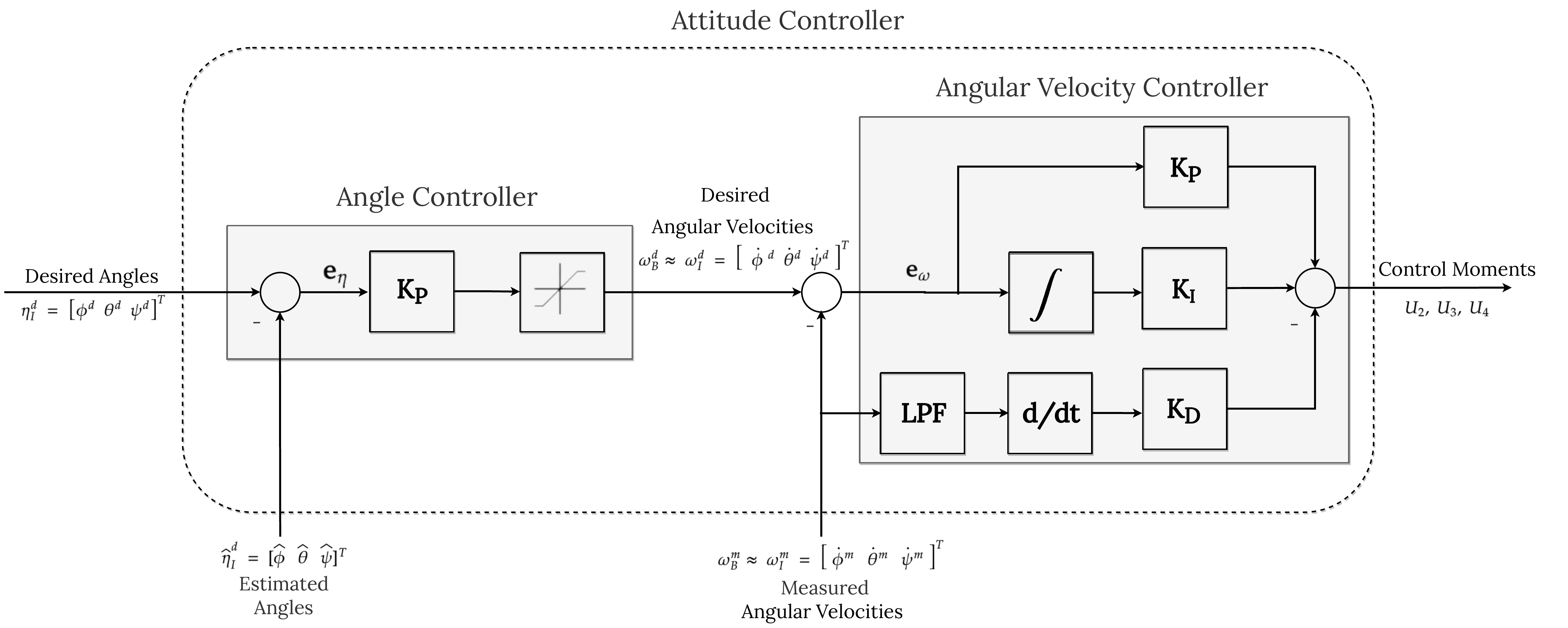}
    \caption{The cascaded attitude controller}
    \label{fig:AttitudeControl}
\end{figure}

These controllers are used for the quadrotors in all real and simulated experiments included in this study.

\subsection*{Sensing robots and objects in the environment}

In our setup, each robot or object has an identifier denoted as $\rho$. The identifier of a robot is unique and the identifier of an object is non-unique. Each robot has access to its own identifier $\rho$ and all ground robots and objects are mounted with an AprilTag fiducial marker (i.e., a 2D barcode that contains a numeric ID code and supports 3D position and pose tracking~\cite{olson2011apriltag,wang2016apriltag}) that encodes its identifier $\rho$. 
Using its four downward-facing camera modules, each quadrotor can sense the relative positions, relative orientations, and encoded identifiers $\rho$ of AprilTags present in its field of view. Each quadrotor has a lookup table of AprilTag identifiers $\rho$ that allows it to classify them as belonging either to a robot or to an object (see Fig.~\ref{fig:robots}).

When a quadrotor $\rho_i$ senses an AprilTag, it stores: its encoded identifier $\rho_j$; its relative position, denoted by the displacement vector $\boldsymbol{d}_{\rho_i \rho_j}$ in the body frame of the quadrotor $B_{\rho_i}$; and its relative orientation, denoted by the unit quaternion $\boldsymbol{q}_{\rho_i \rho_j}$, where the Euler axis portion of the quaternion is the $z$-axis of body frame $B_{\rho_i}$ and the angle portion is relative to the $x$-axis of $B_{\rho_i}$. It stores these values either in a matrix of robot information $\bm{\mathcal{P}}^{\textsc{robot}}$ or a matrix of object information $\bm{\mathcal{P}}^{\textsc{object}}$, according to its AprilTag lookup table.

\subsubsection*{Virtual sensing}

The SoNS control algorithm (see Sec.~\ref{SM:algorithm}) is meant to be general to different robot hardware platforms and assumes that all robots are capable of sensing the relative position, relative orientation, and ID of robots in their fields of view and also capable of mutual sensing (i.e., if robot A can sense robot B, then robot B can also sense robot A), regardless of the robot type. However, no robot in our implementation has a sensor that can detect the orientation and ID of a quadrotor, because the ground robot platform has no sensor pointing upwards (see Sec.~\ref{SM:ground}) and the quadrotor has no camera modules pointing laterally.
To translate the onboard capabilities of the robots into the sensing capabilities expected by the SoNS control algorithm, we implement a virtual sensing control layer that is informed strictly by local communication with nearby robots. 

The virtual sensing layer implemented on each quadrotor effectively gives two quadrotors access to each other's relative position, relative orientation, and ID, under the condition that they both have the same ground robot in their respective fields of view, in the following way.

\rhead{Environment sensing}

At each time step, for any AprilTag with the ground robot identifier $\rho_j$ in its field of view, each quadrotor $\rho_i$ sends a message containing its information $\mathcal{P}^{\textsc{robot}}_{ij} = (\rho_i, \rho_j, \boldsymbol{d}_{\rho_i \rho_j}, \boldsymbol{q}_{\rho_i \rho_j})$ to ground robot $\rho_j$. 
If a ground robot has received such a message from a quadrotor in the previous time step, it considers that quadrotor to be in its virtual sensing range. 
At each time step, each ground robot
forwards the most recent message $\mathcal{P}^{\textsc{robot}}$ it has received from any quadrotor in its virtual sensing range to all other quadrotors in its virtual sensing range. 
In other words, if two quadrotors $\rho_i$ and $\rho_k$ respectively send messages $\mathcal{P}^{\textsc{robot}}_{ij}$ and $\mathcal{P}^{\textsc{robot}}_{kj}$ to a ground robot $\rho_j$ in time step $t-1$, then ground robot $\rho_j$ forwards $\mathcal{P}^{\textsc{robot}}_{ij}$ to quadrotor $\rho_k$ and $\mathcal{P}^{\textsc{robot}}_{kj}$ to quadrotor $\rho_i$ in time step $t$.

Each quadrotor $\rho_i$ that senses a ground robot $\rho_j$ and receives a message $\mathcal{P}^{\textsc{robot}}_{kj}$ forwarded from quadrotor $\rho_k$ has the required information to calculate the relative position $\boldsymbol{d}_{\rho_i \rho_k}$ and relative orientation $\boldsymbol{q}_{\rho_i \rho_k}$ of quadrotor $\rho_k$ w.r.t. its own body frame $B_{\rho_i}$, as follows: 
\begin{equation}
\begin{aligned}
    \boldsymbol{d}_{\rho_i \rho_k} &= \boldsymbol{d}_{\rho_k \rho_j} + \textsc{RT}\left(\boldsymbol{q}_{\rho_i \rho_j}^{-1}, \textsc{RT}(\boldsymbol{q}_{\rho_i \rho_j}^{-1}, -\boldsymbol{d}_{\rho_i \rho_j})\right)\\
    \boldsymbol{q}_{\rho_i \rho_k} &= \textsc{H}(\boldsymbol{q}_{\rho_k \rho_j}, \boldsymbol{q}_{\rho_i \rho_j}^{-1})
\end{aligned}~,
\end{equation}
where $\textsc{RT}(\boldsymbol{x}, \boldsymbol{y})$ is a function to rotate vector $\boldsymbol{y}$ by unit quaternion $\boldsymbol{x}$ using the Euler–Rodrigues formula, with the Euler parameters given by the coefficients of quaternions $\boldsymbol{y}^p = (0,\boldsymbol{y})$ and $\boldsymbol{x}$, and $\textsc{H}(\boldsymbol{x}, \boldsymbol{y})$ takes the Hamilton product of two quaternions $\boldsymbol{x}$ and $\boldsymbol{y}$. 
In this way, quadrotor $\rho_i$ virtually senses quadrotor $\rho_k$, using strictly local communication.

\subsection*{Tracking objects for flight stabilization}

As can be expected, the quadrotor drifts during flight, even when the control inputs produce desired $xy$ acceleration values of $0,0$. We therefore introduce a flight stabilization control layer on each quadrotor, which can be used optionally, to adjust the target linear velocity vector $\boldsymbol{v}^*$ and target angular velocity vector $\boldsymbol{\omega}^*$ output by the SoNS control algorithm (see Sec.~\ref{SM:algorithm}) on the respective quadrotor. 
The flight stabilization control layer effectively allows a quadrotor to adjust its target velocity vectors $\boldsymbol{v}^*$ and $\boldsymbol{\omega}^*$ by using a sensed object or ground robot as a reference landmark, whenever it has one in its field of view, as follows.

For a quadrotor $r_i$ on which the flight stabilization control layer is active, when an object or ground robot $r_j$ enters the field of view of quadrotor $r_i$, quadrotor $r_i$ saves its first respective entry $\mathcal{P}^{\textsc{object}}_{ij}$ or $\mathcal{P}^{\textsc{robot}}_{ij}$ to the matrix of initial references
$\bm{\mathcal{I}}^{\textsc{robot}}$ or $\bm{\mathcal{I}}^{\textsc{object}}$.
Then, if the quadrotor $r_i$ has any objects in its field of view, it uses the objects as reference landmarks. Otherwise, if it has at least one ground robot in its field of view, it uses a ground robot as a reference landmark.

At each time step that quadrotor $r_i$ has at least one object in its field of view, it calculates the linear and angular displacements $\Delta$ of the relative position and orientation, respectively, of each object $r_j$, according to its current entry $\boldsymbol{d}_{\rho_i \rho_j}, \boldsymbol{q}_{\rho_i \rho_j} \in \mathcal{P}^{\textsc{object}}_{ij}$ and its initial entry $\boldsymbol{d}^\mathcal{I}_{\rho_i \rho_j}, \boldsymbol{q}^\mathcal{I}_{\rho_i \rho_j} \in \mathcal{I}^{\textsc{object}}_{ij}$, as follows:
\begin{equation}
\begin{aligned}
    \Delta(\boldsymbol{d}_{\rho_i \rho_j}) &= \boldsymbol{d}_{\rho_i \rho_j} + \textsc{RT}(\boldsymbol{q}_{\rho_i \rho_j}, \boldsymbol{d}^\mathcal{I}_{\rho_i \rho_j}) \\
    \Delta(\boldsymbol{q}_{\rho_i \rho_j}) &= \textsc{H}(\boldsymbol{q}_{\rho_i \rho_j}, \boldsymbol{q}^\mathcal{I}_{\rho_i \rho_j})
\end{aligned}~.    
\label{eq:change-references}
\end{equation}
Quadrotor $r_i$ then estimates its own linear displacement $\boldsymbol{d}_{\rho_i \rho_i^{'}}$ and angular displacement $\boldsymbol{q}_{\rho_i \rho_i^{'}}$ per time step, according to its current target velocity vectors $\boldsymbol{v}^*$ and $\boldsymbol{\omega}^*$. Also at each time step, quadrotor $r_i$ adjusts $\Delta(\boldsymbol{d}_{\rho_i \rho_j})$ and $\Delta(\boldsymbol{q}_{\rho_i \rho_j})$ according to its estimated $\boldsymbol{d}_{\rho_i \rho_i^{'}}$ and $\boldsymbol{q}_{\rho_i \rho_i^{'}}$, such that they are incrementally updated to reflect the motion of quadrotor $r_i$ according to $\boldsymbol{v}^*$ and $\boldsymbol{\omega}^*$.
Then, quadrotor $r_i$ takes the average of the linear and angular displacements of all sensed objects, such that
\begin{equation}
\begin{aligned}
    \Delta_{\boldsymbol{d}} &= \textsc{avg} \left( \Delta(\boldsymbol{d}_{\rho_i \rho_j}) ~\forall \rho_j \in \bm{\mathcal{P}}^{\textsc{object}} \right) \\
    \Delta_{\boldsymbol{q}} &= \textsc{avg} \left( \Delta(\boldsymbol{q}_{\rho_i \rho_j}) ~\forall \rho_j \in \bm{\mathcal{P}}^{\textsc{object}} \right)
\end{aligned}~.
\end{equation}
Quadrotor $r_i$ then adjusts its target velocity vector $\boldsymbol{v}^*$ and target angular velocity $\boldsymbol{\omega}^*$ according to the linear and angular displacements of itself and of the objects, as follows:
\begin{equation}
\begin{aligned}
    {\boldsymbol{v}^*}' &= \left( \boldsymbol{d}_{\rho_i \rho_i^{'}} + \textsc{RT}(\boldsymbol{q}_{\rho_i \rho_i^{'}}, \boldsymbol{d}^\mathcal{I}_{\rho_i \rho_j}) \right) - \Delta_{\boldsymbol{d}} \\
    {\boldsymbol{\omega}^*}' &= \textsc{H} \left( \textsc{H}(\boldsymbol{q}_{\rho_i \rho_i^{'}}, \boldsymbol{q}^\mathcal{I}_{\rho_i \rho_j}), \Delta_{\boldsymbol{q}}^{-1} \right)
\end{aligned}~.
\label{eq:new-targets}
\end{equation}

\rhead{Flight stabilization}

At each time step that quadrotor $r_i$ has at least one ground robot but no objects in its field of view, it takes the first entry in its matrix $\bm{\mathcal{P}}^{\textsc{robot}}$ and uses it as a reference landmark.
For as long as quadrotor $r_i$ chooses ground robot $r_j$ as its reference landmark, it sends an override message to ground robot $r_j$ to ignore any of its motion control inputs except for the target velocity vectors $\boldsymbol{v}^*$ and $\boldsymbol{\omega}^*$ that it receives from quadrotor $r_i$. In this way, quadrotor $r_i$ and ground robot $r_j$ move according to the same target velocities for as long as $r_j$ is the reference landmark of $r_i$.
Quadrotor $r_i$ then calculates the linear and angular displacements $\Delta(\boldsymbol{d}_{\rho_i \rho_j})$ and $\Delta(\boldsymbol{q}_{\rho_i \rho_j})$ of ground robot $r_j$ using Eq.~\ref{eq:change-references}.
Quadrotor $r_i$ then estimates its own linear displacement $\boldsymbol{d}_{\rho_i \rho_i^{'}}$ and angular displacement $\boldsymbol{q}_{\rho_i \rho_i^{'}}$ per time step, according to its current target velocity vectors $\boldsymbol{v}^*$ and $\boldsymbol{\omega}^*$.
Finally, quadrotor $r_i$ adjusts its target velocity vector $\boldsymbol{v}^*$ and target angular velocity $\boldsymbol{\omega}^*$ according to Eq.~\ref{eq:new-targets}, using $\Delta(\boldsymbol{d}_{\rho_i \rho_j})$ and $\Delta(\boldsymbol{q}_{\rho_i \rho_j})$ instead of $\Delta_{\boldsymbol{d}}$ and $\Delta_{\boldsymbol{q}}$.

\clearpage\rhead{}
\section*{Section \ref*{SM:ground}. Ground robot setup}
\lhead{Section \ref*{SM:ground}. Ground robot}
The ground robot used in the experiments (see Fig.~\ref{fig:pipuck-simu}) is the small differential-drive \textit{e-puck} robot~\cite{mondada2009puck} with the \textit{Pi-puck} extension board~\cite{millard2017pi}, which provides an interface between the e-puck robot and a Raspberry Pi single-board computer.
Each e-puck is also mounted with a unique AprilTag fiducial marker in the tag25h9 family~\cite{olson2011apriltag,wang2016apriltag}, which can be detected by the aerial robots using onboard computer vision (for details, see the technical report of our quadrotor platform~\cite{OguHeiAllZhuWahGarDor2022:techreport-010}).
Each AprilTag is $10.75$\,cm $\times$ $10.75$\,cm and its $x$-axis is directed towards its e-puck's heading.

\begin{figure}[h]
\centering
\includegraphics[width=0.8\textwidth]{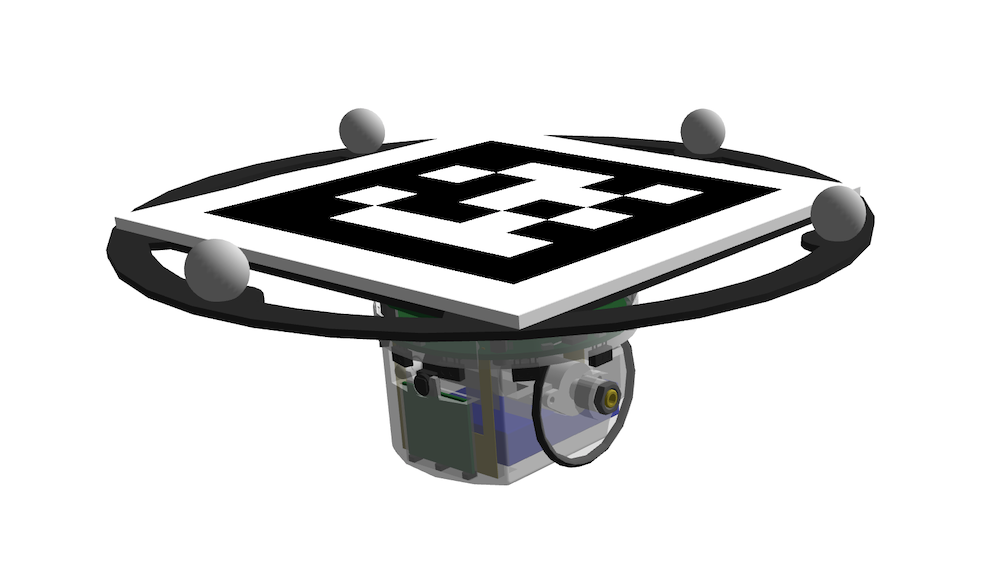}
\caption{E-puck robot with Pi-puck extension board and AprilTag fiducial marker (the spherical markers are not used by the SoNS software; they are used for data logging in the real experiments, see Sec.~S8 on the real arena). The e-puck's heading bisects the directions of the two wheels and is directed towards the left of the image.}
\label{fig:pipuck-simu}
\end{figure}

A Linux operating system compiled by Yocto is installed on the Raspberry-pi of the Pi-puck~\cite{salvador2014embedded, All2022:techreport-001}.
The control software running in Linux is ARGoS~\cite{pinciroli2012argos}. The SoNS software is comprised of Lua scripts loaded and executed by ARGoS.

\subsection*{Modeling and motion control}

Because the e-puck is a differential-drive robot, its motion control inputs (both on the real robots and in simulation) are the target velocities of the left and right wheels.
The SoNS control algorithm (see Sec.~\ref{SM:algorithm}) is meant to be general and usable with different robot hardware platforms and therefore provides omnidirectional motion control outputs regardless of the robot type.
To translate the omnidirectional outputs of the SoNS algorithm to the left and right wheel inputs for the differential-drive e-pucks, 
we introduce a reference frame for intermediary motion management.

The e-puck is modeled using the body frame (denoted $B_2$) and the intermediary motion frame (denoted $F$).
The body frame $B_2$ is a relative coordinate system that represents the body of the e-puck. The origin of the frame is the center of the e-puck fiducial marker, the $x$-axis of the frame is the longitudinal axis (i.e., directed to the front), the $y$-axis is directed to the left, and the $z$-axis is directed upwards.
The intermediary motion frame $F$ has the same origin as $B_2$ but its 3D rotation is fixed and is equivalent to the original rotation of the body frame $B_2$ at initialization.
The rotation of $F$ (i.e., the difference between the rotation of the initial $B_2$ and the current $B_2$) is defined with respect to the current $B_2$ by quaternion $\boldsymbol{q}_{B_2}$.

\begin{remark}
In this study, all positioning is relative.
Although the rotation of the intermediary motion frame $F$ is fixed, it is defined and maintained locally using onboard measurements and calculations.
The e-puck also only has access to its own calculations in its fixed-rotation intermediary motion frame, not the measurements and calculations of other robots. Hence, there is no absolute reference synchronized between robots or otherwise used to coordinate navigation.
Navigation is strictly self-organized, using exclusively local communication and relative positioning.
\end{remark}

The SoNS control algorithm outputs a target linear velocity vector $\boldsymbol{v}^*$ and target angular velocity $\boldsymbol{\omega}^*$. 
The vector $\boldsymbol{v}^*$ is transformed from the intermediary motion frame $F$ to the body frame $B_2$, using $\boldsymbol{q}_{B_2}^{-1}$, as follows:
\begin{equation}
    \boldsymbol{v}_{B_2} = \textsc{RT}(\boldsymbol{q}_{B_2}^{-1}, \boldsymbol{v}^*)
\label{eq:rotate-inputs-IMF}
\end{equation}
where $\textsc{RT}(\boldsymbol{x}, \boldsymbol{y})$ is a function to rotate vector $\boldsymbol{y}$ by unit quaternion $\boldsymbol{x}$ using the Euler–Rodrigues formula, with the Euler parameters given by the coefficients of quaternions $\boldsymbol{x}$ and $\boldsymbol{y}^p = (0,\boldsymbol{y})$.
Vector $\boldsymbol{\omega}^*$ is likewise transformed from the intermediary motion frame $F$ to the body frame $B_2$ using Eq.~\ref{eq:rotate-inputs-IMF}, which provides $\boldsymbol{\omega}_{B_2}$.

Then, $\boldsymbol{v}_{B_2} = (v^{B_2}_x, v^{B_2}_y, v^{B_2}_z)$ is translated into left $v_{\texttt{left}}$ and right $v_{\texttt{right}}$ wheel inputs as follows:
\begin{equation}
\begin{aligned}
    v_{\texttt{left}} &= v^{B_2}_x - v^{B_2}_y \sin{\theta} \\
    v_{\texttt{right}} &= v^{B_2}_x + v^{B_2}_y \sin{\theta}
\end{aligned}~,
\label{equ:differential_driving_system}
\end{equation}
where $\theta = \arctan{\frac{v^{B_2}_y}{v^{B_2}_x}}$. Note that $v_z$ is not used because the e-puck altitude does not change.
The angle portion of $\boldsymbol{q}_{B_2}$ is meanwhile updated according to $-\theta$, because $F$ remains fixed, and the intermediary motion frame $F$ is rotated according to $\boldsymbol{\omega}_{B_2}$, which will effect the left and right wheel inputs calculated in the next time step.

\rhead{Modeling and motion control}

When an e-puck sends information about relative positions or relative orientations to other robots, it always sends vectors in the body frame $B_2$, as the origin and rotation of the e-puck's body frame relative to another robot can be sensed directly by that robot via the e-puck's fiducial marker (if within the field of view).

\subsection*{Virtual sensing of robots and objects in the environment}

The SoNS control algorithm (see Sec.~\ref{SM:algorithm}) is meant to be general to different robot hardware platforms and assumes that all robots are capable of sensing the relative position, relative orientation, and ID of robots in their fields of view and that they are also capable of mutual sensing (i.e., if robot A can sense robot B, then robot B can also sense robot A), regardless of the robot type. However, the e-puck robot does not have any sensor with a field of view pointed upwards, towards the aerial robots, and also does not have any sensor capable of sensing another robot's orientation. To translate the onboard capabilities of the e-puck robot into the sensing capabilities expected by the SoNS control algorithm, we implement a virtual sensing control layer that is informed strictly by local communication with nearby robots. 

The virtual sensing layer of the e-puck robots is based on the assumption that each aerial robot can sense the relative positions, relative orientations, and encoded identifiers of the AprilTag fiducial markers in its field of view (see Sec.~\ref{SM:aerial}). The virtual sensing layer implemented on each e-puck robot effectively gives it access to the relative position, relative orientation, and ID of any aerial robot it is sensed by; as well as the relative position, relative orientation, and ID of any AprilTag fiducial markers in that aerial robot's field of view.

At each time step, each aerial robot $\rho_i$ that senses an e-puck robot $\rho_j$ sends it a message $\mathcal{P}^{\textsc{robot}}_{ij} = (\rho_i, \rho_j, \boldsymbol{d}_{\rho_i \rho_j},$ $\boldsymbol{q}_{\rho_i \rho_j})$, which includes the relative position and orientation of $\rho_j$ w.r.t. $\rho_i$, as well as both of their identifiers (see Sec.~\ref{SM:aerial}).
From this message, e-puck robot $\rho_j$ has the required information to calculate the relative position $\boldsymbol{d}_{\rho_j \rho_i}$ and relative orientation $\boldsymbol{q}_{\rho_j \rho_i}$ of aerial robot $\rho_i$ w.r.t. its own body frame $B_2^{\rho_j}$, as follows:
\begin{equation}
\begin{aligned}
    \boldsymbol{d}_{\rho_j \rho_i} &= \textsc{RT}(\boldsymbol{q}_{\rho_i \rho_j}^{-1}, -\boldsymbol{d}_{\rho_i \rho_j})\\
    \boldsymbol{q}_{\rho_j \rho_i} &= \boldsymbol{q}_{\rho_i \rho_j}^{-1}
\end{aligned}~.
\end{equation}
In this way, e-puck robot $\rho_j$ virtually senses aerial robot $\rho_k$, using strictly local communication.

At each time step, each aerial robot $\rho_i$ that senses an e-puck robot $\rho_j$ also sends $\rho_j$ a $\mathcal{P}^{\textsc{robot}}_{ik}$ or $\mathcal{P}^{\textsc{object}}_{ik}$ message for each e-puck robot or object $\rho_k$ in the field of view of $\rho_i$.
From these messages, e-puck robot $\rho_j$ has the required information to calculate the relative position $\boldsymbol{d}_{\rho_j \rho_k}$ and relative orientation $\boldsymbol{q}_{\rho_j \rho_k}$ of the robot/object $\rho_k$ w.r.t. its own body frame $B_2^{\rho_j}$, as follows:
\begin{equation}
\begin{aligned}
    \boldsymbol{d}_{\rho_j \rho_k} &= \boldsymbol{d}_{\rho_j \rho_i} + \textsc{RT}(\boldsymbol{q}_{\rho_j \rho_i}, \boldsymbol{d}_{\rho_i \rho_k})\\
    \boldsymbol{q}_{\rho_j \rho_k} &= \textsc{H}( \boldsymbol{q}_{\rho_j \rho_i}, \boldsymbol{q}_{\rho_i \rho_k})
\end{aligned}~.
\end{equation}
In this way, e-puck robot $\rho_j$ virtually senses the e-puck robot or object $\rho_k$, using strictly local communication.

\rhead{Virtual sensing}

\clearpage\rhead{}
\section*{Section \ref*{SM:real-arena}. Real indoor arena setup}
\lhead{Section \ref*{SM:real-arena}. Real arena}

We conduct our experiments with real robots in an indoor arena custom-built for this study inside a multi-use room (see Fig.~\ref{fig:arena}). The arena consists of an unobstructed flight zone and floor area surrounded by a safety buffer and removable security netting, with mounting positions above the center of the flight zone and around the perimeter. The arena is mounted with visual cameras and an optical motion capture system used for recording experiment data. Note that information from the motion capture system is not accessible by the robots and is only used for data logging.

\begin{figure}[thbp]
\includegraphics[width=1.0\textwidth]{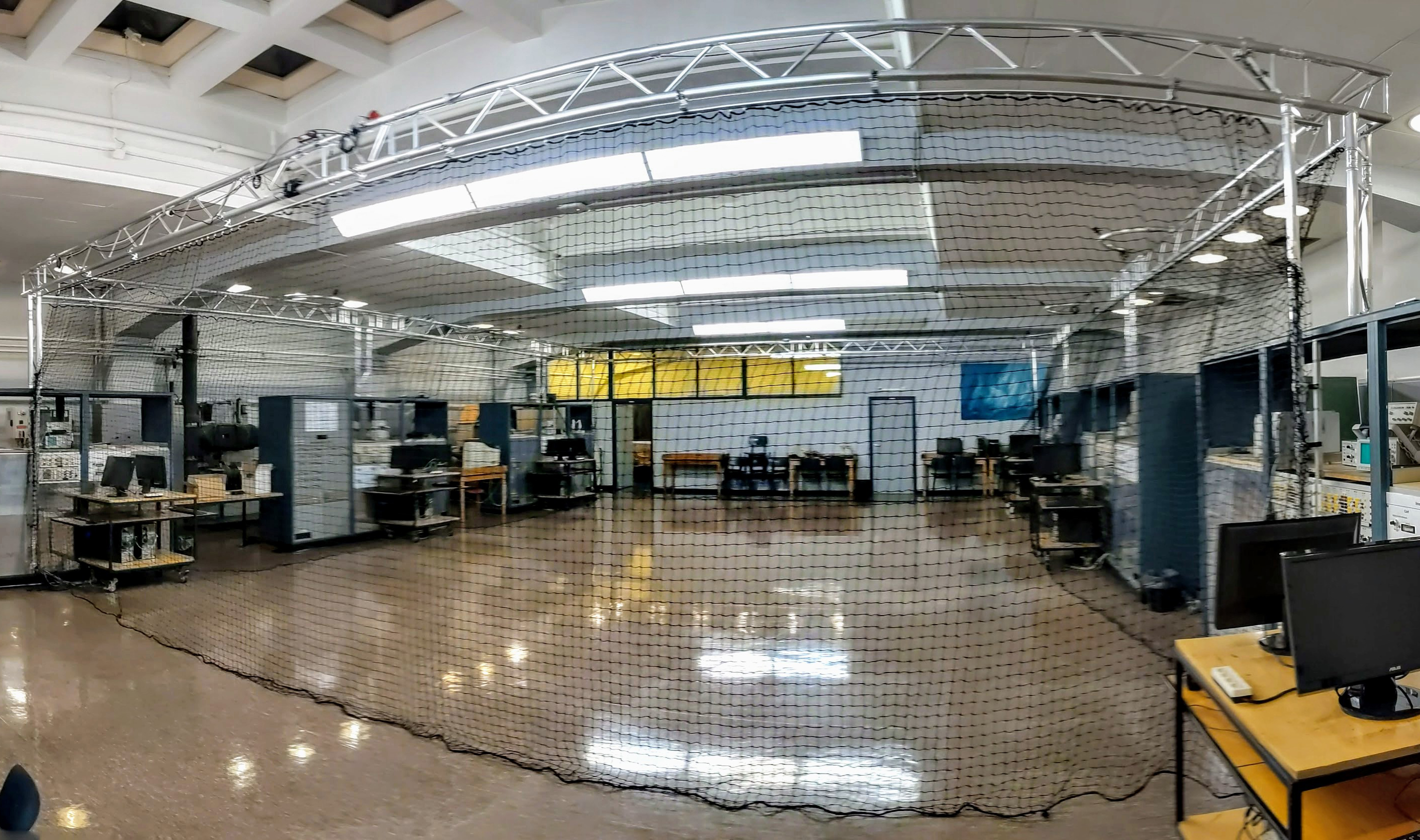}
\caption{Wide-angle photograph of the indoor arena.}
\label{fig:arena}
\end{figure}

The arena perimeter consists of a truss system built on top of permanent metal fixtures (see Fig.~\ref{fig:arena-plan}) from modular 2-point trussing segments (Global Truss F32 Ladder Truss family). This results in a rigid truss perimeter around the arena that allows the visual and motion capture cameras to consistently record from the same fixed physical locations and orientations over experimentation periods of several months.

\begin{figure}[thbp]
\centering
\includegraphics[width=1.0\textwidth]{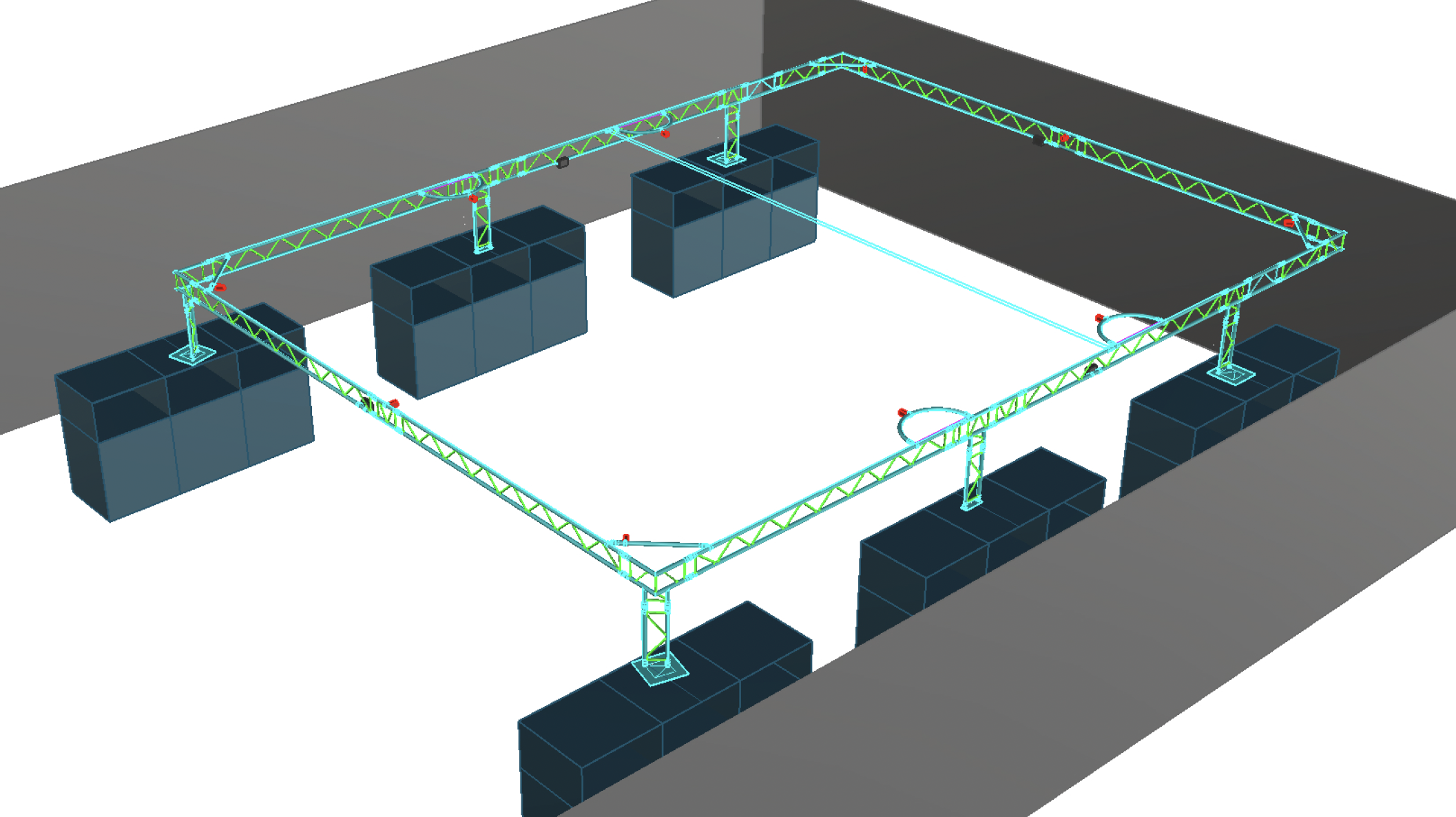}
\caption{Perspective view of the truss system of the arena. There are 10 motion capture cameras (red) mounted on the truss around the perimeter of the arena.}
\label{fig:arena-plan}
\end{figure}

\begin{figure}[thbp]
\centering
\includegraphics[height=0.35\textwidth]{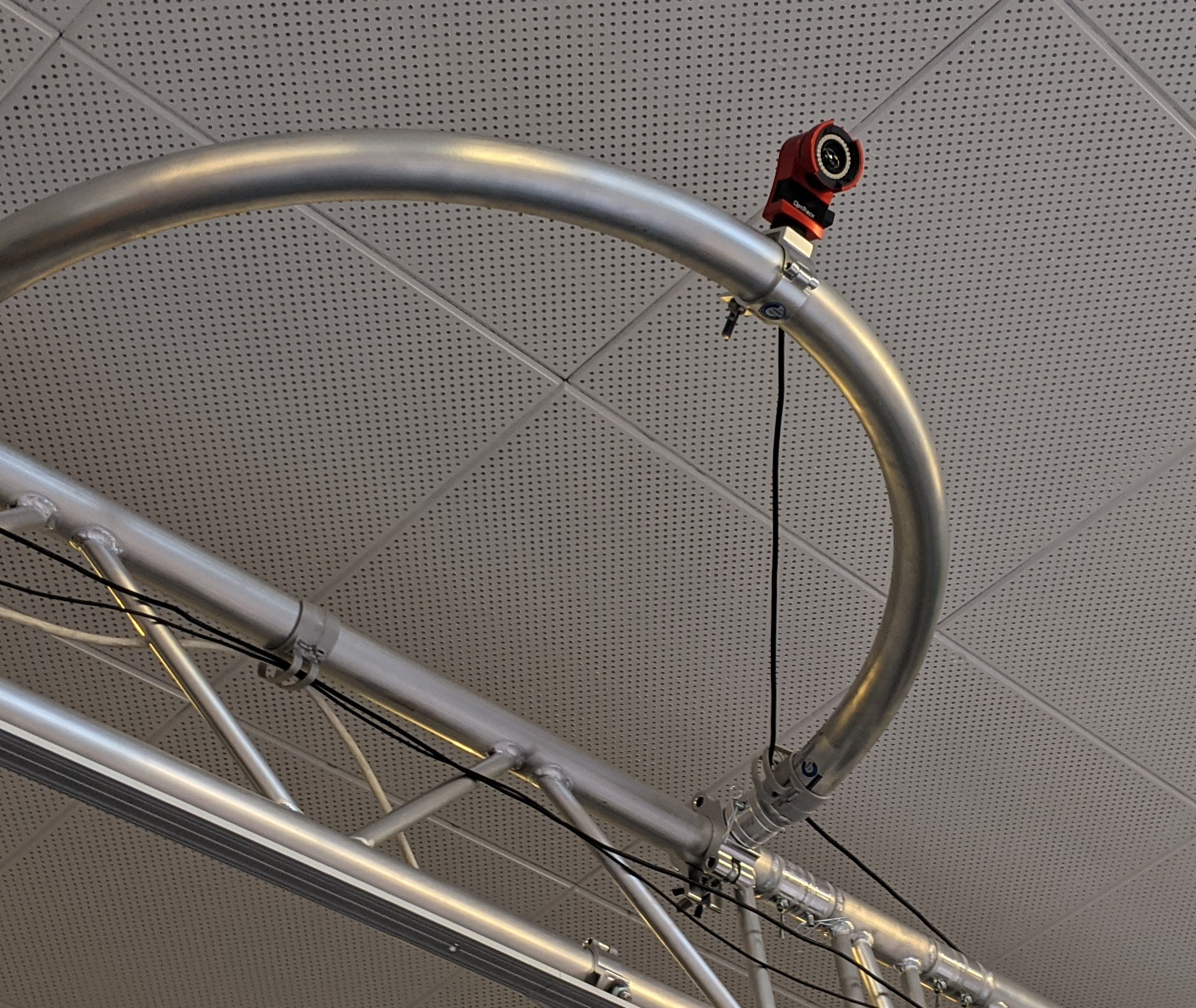}
\includegraphics[height=0.35\textwidth]{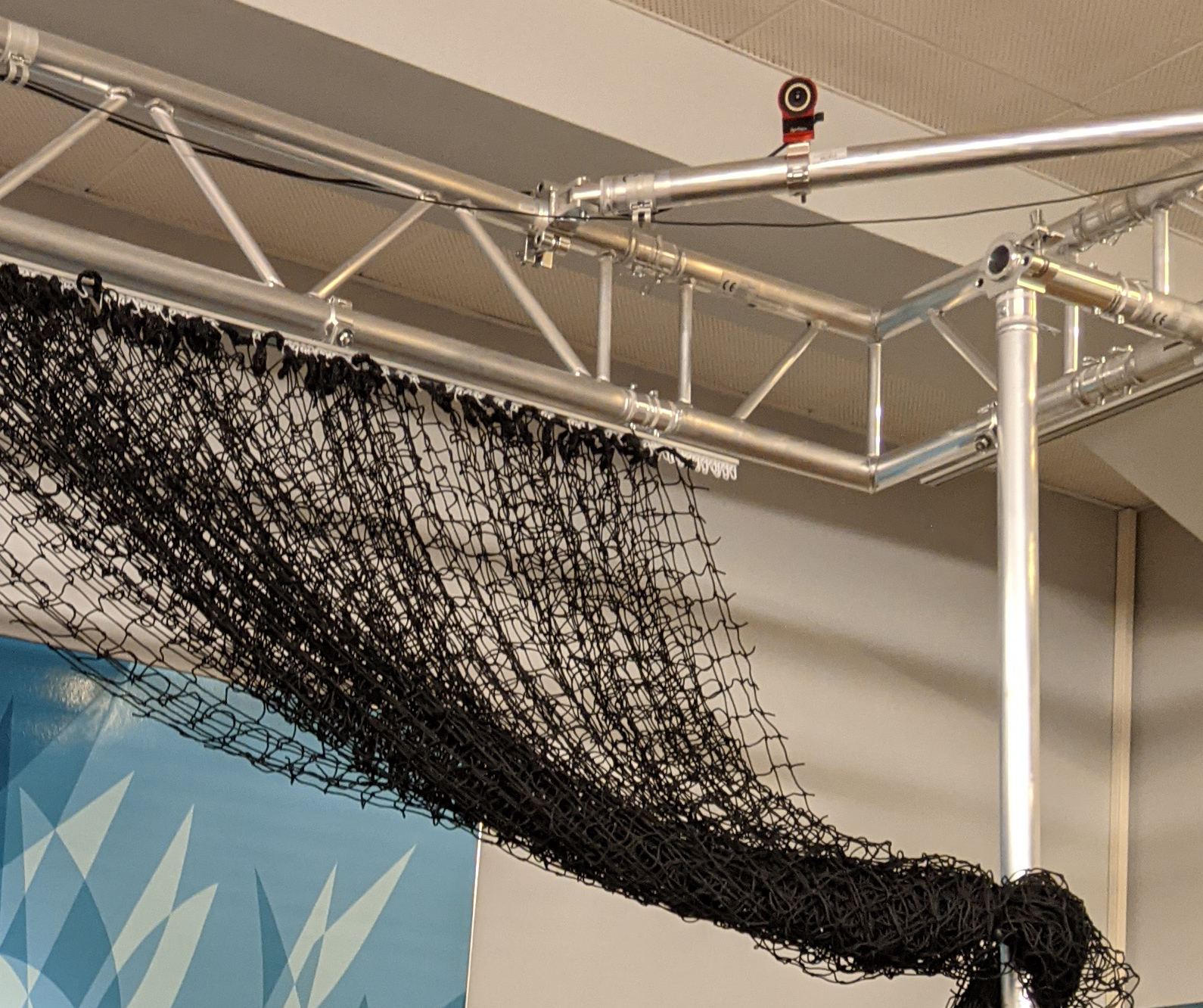}
\caption{Motion capture cameras (red) at a side location (left) and corner location (right) in the arena.}
\label{fig:cameras}
\end{figure}

We use an infrared motion capture system with 10 cameras that each have a 56$^\circ$ field of view (OptiTrack Flex 13 family, 56$^\circ$ FOV, 5.5\,mm, 800\,nm long-pass IR, see Fig.~\ref{fig:cameras}). 

The flight zone is 6\,m x 10\,m with a height of 3\,m above the floor. The motion capture cameras are mounted in an 8\,m x 12\,m rectangle centered around the flight zone, at a height of 3.4\,m above the floor. The cameras are mounted 4\,m apart from each other (one in each corner, one in the center point of each of the shorter perimeter sides, and two at the one-third midway points of each of the longer perimeter sides, see cameras marked in red in Fig~\ref{fig:arena-plan}). Note that the truss system is larger than the rectangle of the camera mounting positions. With the cameras mounted in these positions, motion capture of the ground robots at the floor level and of the aerial robots at the altitude of 1.5\,m above the floor is reliable everywhere in the 6\,m x 10\,m flight zone, except in the corners. Therefore, the corners of the arena are cut at a 45$^\circ$ angle, 0.5\,m away from the original corner point, to form the final octagonal arena shape used in the simulated and real experiments (see Fig.~\ref{fig:arenas}).

The 10 cameras are connected via cable to four USB hubs (OptiHub 2) that manage camera syncing. The hubs are each connected via cable to one PC station with motion capture software (OptiTrack Motive) that outputs the position and orientation data of the tracked rigid bodies.

In the experiments, we track each aerial or ground robot as a rigid body, using four passive markers (OptiTrack precision spheres with 3M 7610 reflective tape) affixed to the robot (see real robots in Fig.~\ref{fig:robots}). In order for the robot positions and orientations to be tracked correctly, the four markers of one robot need to be mounted in a 2D planar configuration that is asymmetrical along both axes formed between a marker and its opposite (i.e., between two opposite vertices of the quadrilateral formed by the four markers). The marker configurations of the robots also need to be geometrically unique, including when they are rotated in 2D or 3D (i.e., each marker configuration must be unique in any rotated position). The unique marker configurations are provided by custom mounting plates made for this study (see example mounting plate designs in Fig.~\ref{fig:markers}) and are associated to the correct robot IDs using the motion capture software (OptiTrack Motive) at the beginning of each experiment session.

\begin{figure}[thbp]
\includegraphics[width=1.0\textwidth]{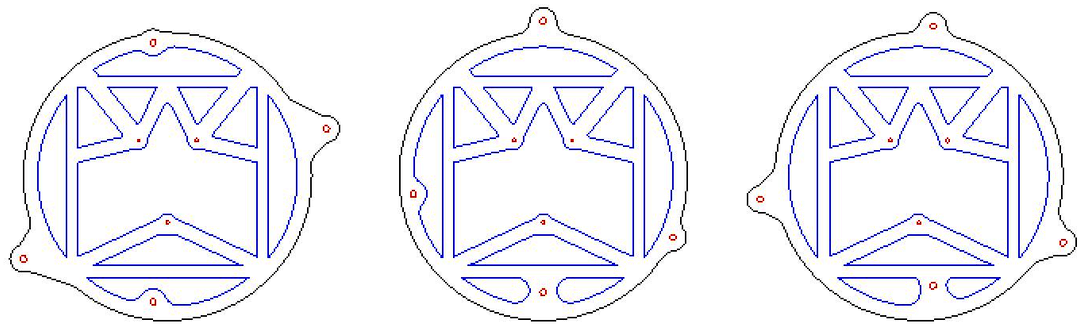}
\caption{2D mounting plate designs for unique and asymmetrical passive marker configurations for each robot, for use by the motion capture system. Three example designs for ground robots are shown. Large red circles near the outer perimeter of each plate indicate the mounting positions of the four passive markers.}
\label{fig:markers}
\end{figure}

Besides recording the motion capture data, we also record experiment videos. We use two visual cameras: an HD action camera (GoPro) mounted to the ceiling, above the center of the arena, and a DSLR camera (Canon EOS 5D Mark IV) mounted to the truss system along the perimeter.

To assist with experiment management, we use an ``experiment supervisor" software custom-developed for this study~\cite{All2022:techreport-004}\footnote{\url{https://github.com/iridia-ulb/supervisor}}. The experiment supervisor software is used: 1) before the experiment, to confirm all robots are active, correctly configured, seen by the motion capture system, and in the correct state; 2) to send a signal to all robots to start the experiment at the same time; 3) to record experiment data during the experiment; and 4) to send a signal to all robots to end the experiment at the same time. During each experiment, the experiment supervisor software records the positions and orientations output from the motion capture system as well as all SoNS information output from the robots.

During the experiments (i.e., between the signals sent to start the experiment and end the experiment), the robots do not receive any centralized commands or global information. The robots are allowed to communicate only with each other, and only under certain conditions. Communication in the SoNS occurs over a wireless network, and two robots are only allowed to communicate with each other if they are connected in the SoNS or if one of them is in the other's field of view.
In our indoor arena, the robots communicate using a wireless LAN. Messages between robots are routed by the experiment supervisor software, which has access to the MAC address associated to each robot ID.
Note that the SoNS layer is constructed in such a way that messages between robots could instead be passed using a wireless ad-hoc network, with no changes to the SoNS software.

During an experiment, the experiment supervisor software records the messages passed between robots, for the purpose of data logging. Note that, during an experiment, the robots do not receive any commands or information originating from the experiment supervisor software; robots only communicate with each other.

\clearpage\rhead{}
\section*{Supplementary Movies S1--S12}
\lhead{Supplementary Movies S1--S12}
\vspace{3mm}

\begin{center}
\begin{minipage}{.43\textwidth}
\embedvideo*{\includegraphics[page=1, width=\textwidth]{Movies/poster-play.png}}{Movies/Movie_S1.mp4}
\customlabel{MovieS1}{S1}
\begin{center}
    \vspace{-6mm}
    {\small {\bf Movie \ref*{MovieS1} :} Establishing self-organized hierarchy with real robots.}\\ \vspace{2mm}
\end{center}
\end{minipage}%
\hspace{10mm}
\begin{minipage}{.43\textwidth}
\embedvideo*{\includegraphics[page=1, width=\textwidth]{Movies/poster-play.png}}{Movies/Movie_S2.mp4}
\customlabel{MovieS2}{S2}
\begin{center}
    \vspace{-6mm}
    {\small {\bf Movie \ref*{MovieS2} :} Balancing global and local goals with real robots.}\\ \vspace{2mm}
\end{center}
\end{minipage}%
\end{center}

\begin{center}
\begin{minipage}{.43\textwidth}
\embedvideo*{\includegraphics[page=1, width=\textwidth]{Movies/poster-play.png}}{Movies/Movie_S3.mp4}
\customlabel{MovieS3}{S3}
\begin{center}
    \vspace{-6mm}
    {\small {\bf Movie \ref*{MovieS3} :} Collective sensing and actuation with real robots.}\\ \vspace{2mm}
\end{center}
\end{minipage}%
\hspace{10mm}
\begin{minipage}{.43\textwidth}
\embedvideo*{\includegraphics[page=1, width=\textwidth]{Movies/poster-play.png}}{Movies/Movie_S4.mp4}
\customlabel{MovieS4}{S4}
\begin{center}
    \vspace{-6mm}
    {\small {\bf Movie \ref*{MovieS4} :} Binary decision making with real robots.}\\ \vspace{2mm}
\end{center}
\end{minipage}%
\end{center}

\begin{center}
\begin{minipage}{.43\textwidth}
\embedvideo*{\includegraphics[page=1, width=\textwidth]{Movies/poster-play.png}}{Movies/Movie_S5.mp4}
\customlabel{MovieS5}{S5}
\begin{center}
    \vspace{-6mm}
    {\small {\bf Movie \ref*{MovieS5} :} Splitting and merging systems with real robots.}\\ \vspace{2mm}
\end{center}
\end{minipage}%
\hspace{10mm}
\begin{minipage}{.43\textwidth}
\vspace{4mm}
\embedvideo*{\includegraphics[page=1, width=\textwidth]{Movies/poster-play.png}}{Movies/Movie_S6.mp4}
\customlabel{MovieS6}{S6}
\begin{center}
    \vspace{-6mm}
    {\small {\bf Movie \ref*{MovieS6} :} Scalability in the binary decision-making mission, 125 robots in simulation.}\\ \vspace{2mm}
\end{center}
\end{minipage}%
\end{center}

\begin{center}
    {\small \it (Note: To play embedded videos, open the PDF in Adobe Acrobat.)}
\end{center}
\clearpage

\begin{center}
\begin{minipage}{.43\textwidth}
\embedvideo*{\includegraphics[page=1, width=\textwidth]{Movies/poster-play.png}}{Movies/Movie_S7.mp4}
\customlabel{MovieS7}{S7}
\begin{center}
    \vspace{-6mm}
    {\small {\bf Movie \ref*{MovieS7} :} Scalability in the establishing self-organized hierarchy mission, several example system sizes in simulation.}\\ \vspace{2mm}
\end{center}
\end{minipage}%
\hspace{10mm}
\begin{minipage}{.43\textwidth}
\embedvideo*{\includegraphics[page=1, width=\textwidth]{Movies/poster-play.png}}{Movies/Movie_S8.mp4}
\customlabel{MovieS8}{S8}
\begin{center}
    \vspace{-6mm}
    {\small {\bf Movie \ref*{MovieS8} :} Fault tolerance demonstration showing interchangeability of a failed brain robot.}\\ \vspace{2mm}
\end{center}
\end{minipage}%
\end{center}

\begin{center}
\begin{minipage}{.43\textwidth}
\vspace{-5mm}
\embedvideo*{\includegraphics[page=1, width=\textwidth]{Movies/poster-play.png}}{Movies/Movie_S9.mp4}
\customlabel{MovieS9}{S9}
\begin{center}
    \vspace{-6mm}
    {\small {\bf Movie \ref*{MovieS9} :} Fault tolerance under multiple permanent failures, with real robots.}\\ \vspace{2mm}
\end{center}
\end{minipage}%
\hspace{10mm}
\begin{minipage}{.43\textwidth}
\embedvideo*{\includegraphics[page=1, width=\textwidth]{Movies/poster-play.png}}{Movies/Movie_S10.mp4}
\customlabel{MovieS10}{S10}
\begin{center}
    \vspace{-6mm}
    {\small {\bf Movie \ref*{MovieS10} :} Fault tolerance under high-loss conditions in simulation, $66.\overline{6}$\% probability to fail.}\\ \vspace{2mm}
\end{center}
\end{minipage}%
\end{center}

\begin{center}
\begin{minipage}{.43\textwidth}
\vspace{-4mm}
\embedvideo*{\includegraphics[page=1, width=\textwidth]{Movies/poster-play.png}}{Movies/Movie_S11.mp4}
\customlabel{MovieS11}{S11}
\begin{center}
    \vspace{-6mm}
    {\small {\bf Movie \ref*{MovieS11} :} Fault tolerance under 30\,s system-wide vision failure in simulation.}\\ \vspace{2mm}
\end{center}
\end{minipage}%
\hspace{10mm}
\begin{minipage}{.43\textwidth}
\embedvideo*{\includegraphics[page=1, width=\textwidth]{Movies/poster-play.png}}{Movies/Movie_S12.mp4}
\customlabel{MovieS12}{S12}
\begin{center}
    \vspace{-6mm}
    {\small {\bf Movie \ref*{MovieS12} :} Fault tolerance under 30\,s system-wide communication failure in simulation.}\\ \vspace{2mm}
\end{center}
\end{minipage}%
\end{center}

\begin{center}
    {\small \it (Note: To play embedded videos, open the PDF in Adobe Acrobat.)}
\end{center}

\clearpage\rhead{}\lhead{}

\bibliography{main,suppl}

\begin{thebibliography}{10}
\providecommand{\url}[1]{#1}
\csname url@samestyle\endcsname
\providecommand{\newblock}{\relax}
\providecommand{\bibinfo}[2]{#2}
\providecommand{\BIBentrySTDinterwordspacing}{\spaceskip=0pt\relax}
\providecommand{\BIBentryALTinterwordstretchfactor}{4}
\providecommand{\BIBentryALTinterwordspacing}{\spaceskip=\fontdimen2\font plus
\BIBentryALTinterwordstretchfactor\fontdimen3\font minus \fontdimen4\font\relax}
\providecommand{\BIBforeignlanguage}[2]{{%
\expandafter\ifx\csname l@#1\endcsname\relax
\typeout{** WARNING: IEEEtran.bst: No hyphenation pattern has been}%
\typeout{** loaded for the language `#1'. Using the pattern for}%
\typeout{** the default language instead.}%
\else
\language=\csname l@#1\endcsname
\fi
#2}}
\providecommand{\BIBdecl}{\relax}
\BIBdecl

\bibitem{valentini2016collective}
G.~Valentini, E.~Ferrante, H.~Hamann, and M.~Dorigo, ``Collective decision with 100 kilobots: Speed versus accuracy in binary discrimination problems,'' \emph{Autonomous Agents and Multi-agent Systems}, vol.~30, no.~3, pp. 553--580, 2016.

\bibitem{nouyan2009teamwork}
S.~Nouyan, R.~Gro{\ss}, M.~Bonani, F.~Mondada, and M.~Dorigo, ``Teamwork in self-organized robot colonies,'' \emph{IEEE Transactions on Evolutionary Computation}, vol.~13, no.~4, pp. 695--711, 2009.

\bibitem{dorigo2013swarmanoid}
M.~Dorigo, D.~Floreano, L.~M. Gambardella, F.~Mondada, S.~Nolfi, T.~Baaboura, M.~Birattari, M.~Bonani, M.~Brambilla, A.~Brutschy \emph{et~al.}, ``Swarmanoid: a novel concept for the study of heterogeneous robotic swarms,'' \emph{IEEE Robotics \& Automation Magazine}, vol.~20, no.~4, pp. 60--71, 2013.

\bibitem{werfel2014designing}
J.~Werfel, K.~Petersen, and R.~Nagpal, ``Designing collective behavior in a termite-inspired robot construction team,'' \emph{Science}, vol. 343, no. 6172, pp. 754--758, 2014.

\bibitem{wahby2018autonomously}
M.~Wahby, M.~K. Heinrich, D.~N. Hofstadler, E.~Neufeld, I.~Kuksin, P.~Zahadat, T.~Schmickl, P.~Ayres, and H.~Hamann, ``Autonomously shaping natural climbing plants: a bio-hybrid approach,'' \emph{Royal Society Open Science}, vol.~5, no.~10, p. 180296, 2018.

\bibitem{halloy2007social}
J.~Halloy, G.~Sempo, G.~Caprari, C.~Rivault, M.~Asadpour, F.~T{\^a}che, I.~Sa{\"\i}d, V.~Durier, S.~Canonge, J.~M. Am{\'e} \emph{et~al.}, ``Social integration of robots into groups of cockroaches to control self-organized choices,'' \emph{Science}, vol. 318, no. 5853, pp. 1155--1158, 2007.

\bibitem{dorigo2020reflections}
M.~Dorigo, G.~Theraulaz, and V.~Trianni, ``Reflections on the future of swarm robotics,'' \emph{Science Robotics}, vol.~5, no.~49, p. eabe4385, 2020.

\bibitem{dorigo2021swarm}
------, ``Swarm robotics: past, present, and future [point of view],'' \emph{Proceedings of the IEEE}, vol. 109, no.~7, pp. 1152--1165, 2021.

\bibitem{howard2006multi}
A.~Howard, ``Multi-robot simultaneous localization and mapping using particle filters,'' \emph{The International Journal of Robotics Research}, vol.~25, no.~12, pp. 1243--1256, 2006.

\bibitem{psaraftis2016dynamic}
H.~N. Psaraftis, M.~Wen, and C.~A. Kontovas, ``Dynamic vehicle routing problems: Three decades and counting,'' \emph{Networks}, vol.~67, no.~1, pp. 3--31, 2016.

\bibitem{vasarhelyi2018optimized}
G.~V{\'a}s{\'a}rhelyi, C.~Vir{\'a}gh, G.~Somorjai, T.~Nepusz, A.~E. Eiben, and T.~Vicsek, ``Optimized flocking of autonomous drones in confined environments,'' \emph{Science Robotics}, vol.~3, no.~20, p. eaat3536, 2018.

\bibitem{buhl2006disorder}
J.~Buhl, D.~J. Sumpter, I.~D. Couzin, J.~J. Hale, E.~Despland, E.~R. Miller, and S.~J. Simpson, ``From disorder to order in marching locusts,'' \emph{Science}, vol. 312, no. 5778, pp. 1402--1406, 2006.

\bibitem{detrain2008collective}
C.~Detrain and J.-L. Deneubourg, ``Collective decision-making and foraging patterns in ants and honeybees,'' \emph{Advances in Insect Physiology}, vol.~35, pp. 123--173, 2008.

\bibitem{theraulaz1998origin}
G.~Theraulaz, E.~Bonabeau, and J.-L. Deneubourg, ``The origin of nest complexity in social insects,'' \emph{Complexity}, vol.~3, no.~6, pp. 15--25, 1998.

\bibitem{bonabeau1999swarm}
E.~Bonabeau, M.~Dorigo, and G.~Theraulaz, \emph{Swarm intelligence: from natural to artificial systems}.\hskip 1em plus 0.5em minus 0.4em\relax Oxford university press, 1999, no.~1.

\bibitem{francesca2016automatic}
G.~Francesca and M.~Birattari, ``Automatic design of robot swarms: achievements and challenges,'' \emph{Frontiers in Robotics and AI}, vol.~3, p.~29, 2016.

\bibitem{Kengyel-etal2015}
D.~Kengyel, H.~Hamann, P.~Zahadat, G.~Radspieler, F.~Wotawa, and T.~Schmickl, ``Potential of heterogeneity in collective behaviors: A case study on heterogeneous swarms,'' in \emph{PRIMA 2015: Principles and Practice of Multi-Agent Systems}, Q.~Chen, P.~Torroni, S.~Villata, J.~Hsu, and A.~Omicini, Eds.\hskip 1em plus 0.5em minus 0.4em\relax Springer International Publishing, 2015, pp. 201--217.

\bibitem{ferrante2012self}
E.~Ferrante, A.~E. Turgut, C.~Huepe, A.~Stranieri, C.~Pinciroli, and M.~Dorigo, ``Self-organized flocking with a mobile robot swarm: a novel motion control method,'' \emph{Adaptive Behavior}, vol.~20, no.~6, pp. 460--477, 2012.

\bibitem{firat2020self}
Z.~Firat, E.~Ferrante, Y.~Gillet, and E.~Tuci, ``On self-organised aggregation dynamics in swarms of robots with informed robots,'' \emph{Neural Computing and Applications}, vol.~32, no.~17, pp. 13\,825--13\,841, 2020.

\bibitem{valentini2015efficient}
G.~Valentini, H.~Hamann, and M.~Dorigo, ``Efficient decision-making in a self-organizing robot swarm: On the speed versus accuracy trade-off,'' in \emph{Proceedings of 14th International Conference on Autonomous Agents and Multiagent Systems (AAMAS 2015)}, R.~Bordini, E.~Elkind, G.~Weiss, and P.~Yolum, Eds.\hskip 1em plus 0.5em minus 0.4em\relax International Foundation for Autonomous Agents and Multiagent Systems (IFAAMAS), 2015, pp. 1305--1314.

\bibitem{balazs2020adaptive}
B.~Bal{\'a}zs, G.~V{\'a}s{\'a}rhelyi, and T.~Vicsek, ``Adaptive leadership overcomes persistence--responsivity trade-off in flocking,'' \emph{Journal of the Royal Society Interface}, vol.~17, no. 167, p. 20190853, 2020.

\bibitem{walker2014human}
P.~Walker, S.~A. Amraii, N.~Chakraborty, M.~Lewis, and K.~Sycara, ``Human control of robot swarms with dynamic leaders,'' in \emph{2014 IEEE/RSJ International Conference on Intelligent Robots and Systems}.\hskip 1em plus 0.5em minus 0.4em\relax IEEE, 2014, pp. 1108--1113.

\bibitem{walker2014control}
P.~Walker, S.~A. Amraii, M.~Lewis, N.~Chakraborty, and K.~Sycara, ``Control of swarms with multiple leader agents,'' in \emph{2014 IEEE International Conference on Systems, Man, and Cybernetics (SMC)}.\hskip 1em plus 0.5em minus 0.4em\relax IEEE, 2014, pp. 3567--3572.

\bibitem{kaiser2022innate}
T.~K. Kaiser and H.~Hamann, ``Innate motivation for robot swarms by minimizing surprise: From simple simulations to real-world experiments,'' \emph{IEEE Transactions on Robotics}, 2022.

\bibitem{shan2020collective}
Q.~Shan and S.~Mostaghim, ``Collective decision making in swarm robotics with distributed bayesian hypothesis testing,'' in \emph{International Conference on Swarm Intelligence}.\hskip 1em plus 0.5em minus 0.4em\relax Springer, 2020, pp. 55--67.

\bibitem{gu2009leader}
D.~Gu and Z.~Wang, ``Leader--follower flocking: algorithms and experiments,'' \emph{IEEE Transactions on Control Systems Technology}, vol.~17, no.~5, pp. 1211--1219, 2009.

\bibitem{amraii2014explicit}
S.~A. Amraii, P.~Walker, M.~Lewis, N.~Chakraborty, and K.~Sycara, ``Explicit vs. tacit leadership in influencing the behavior of swarms,'' in \emph{2014 IEEE International Conference on Robotics and Automation (ICRA)}.\hskip 1em plus 0.5em minus 0.4em\relax IEEE, 2014, pp. 2209--2214.

\bibitem{zheng2020adversarial}
H.~Zheng, J.~Panerati, G.~Beltrame, and A.~Prorok, ``An adversarial approach to private flocking in mobile robot teams,'' \emph{IEEE Robotics and Automation Letters}, vol.~5, no.~2, pp. 1009--1016, 2020.

\bibitem{dalmao2011cucker}
F.~Dalmao and E.~Mordecki, ``Cucker--smale flocking under hierarchical leadership and random interactions,'' \emph{SIAM Journal on Applied Mathematics}, vol.~71, no.~4, pp. 1307--1316, 2011.

\bibitem{pignotti2018flocking}
C.~Pignotti and I.~R. Vallejo, ``Flocking estimates for the cucker--smale model with time lag and hierarchical leadership,'' \emph{Journal of Mathematical Analysis and Applications}, vol. 464, no.~2, pp. 1313--1332, 2018.

\bibitem{jia2019modelling}
Y.~Jia and T.~Vicsek, ``Modelling hierarchical flocking,'' \emph{New Journal of Physics}, vol.~21, no.~9, p. 093048, 2019.

\bibitem{soorati2019photomorphogenesis}
M.~D. Soorati, M.~K. Heinrich, J.~Ghofrani, P.~Zahadat, and H.~Hamann, ``Photomorphogenesis for robot self-assembly: adaptivity, collective decision-making, and self-repair,'' \emph{Bioinspiration \& Biomimetics}, vol.~14, no.~5, p. 056006, 2019.

\bibitem{zhou2022swarm}
X.~Zhou, X.~Wen, Z.~Wang, Y.~Gao, H.~Li, Q.~Wang, T.~Yang, H.~Lu, Y.~Cao, C.~Xu \emph{et~al.}, ``Swarm of micro flying robots in the wild,'' \emph{Science Robotics}, vol.~7, no.~66, p. eabm5954, 2022.

\bibitem{mathews2017mergeable}
N.~Mathews, A.~L. Christensen, R.~O’Grady, F.~Mondada, and M.~Dorigo, ``Mergeable nervous systems for robots,'' \emph{Nature Communications}, vol.~8, no.~1, pp. 1--7, 2017.

\bibitem{zhu2020formation}
W.~Zhu, M.~Allwright, M.~K. Heinrich, S.~O{\u{g}}uz, A.~L. Christensen, and M.~Dorigo, ``Formation control of uavs and mobile robots using self-organized communication topologies,'' in \emph{International Conference on Swarm Intelligence}.\hskip 1em plus 0.5em minus 0.4em\relax Springer, 2020, pp. 306--314.

\bibitem{zhang2023self}
Y.~Zhang, S.~O{\u{g}}uz, S.~Wang, E.~Garone, X.~Wang, M.~Dorigo, and M.~K. Heinrich, ``Self-reconfigurable hierarchical frameworks for formation control of robot swarms,'' \emph{IEEE Transactions on Cybernetics}, 2023.

\bibitem{jamshidpey2020multi}
A.~Jamshidpey, W.~Zhu, M.~Wahby, M.~Allwright, M.~K. Heinrich, and M.~Dorigo, ``Multi-robot coverage using self-organized networks for central coordination,'' in \emph{International Conference on Swarm Intelligence}.\hskip 1em plus 0.5em minus 0.4em\relax Springer, 2020, pp. 216--228.

\bibitem{JamWahHeiAllZhuDor2021:techreport-008}
A.~Jamshidpey, M.~Wahby, M.~K. Heinrich, M.~Allwright, W.~Zhu, and M.~Dorigo, ``Centralization vs. decentralization in multi-robot coverage: Ground robots under uav supervision,'' IRIDIA, Universit{\'e} Libre de Bruxelles, Brussels, Belgium, Tech. Rep. TR/IRIDIA/2021-008, May 2021.

\bibitem{jamshidpey2022reducing}
A.~Jamshidpey, M.~Dorigo, and M.~K. Heinrich, ``Reducing uncertainty in collective perception using self-organized hierarchy,'' \emph{Intelligent Computing}, 2022.

\bibitem{ducatelle2009new}
F.~Ducatelle, A.~F{\"o}rster, G.~A. Di~Caro, L.~M. Gambardella \emph{et~al.}, ``New task allocation methods for robotic swarms,'' in \emph{9th IEEE/RAS conference on autonomous robot systems and competitions}, vol.~5, 2009.

\bibitem{ducatelle2011self}
F.~Ducatelle, G.~A. Di~Caro, C.~Pinciroli, and L.~M. Gambardella, ``Self-organized cooperation between robotic swarms,'' \emph{Swarm Intelligence}, vol.~5, pp. 73--96, 2011.

\bibitem{mondada2009puck}
F.~Mondada, M.~Bonani, X.~Raemy, J.~Pugh, C.~Cianci, A.~Klaptocz, S.~Magnenat, J.-C. Zufferey, D.~Floreano, and A.~Martinoli, ``The e-puck, a robot designed for education in engineering,'' in \emph{Proceedings of the 9th Conference on Autonomous Robot Systems and Competitions}, vol.~1, no. CONF.\hskip 1em plus 0.5em minus 0.4em\relax IPCB: Instituto Polit{\'e}cnico de Castelo Branco, 2009, pp. 59--65.

\bibitem{millard2017pi}
A.~G. Millard, R.~Joyce, J.~A. Hilder, C.~Fle{\c{s}}eriu, L.~Newbrook, W.~Li, L.~J. McDaid, and D.~M. Halliday, ``The pi-puck extension board: a raspberry pi interface for the e-puck robot platform,'' in \emph{2017 IEEE/RSJ International Conference on Intelligent Robots and Systems (IROS)}.\hskip 1em plus 0.5em minus 0.4em\relax IEEE, 2017, pp. 741--748.

\bibitem{OguHeiAllZhuWahGarDor2022:techreport-010}
S.~Oguz, M.~K. Heinrich, M.~Allwright, W.~Zhu, M.~Wahby, E.~Garone, and M.~Dorigo, ``S-drone: An open-source quadrotor for experimentation in swarm robotics,'' IRIDIA, Universit{\'e} Libre de Bruxelles, Brussels, Belgium, Tech. Rep. TR/IRIDIA/2022-010, October 2022.

\bibitem{walter2018fast}
V.~Walter, M.~Saska, and A.~Franchi, ``Fast mutual relative localization of uavs using ultraviolet led markers,'' in \emph{2018 International Conference on Unmanned Aircraft Systems (ICUAS)}.\hskip 1em plus 0.5em minus 0.4em\relax IEEE, 2018, pp. 1217--1226.

\bibitem{ulrich2022towards}
J.~Ulrich, A.~Alsayed, F.~Arvin, and T.~Krajn{\'\i}k, ``Towards fast fiducial marker with full 6 dof pose estimation,'' in \emph{Proceedings of the 37th ACM/SIGAPP Symposium on Applied Computing}, 2022, pp. 723--730.

\bibitem{All2022:techreport-004}
M.~Allwright, ``Supervisor: Software for starting, monitoring, and stopping experiments with swarms of robots,'' IRIDIA, Universit{\'e} Libre de Bruxelles, Brussels, Belgium, Tech. Rep. TR/IRIDIA/2022-004, March 2022.

\bibitem{pinciroli2012argos}
C.~Pinciroli, V.~Trianni, R.~O'Grady, G.~Pini, A.~Brutschy, M.~Brambilla, N.~Mathews, E.~Ferrante, G.~{Di Caro}, F.~Ducatelle, M.~Birattari, L.~M. Gambardella, and M.~Dorigo, ``{ARGoS}: a modular, parallel, multi-engine simulator for multi-robot systems,'' \emph{Swarm Intelligence}, vol.~6, no.~4, pp. 271--295, 2012.

\bibitem{allwright2018argos}
M.~Allwright, N.~Bhalla, C.~Pinciroli, and M.~Dorigo, ``{ARGoS} plug-ins for experiments in autonomous construction,'' IRIDIA, Universit{\'e} Libre de Bruxelles, Brussels, Belgium, Tech. Rep. TR/IRIDIA/2018-007, 2018.

\bibitem{allwright2018simulating}
------, ``Simulating multi-robot construction in {ARGoS},'' in \emph{Swarm Intelligence -- Proceedings of ANTS 2018 -- Eleventh International Conference}, ser. Lecture Notes in Computer Science, vol. 11172.\hskip 1em plus 0.5em minus 0.4em\relax Berlin, Germany: Springer, 2018, pp. 188--200.

\bibitem{oh2011formation}
K.-K. Oh and H.-S. Ahn, ``Formation control of mobile agents based on inter-agent distance dynamics,'' \emph{Automatica}, vol.~47, no.~10, pp. 2306--2312, 2011.

\bibitem{krick2009stabilisation}
L.~Krick, M.~E. Broucke, and B.~A. Francis, ``Stabilisation of infinitesimally rigid formations of multi-robot networks,'' \emph{International Journal of control}, vol.~82, no.~3, pp. 423--439, 2009.

\bibitem{tanner2004leader}
H.~G. Tanner, G.~J. Pappas, and V.~Kumar, ``Leader-to-formation stability,'' \emph{IEEE Transactions on Robotics and Automation}, vol.~20, no.~3, pp. 443--455, 2004.

\bibitem{khalil2002nonlinear}
H.~K. Khalil, \emph{Nonlinear systems}, 3rd~ed.\hskip 1em plus 0.5em minus 0.4em\relax Prentice Hall, 2002.

\bibitem{seibert1990global}
P.~Seibert and R.~Suarez, ``Global stabilization of nonlinear cascade systems,'' \emph{Systems \& Control Letters}, vol.~14, no.~4, pp. 347--352, 1990.

\bibitem{sontag1995input}
E.~D. Sontag \emph{et~al.}, ``On the input-to-state stability property,'' \emph{European Journal of Control}, vol.~1, no.~1, pp. 24--36, 1995.

\bibitem{isidori1985nonlinear}
A.~Isidori, \emph{Nonlinear control systems: an introduction}.\hskip 1em plus 0.5em minus 0.4em\relax Springer, 1985.

\bibitem{tanner2002effect}
H.~G. Tanner, V.~Kumar, and G.~J. Pappas, ``The effect of feedback and feedforward on formation {ISS},'' in \emph{Proceedings 2002 IEEE International Conference on Robotics and Automation}, vol.~4.\hskip 1em plus 0.5em minus 0.4em\relax IEEE Press, 2002, pp. 3448--3453.

\bibitem{das2002vision}
A.~K. Das, R.~Fierro, V.~Kumar, J.~P. Ostrowski, J.~Spletzer, and C.~J. Taylor, ``A vision-based formation control framework,'' \emph{IEEE Transactions on Robotics and Automation}, vol.~18, no.~5, pp. 813--825, 2002.

\bibitem{wang2009robustness}
J.~Wang, Y.~Tan, and I.~Mareels, ``Robustness analysis of leader-follower consensus,'' \emph{Journal of Systems Science and Complexity}, vol.~22, no.~2, pp. 186--206, 2009.

\bibitem{wayne1999generalized}
K.~D. Wayne, \emph{Generalized maximum flow algorithms}.\hskip 1em plus 0.5em minus 0.4em\relax Cornell University, 1999.

\bibitem{cormen2001section}
T.~H. Cormen, C.~E. Leiserson, R.~L. Rivest, and C.~Stein, ``Section 26.2: The ford-fulkerson method,'' \emph{Introduction to algorithms}, pp. 651--664, 2001.

\bibitem{All2022:techreport-002}
M.~Allwright, ``Software support for the iridia drone,'' IRIDIA, Universit{\'e} Libre de Bruxelles, Brussels, Belgium, Tech. Rep. TR/IRIDIA/2022-002, October 2022.

\bibitem{All2022:techreport-001}
------, ``Software support for the pi-puck mobile robot,'' IRIDIA, Universit{\'e} Libre de Bruxelles, Brussels, Belgium, Tech. Rep. TR/IRIDIA/2022-001, October 2022.

\bibitem{allwright2019open}
M.~Allwright, W.~Zhu, and M.~Dorigo, ``An open-source multi-robot construction system,'' \emph{HardwareX}, vol.~5, p. e00050, 2019.

\bibitem{salvador2014embedded}
O.~Salvador and D.~Angolini, \emph{Embedded Linux development with Yocto project}.\hskip 1em plus 0.5em minus 0.4em\relax Packt Publishing Ltd, 2014.

\bibitem{farrell2008aided}
J.~Farrell, \emph{Aided navigation: GPS with high rate sensors}.\hskip 1em plus 0.5em minus 0.4em\relax McGraw-Hill, Inc., 2008.

\bibitem{bouabdallah2007full}
S.~Bouabdallah and R.~Siegwart, ``Full control of a quadrotor,'' in \emph{2007 IEEE/RSJ International Conference on Intelligent Robots and Systems}.\hskip 1em plus 0.5em minus 0.4em\relax IEEE Press, 2007, pp. 153--158.

\bibitem{fossen1999guidance}
T.~I. Fossen, \emph{Guidance and control of ocean vehicles}.\hskip 1em plus 0.5em minus 0.4em\relax John Wiley \& Sons, 1994.

\bibitem{koo1998output}
T.~J. Koo and S.~Sastry, ``Output tracking control design of a helicopter model based on approximate linearization,'' in \emph{Proceedings of the 37th IEEE Conference on Decision and Control}, vol.~4.\hskip 1em plus 0.5em minus 0.4em\relax IEEE Press, 1998, pp. 3635--3640.

\bibitem{hua2013introduction}
M.-D. Hua, T.~Hamel, P.~Morin, and C.~Samson, ``Introduction to feedback control of underactuated vtolvehicles: A review of basic control design ideas and principles,'' \emph{IEEE Control Systems Magazine}, vol.~33, no.~1, pp. 61--75, 2013.

\bibitem{olson2011apriltag}
E.~Olson, ``Apriltag: A robust and flexible visual fiducial system,'' in \emph{2011 IEEE International Conference on Robotics and Automation}.\hskip 1em plus 0.5em minus 0.4em\relax IEEE Press, 2011, pp. 3400--3407.

\bibitem{wang2016apriltag}
J.~Wang and E.~Olson, ``Apriltag 2: Efficient and robust fiducial detection,'' in \emph{2016 IEEE/RSJ International Conference on Intelligent Robots and Systems (IROS)}.\hskip 1em plus 0.5em minus 0.4em\relax IEEE Press, 2016, pp. 4193--4198.

\end{thebibliography}
\bibliographystyle{IEEEtran}

\end{document}